\def\th@plain{%
	\thm@notefont{}
	\itshape 
}
\def\th@definition{%
	\thm@notefont{}
	\normalfont 
}
\newtheorem{lemma}{Lemma}
\newtheorem{theorem}{Theorem}
\newtheorem{assumption}{Assumption}
\newtheorem{definition}{Definition}
\newtheorem{corollary}{Corollary}
\newtheorem{condition}{Condition}
\newcommand{\sgn}{\text{sign}}
\newcommand{\RmedOPT}{R_{\text{med}}^{\text{OPT}}(t)}
\newcommand{\RmedSGDM}{R_{\text{med}}^{\text{SGDM}}(t)}
\newcommand{\RmedAda}{R_{\text{med}}^{\text{Adam}}(t)}
\newcommand{\RmedOPTf}{R_{\text{med},1}^{\text{OPT}}(t)}
\newcommand{\RmedOPTs}{R_{\text{med},2}^{\text{OPT}}(t)}
\newcommand{\RmedSGDf}{R_{\text{med},1}^{\text{SGDM}}(t)}
\newcommand{\RmedAdaf}{R_{\text{med},1}^{\text{Adam}}(t)}
\newcommand{\RmedSGDs}{R_{\text{med},2}^{\text{SGDM}}(t)}
\newcommand{\RmedAdas}{R_{\text{med},2}^{\text{Adam}}(t)}
\newcommand{\RmedSGDk}{R_{\text{med},k}^{\text{SGDM}}(t)}
\newcommand{\RmedAdak}{R_{\text{med},k}^{\text{Adam}}(t)}
\newcommand{\rdiagOPT}{r_{\text{diag},i}^{\text{OPT}}(t)}
\newcommand{\rdiagAda}{r_{\text{diag},i}^{\text{Adam}}(t)}
\newcommand{\rdiagSGDM}{r_{\text{diag},i}^{\text{SGDM}}(t)}
\newcommand{\RdiagOPT}{R_{\text{diag}}^{\text{OPT}}(t)}
\newcommand{\RdiagAda}{R_{\text{diag}}^{\text{Adam}}(t)}
\newcommand{\RdiagSGDM}{R_{\text{diag}}^{\text{SGDM}}(t)}
\begin{document}
	
	\begin{center}
		{\bf{\LARGE{How Does Adaptive Optimization Impact Local Neural Network Geometry?}}}
		
		
		\vspace*{.2in}
		
		{\large{
				\begin{tabular}{cccc}
					Kaiqi Jiang$^{\; \dagger}$ & Dhruv Malik$^{\; \ddagger}$ & Yuanzhi Li$^{\; \ddagger}$
				\end{tabular}
		}}
		\vspace*{.2in}

		\begin{tabular}{c}
			Department of Electrical and Computer Engineering, Princeton University$^{\; \dagger}$ \\
			Machine Learning Department, Carnegie Mellon University$^{\; \ddagger}$
		\end{tabular}

		\vspace*{.2in}

		\today
		
	\end{center}
	\vspace*{.2in}
	
	\begin{abstract}
		Adaptive optimization methods are well known to achieve superior convergence relative to vanilla gradient methods. The traditional viewpoint in optimization, particularly in convex optimization, explains this improved performance by arguing that, unlike vanilla gradient schemes, adaptive algorithms mimic the behavior of a second-order method by adapting to the \emph{global} geometry of the loss function. We argue that in the context of neural network optimization, this traditional viewpoint is insufficient. Instead, we advocate for a \emph{local} trajectory analysis. For iterate trajectories produced by running a generic optimization algorithm OPT, we introduce $R^{\text{OPT}}_{\text{med}}$, a statistic that is analogous to the condition number of the loss Hessian evaluated at the iterates. Through extensive experiments, we show that adaptive methods such as Adam bias the trajectories towards regions where $R^{\text{Adam}}_{\text{med}}$ is small, where one might expect faster convergence. By contrast, vanilla gradient methods like SGD bias the trajectories towards regions where $R^{\text{SGD}}_{\text{med}}$ is comparatively large. We complement these empirical observations with a theoretical result that provably demonstrates this phenomenon in the simplified setting of a two-layer linear network. We view our findings as evidence for the need of a new explanation of the success of adaptive methods, one that is different than the conventional wisdom.
	\end{abstract}

	\section{Introduction}\label{sec:intro}
	The efficient minimization of a parameterized loss function is a core primitive in statistics, optimization and machine learning. Gradient descent (GD), which iteratively updates a parameter vector with a step along the gradient of the loss function evaluated at that vector, is a simple yet canonical algorithm which has been applied to efficiently solve such minimization problems with enormous success. However, in modern machine learning, and especially deep learning, one frequently encounters problems where the loss functions are high dimensional, non-convex and non-smooth. The optimization landscape of such problems is thus extremely challenging, and in these settings gradient descent often suffers from prohibitively high iteration complexity.
	
	To deal with these difficulties and improve optimization efficiency, practitioners in recent years have developed many variants of GD. One prominent class of these GD variants is the family of \emph{adaptive} algorithms~\citep{duchi2011adaptive, tieleman2012lecture, DBLP:journals/corr/KingmaB14}. At a high level, adaptive methods scale the gradient with an adpatively selected preconditioning matrix, which is constructed via a moving average of past gradients. These methods are reminiscent of second order gradient descent, since they construct approximations to the Hessian of the loss functions, while remaining computationally feasible since they eschew full computation of the Hessian. A vast line of empirical work has demonstrated the superiority of adaptive methods over GD to optimize deep neural networks, especially on Natural Language Processing (NLP) tasks with transformers~\citep{vaswani2017attention,DBLP:conf/naacl/DevlinCLT19}.
	
	From a theoretical perspective, adaptive methods are well understood in the traditional context of convex optimization. For instance, Duchi et al.~\citep{duchi2011adaptive} show that when the loss function is convex, then the Adagrad algorithm yields regret guarantees that are provably as good as those obtained by using the best (diagonal) preconditioner in hindsight. The key mechanism that underlies this improved performance, is that the loss function has some global geometric property (such as sparsity or a coordinate wise bounded Lipschitz constant), and the algorithm adapts to this global geometry by adaptively selecting learning rates for features that are more informative.
	
	However, in non-convex optimization, and deep learning in particular, it is highly unclear whether this simple characterization is sufficient to explain the superiority of adaptive methods over GD. Indeed, for large scale neural networks, global guarantees on the geometric properties of the loss are typically vacuous. For instance, for a 20-layer feedforward neural network, if we scale up the weights in each layer by a factor of $1.5$, then the global Lipschitz constant of the network is scaled up by a factor of at least $e^{10}$. Hence it only makes sense to study convergence by looking at the local geometry of the loss along the trajectory of the optimization algorithm~\citep{arora2018optimization}.
	\vspace{-1em}
	\begin{figure}[H]
		\centering
		\begin{subfigure}{.4\textwidth}
			\centering
			\includegraphics[width=\linewidth]{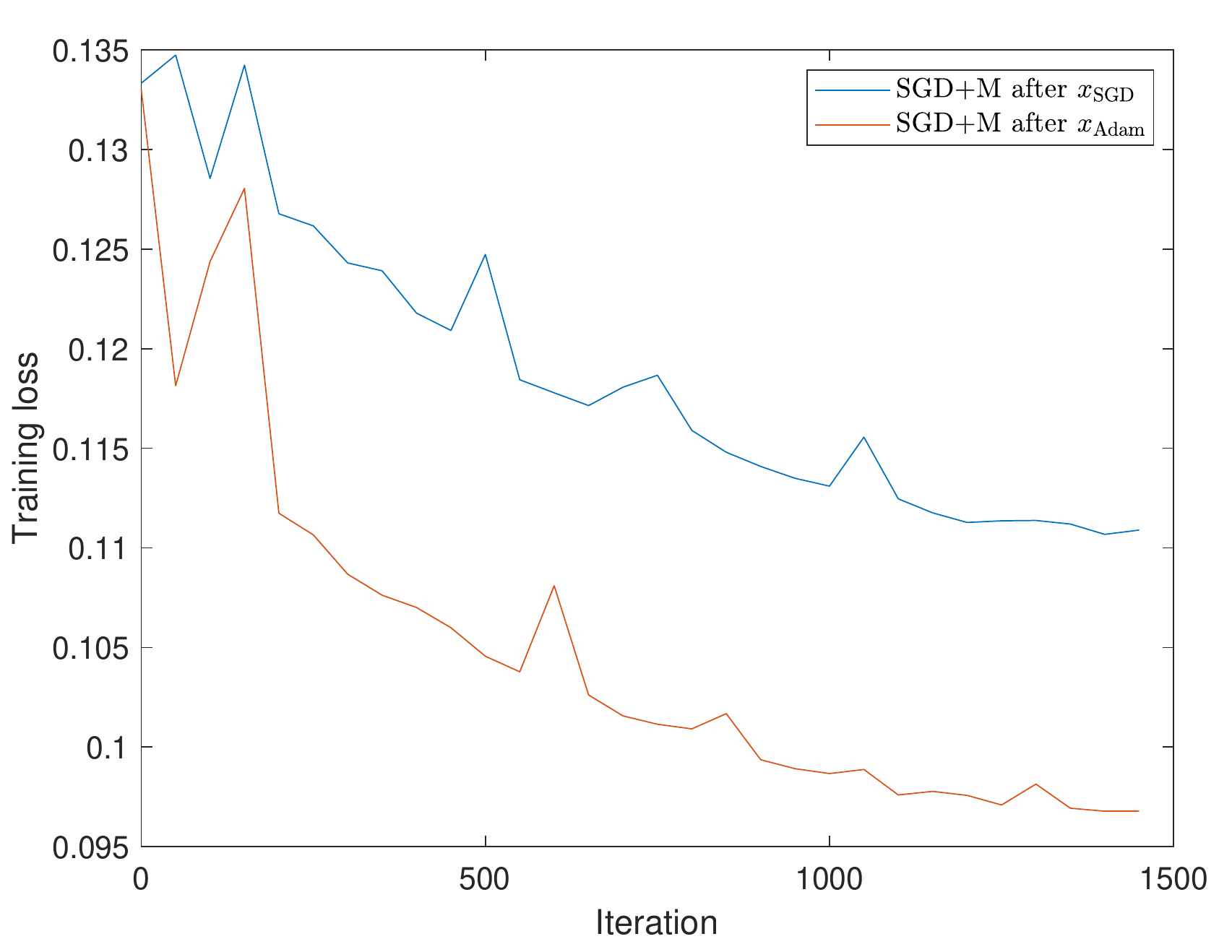}
			\caption{}
			\label{subfig:comp_losses}
		\end{subfigure}
		\begin{subfigure}{.4\textwidth}
			\centering
			\includegraphics[width=\linewidth]{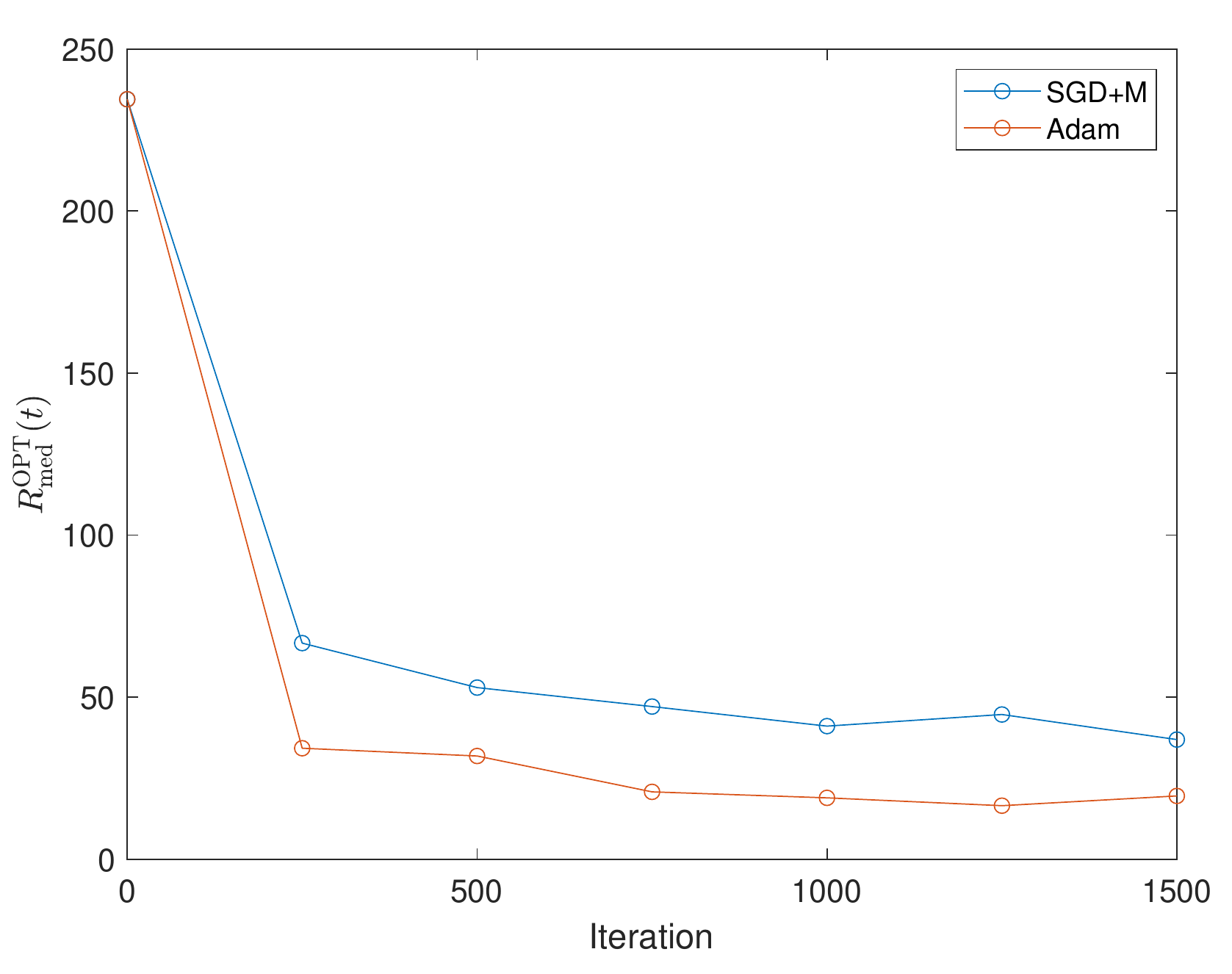}
			\caption{}
			\label{subfig:ratios_all_layers}
		\end{subfigure}
		\caption{(a) Training losses of SGD+M starting from $x_{\text{SGD}}$ and $x_{\text{Adam}}$. (b) The 10th largest value over median in the diagonal of loss Hessian (which can be viewed as a variant of $\RmedOPT$ defined in eq.~\eqref{eqn:ratio_uniform_median}) for Adam and SGD+M. Since the full Hessian is too big, here we selected several layers and randomly sampled 200 coordinates per layer to compute.}
	\end{figure}
	Moreover, the interaction between an optimization algorithm and neural network geometry is highly complex --- recent work has shown that geometric characteristics of iterates encountered during optimization is highly dependent on the choice of optimization algorithm and associated hyperparameters~\citep{lewkowycz2020large,DBLP:conf/iclr/CohenKLKT21}. For instance, Cohen et al.~\citep{DBLP:conf/iclr/CohenKLKT21} demonstrate that while training neural networks with GD, the maximum eigenvalue of the Hessian evaluated at the GD iterates first increases and then plateaus at a level 2/(step size). The viewpoint from convex optimization, where a loss function has some (potentially) non-uniform but fixed underlying geometry that we must adapt to, is thus insufficient for neural networks, since the choice of optimization algorithm can actually \emph{interact} with and \emph{influence} the observed geometry \textbf{significantly}.
	
	To provide another example of this interactive phenomenon, we consider the following experiment. On the same network training loss function $f$, we run stochastic gradient descent with momentum (SGD+M)  and Adam to obtain two different trajectories. We select an iterate $x_{\text{Adam}}$ from the Adam trajectory and an iterate $x_{\text{SGD}}$ from the SGD trajectory, such that $f(x_{\text{Adam}}) = f(x_{\text{SGD}})$. We then run SGD+M twice, once from $x_{\text{Adam}}$ and once from $x_{\text{SGD}}$. If the underlying geometry of the loss function $f$ was truly fixed, then we would not expect a significant difference in the performance of running SGD+M from either of the two iterates. However, as shown in Figure~\ref{subfig:comp_losses}, there is a noticeable difference in performance, and running SGD+M from $x_{\text{Adam}}$ achieves  lower loss than running SGD+M from $x_{\text{SGD}}$. This suggests that Adam may bias the optimization trajectory towards a region which is more favorable for rapid training. This motivates the following question.
	\begin{center}
		\emph{How does adaptive optimization impact the observed geometry of a neural network loss function, relative to SGD (with momentum)?}
	\end{center}
	
	The remainder of this paper is dedicated to answering the above question. To this end, for each iterate in a trajectory produced by running an optimization algorithm OPT, where the Hessian of the $t$th iterate is given by $H^{(t)} \in \mathbb{R}^{d \times d}$, we define the second order statistic $\RmedOPT$ in the following fashion. For the $t$th iterate in the trajectory, let $\RmedOPT$ be the ratio of \emph{maximum} of the absolute entries of the diagonal of $H^{(t)}$, to the \emph{median} of the absolute entries of the diagonal of $H^{(t)}$. Concretely, we define
	\begin{equation}\label{eqn:ratio_uniform_median}
		\RmedOPT = \frac{\max \{ \vert H^{(t)}_{ii} \vert \}_{i=1}^d}{\text{median }\{ \vert H^{(t)}_{ii} \vert \}_{i=1}^d }.
	\end{equation}
	This statistic thus measures the \emph{uniformity} of the diagonal of the Hessian, where a smaller value of $\RmedOPT$ implies that the Hessian has a more uniform diagonal. It can also be viewed as a stable\footnote{Consider the case where one parameter has little impact on the loss, then the second derivative w.r.t. this parameter is almost zero, making $\frac{\max \{ \vert H^{(t)}_{ii} \vert \}_{i=1}^d}{\min \{ \vert H^{(t)}_{ii} \vert \}_{i=1}^d }$ infinity. So we consider \emph{median} which is more stable.} variant of the condition number. Instead of eigenvalues, we choose diagonal entries because adaptive methods used in practice are \emph{coordinate-wise}, which can be viewed as the diagonal scaling approaches.\footnote{Recall that the main theoretical bound in the original Adagrad paper~\cite{duchi2011adaptive} is in terms of the diagonal scaling.} Hence we believe the diagonal of Hessian is more relevant than the spectrum. As a supplementary result, in Appendix~\ref{sec:Hessian_diag}, we demonstrate that the loss Hessian approaches diagonal during training for Adam and SGD+M. There has been prior theoretical work on overparameterized neural networks showing that a smaller condition number of Hessian, Neural Tangent Kernel~\citep{jacot2018neural} etc. could yield to faster convergence rate for (S)GD \citep{liu2022loss}. As for (diagonal) adaptive methods (\emph{e.g.} Adagrad), they were original designed to adapt to the nonuniform diagonal geometry. Intuitively, a smaller $\RmedOPT$, which implies more uniform diagonal geometry, could lead to faster convergence.
	
	Armed with this statistic, we make the following contributions:
	\begin{itemize}
	
	\item On a wide variety of neural network transformer architectures and language modeling datasets, we conduct experiments to compare how $\RmedAda$ and $\RmedSGDM$ evolve over time, when Adam and SGD+M are run from the same initialization and with their optimal (initial) learning rates respectively. In each case, we demonstrate that the Adam trajectory attains  $\RmedAda$ values that are significantly smaller than the $\RmedSGDM$ values found by SGD+M. We show a simple example of this phenomenon in Figure~\ref{subfig:ratios_all_layers}. This suggests that relative to SGD+M, Adam biases the optimization trajectory to a region where the Hessian diagonal is more uniform. We call this phenomenon \emph{the uniformity of diagonal geometry} for adaptive methods. As an aside, we observe that larger improvements in performance of Adam over SGD+M are correlated with larger gaps between $\RmedAda$ and $\RmedSGDM$.  This suggests that a region where the Hessian diagonal is more uniform is also a region that is more amenable to rapid optimization.
	
	\item We complement our empirical results with a theoretical analysis of this phenomenon in the simplified setting of large batch Adam and SGD+M, on a two-layer linear network with $d$-dimensional input and hidden layer, and one dimensional output. We show that for a wide range of $t$, $\RmedAda = 1\pm o(1)$ but $\RmedSGDM=\Omega(\log d)$. Our proof reveals that Adam induces the weight matrices to have low rank whose leading singular vectors have certain type of uniformity (see Section~\ref{sec:low_rank} for discussion), a fact that we also observe empirically in large scale neural networks, suggesting that this may be a mechanism by which adaptive methods bias trajectories to have uniformity of diagonal geometry. 
	\end{itemize}
	
\section{Related work}\label{sec:prior_work}
\textbf{Existing analyses of adaptive methods.} The vast majority of prior theoretical work on adaptive methods has focused on the blackbox setting~\citep{duchi2011adaptive, DBLP:journals/corr/KingmaB14, DBLP:conf/ijcai/ChenZTYCG20, DBLP:conf/iclr/ReddiKK18, ward2020adagrad, defossez2020simple, DBLP:conf/aaai/EneNV21}. These works make minimal assumptions about the structure of the loss function, beyond (possibly) some global properties such as convexity or smoothness. These global properties (governed by parameters such as the smoothness parameter) are assumed to hold over the entire domain. Hence this style of analysis is worst case, since the resulting convergence bounds depend on polynomially on these global parameters. However, as we show in Section~\ref{subsec:prior_adaptive_issues}, in neural networks these parameters are prohibitively large. This worst case analysis is hence unlikely to explain the success of adaptive methods on neural networks. By contrast, our focus is on analyzing the local trajectory that is induced by running the optimization method.

\textbf{Existing analyses of (S)GD on neural networks.} There is an extensive literature on the analysis of GD/SGD in the non-blackbox setting, \emph{e.g.} overparameterized neural networks, \citep{du2018gradient, DBLP:conf/iclr/JiT20, NEURIPS2019_62dad6e2, allen2019convergence, arora2019fine, liu2022loss}. However, it is unclear how to translate these analyses of GD/SGD, to an analysis that explains the gap between GD/SGD and adaptive methods.

\textbf{Influence of algorithms on the loss geometry.} In many simple convex settings, \emph{e.g.} linear or logistic regression and the Neural Tangent Kernel~\citep{jacot2018neural}, the loss geometry is usually fixed and not influenced by learning algorithms. However, in neural networks the interaction between algorithms and loss landscapes is more complicated. Lewkowycz et al.~\citep{lewkowycz2020large} find a so-called catapult effect of initial learning rate on the training trajectory of SGD and related loss curvature. Cohen et al.~\citep{DBLP:conf/iclr/CohenKLKT21} demonstrate that while training neural networks with
GD, the maximum eigenvalue of the Hessian evaluated at
the GD iterates first increases and then plateaus at a level
that is inversely proportional to the step size. However, Cohen et al.~\citep{DBLP:conf/iclr/CohenKLKT21} leave open the problem of whether similar interactive phenomena occur in algorithms that are not GD, including adaptive methods.
	
\section{Overview of results and setup}\label{sec:overview_setup}
\subsection{Issues of prior analyses on adaptive methods}
\label{subsec:prior_adaptive_issues}
As is mentioned in Section~\ref{sec:prior_work}, existing work on adaptive algorithms has mainly focused on black-box analysis assuming some global worst-case parameters. However, these global bounds can be extremely bad in complicated deep learning models, as is discussed in Section~\ref{sec:intro}. To see this, we initialized a transformer model\footnote{\url{https://pytorch.org/tutorials/beginner/transformer_tutorial.html}}  with default initialization in Pytorch but chose a large gain\footnote{This refers to the gain parameter in some commonly used initialization functions of Pytorch, \emph{e.g.} torch.nn.init.xavier\_uniform\_().}, and computed the smoothness parameter (denoted as $l$) and the condition number (denoted as $\kappa$) of loss Hessian on one layer. We observed that setting the gain as a large constant (\emph{e.g.} 800) results in extremely large $l$ and $\kappa$ ($l\ge 10^7$ and $\kappa\ge 10^{10}$), which makes the convergence rates in prior black-box analysis vacuous.

The failure of global worst-case analysis implies that we need to focus on the local trajectory of algorithms. However, it is unclear that when two optimization algorithms are used, they will have the same geometry in local trajectory. In particular, although in theory, adaptive algorithms can yield to a convergence rate with better dependency on certain local geometry of the function comparing to SGD (with momentum), it could still be the case that the local geometry along the trajectory of adaptive algorithm can be much worse than that of SGD (with momentum).

That motivates us to study the local geometry, especially that obtained by adaptive methods comparing to SGD (with momentum) in the paper. Motivated by the diagonal scaling of Adagrad and Adam for neural network training, we ask the follow \textbf{main question} in our paper: 
\begin{center}
	How does the local diagonal geometry (diagonal of the loss Hessian) along the local trajectory of adaptive algorithms compare to that of SGD (with momentum)?
\end{center}

\subsection{Overview of the experiments}\label{subsec:exp_setup}
As is discussed in Section~\ref{sec:intro}, we consider $\RmedOPT$ defined in eq.~\eqref{eqn:ratio_uniform_median} as a measurement of the uniformity of the diagonal of the loss Hessian. We conduct experiments on different NLP tasks to examine $\RmedOPT$, as in language models, adaptive methods have shown significantly faster convergence than SGD (with momentum). The details of these experiments will be shown in Section~\ref{sec:empirical}. To explore potential different patterns of different layers, we do the computation layer by layer. On a wide variety of transformer architectures and language modeling datasets from the same initialization, we observe that:

\textbf{When we train the neural network using Adam, the uniformity of diagonal geometry, measured by $  \RmedOPT$ is smaller than that when we train using SGD+M from the same initialization, except for first several layers.}

Table~\ref{tab:unif_bert} shows a typical example of $\RmedAda$ compared to $\RmedSGDM$ on a sentence classification task using BERT-small \citep{turc2019,bhargava2021generalization} (see Section~\ref{subsec:real_data} for details). We repeated the experiments for 12 times starting from the same initialization. Table~\ref{tab:unif_bert} shows the averaged $\RmedAda$ and $\RmedSGDM$ in some randomly selected layers (except for the first several). We also report the averaged $\frac{\RmedSGDM}{\RmedAda}$ and their standard deviations in the brackets.\footnote{$\RmedSGDM$ values in Table~\ref{tab:unif_bert} for most layers are roughly 1.4 to 2 times $\RmedAda$ in corresponding layers. In practice, it can be considered significant because it might imply 1.4 to 2 times faster convergence.} Figure~\ref{fig:BERT} shows the corresponding training losses of one in these 12 experiments.

To understand this phenomenon in a more principled point of view, we also provide a formal proof of the statement in a simplified setting: large batch Adam and SGD+M on a two-layer linear network. Although simple, the choice of two-layer linear network to understand learning dynamics is common in prior works (\emph{e.g.}~\cite{pmlr-v139-tian21a}). Section~\ref{subsec:setup} below describes the theoretical setup.

\begin{minipage}[b]{0.35\textwidth}
	\includegraphics[width=\linewidth]{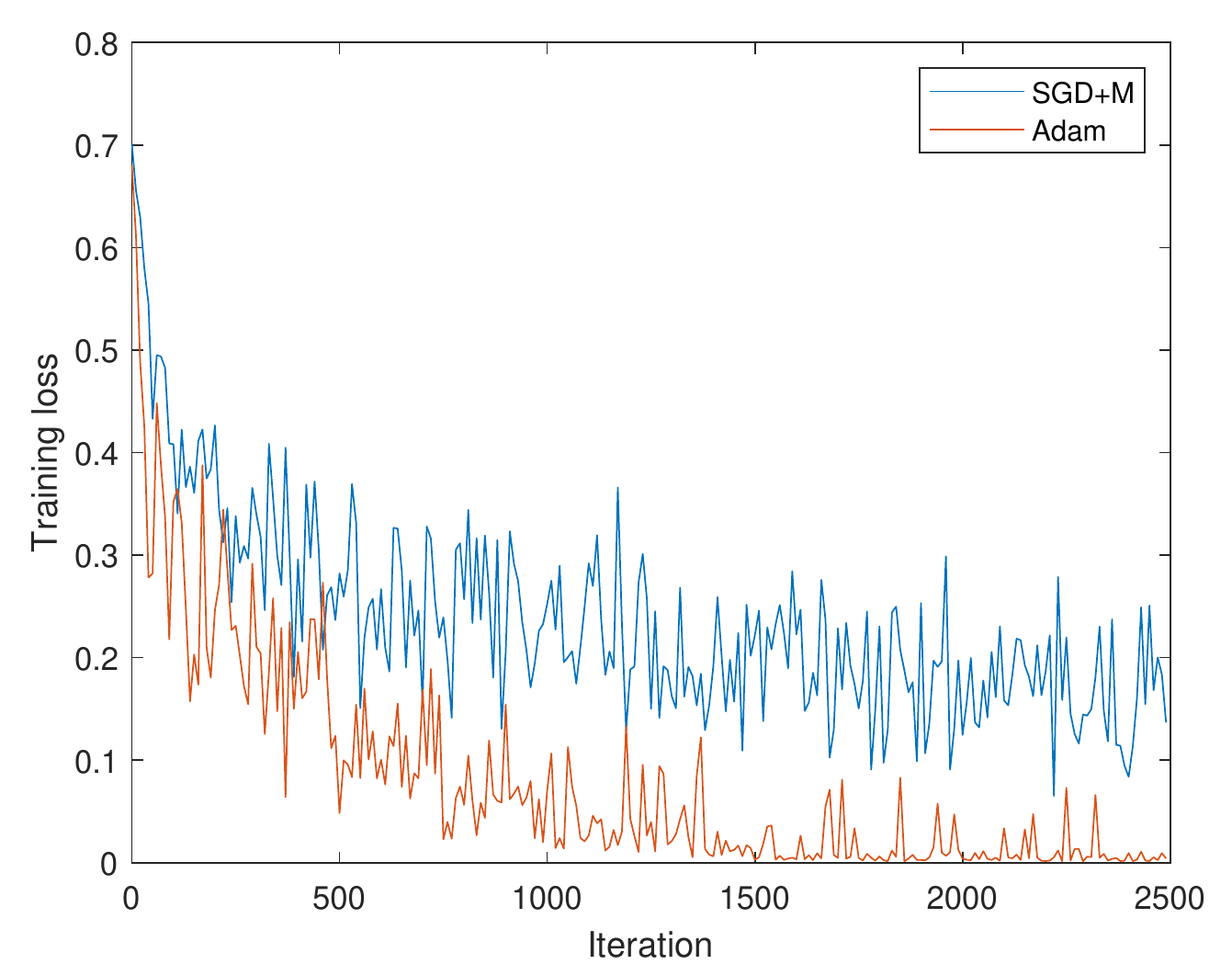}
	\captionof{figure}{Training losses of Adam and SGD+M on the sentence classification task described in Section~\ref{subsec:real_data}.}
	\label{fig:BERT}
\end{minipage}
\begin{minipage}[b]{0.6\textwidth}
	\centering
	\captionof{table}{$\RmedAda$ and $\RmedSGDM$ in some layers, on a sentence classification task using BERT-small described in Section~\ref{subsec:real_data}.}
	\label{tab:unif_bert}
	\resizebox{\linewidth}{!}{
		\begin{tabular}[b]{ccccccccc}
			\toprule
			\multirow{2}{*}{Layer\#} & \multicolumn{2}{c}{Iteration 0} & \multicolumn{3}{c}{Iteration 750} & \multicolumn{3}{c}{Iteration 1250}\\
			& $\RmedSGDM$ & $\RmedAda$ & $\RmedSGDM$ & $\RmedAda$ & $\frac{\RmedSGDM}{\RmedAda}$ & $\RmedSGDM$ & $\RmedAda$ & $\frac{\RmedSGDM}{\RmedAda}$ \\
			\midrule
			9 & 15.7 &	15.7 & 12.76 & 9.65 & 1.45 (0.65) &	11.43 &	14.24 & 0.94 (0.40)\\	
			\midrule
			12 & 22.63 &	22.63 &	13.17 &	7.41 & 1.92 (0.67) &	10.62 &	9.67 & 1.33 (0.75)\\
			\midrule
			15 & 9.35 &	9.35 &	80.57 &	53.52 & 1.65 (0.65) &	100.65 &	61.80 & 2.01 (1.00)\\
			\midrule
			17 & 82.37 &	82.37 &	405.02 &	223.56 & 1.91 (0.53)	& 423.28 &	337.32 & 1.43 (0.63)\\
			\midrule
			18 & 31.32 &	31.32 &	17.07 &	13.24 & 1.43 (0.58) &	18.15 &	15.63 & 1.21 (0.36)\\
			\midrule
			22 & 47.13	& 47.13	& 233.72 &	72.67 & 3.54 (1.21) &	158.38 &	93.13 & 2.28 (1.18)\\
			\midrule
			24 & 31.17 &	31.17 &	17.52 &	17.34 & 1.13 (0.40) &	13.51 &	14.23 & 1.05 (0.36)\\
			\bottomrule
		\end{tabular}
	}
\end{minipage}
\subsection{Setup of the theoretical analysis}\label{subsec:setup}
\paragraph{Notation} Let $[d] = \{1, 2, ... , d\}$. We use $\|\cdot\|_2$ to denote the $l_2$ norm of a vector, and $\|\cdot\|_F$ to denote the Frobenius norm of a matrix. Let $\langle\cdot,\cdot\rangle$ be the Euclidean inner product between vectors or matrices. Let $\mathcal{N}(\mu, \sigma^2)$ be the one-dimensional Gaussian distribution with mean $\mu$ and variance $\sigma^2$. For a scalar (vector, matrix) $A$ which evolves over time, we use $A^{(t)}$ to denote its value at time $t$.

Let there be $m$ data points. The data matrix is $X \in \mathbb{R}^{d_x \times m}$ and the label matrix is $Y \in \mathbb{R}^{d_y \times m}$. We assume that the input dataset is whitened, i.e. $\Lambda_{xx} := \frac{1}{m}XX^T\in\mathbb{R}^{d_x\times d_x}$ is an identity matrix.

The parameters of a 2-layer linear network are given by $W := (W_2, W_1)$.
Assume $W_i\in\mathbb{R}^{d_i\times d_{i-1}}$ for $i=1,2$. We have $d_{2}=d_y,d_0=d_x$. We consider the square loss $L(W) := \frac{1}{2m} \| W_2W_1X - Y  \|_F^2$.

Denote $A:=\frac{1}{m}YX^T\in\mathbb{R}^{d_y\times d_x}$. Arora et al.~\citep{arora2019convergence} show that with whitened dataset,
\begin{align}\label{eqn:equiv_loss}
	L(W) := \frac{1}{2m} \| W_2W_1X - Y  \|_F^2=\bar L(W)+c,\quad \bar L(W):=\frac{1}{2}\| W_2W_1-A\|_F^2.
\end{align}
where $c$ does not depend on $W$. We consider the following model with small Gaussian initialization.
\begin{assumption}[Setup]\label{assump:setup}
	The input covariance $\Lambda_{xx} := \frac{1}{m}XX^T\in\mathbb{R}^{d_x\times d_x}$ is an identity matrix. The input and hidden layers are both of dimension $d$, i.e. $d_1=d_0=d$. Without loss of generality, we can assume that $A$ is a row vector (i.e. $d_2=1$) whose coordinates are positive\footnote{In Assumption~\ref{assump:gaussian_init} we assume Gaussian initialization. Due to the rotational invariance of Gaussian distribution, we can assume that all coordinates of $A$ are positive without loss of generality.} and $\Theta(1)$ in terms of $d$.
\end{assumption}
\begin{assumption}[Gaussian Initialization]\label{assump:gaussian_init}
	$\forall i,j: w_{2i}^{(0)}\sim \mathcal{N}(0,\frac{1}{d^{2\alpha}}), W_1^{(0)}[i,j]\sim \mathcal{N}(0,\frac{1}{d^{4\alpha}})$ are independently initialized with sufficiently large $\alpha>0$.
\end{assumption}
Denote $\tilde A$ and $\tilde\Lambda_{xx}$ as the batch versions of $A$ and $\Lambda_{xx}$. We make the following large-batch assumption. We emphasize that large batches are commonly used in NLP tasks (\emph{e.g.}~\cite{NEURIPS2020_1457c0d6}).
\begin{assumption}[Large batch]\label{assump:large_batch}
	For the randomly selected batches, assume $\mathbb{E}[\tilde A]=A$, $\mathbb{E}[\tilde\Lambda_{xx}]=\Lambda_{xx}$. $\forall i,j\in[d]:\mathbb{E}\left[(\tilde A_i-A_i)^2\right]\le\sigma^2$, $\mathbb{E}\left[(\tilde\Lambda_{xx}[i,j]-\Lambda_{xx}[i,j])^2\right]\le\sigma^2$, and $\sigma^2=\mathcal{O}(\frac{1}{\text{poly}(d)})$.
\end{assumption}
Denote $\tilde g^{(t)}$ as the batch gradient at time $t$. The update rules of SGD+M and Adam are given by
\begin{equation}\label{eqn:update_general}
	\begin{aligned}
		\text{SGD+M:}\quad &u^{(t+1)}=\beta u^{(t)}+\tilde g^{(t)},\quad W^{(t+1)}=W^{(t)}-\eta u^{(t)},\\
		\text{Adam:}\quad &\eta_t = \eta \cdot \frac{\sqrt{1 - \beta_2^{t+1}}}{1 - \beta_1^{t+1}},\quad m^{(t+1)} = \beta_1 m^{(t)}+(1-\beta_1)\tilde g^{(t)},\\
		&v^{(t+1)} = \beta_2 v^{(t)}+(1-\beta_2)\tilde g^{(t)}\odot\tilde g^{(t)},\quad W^{(t+1)}=W^{(t)}-\eta_t\frac{m^{(t)}}{\sqrt{v^{(t)}}+\xi},
	\end{aligned}
\end{equation}
where $\eta$ is the learning rate, $\beta, \beta_1,\beta_2$ are momentum parameters, and $\xi$ is for numerical stability. All operations on vectors are element-wise.

Here and throughout, the notation $f(x)=\mathcal{O}(g(x))$ (resp. $f(x)=\Omega(g(x)), f(x)=\Theta(g(x))$) means that there exist constants $C_1,C_2>0$ such that $f(x)\le C_2g(x)$ (resp. $f(x)\ge C_1g(x)$, $C_1g(x)\le f(x)\le C_2g(x)$). We will also use the notation with $\sim$, i.e. $\tilde{\mathcal{O}}(\cdot), \tilde{\Omega}(\cdot),\tilde{\Theta}(\cdot)$ to hide factors that are logarithmic in $d$. In our theoretical analysis, ``with high probability'', or ``w.h.p.'' for short, means that with probability at least $1-\frac{1}{\text{poly}(d)}$.
	
\section{The uniformity of diagonal geometry}\label{sec:empirical}
As is mentioned in Section~\ref{subsec:exp_setup}, we computed $\RmedOPT$ defined in eq.~\eqref{eqn:ratio_uniform_median} on different language models. In this section, we present the results of SGD+M and Adam on different architectures and datasets. In Appendix~\ref{sec:more_adapt}, we present the results of other adaptive algorithms.

During training we started from the same initial weights and used the same learning rate schedule (constant or decreasing) for SGD+M and Adam. We tuned and chose the best (initial) learning rate of SGD+M. The (initial) learning rate of Adam was set as a value under which Adam converged faster than SGD+M with its best learning rate. The concrete values will be stated in later parts of this section. We used large batch sizes to make the training procedure stable. When computing Hessian, we also used large batch sizes. Due to the extremely large dimension, we did the computation on some uniformly selected coordinates, more precisely, 200 coordinates per layer.

\subsection{Experiments on real datasets}\label{subsec:real_data}
\paragraph{Sentence classification task on BERT-small}
We fine-tuned BERT-small \citep{turc2019,bhargava2021generalization} on the IMDB dataset \citep{maas-EtAl:2011:ACL-HLT2011}: the task is to classify whether movie reviews are positive or negative.\footnote{\url{https://huggingface.co/docs/transformers/v4.16.2/en/training}} The momentum parameter $\beta$ in SGD was set as 0.9. The two momentum parameters $(\beta_1,\beta_2)$ of Adam were set as (0.9, 0.999). We trained the model using linearly decreasing learning rates for 10 epochs (2500 iterations). The initial learning rates of SGD+M and Adam were 0.001 and 5e-5, respectively. As mentioned in Section~\ref{subsec:exp_setup}, Figure~\ref{fig:BERT} and Table~\ref{tab:unif_bert} show the training losses and the comparison between $\RmedAda$ and $\RmedSGDM$.

\paragraph{Translation task}
We trained a Seq2Seq network that uses Transformer to solve a machine translation task on Multi30k~\citep{W16-3210}(CC BY-NC-SA 4.0): this task is to train a German to English translation model.\footnote{\url{https://pytorch.org/tutorials/beginner/translation_transformer.html}} The momentum parameter $\beta$ in SGD was set as 0.9. The two momentum parameters $(\beta_1,\beta_2)$ of Adam were set as (0.9, 0.98). We trained the model using constant learning rates (0.03 for SGD+M and 1e-4 for Adam) for 60 epochs (1800 iterations). The experiments were repeated for 8 times starting from the same initialization. Figure~\ref{fig:translation} shows the training losses for one among them. Table~\ref{tab:unif_translation} shows the averaged $\RmedAda$, $\RmedSGDM$ and $\frac{\RmedSGDM}{\RmedAda}$ (with standard deviation in the brackets) in some randomly selected layers.

\begin{figure}[ht]
	\centering
	\begin{subfigure}{.4\textwidth}
		\centering
		\includegraphics[width=\linewidth]{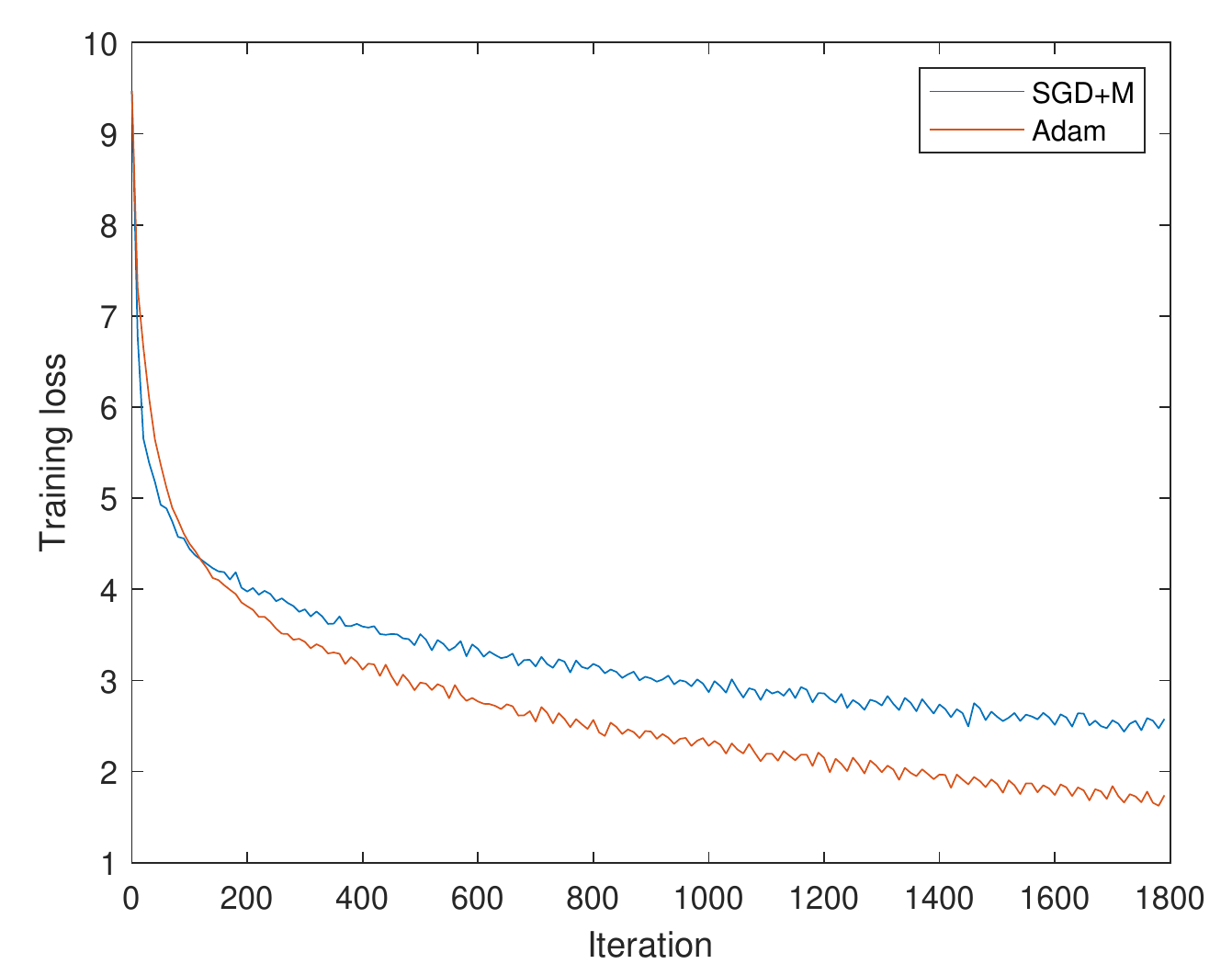}
		\caption{}
		\label{fig:translation}
	\end{subfigure}
	\begin{subfigure}{.4\textwidth}
		\centering
		\includegraphics[width=\linewidth]{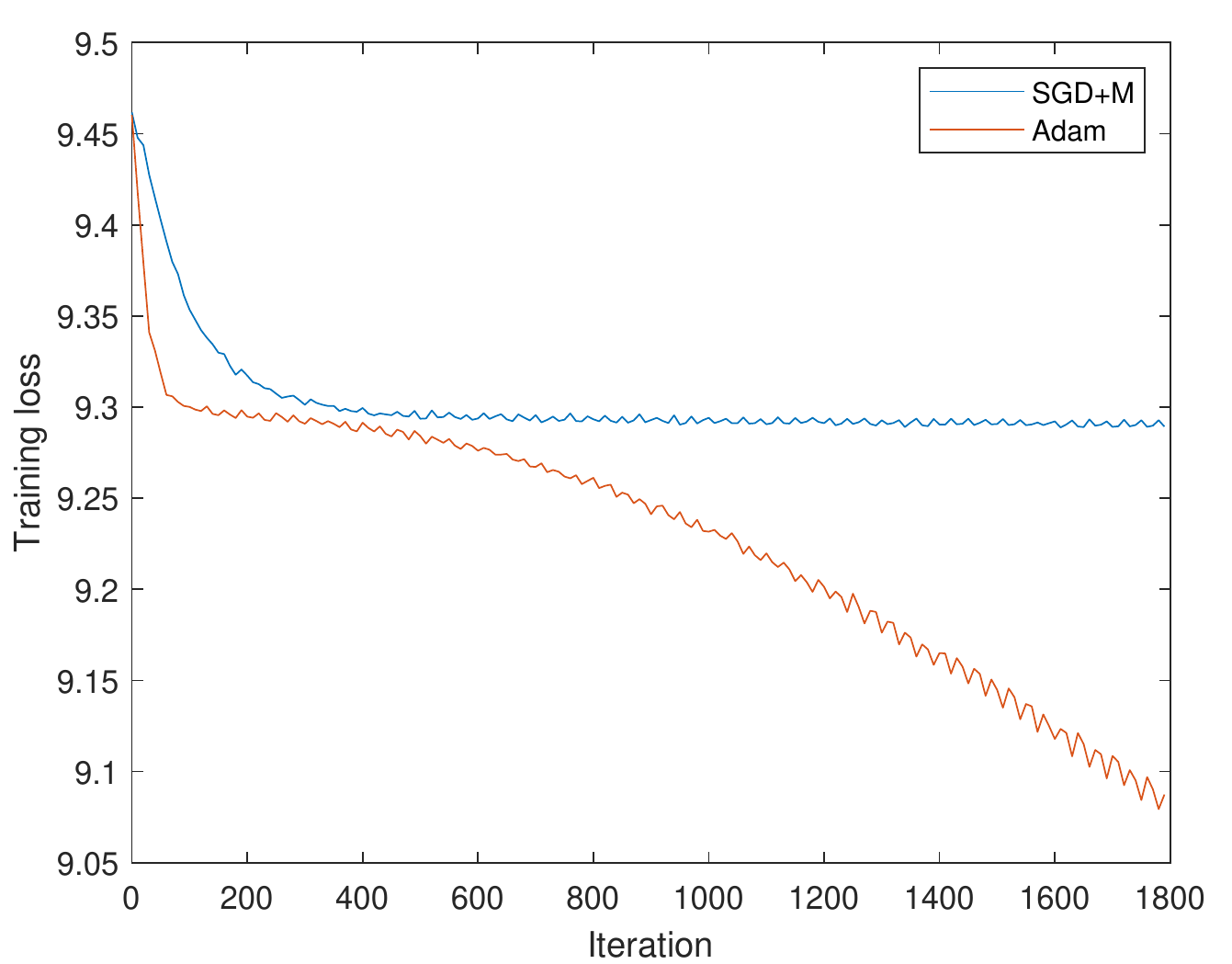}
		\caption{}
		\label{fig:translation_random}
	\end{subfigure}
	\caption{Training losses of Adam and SGD+M for the translation task on (a) Multi30k (see Section~\ref{subsec:real_data}), (b) data with randomly generated targets (see Section~\ref{subsec:random_data}).}
\end{figure}

\begin{table}[ht]
	\centering
	\caption{$\RmedAda$ and $\RmedSGDM$ in some layers for the translation task. (a) on Multi30k (see Section~\ref{subsec:real_data}) and (b) on data with randomly generated targets (see Section~\ref{subsec:random_data}).}
	\begin{subtable}{.8\textwidth}
		\centering
		\caption{}
		\label{tab:unif_translation}
		\resizebox{\columnwidth}{!}{
			\begin{tabular}{ccccccccc}
				\toprule
				\multirow{2}{*}{Layer\#} & \multicolumn{2}{c}{Epoch 0} & \multicolumn{3}{c}{Epoch 30} & \multicolumn{3}{c}{Epoch 55}\\
				& $\RmedSGDM$ & $\RmedAda$ & $\RmedSGDM$ & $\RmedAda$ & $\frac{\RmedSGDM}{\RmedAda}$ & $\RmedSGDM$ & $\RmedAda$ & $\frac{\RmedSGDM}{\RmedAda}$ \\
				\midrule
				3&	4.27&	4.27&	5.14&	2.41&	2.16 (0.75)&	3.14&	2&	1.58 (0.41)
				\\	
				\midrule
				5&	7.09&	7.09&	36.11&	18.33&	2.00 (0.42)&	52.12&	16.59&	3.16 (0.64)
				\\
				\midrule
				7&	5.79&	5.79&	5.91&	3.87&	1.55 (0.32)&	7.52&	3.08&	2.45 (0.56)
				\\
				\midrule
				9&	18.11&	18.11&	28.93&	20.74&	1.43 (0.28)&	36.67&	18&	2.05 (0.18)
				\\
				\midrule
				12&	11.1&	11.1&	6.64&	7.25&	0.95 (0.21)&	9.27&	5.06&	1.88 (0.54)
				\\
				\midrule
				15&	83.15&	83.15&	52.41&	7.5&	7.15 (1.63)&	46.27&	5.69&	8.6 (3.06)
				\\
				\midrule
				18&	14.99&	14.99&	4.19&	4.22&	1.17 (0.45)&	3.09&	2.72&	1.2 (0.46)
				\\
				\midrule
				21&	93.5&	93.5&	30.29&	5.36&	5.72 (1.05)&	19.27&	4.8&	4.09 (0.86)
				\\
				\midrule
				24&	36.63&	36.63&	6.14&	4.66&	1.35 (0.31)&	5.02&	3.2&	1.6 (0.36)
				\\
				\midrule
				28&	18.47&	18.47&	3.07&	1.95&	1.58 (0.16)&	2.9&	1.59&	1.83 (0.14)\\
				\bottomrule
			\end{tabular}
		}
	\end{subtable}

	\begin{subtable}{0.8\textwidth}
		\centering
		\caption{}
		\label{tab:unif_translation_rand}
		\resizebox{\columnwidth}{!}{
			\begin{tabular}{ccccccccc}
				\toprule
				\multirow{2}{*}{Layer\#} & \multicolumn{2}{c}{Epoch 0} & \multicolumn{3}{c}{Epoch 30} & \multicolumn{3}{c}{Epoch 55}\\
				& $\RmedSGDM$ & $\RmedAda$ & $\RmedSGDM$ & $\RmedAda$ & $\frac{\RmedSGDM}{\RmedAda}$ & $\RmedSGDM$ & $\RmedAda$ & $\frac{\RmedSGDM}{\RmedAda}$ \\
				\midrule
				3 &	4.82 &	4.82&	3.98&	1.8&	2.23 (0.36)&	3.79&	1.61&	2.36 (0.32)\\	
				\midrule
				5&	8.04&	8.04&	46.06&	45.84&	1.01 (0.17)&	47.83&	34.18&	1.41 (0.30)\\
				\midrule
				7&	5.69&	5.69&	44.77&	3.92&	11.79 (2.37)&	46.5&	2.74&	17.4 (2.99)\\
				\midrule
				9&	11.89&	11.89&	317.34&	55.61&	5.81 (0.70)&	351.85&	46.54&	7.61 (0.87)\\
				\midrule
				12&	19.73&	19.73&	133.39&	3.91&	34.17 (4.51)&	145.09&	2.97&	49.49 (13.40)\\
				\midrule
				15&	32.12&	32.12&	462.74&	51.53&	9.03 (0.91)&	492.73&	50.57&	9.84 (1.03)\\
				\midrule
				18&	19.79&	19.79&	74.6&	6.59&	11.8 (3.33)&	79.02&	3.58&	22.75 (6.01)\\
				\midrule
				21&	26.94&	26.94&	767.31&	48.89&	16.4 (3.38)&	797.49&	36.88&	21.98 (3.40)\\
				\midrule
				24&	34.72&	34.72&	467.75&	9.15&	52.57 (11.16)&	602.03&	3.51&	172.65 (18.85)\\
				\midrule
				28&	13.13&	13.13&	19.8&	2.22&	8.99 (1.74)&	19&	1.63&	11.7 (1.48)\\
				\bottomrule
			\end{tabular}
		}
	\end{subtable}
\end{table}
\subsection{Experiments on random datasets}\label{subsec:random_data}
We used the same model and momentum parameters as in the translation task described in Section~\ref{subsec:real_data} but generated random integers as targets. Similar to the setting on real targets, the model was trained using constant learning rates (0.015 for SGD+M and 5e-5 for Adam) for 60 epochs (1800 iterations), and we repeated the experiments for 8 times starting from the same initialization. Figure~\ref{fig:translation_random} shows the training losses for one among them. Table~\ref{tab:unif_translation_rand} shows the averaged $\RmedAda$, $\RmedSGDM$ and $\frac{\RmedSGDM}{\RmedAda}$ (with standard deviation in the brackets) of the same 10 layers as in Table~\ref{tab:unif_translation}.\footnote{To prevent $\RmedOPT$ from getting too large due to tiny median, we added an additional term $0.001\max\{ \vert H^{(t)}_{ii} \vert \}_{i=1}^d$ to the denominator of eq.~\eqref{eqn:ratio_uniform_median} when computing.
}
\subsection{How the (adaptive) gradient aligns with diagonal of loss Hessian}\label{sec:alignment}
	In this section we present the uniformity of diagonal geometry of adaptive methods from another perspective. Denote $H_{ii}$ as the $(i,i)$-th element of the loss Hessian $H$ and $g_i$ as the $i$-th element of the gradient. It is conjectured that when $|H_{ii}|$ is large, the corresponding $|g_i|$ is usually large as well. For adaptive methods, we can regard the update per step as the learning rate times the ``adaptive gradient''. Let's use $g_{\text{adapt}, i}$ to represent the $i$-th component of the adaptive gradient. Through experiments on language models, we find that $|g_{\text{adapt}, i}|$ for different $i$ are quite uniform and do not align with $|H_{ii}|$ as the true gradient $|g_i|$ does. 
	
	In the experiments, we first sorted $|H_{ii}|$ in the ascent order: $|H_{i_1,i_1}|\le|H_{i_2,i_2}|\le...\le|H_{i_d,i_d}|$ (suppose $H\in\mathbb{R}^{d\times d}$), and then plotted the corresponding $|g_{i_k}|$ and $|g_{\text{adapt}, i_k}|$ for $k\in[d]$. Figure~\ref{fig:alignment_BERT} shows the results for the 12-th layer of BERT-small on the sentence classification task described in Section~\ref{subsec:real_data}. Results of more settings can be found in Appendix~\ref{sec:alignment_more_exp}.

	\begin{figure*}
		\centering
		\includegraphics[width=0.9\linewidth]{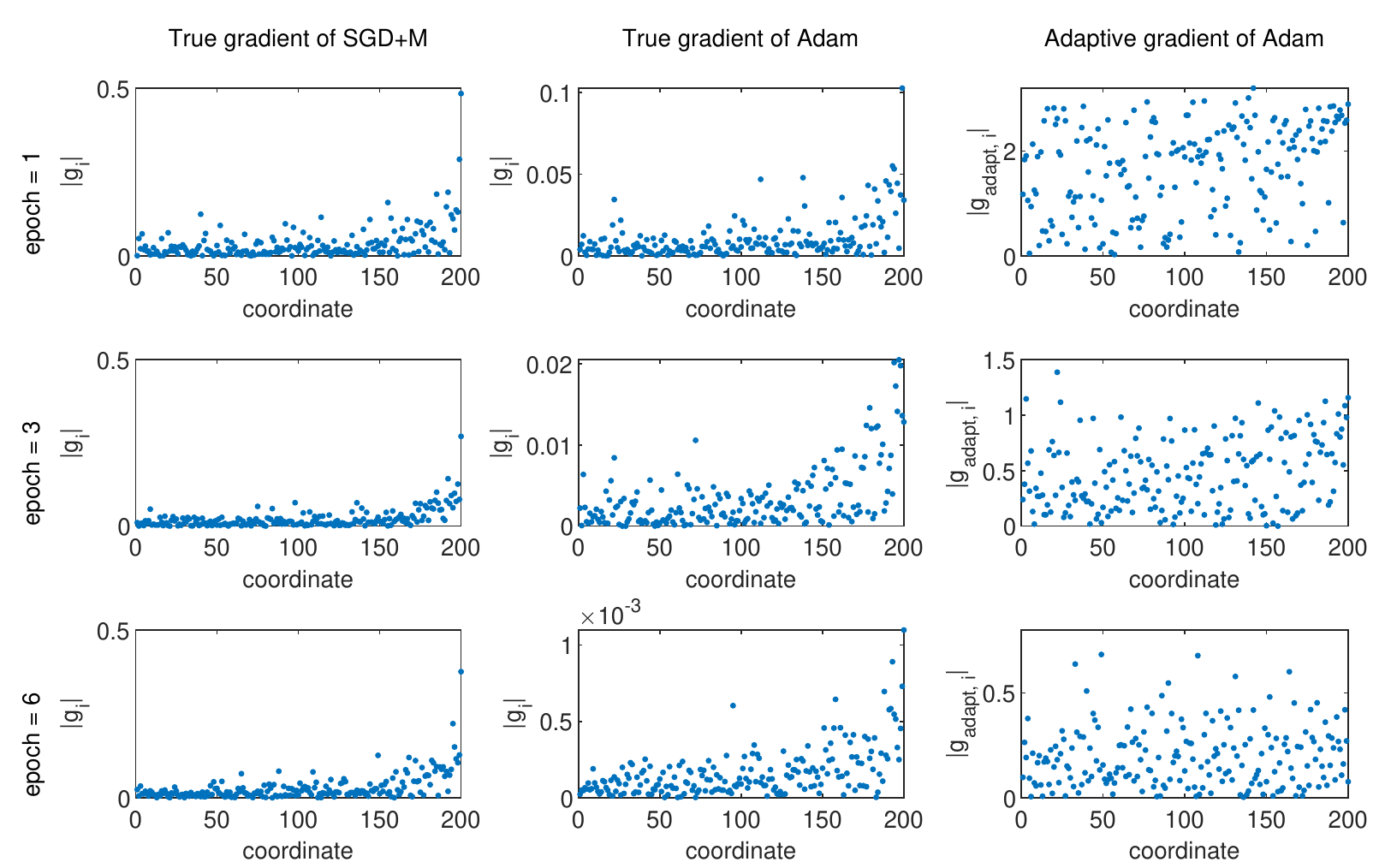}
		\caption{How the true gradient ($\{|g_{i_k}|\}_{k=1}^d$) and ``adaptive gradient'' ($\{|g_{\text{adapt},i_k}|\}_{k=1}^d$) align with diagonal of Hessian ($\{|H_{i_k,i_k}|\}_{k=1}^d$). Here coordinates are sorted such that $|H_{i_1,i_1}|\le|H_{i_2,i_2}|\le...\le|H_{i_d,i_d}|$ (suppose $H\in\mathbb{R}^{d\times d}$). See Section~\ref{sec:alignment} for more details.}
		\label{fig:alignment_BERT}
	\end{figure*}

\subsection{Summarization of the empirical results and discussion}\label{subsec:discussion}
Overall, through extensive experiments on language models, we demonstrate that \textbf{starting from the same initialization, the $\RmedOPT$ values found by Adam are smaller than those found by SGD+M, except for the first several layers.} This suggests that Adam is biased towards a region with more uniform diagonal Hessian than SGD+M.

\paragraph{Positive correlation between uniformity of diagonal Hessian and fast convergence.} We observe that on random dataset, SGD+M plateaus after about 400 steps and thus converges much slower when compared to Adam than on real dataset (see Figure~\ref{fig:translation} and Figure~\ref{fig:translation_random}). On the other hand, the gaps of $\RmedSGDM$ and $\RmedAda$ are more significant on random data than on real data (see Table~\ref{tab:unif_translation} and Table~\ref{tab:unif_translation_rand}) as well. In Appendix~\ref{subsec:switch}, we conduct another experiment where we switch from SGD to Adam in the middle and compare it with the model trained by Adam from the beginning. The observation is that both the loss gap and the gap of $\RmedOPT$ are gradually closed after switching (see Figure~\ref{fig:switch} and Table~\ref{tab:unif_switch}). Hence we find a positive correlation between fast convergence and the uniformity of diagonal of loss Hessian, suggesting that a region with more uniform diagonal of Hessian is also a region that is more amenable to fast optimization. In Appendix~\ref{sec:more_adapt} we \textbf{study other adaptive algorithms (Adagrad, RMSprop and AMSGrad) and get similar observation}: all these adaptive methods converge faster than SGD or SGD+M and also bias the trajectory to a region with smaller $\RmedOPT$, suggesting that the uniformity of diagonal Hessian might be a universal mechanism (partially) explaining the faster optimization of adaptive algorithms than SGD (with momentum).

\paragraph{More discussions on the trajectory difference.} Considering the fact that our comparison between $\RmedAda$ and $\RmedSGDM$ is conditioned on the same iteration when SGD+M has larger training loss than Adam, there is a potential alternative explanation of the Hessian diagonal uniformity. That is, the global minimum has uniform Hessian, and Adam simply converges faster to it than SGD+M, thus giving the appearance that it induces better geometry. To rule out this possibility, in Appendix~\ref{subsec:comp_same_loss} we add a comparison of our measurements $R^{\text{Adam}}_{\text{med}}(t)$ and $R^{\text{SGDM}}_{\text{med}}(t')$, where $t,t'$ are picked such that $t$th Adam iterate and $t'$th SGD+M iterate have the \emph{same training loss}. The results (in Table~\ref{tab:same_loss}) show that $R^{\text{Adam}}_{\text{med}}(t)<R^{\text{SGDM}}_{\text{med}}(t')$ for most layers, thus demonstrating that the trajectories of Adam and SGD+M are truly different and that the difference is because Adam biases the local geometry (as opposed to faster convergence).

\paragraph{Adding regularization.} People in practice usually add weight decay (equivalent to $l_2$ regularization) to encourage better generalization ability. In Appendix~\ref{subsec:regularization} we compare SGD+M and Adam when both using small weight decay values (0.001). The results in Figure~\ref{fig:regularization} and Table~\ref{tab:regularization} suggest that in this case, the relationship between $\RmedOPT$ and convergence speed still holds: Adam converges faster than SGD+M and in most of the layers except for the first several, $\RmedAda$ values are smaller than $\RmedSGDM$. This reveals the robustness of our observation under weak regularization. However, under large weight decay parameters, we observed cases where Adam still converged faster but $\RmedAda$ values were larger rather than smaller. In the case of strong regularization, the adaptivity of Adam requires further exploration and we hope to find new mechanisms in the future.

\paragraph{Image tasks.} Although in this paper we focus on language models where Adam shows significant fast convergence, we also add supplementary results in Appendix~\ref{subsec:image_task} on image tasks where SGD+M performs better. On a residual network trained on CIFAR-10, we observed that Adam did not converge faster than SGD+M (see Figure~\ref{fig:resnet_cifar10}) and in the meantime, $\RmedAda$ values were no longer smaller than $\RmedSGDM$ during training (see Table~\ref{tab:resnet_cifar10}). This reveals the connection between the local diagonal geometry and the convergence speed from another perspective. That is, when the diagonal of Hessian of Adam is not more uniform than SGD+M, its convergence speed is not better, either. In summary, all the observations on language and image tasks together suggest a positive correlation between the uniformity of diagonal Hessian and fast optimization.
	
\section{Theoretical analysis}
In Section~\ref{sec:empirical}, we empirically demonstrate the uniformity of diagonal geometry. In this section, we theoretically analyze this property for large batch Adam and SGD+M on a two-layer linear network with 1-dimensional output.

Since the weights and Hessians in different layers may have different magnitudes, we compute the $\RmedOPT$ layer by layer. We denote $\RmedSGDk$ (resp. $\RmedAdak$) as the $\RmedOPT$ found by SGD+M (resp. Adam) w.r.t. $W_k$ at time $t$ where $k=1,2$.
\begin{theorem}\label{thm:uniformity}
	Under Assumption~\ref{assump:setup}, \ref{assump:gaussian_init} and \ref{assump:large_batch}, consider the weights $\left\{W^{(t)}_{\text{SGD}}\right\}_{t\ge0}$ (resp. $\left\{W^{(t)}_{\text{Adam}}\right\}_{t\ge0}$) obtained by SGD+M (resp. Adam) defined in \eqref{eqn:update_general}.
	
	1. For any $p>0$, pick $0<\epsilon<\frac{1}{d^p}$, $\eta\le\mathcal{O}\left(\frac{\epsilon}{d^{7\alpha/4+4}}\right)$ and $\alpha\ge4(p+2)$. Suppose $\sigma\le\frac{\eta^{3/2}}{d^{\alpha/2+1}}$, then w.h.p., there exists $T_{\text{SGD},1},T_{\text{SGD},2}$ such that $\bar L\left(W_{\text{SGD}}^{(T_{\text{SGD},1})}\right)=\Theta(d)$, $\bar L\left(W_{\text{SGD}}^{(T_{\text{SGD},2})}\right)\le\tilde{\mathcal{O}}\left(\frac{1}{d^p}\right)$, and
	\[
	\forall t\in\left[T_{\text{SGD},1},T_{\text{SGD},2}\right]:\quad\RmedSGDk=\Omega(\log d),\quad k=1,2.
	\]
	
	2. For any $p>0$, pick $\eta\le\mathcal{O}\left(\frac{1}{d^{3\alpha}}\right),\xi\le\sqrt{\frac{\eta}{d^{3\alpha-1}}}$, $\alpha\ge\frac{p+4}{3}$ and $\beta_2=\beta_1^2$. Suppose $\sigma\le\frac{\eta^{3/2}\xi^2}{d^{13/4}}$, Then w.h.p., there exists $T_{\text{Adam},1},T_{\text{Adam},2}$ such that $\bar L\left(W_{\text{Adam}}^{(T_{\text{Adam},1})}\right)=\Theta(d)$, $\bar L\left(W_{\text{Adam}}^{(T_{\text{Adam},2})}\right)\le\tilde{\mathcal{O}}\left(\frac{1}{d^p}\right)$, and
	\[
	\forall t\in\left[T_{\text{Adam},1},T_{\text{Adam},2}\right]:\quad\RmedAdak=1\pm\tilde{\mathcal{O}}\left(\eta^{\frac{1}{4}}+\frac{1}{d^{\frac{\alpha}{2}-\frac{1}{4}}}\right),\quad k=1,2.
	\]
\end{theorem}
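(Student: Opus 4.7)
My plan is to reduce both statistics $\RmedSGDk$ and $\RmedAdak$ (for $k=1,2$) to the behavior of the scalars $\{w_{2i}^2\}_{i=1}^d$. A direct computation gives
\[
\frac{\partial^2 \bar L}{\partial (W_1)_{il}^2} = w_{2i}^2, \qquad \frac{\partial^2 \bar L}{\partial w_{2i}^2} = (W_1 W_1^T)_{ii}.
\]
Since the $W_1$-Hessian diagonal entry depends only on $i$, its median over the $d^2$ indices equals $\text{median}_i w_{2i}^2$, so for $k=1$ the ratio is $\max_i w_{2i}^2/\text{median}_i w_{2i}^2$. For $k=2$, I would invoke the approximate balance identity $W_1 W_1^T - W_2^T W_2 = W_1^{(0)}(W_1^{(0)})^T - (W_2^{(0)})^T W_2^{(0)}$ (exact under gradient flow, approximate under SGD+M with small stepsize, handled by direct row-wise tracking under Adam), giving $(W_1 W_1^T)_{ii} = w_{2i}^2 + O(d^{-(4\alpha-1)})$ once the weights leave the initialization scale. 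Both $\RmedSGDk$ and $\RmedAdak$ are therefore driven by $\max_i w_{2i}^2 / \text{median}_i w_{2i}^2$, up to corrections absorbable into the stated error.

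\textbf{SGD+M lower bound.} I would first analyze the continuous-time gradient flow while $u := W_2 W_1$ is still small. Linearizing gives $\dot{w}_{2i} \approx (W_1 A^T)_i$ and $\dot{(W_1)}_{ij} \approx w_{2i} A_j$, which decouple across $i$ into the scalar ODE $\ddot{w}_{2i} = \|A\|^2 w_{2i}$. The growing mode has coefficient $\zeta_i := w_{2i}^{(0)}/2 + (W_1^{(0)} A^T)_i/(2\|A\|)$, a Gaussian with variance $\Theta(d^{-2\alpha})$ (the $w_{2i}^{(0)}$ piece dominates since $\|A\|^2 = \Theta(d)$). Hence $w_{2i}(t) \approx \zeta_i e^{\|A\|t}$ throughout the linear phase, and the common exponential factor cancels, giving $\max_i w_{2i}(t)^2/\text{median}_i w_{2i}(t)^2 = \max_i \zeta_i^2/\text{median}_i \zeta_i^2 = \Theta(\log d)$ w.h.p.\ by standard Gaussian extreme-value estimates. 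This establishes $\RmedSGDk = \Omega(\log d)$ around $T_{\text{SGD},1}$. To push through $T_{\text{SGD},2}$, I would argue that small-initialization gradient flow concentrates on the single rank-one mode whose left singular vector has entries proportional to $\zeta_i$, so the relative magnitudes of $w_{2i}$ persist through the saturation phase. Finally, the discrete SGD+M trajectory tracks this gradient flow to leading order: the prescribed $\eta \le \mathcal{O}(\epsilon/d^{7\alpha/4+4})$ and $\sigma \le \eta^{3/2}/d^{\alpha/2+1}$ ensure that the accumulated discretization, momentum, and stochastic-noise errors stay lower-order than the $\zeta_i$-structure throughout the window.

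\textbf{Adam upper bound and main obstacle.} The key structural observation is that with $\beta_2 = \beta_1^2$ and a consistent-sign gradient history, an explicit calculation of the exponential moving averages yields $m^{(t)}/\sqrt{v^{(t)}} = \text{sign}(g) \cdot \sqrt{(1-\beta_1^t)/(1+\beta_1^t)}$; combined with Adam's bias correction $\sqrt{1-\beta_2^{t+1}}/(1-\beta_1^{t+1})$ and the condition $\xi \le \sqrt{\eta/d^{3\alpha-1}}$, each Adam step reduces to the sign-SGD update $W^{(t+1)} = W^{(t)} - \eta \cdot \text{sign}(g^{(t)}) \cdot (1+o(1))$. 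Plugging in the linearized gradients $g_i^{W_2} \approx -(W_1^{(0)} A^T)_i$ yields $w_{2i}^{(t)} \approx w_{2i}^{(0)} + \eta t s_i$ with $s_i := \text{sign}((W_1^{(0)} A^T)_i)$, so $|w_{2i}^{(t)}| = \eta t (1+o(1))$ uniformly in $i$ once $\eta t \gg d^{-\alpha}$. Similarly $g_{ij}^{W_1} \approx -w_{2i} A_j$ produces row-uniform updates on $W_1$, giving $(W_1)_{ij}^{(t)} \approx \eta t \cdot \text{sign}(w_{2i})$ and hence $(W_1 W_1^T)_{ii} \approx d(\eta t)^2$, uniform in $i$. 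I anticipate the main obstacle to be the nonlinear phase, once the residuals $u_j - A_j$ begin changing sign at different times for different $j$: the $W_1$ updates stop being row-uniform, and one must show that the accumulated deviations of $|w_{2i}|$ and $(W_1 W_1^T)_{ii}$ from their common uniform value stay within $\tilde{\mathcal{O}}(\eta^{1/4} + d^{-(\alpha/2-1/4)})$ all the way up to $T_{\text{Adam},2}$. This will require martingale-style control of (i) the stochastic gradient noise under the sign-SGD approximation, (ii) the errors from momentum history and bias correction when individual gradient signs flip, and (iii) the $j$-dependent sign heterogeneity that drives the $A_j$-matching in the late phase.
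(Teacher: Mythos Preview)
Your proposal is essentially the paper's proof. The core mechanism is identical: both reduce the Hessian-diagonal ratio to the left vector $\boldsymbol u$ of an approximate rank-one factorization $W_1\approx \boldsymbol u\boldsymbol v^T$, $W_2\approx c\,\boldsymbol u^T$; for SGD+M, $\boldsymbol u$ inherits i.i.d.\ Gaussian coordinates from the growing mode of the linearized dynamics (your $\zeta_i$ is exactly the paper's $C_{2i}=\tfrac12 w_{2i}^{(0)}+\tfrac{1-\beta}{2\|A\|}(W_1^{(0)}A^T)_i+\text{small}$, derived there from a discrete second-order recurrence rather than the ODE), whence $\max_i u_i^2/\mathrm{median}_i u_i^2=\Omega(\log d)$; for Adam, $\beta_2=\beta_1^2$ collapses the update to sign-SGD so $u_i\in\{\pm1\}$.

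Two minor divergences are worth flagging. First, for $R_{\text{med},2}$ you invoke the balance law $W_1W_1^T-W_2^TW_2=\text{const}$; the paper never uses this and instead reads off $\|W_1[i,:]\|_2^2\approx u_i^2\|\boldsymbol v\|_2^2$ directly from the rank-one structure (its Lemma~\ref{lemma:rank1_ratio}). Your route is fine for SGD+M but, as you note, the balance law fails under Adam, so you would end up doing the paper's row-wise tracking there anyway. Second, your Adam sign $s_i=\mathrm{sign}((W_1^{(0)}A^T)_i)$ is only correct at $t=0$: the paper shows a short transient in which $W_1[i,\cdot]$ aligns with $\mathrm{sign}(w_{2i}^{(0)})$, after which the persistent update sign is $\mathrm{sign}(w_{2i}^{(0)})$, not your $s_i$. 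This does not affect your magnitude claim $|w_{2i}^{(t)}|=\eta t(1+o(1))$, but it matters if you want to track the joint $(W_1,W_2)$ structure through the nonlinear phase, which is indeed where the paper spends most of its effort (its Lemmas~\ref{lemma:Ada_rank1_p2} and~\ref{lemma:E_oscillate} handle exactly the sign-flip heterogeneity you flag as the main obstacle).
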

An immediate corollary of this theorem below gives the difference between iterates of Adam and SGD+M that have the same loss.
\begin{corollary}\label{cor:uniformity_same_loss}
	Under the setup in Theorem~\ref{thm:uniformity}, W.h.p., for any $t\in\left[T_{\text{SGD},1},T_{\text{SGD},2}\right]$ and $t'\in\left[T_{\text{Adam},1},T_{\text{Adam},2}\right]$ such that $\bar L\left(W_{\text{SGD}}^{(t)}\right)=\bar L\left(W_{\text{Adam}}^{(t')}\right)\in\left[\tilde{\Omega}\left(\frac{1}{d^p}\right),\Theta(d)\right]$, we have
	\[
	R_{\text{med},k}^{\text{SGDM}}(t)=\Omega(\log d),\quad R_{\text{med},k}^{\text{Adam}}(t')=1\pm\tilde{\mathcal{O}}\left(\eta^{\frac{1}{4}}+\frac{1}{d^{\frac{\alpha}{2}-\frac{1}{4}}}\right),\quad k=1,2.
	\]
\end{corollary}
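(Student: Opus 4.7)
The corollary is essentially a direct repackaging of Theorem~\ref{thm:uniformity}, so my plan is to derive it as an immediate consequence rather than redoing the analysis of the iterates. First I would observe that the hypotheses on $\eta$, $\xi$, $\sigma$, $\alpha$ required by parts~1 and~2 of Theorem~\ref{thm:uniformity} are not mutually exclusive: the SGD+M conditions ask for $\eta\le\mathcal{O}\!\left(\epsilon/d^{7\alpha/4+4}\right)$ with $\sigma\le\eta^{3/2}/d^{\alpha/2+1}$, while the Adam conditions ask for $\eta\le\mathcal{O}(1/d^{3\alpha})$, $\xi\le\sqrt{\eta/d^{3\alpha-1}}$ and $\sigma\le\eta^{3/2}\xi^2/d^{13/4}$. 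By shrinking $\eta$ and $\sigma$ to satisfy the intersection of both conditions (and enlarging $\alpha$ to exceed the maximum of the two required lower bounds in terms of $p$), we can run both processes from Gaussian initialization on the \emph{same} model and apply both conclusions of Theorem~\ref{thm:uniformity} simultaneously via a union bound over the two high-probability events.

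Next I would use the loss endpoints guaranteed by Theorem~\ref{thm:uniformity} to argue that the two trajectories genuinely share the loss range $[\tilde{\Omega}(1/d^p),\Theta(d)]$. For SGD+M, $\bar L\bigl(W_{\text{SGD}}^{(T_{\text{SGD},1})}\bigr)=\Theta(d)$ and $\bar L\bigl(W_{\text{SGD}}^{(T_{\text{SGD},2})}\bigr)\le\tilde{\mathcal{O}}(1/d^p)$; analogously for Adam. This is the entire content of the ``$\in[\tilde{\Omega}(1/d^p),\Theta(d)]$'' qualifier in the statement: it ensures the pair $(t,t')$ referred to in the corollary can actually be exhibited, lying inside $[T_{\text{SGD},1},T_{\text{SGD},2}]$ and $[T_{\text{Adam},1},T_{\text{Adam},2}]$ respectively. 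I would not need monotonicity of the loss here, since the corollary is phrased ``for any $t,t'$ satisfying\dots'' rather than asserting existence; the loss-range qualifier merely rules out vacuity.

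Once such a pair $(t,t')$ is fixed, the bounds on the statistic follow directly. Since $t\in[T_{\text{SGD},1},T_{\text{SGD},2}]$, part~1 of Theorem~\ref{thm:uniformity} yields $R_{\text{med},k}^{\text{SGDM}}(t)=\Omega(\log d)$ for $k=1,2$; since $t'\in[T_{\text{Adam},1},T_{\text{Adam},2}]$, part~2 yields $R_{\text{med},k}^{\text{Adam}}(t')=1\pm\tilde{\mathcal{O}}\!\left(\eta^{1/4}+1/d^{\alpha/2-1/4}\right)$ for $k=1,2$. This delivers exactly the corollary.

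I do not anticipate any real obstacle, because all heavy lifting is already packaged inside Theorem~\ref{thm:uniformity}: control of the low-rank structure of the Adam iterates, analysis of the diagonal of the Hessian layer by layer, and the deviation bounds inherited from Assumption~\ref{assump:large_batch}. The only genuine bookkeeping is choosing a single $(\eta,\xi,\sigma,\alpha)$ tuple satisfying both sets of hypotheses and invoking the union bound so that the two high-probability events co-occur; the equal-loss coupling between SGD+M time $t$ and Adam time $t'$ contributes nothing beyond selecting which time index to plug into each pre-existing bound.
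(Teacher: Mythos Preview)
Your proposal is correct and matches the paper's treatment: the paper also presents Corollary~\ref{cor:uniformity_same_loss} as an immediate consequence of Theorem~\ref{thm:uniformity}, with no additional argument beyond reading off the bounds for $t\in[T_{\text{SGD},1},T_{\text{SGD},2}]$ and $t'\in[T_{\text{Adam},1},T_{\text{Adam},2}]$. Your extra bookkeeping about intersecting the hyperparameter regimes and taking a union bound is harmless and, if anything, slightly more explicit than the paper.
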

Theorem~\ref{thm:uniformity} and Corollary~\ref{cor:uniformity_same_loss} tell us that during a long training period when the loss decreases from $\Theta(d)$ to $\tilde{\mathcal{O}}\left(\frac{1}{d^p}\right)$, the diagonal of loss Hessian for Adam keeps nice uniformity in the sense that for each layer, its diagonal elements have roughly the same value, i.e. $\RmedAdak= 1\pm o(1),k=1,2$. On the other hand, the diagonal of loss Hessian for SGD+M is less uniform. Appendix~\ref{sec:proof_sketch} gives a proof sketch of Theorem~\ref{thm:uniformity}. The detailed proof can be found in Appendix~\ref{sec:proof_GD} and \ref{sec:proof_Adam}.
	
\section{The low rank structure of weight matrices and uniformity of leading singular vectors}\label{sec:low_rank}
The proof sketch in Appendix~\ref{sec:proof_sketch} highlights one crucial intuition of Theorem~\ref{thm:uniformity}: After $T_{\text{SGD},1}$ (resp. $T_{\text{Adam},1}$) steps, $W_1$ of SGD+M (resp. Adam) becomes an approximately rank-1 matrix. Consider the left singular vector $\boldsymbol{u}:=[u_1,u_2,...,u_d]^T$ which corresponds to the leading singular value $\sigma_1$. We can show that the distribution of $u_1^2, u_2^2,..., u_d^2$ for Adam is more uniform than that of SGD+M. This property, we call \emph{the uniformity of the leading singular vector}, is related to the uniformity of the diagonal of loss Hessian, see Appendix~\ref{sec:connection} for more details.

Similar low rank bias after training has been studied in prior works (\emph{e.g.} \citep{gunasekar2017implicit,pmlr-v75-li18a,chou2020gradient}). For more complicated models, we want to check whether the weight matrices also have low rank structures and if so, whether we can still observe \emph{the uniformity of the leading singular vector}. More formally, consider the weight matrix in some layer $W\in\mathbb{R}^{m\times n}$, we want to check

(A) Whether $W\in\mathbb{R}^{m\times n}$ is approximately a rank $k$ matrix with $k\ll\min\{m,n\}$.

(B) If (A) is true, then consider the top $k$ singular values $\sigma_1,...,\sigma_k$ and corresponding left singular vectors $\boldsymbol{u}_1,\boldsymbol{u}_2,...\boldsymbol{u}_k$. Define a new vector $\tilde{\boldsymbol{u}}:=\sum_{i=1}^k\sigma_i^2\boldsymbol{u}_i\odot\boldsymbol{u}_i:=[\tilde u_1,\tilde u_2,...,\tilde u_d]^T$ and compute $R_u:=\frac{\max_i\tilde u_i}{\text{median }\tilde u_i}$, which is a generalized version of $\frac{\max_i u_{i}^2}{\text{median }u_{i}^2}$ in the rank 1 case. We want to see whether $R_u$ obtained by Adam is smaller than that of SGD+M.

After reviewing the weight matrices we got in different settings, we observed that (A) and (B) hold for many layers in those models. For example, on the translation task mentioned in Section~\ref{subsec:real_data}, we found 12 layers which have approximately low rank structures and for 10 of them,  $R_u$ values (defined in (B)) obtained by Adam are smaller than those found by SGD+M. Figure~\ref{fig:low_rank_weight} shows the result on one typical layer. Results of more layers can be found in Appendix~\ref{subsec:low_rank_more_exp}.
\begin{figure}[ht]
	\centering
	\begin{subfigure}{.42\textwidth}
		\centering
		\includegraphics[width=\linewidth]{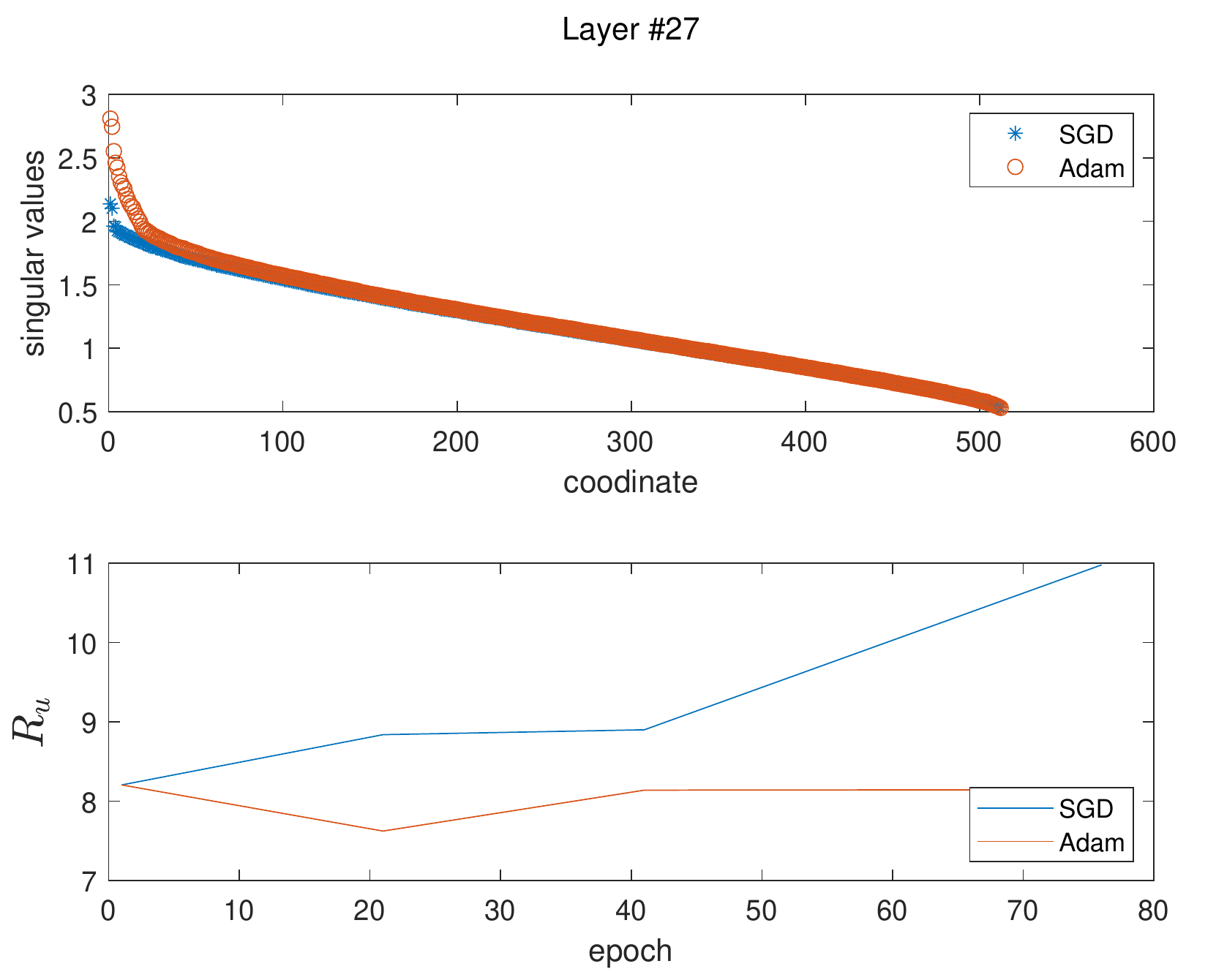}
		\caption{}
\end{subfigure}
\begin{subfigure}{.42\textwidth}
		\centering
		\includegraphics[width=\linewidth]{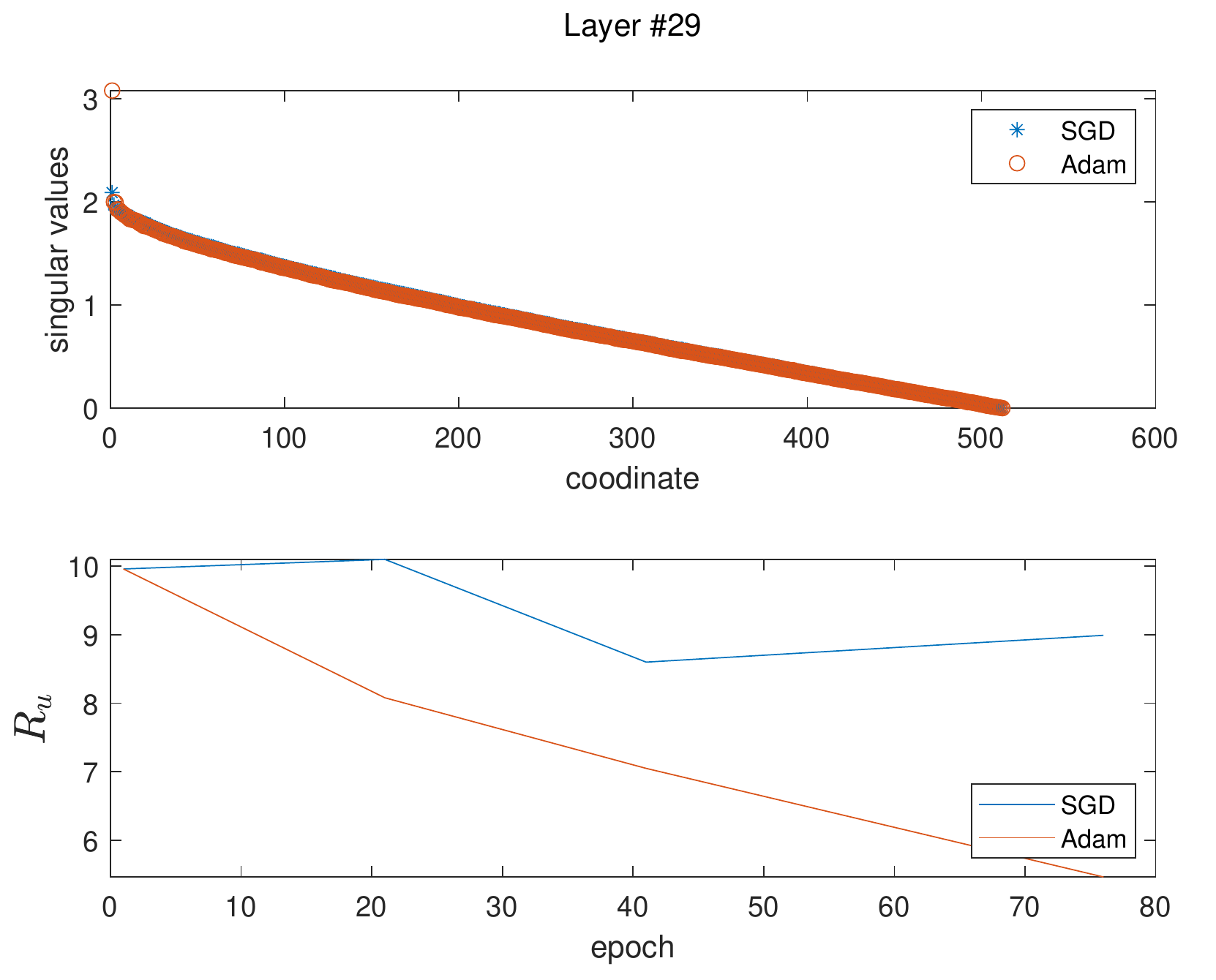}
		\caption{}
	\end{subfigure}
	\caption{Distribution of singular values of the weight matrix in the 27-th layer (a) and 29-th layer (b) on the translation task in Section~\ref{subsec:real_data} and corresponding $R_u$.}
	\label{fig:low_rank_weight}
\end{figure}

\paragraph{Remarks}
1. The definition of $R_u$ is based on the connection between diagonal of loss Hessian and weight matrices. Appendix~\ref{sec:connection} shows that for a 2-layer linear network, $\RmedOPTs=\frac{\max_i \|W_1^{(t)}[i,:]\|_2^2}{\text{median}\|W_1^{(t)}[i,:]\|_2^2}$. When $W_1\in\mathbb{R}^{m\times n}$ is approximately rank $k$, i.e. $W_1\approx\sum_{i=1}^k\sigma_i\boldsymbol{u}_i\boldsymbol{v}_i^T$, denote $\boldsymbol{u}_i=[u_{i1},u_{i2},...,u_{im}]^T$ and $\boldsymbol{v}_i=[v_{i1},v_{i2},...,v_{in}]^T$, we have that for the $j$-th row, \\$\|W_1[j,:]\|_2^2\approx \left\|\sum_{i=1}^k\sigma_iu_{ij}\boldsymbol{v}_i^T\right\|_2^2=\sum_{i=1}^k\sigma_i^2u_{ij}^2$.
By defining $\tilde{\boldsymbol{u}}:=\sum_{i=1}^k\sigma_i^2\boldsymbol{u}_i\odot\boldsymbol{u}_i=[\tilde u_1,\tilde u_2,...,\tilde u_d]^T$, we have that $\|W_1[j,:]\|_2^2\approx\tilde u_j$.

Although in multi-layer nonlinear neural networks, the connection between diagonal of loss Hessian and the weight matrices is more complicated and $\RmedOPTs$ may depend on the product of many weight matrices rather than one single matrix, we still believe that this definition of $R_u$ is a reasonable ratio to consider.

2. We may also want to consider the right singular vectors $\boldsymbol{v}_1,\boldsymbol{v}_2,...\boldsymbol{v}_k$ and corresponding $\tilde{\boldsymbol{v}}:=\sum_{i=1}^k\sigma_i^2\boldsymbol{v}_i\odot\boldsymbol{v}_i=[\tilde v_1,\tilde v_2,...,\tilde v_d]^T$ and compute $R_v:=\frac{\max_i\tilde v_i}{\text{median }\tilde v_i}$ for Adam and SGD+M. However, on this translation task, among the 12 layers which are approximately low rank, for only 6 of them, $R_v$ of Adam are smaller, which means we did not observe uniformity of the leading right singular vector for Adam. Results of $R_v$ can be found in Appendix~\ref{subsec:low_rank_more_exp}. One possible reason is that for a weight matrix, its right singular vectors are closer to the input data than left singular vectors and more easily influenced by the data, therefore may not show uniformity.

\section{Conclusion and future work}\label{sec:conclusion}
We demonstrate that adaptive optimization methods bias the training trajectory towards a region where the diagonal of loss Hessian is more uniform, through extensive experiments on language models and theoretical analysis in a simplified setting of two-layer linear networks. Although our findings may not directly lead to an improved algorithm for practical use, they provide a new way of thinking when designing new algorithms: in contrast with the traditional view which tries to design a method that performs better \emph{in} the bad loss geometry, our findings suggest that we can design algorithms which \emph{implicitly avoid} regions with bad geometry. There are a lot of future directions along this line. For example, our theoretical results on the two-layer linear networks may be able to generalize to multi-layer networks. In fact, people conjecture that the key-value-query structure in language models can be approximated by a three-layer linear network. Hence the generalization to multi-layer networks might provide more connection to real deep models and could be an interesting and challenging future direction. Moreover, it is also possible to relax our large-batch assumption (Assumption~\ref{assump:large_batch}) and prove similar results in the general stochastic setting.

\bibliographystyle{alpha}
\bibliography{ref}
	
\clearpage
\appendix
\section{More experiments of the uniformity of diagonal geometry}\label{sec:more_adapt}
\subsection{SGD vs. Adagrad}\label{subsec:SGD_vs_Adagrad}
In this section, we present the $\RmedOPT$ values defined in eq.~\eqref{eqn:ratio_uniform_median} obtained by SGD and Adagrad on a language modeling task\footnote{\url{https://pytorch.org/tutorials/beginner/transformer_tutorial.html}}. The task is to assign a probability for the likelihood of a given word (or a sequence of words) to follow a sequence of words. We trained a transformer model to solve this problem on both Wikitext-2~\citep{DBLP:conf/iclr/MerityX0S17}(CC BY-SA 3.0) and random dataset (generating random integers as targets). This model has roughly 8 layers (not counting normalization and dropout layers)

The setup is the same as in Section~\ref{subsec:exp_setup}. We used the same learning rate schedule (constant or decreasing) for SGD and Adagrad. We tuned and chose the best (initial) learning rate of SGD. The (initial) learning rate of Adagrad was set as a value under which Adagrad converged faster than SGD with its best (initial) learning rate. We used large batch sizes to make the training procedure more stable. When computing Hessian, we also used large batch sizes. Due to the extremely large dimension, we did the computation on some uniformly selected coordinates, more precisely, 200 coordinates per layer.

We tried different initialization (normal and uniform) by using different gains of the Pytorch initialization schedule.
\subsubsection{Experiments on real dataset}
Figure~\ref{subfig:Adagrad_loss_wiki} shows the training losses on real dataset (wikitext-2). Table~\ref{tab:adagrad_unif_init} (resp. Table~\ref{tab:adagrad_normal_init}) shows the $\RmedOPT$ for Adagrad and SGD under uniform (resp. normal) initialization with different gains.
\begin{figure}[H]
	\centering
	\begin{subfigure}{.45\textwidth}
		\centering
		\includegraphics[width=\linewidth]{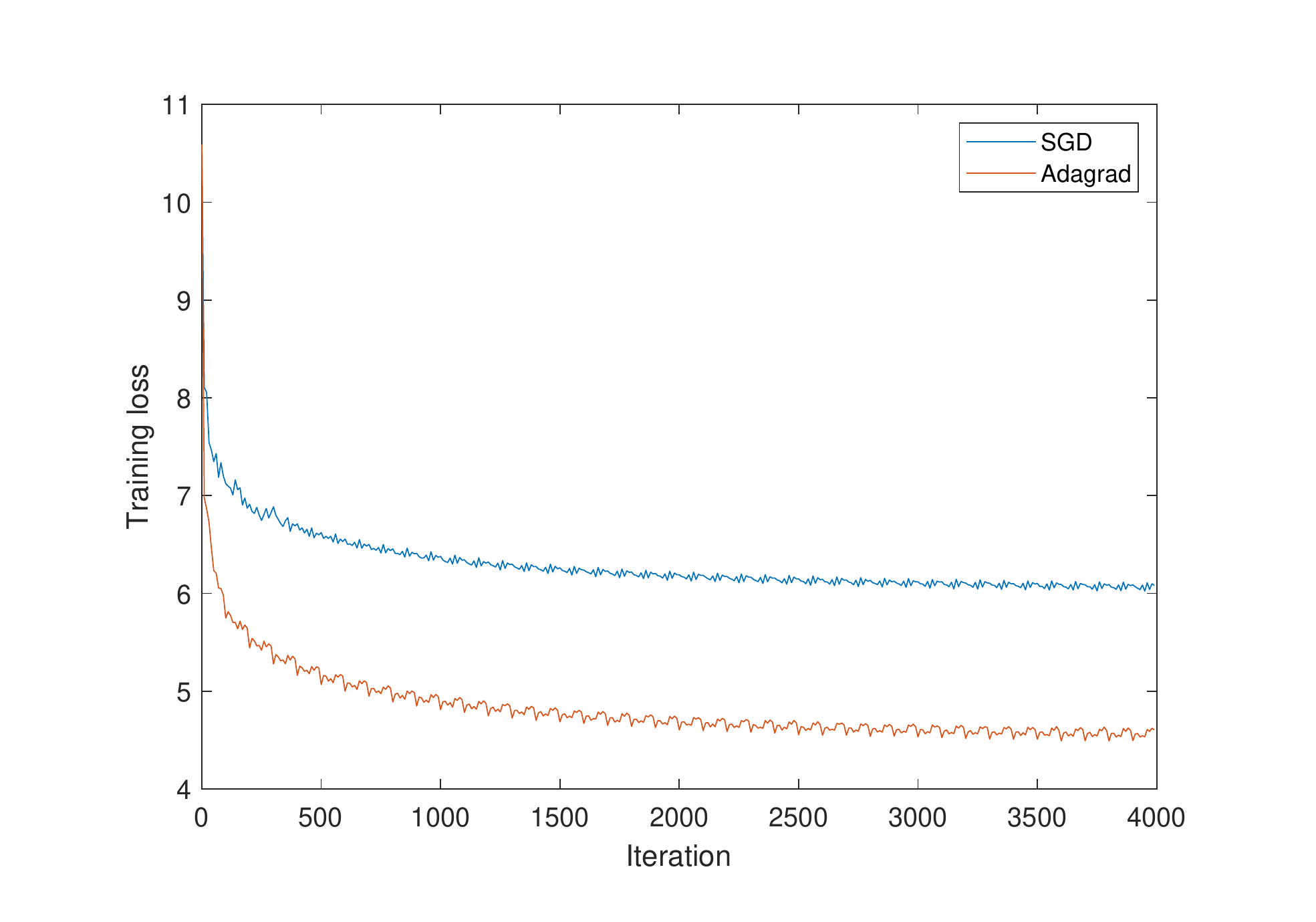}
		\caption{Wikitext-2}
		\label{subfig:Adagrad_loss_wiki}
	\end{subfigure}
	\begin{subfigure}{.45\textwidth}
		\centering
		\includegraphics[width=\linewidth]{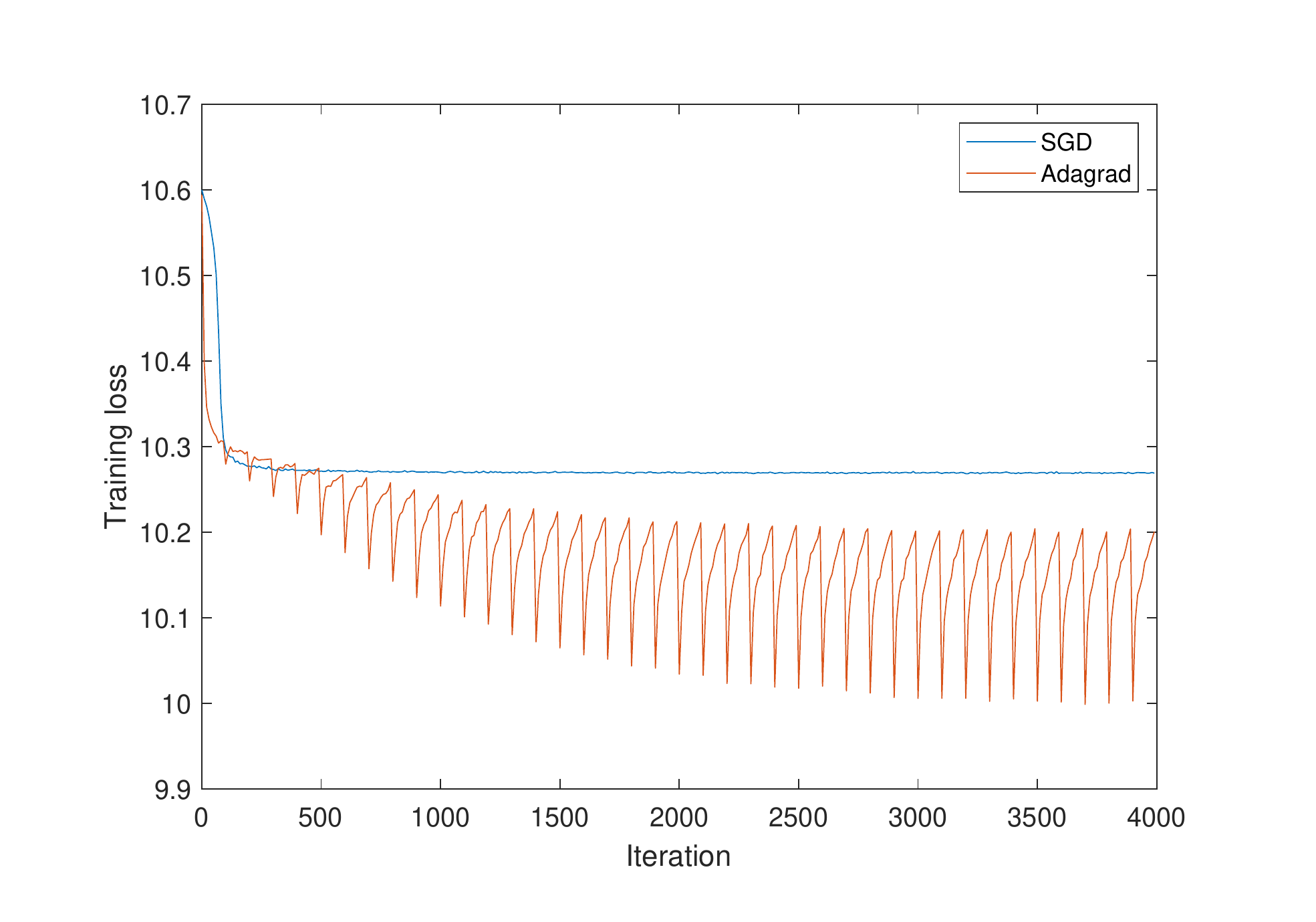}
		\caption{Random dataset}
		\label{subfig:Adagrad_loss_rand}
	\end{subfigure}
	\caption{Training losses of Adagrad and SGD on wikitext-2 (left) and random data (right)}
\end{figure}
\begin{table}[ht]
	\caption{$\RmedOPT$ of Adagrad and SGD under uniform initialization with different gains}
	\label{tab:adagrad_unif_init}
	\begin{subtable}{.5\textwidth}
		\centering
		\caption{Gain = 2}
		\resizebox{\columnwidth}{!}{
			\begin{tabular}{ccccccc}
				\toprule
				\multirow{2}{*}{Layer\#} & \multicolumn{2}{c}{Epoch 1} & \multicolumn{2}{c}{Epoch 20} & \multicolumn{2}{c}{Epoch 40}\\
				& SGD & Adagrad & SGD & Adagrad & SGD&Adagrad\\
				\midrule
				1	&  6.07	&  6.77	&  5.91	 & 9.77	&  5.16	& 10.37\\	
				\midrule
				2	&  4.60	&  6.26	 & 3.43	 & 1.66	 & 3.44	&  1.88\\
				\midrule
				3	 & 5.15	 & 6.84	&  4.35	&  4.34	 & 4.84	 & 3.60\\
				\midrule
				4	 & 9.47	& 10.78	&  9.76	&  3.54	&  8.67	 & 3.14\\
				\midrule
				5 & 12.54	& 13.96	& 10.31	&  6.59	&  9.79	 & 6.98\\
				\midrule
				6	&  4.92	 & 5.25	&  7.21	&  2.33	&  7.94	 & 2.28\\
				\midrule
				7	&  5.73	&  5.45	& 40.56	 & 4.57	& 21.24	&  4.76\\
				\midrule
				8	&  9.39	 & 8.87	& 37.95	&  4.50	& 46.03	 & 3.19\\
				\bottomrule
			\end{tabular}
		}
	\end{subtable}
	\begin{subtable}{.5\textwidth}
		\centering
		\caption{Gain = 0.5}
		\resizebox{\columnwidth}{!}{
			\begin{tabular}{ccccccc}
				\toprule
				\multirow{2}{*}{Layer\#} & \multicolumn{2}{c}{Epoch 1} & \multicolumn{2}{c}{Epoch 20} & \multicolumn{2}{c}{Epoch 40}\\
				& SGD & Adagrad & SGD & Adagrad & SGD&Adagrad\\
				\midrule
				1	&   69.36	& 78.60	& 15.26	&  7.74	& 18.22	&  7.23\\	
				\midrule
				2	&  24.12	& 24.36	&  4.05	&  2.30	 & 3.70	&  2.04\\
				\midrule
				3	 & 2.83	&  2.85	 & 3.78	 & 4.98	&  3.56	 & 4.40\\
				\midrule
				4	 & 5.25	&  4.74	&  3.83	&  5.68	&  3.11	&  4.81\\
				\midrule
				5 & 66.49	& 67.83	& 88.75	& 19.31	& 63.01	& 15.64\\
				\midrule
				6	&  6.54	&  6.91	 & 3.57	&  2.08	&  3.50	 & 1.97\\
				\midrule
				7	&  3.22	&  3.73	& 13.03	&  3.97	&  9.55	&  4.07\\
				\midrule
				8	&  6.12	&  5.99	&  6.73	 & 7.82	&  5.43	 & 6.98\\
				\bottomrule
			\end{tabular}
		}
	\end{subtable}
\end{table}
\begin{table}[ht]
	\caption{$\RmedOPT$ of Adagrad and SGD under normal initialization with different gains}
	\label{tab:adagrad_normal_init}
	\begin{subtable}{.49\textwidth}
		\centering
		\caption{Gain = 1}
		\resizebox{\columnwidth}{!}{
			\begin{tabular}{ccccccc}
				\toprule
				\multirow{2}{*}{Layer\#} & \multicolumn{2}{c}{Epoch 1} & \multicolumn{2}{c}{Epoch 20} & \multicolumn{2}{c}{Epoch 40}\\
				& SGD & Adagrad & SGD & Adagrad & SGD&Adagrad\\
				\midrule
				1	&  6.76	 & 6.06	&  8.27	& 12.28	 & 9.69	& 11.17\\	
				\midrule
				2	&  9.51	&  6.61	 & 3.19	&  1.87	&  3.21	&  1.73\\
				\midrule
				3	 & 7.38	 & 7.35	&  8.61	 & 3.38	&  9.25	 & 3.94\\
				\midrule
				4	 & 18.02	& 15.63	 & 6.45	 & 4.86	&  7.49	 & 4.44\\
				\midrule
				5 & 12.70	 & 9.35	& 11.69	& 11.23	& 15.07	& 12.18\\
				\midrule
				6	&  12.76	& 11.86	 & 3.84	&  2.32	&  3.20	 & 2.09\\
				\midrule
				7	&  11.79	&  8.58	& 17.95	 & 4.32	& 14.99	 & 4.50\\
				\midrule
				8	&  17.09	& 12.73	& 26.70	 & 5.16	& 26.91	 & 6.73\\
				\bottomrule
			\end{tabular}
		}
	\end{subtable}
	\begin{subtable}{.51\textwidth}
		\centering
		\caption{Gain = 0.5}
		\resizebox{\columnwidth}{!}{
			\begin{tabular}{ccccccc}
				\toprule
				\multirow{2}{*}{Layer\#} & \multicolumn{2}{c}{Epoch 1} & \multicolumn{2}{c}{Epoch 20} & \multicolumn{2}{c}{Epoch 40}\\
				& SGD & Adagrad & SGD & Adagrad & SGD&Adagrad\\
				\midrule
				1	&   9.12	& 14.46	& 10.90	 & 8.00	& 10.19	&  8.55\\	
				\midrule
				2	&  10.70	& 15.42	 & 8.52	&  2.12	&  8.88	&  2.04\\
				\midrule
				3	 & 5.73	&  5.94	& 10.16	 & 2.80	&  6.05	&  2.99\\
				\midrule
				4	 & 16.62	& 12.94	&  8.90	 & 3.91	&  8.12	 & 4.14\\
				\midrule
				5 & 15.98	& 16.98	 & 42.57	& 10.76	& 18.45	& 10.16\\
				\midrule
				6	&  4.84	 & 6.46	&  7.92	&  2.66	 & 5.30	 & 2.46\\
				\midrule
				7	&  6.52	&  6.55	& 107.51	&  3.14	& 136.38	&  2.73\\
				\midrule
				8	&  8.39	 & 8.20	& 337.34	 & 5.18	& 315.21	 & 4.48\\
				\bottomrule
			\end{tabular}
		}
	\end{subtable}
\end{table}
\subsubsection{Experiments on random dataset}
Figure~\ref{subfig:Adagrad_loss_rand} shows the training losses on random dataset and Table~\ref{tab:adagrad_rand} shows the $\RmedOPT$ in different layers.
\begin{table}[H]
	\caption{$\RmedOPT$ of Adagrad and SGD for random data}
	\label{tab:adagrad_rand}
	\centering
	\resizebox{0.6\columnwidth}{!}{
		\begin{tabular}{ccccccc}
			\toprule
			\multirow{2}{*}{Layer\#} & \multicolumn{2}{c}{Epoch 1} & \multicolumn{2}{c}{Epoch 20} & \multicolumn{2}{c}{Epoch 40}\\
			& SGD & Adagrad & SGD & Adagrad & SGD&Adagrad\\
			\midrule
			1	&   10.88	& 10.98	&  9.99	& 18.66	 & 9.67	& 22.37\\	
			\midrule
			2	&  9.47	& 12.15	& 14.98	 & 4.43	& 13.01	&  3.99\\
			\midrule
			3	 & 7.45	&  8.52	& 459.71 & 6.09	& 451.16 &  5.11\\
			\midrule
			4	 & 9.84	& 10.42	& 135.37	&  7.22	& 126.91	&  6.04\\
			\midrule
			5 & 7.09	 & 7.88 &	103.60	& 353.89	& 184.61	& 190.17\\
			\midrule
			6	&  7.68	 & 8.58	 & 18.38	 & 4.08	& 18.69	 & 2.73\\
			\midrule
			7	&  7.81	 & 5.40	& 294.68 & 62.72 &	229.25 &	 29.76\\
			\midrule
			8	&  13.51	 & 9.16 &	329.12 &	 20.59 &	203.70	&  9.57\\
			\bottomrule
		\end{tabular}
	}
\end{table}

\subsection{RMSprop and AMSGrad}\label{subsec:RMSprop_AMSGrad}
In this section, we present the results of RMSprop and AMSGrad and compare them with SGD+M. The experiments are conducted on the translation task described in Section~\ref{subsec:real_data}. The learning rates we used were 0.000025 for RMSprop, 0.0005 for AMSGrad and 0.03 for SGD+M. Both RMSprop and SGD+M used momentum parameter 0.9. The two momentum parameters $(\beta_1,\beta_2)$ of AMSGrad are $(0.9,0.98)$. Figure~\ref{fig:RMSprop_AMSGrad} shows the training losses and Table~\ref{tab:RMSprop_AMSGrad} shows the corresponding $\RmedOPT$.
\begin{figure}[ht]
	\centering
	\includegraphics[width=0.45\linewidth]{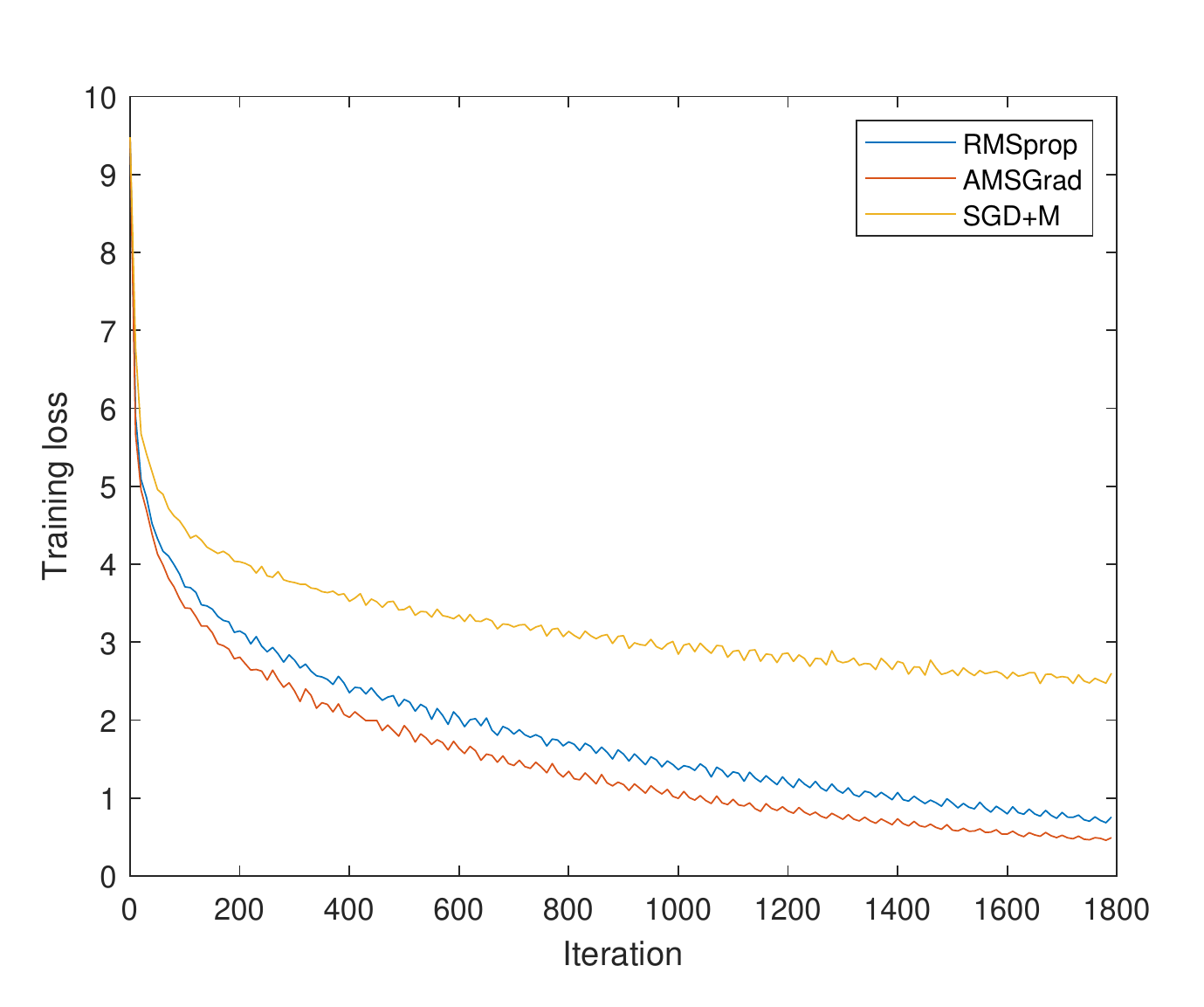}
	\caption{Training losses of RMSprop, AMSGrad and SGD+M on the translation task described in Section~\ref{subsec:real_data}.}
	\label{fig:RMSprop_AMSGrad}
\end{figure}
\begin{table}[H]
	\caption{$R_{\text{med}}^{\text{RMSprop}}(t)$, $R_{\text{med}}^{\text{AMSGrad}}(t)$ and $\RmedSGDM$ in some layers, on the translation task described in Section~\ref{subsec:real_data}}.
	\label{tab:RMSprop_AMSGrad}
	\centering
	\resizebox{0.9\columnwidth}{!}{
		\begin{tabular}{cccccccccc}
			\toprule
			\multirow{2}{*}{Layer\#} & \multicolumn{3}{c}{Epoch 10} & \multicolumn{3}{c}{Epoch 20} & \multicolumn{3}{c}{Epoch 40}\\
			& $\RmedSGDM$ & $R_{\text{med}}^{\text{RMSprop}}(t)$ & $R_{\text{med}}^{\text{AMSGrad}}(t)$ & $\RmedSGDM$ & $R_{\text{med}}^{\text{RMSprop}}(t)$ & $R_{\text{med}}^{\text{AMSGrad}}(t)$ & $\RmedSGDM$ & $R_{\text{med}}^{\text{RMSprop}}(t)$ & $R_{\text{med}}^{\text{AMSGrad}}(t)$ \\
			\midrule
			3 &	  3.97	&  2.69	&  2.56	 & 2.33	 & 1.89	&  1.68	&  2.83&	  1.62	 & 1.56	\\	
			\midrule
			5&	26.17&	 21.19	& 11.36	& 37.11	& 17.83	& 10.85&	 51.94	& 10.22	& 12.31	\\
			\midrule
			7&	4.10	&  6.98	&  6.12	&  3.94	&  4.95	&  2.92	 & 7.58	 & 2.29	&  2.58	\\
			\midrule
			9&	29.41	& 35.72	& 25.86	 &37.81	& 19.89	& 16.90	& 30.68	& 16.24	 & 9.97	\\
			\midrule
			12&	4.93	&  6.20	 &12.67	&  4.63	&  6.61	&  4.64	 & 6.44	 & 5.13	&  4.06	\\
			\midrule
			15&	85.06	& 33.63	& 19.51	&140.99	& 12.22	&  6.72&	 44.07&	  6.98	&  5.37	\\
			\midrule
			18&	8.71&	  2.99	&  9.48	&  3.86	&  2.44&	  4.16	 & 3.51&	  2.10	 & 2.35\\
			\midrule
			21&	95.34	& 11.68	&  6.62&	 47.20	&  6.37	&  4.74	& 22.20&	  4.58&	  3.58	\\
			\midrule
			24&	8.70&	  5.67	&  6.95&	  8.13&	  3.59&	  5.13	&  6.46	 & 2.30&	  2.83	\\
			\midrule
			28&	4.44&	  2.42&	  2.64	&  4.67	&  1.85	&  1.81	&  2.63	&  1.46	&  2.13	
			\\
			\bottomrule
		\end{tabular}
	}
\end{table}

\subsection{Comparison conditioned on the same loss}\label{subsec:comp_same_loss}
In this section, we compare $\RmedSGDM$ and $\RmedAda$ conditioned on the same training loss. More precisely, we make comparison of $R^{\text{Adam}}_{\text{med}}(t)$ and $R^{\text{SGDM}}_{\text{med}}(t')$, where $t,t'$ are picked such that $t$th Adam iterate and $t'$th SGD+M iterate have the same training loss. The details of the tasks are described in in Section~\ref{subsec:real_data}. Table~\ref{tab:same_loss} shows the results of $R^{\text{Adam}}_{\text{med}}(t)$ and $R^{\text{SGDM}}_{\text{med}}(t')$ in some layers.
\begin{table}[ht]
	\caption{$R^{\text{Adam}}_{\text{med}}(t)$ and $R^{\text{SGDM}}_{\text{med}}(t')$ in some layers. Dataset and task: (a) sentence classification task on BERT-small, (b) translation task on Multi30k.}
	\label{tab:same_loss}
	\begin{subtable}{.5\textwidth}
		\centering
		\caption{}
		\resizebox{\columnwidth}{!}{
			\begin{tabular}{ccccccc}
				\toprule
				\multirow{2}{*}{Layer\#} & \multicolumn{2}{c}{Loss 0.251} & \multicolumn{2}{c}{Loss 0.170} & \multicolumn{2}{c}{Loss 0.133}\\
				& $R^{\text{SGDM}}_{\text{med}}(t')$ & $R^{\text{Adam}}_{\text{med}}(t)$ & $R^{\text{SGDM}}_{\text{med}}(t')$ & $R^{\text{Adam}}_{\text{med}}(t)$ & $R^{\text{SGDM}}_{\text{med}}(t')$ & $R^{\text{Adam}}_{\text{med}}(t)$\\
				\midrule
				9 &  16.77	& 13.69	& 14.14	& 12.71	& 15.17	 & 9.86\\	
				\midrule
				12 &  16.68	&  8.29	&  9.98	&  8.31	&  8.90	 & 5.42\\
				\midrule
				15 &  18.64	&  7.79	& 51.39	& 46.43	& 80.82	& 40.97\\
				\midrule
				17 & 208.29 &	381.05 &	464.37 & 315.58	& 498.26	& 313.99\\
				\midrule
				18 &  14.43	& 23.56	& 19.17	& 19.26	& 15.76	& 12.99\\
				\midrule
				22 & 257.32	& 88.47	& 188.55 &	110.87 &	197.79	& 139.48\\
				\midrule
				24 &  34.22	& 16.34	 & 16.42 &	 18.08	& 14.04	& 15.97\\
				\bottomrule
			\end{tabular}
		}
	\end{subtable}
	\begin{subtable}{.5\textwidth}
		\centering
		\caption{}
		\resizebox{\columnwidth}{!}{
			\begin{tabular}{ccccccc}
				\toprule
				\multirow{2}{*}{Layer\#} & \multicolumn{2}{c}{Loss 3.72} & \multicolumn{2}{c}{Loss 2.78} & \multicolumn{2}{c}{Loss 1.90}\\
				& $R^{\text{SGDM}}_{\text{med}}(t')$ & $R^{\text{Adam}}_{\text{med}}(t)$ & $R^{\text{SGDM}}_{\text{med}}(t')$ & $R^{\text{Adam}}_{\text{med}}(t)$ & $R^{\text{SGDM}}_{\text{med}}(t')$ & $R^{\text{Adam}}_{\text{med}}(t)$\\
				\midrule
				3	&  4.01	&  4.45	&  5.80	 & 3.02	&  2.44	 & 2.28
				\\	
				\midrule
				5	& 31.19	& 27.50	 & 44.29	& 21.46		 & 57.83	& 19.52\\
				\midrule
				7	&  5.80	 & 4.38	&  7.51	 & 3.71	&  5.25	 & 2.87
				\\
				\midrule
				9	& 21.23	 &53.65	 & 28.99	& 20.92	&  44.26&	 28.13
				\\
				\midrule
				13	& 53.18	& 17.77	 & 51.17	& 20.64	&  35.80	& 35.49
				\\
				\midrule
				15	& 82.30& 186.41	 & 34.17	& 13.76&  33.87&	  5.31
				\\
				\midrule
				21&	100.43&	 23.66&  23.45	 & 5.12&	  12.96	& 5.35
				\\
				\midrule
				26	&  7.45	&  3.48	&  4.69	 & 3.10&	  3.33	&  2.83
				\\
				\midrule
				30	& 19.14	&  9.54&  10.46	&  5.48	&  9.56	 & 5.33
				\\
				\bottomrule
			\end{tabular}
		}
	\end{subtable}
\end{table}

\subsection{Experiments of switching from SGD to Adam}\label{subsec:switch}
In this section we describe another learning schedule: the ``Adam after SGD'' schedule, where we switched from SGD to Adam in the middle to see whether the loss and $\RmedOPT$ can catch up with the model trained by Adam from the very beginning. Again, we used the same model as in the translation task in Section~\ref{subsec:real_data}. In this section, we did not add momentum term to SGD in order to get a larger gap between SGD and Adam than the case using momentum. We want to see whether this larger gap can be closed after switching to Adam in the middle.

As is shown in Figure~\ref{fig:switch} and Table~\ref{tab:unif_switch}, both the loss gap and the gap of $\RmedOPT$ were closed after a period of training after switching algorithms, which provides evidence of the connection between convergence speed and uniformity of diagonal of loss Hessian.

\begin{minipage}{0.5\textwidth}
	\includegraphics[width=\linewidth]{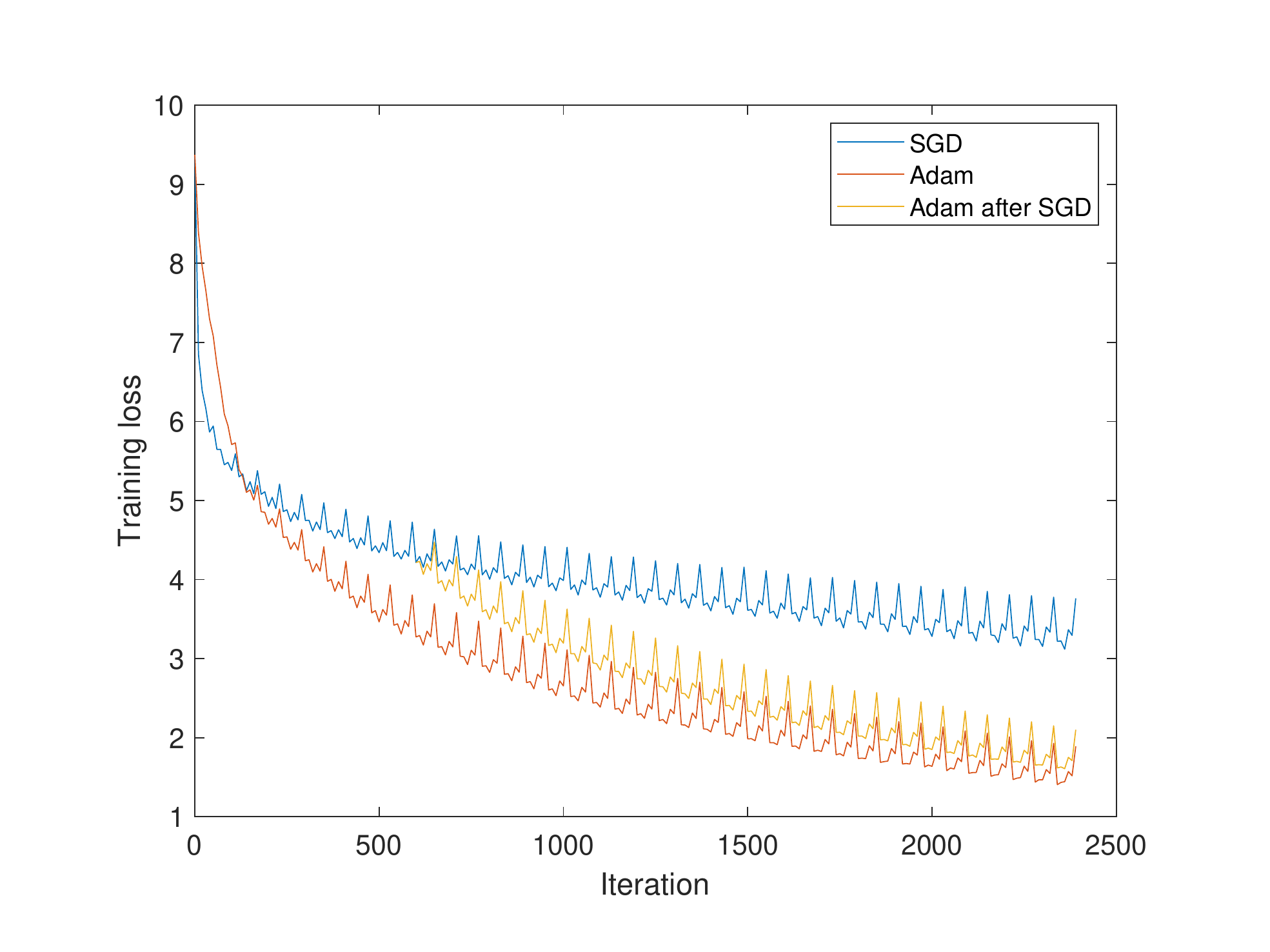}
	\captionof{figure}{Training losses of SGD, Adam after SGD and Adam for the translation task}
	\label{fig:switch}
\end{minipage}
\begin{minipage}{0.45\textwidth}
	\centering
	\captionof{table}{$\RmedOPT$ of SGD, Adam after SGD and Adam in some layers after roughly 2160 iterations}
	\label{tab:unif_switch}
	\resizebox{\linewidth}{!}{
		\begin{tabular}{ccccc}
			\toprule
			Layer\# & SGD &  Adam & Adam after SGD &Adam\\
			\midrule
			13 & 294.76	&	150.02& 332.96 &	150.02\\	
			\midrule
			14	& 14.34	&  5.84 & 5.33	&  5.84\\
			\midrule
			15 & 36.38	&	 16.66 & 11.86 &	 16.66\\
			\midrule
			16 & 6.47	&  7.05&  3.76	&  7.05\\
			\midrule
			17 &	17.17	&  6.05 & 4.76	&  6.05\\
			\midrule
			26 & 5.68 & 3.53	&  2.30	 & 3.53\\
			\midrule
			27	& 14.33	& 15.93 & 21.76	& 15.93\\
			\midrule
			28 & 9.10 &  1.71	&  1.71	&  1.71\\
			\midrule
			29	& 8.22 &  3.04	&  2.82	&  3.04\\
			\midrule
			30 & 11.39 &  5.12	&  5.29	&  5.12\\
			\bottomrule
		\end{tabular}
	}
\end{minipage}
\subsection{The low rank structure}\label{subsec:low_rank_more_exp}
In this section, we present more results for the experiments in Section~\ref{sec:low_rank}.

We examined the weights of the model trained for the translation task in Section~\ref{subsec:real_data}. Among roughly 30 layers, we observed that for 12 layers, at least the weight matrices obtained by Adam after training have approximately low rank structures.

Figure~\ref{fig:sigma_more_layers} shows the examples of layers with or without the low rank structure.
\begin{figure}[ht]
	\centering
	\includegraphics[width=0.6\linewidth]{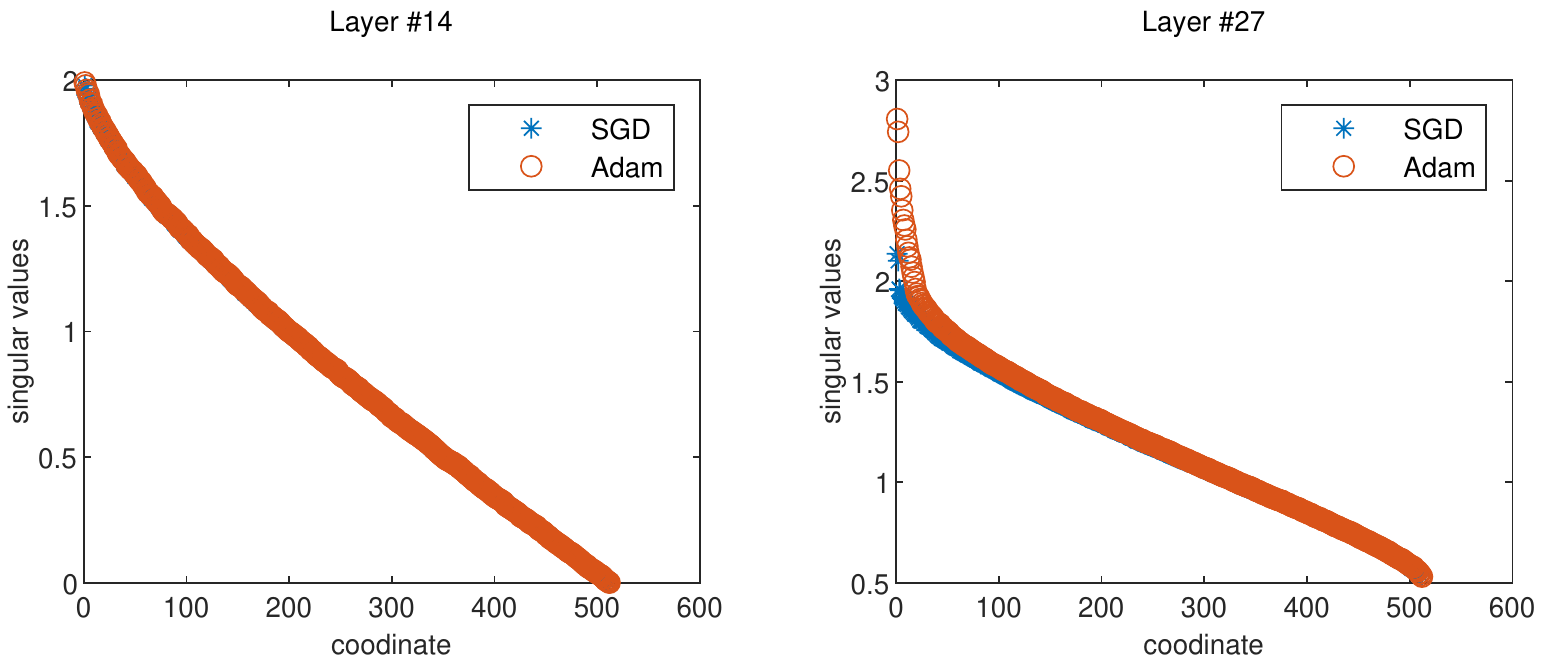}
	\caption{Examples of layers with approximately low rank structure (right) and without low rank structure (left)}
	\label{fig:sigma_more_layers}
\end{figure}

We then studied the uniformity of leading singular vectors of these 12 layers, i.e. computed $R_u$ and $R_v$ defined in (B) and the second remark of Section~\ref{sec:low_rank}. The observation is that for 10 out of these 12 layers, $R_u$ values of Adam are smaller those of SGD, which implies the uniformity of leading left singular vectors of Adam. However, we did not observe significant uniformity for Adam in terms of leading right singular vectors ($R_v$). The second remark of Section~\ref{sec:low_rank} discusses possible reasons.

Figure~\ref{fig:Ru_Rv_more_layers} shows how $R_u$ and $R_v$ changed over time in some layers.
\begin{figure}[H]
	\centering
	\centering
	\includegraphics[width=0.7\linewidth]{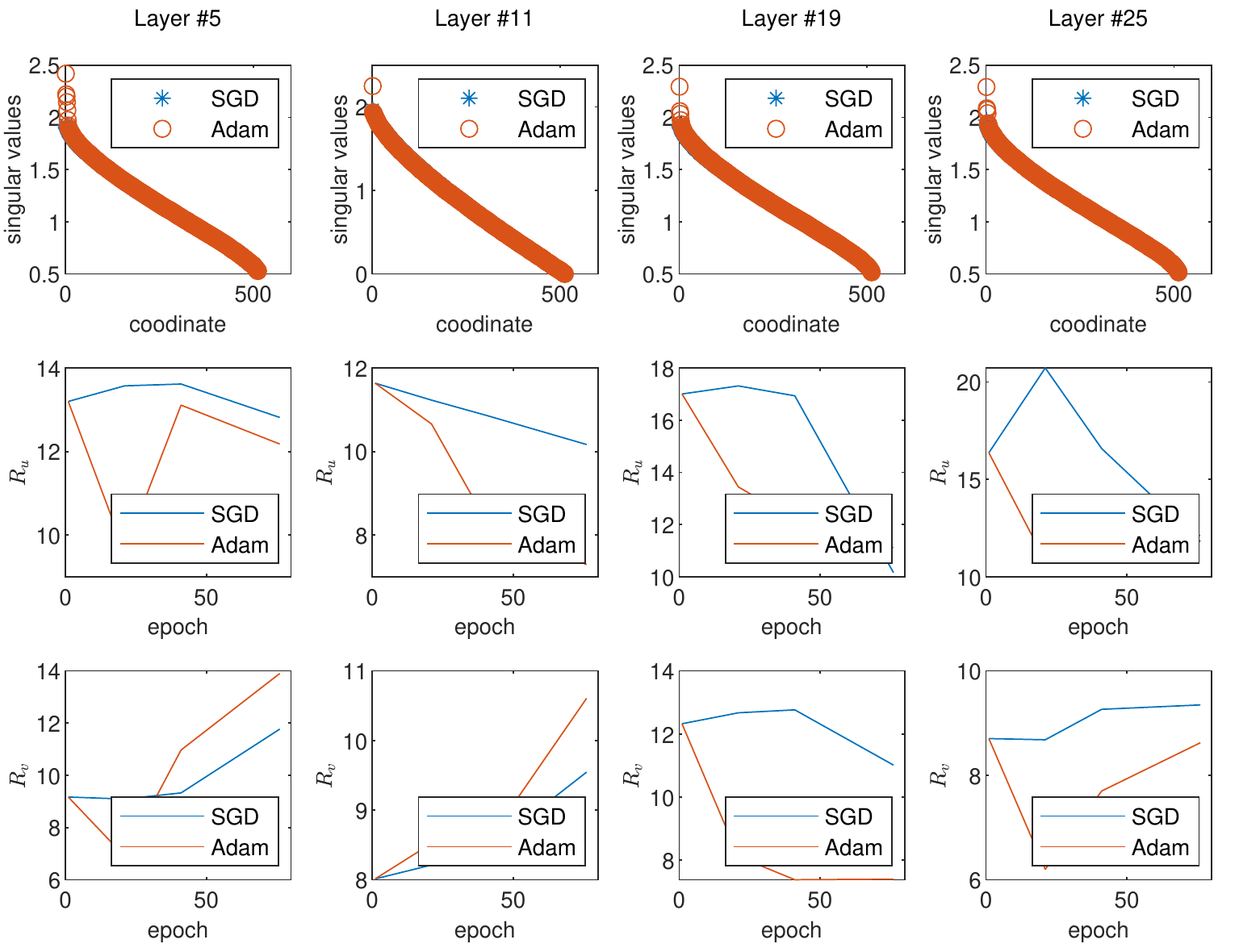}
	\caption{$R_u$ and $R_v$ for Adam and SGD with momentum in some layers}
	\label{fig:Ru_Rv_more_layers}
\end{figure}
	
\subsection{How the (adaptive) gradient aligns with diagonal of loss Hessian}\label{sec:alignment_more_exp}
	In this section, we present more empirical results on how the (adaptive) gradient aligns with diagonal of loss Hessian. The detailed setup is described in Section~\ref{sec:alignment}.

\subsubsection{SGD vs. Adagrad}
	Here we compare SGD and Adagrad on the language modeling task on wikitext-2 described in Section~\ref{subsec:SGD_vs_Adagrad}. We observed that the figures of all layers are quite similar so we select one layer as an example, as is shown in Figure~\ref{fig:alignment_Adagrad}.
	\begin{figure}[ht]
		\centering
		\includegraphics[width=0.8\linewidth]{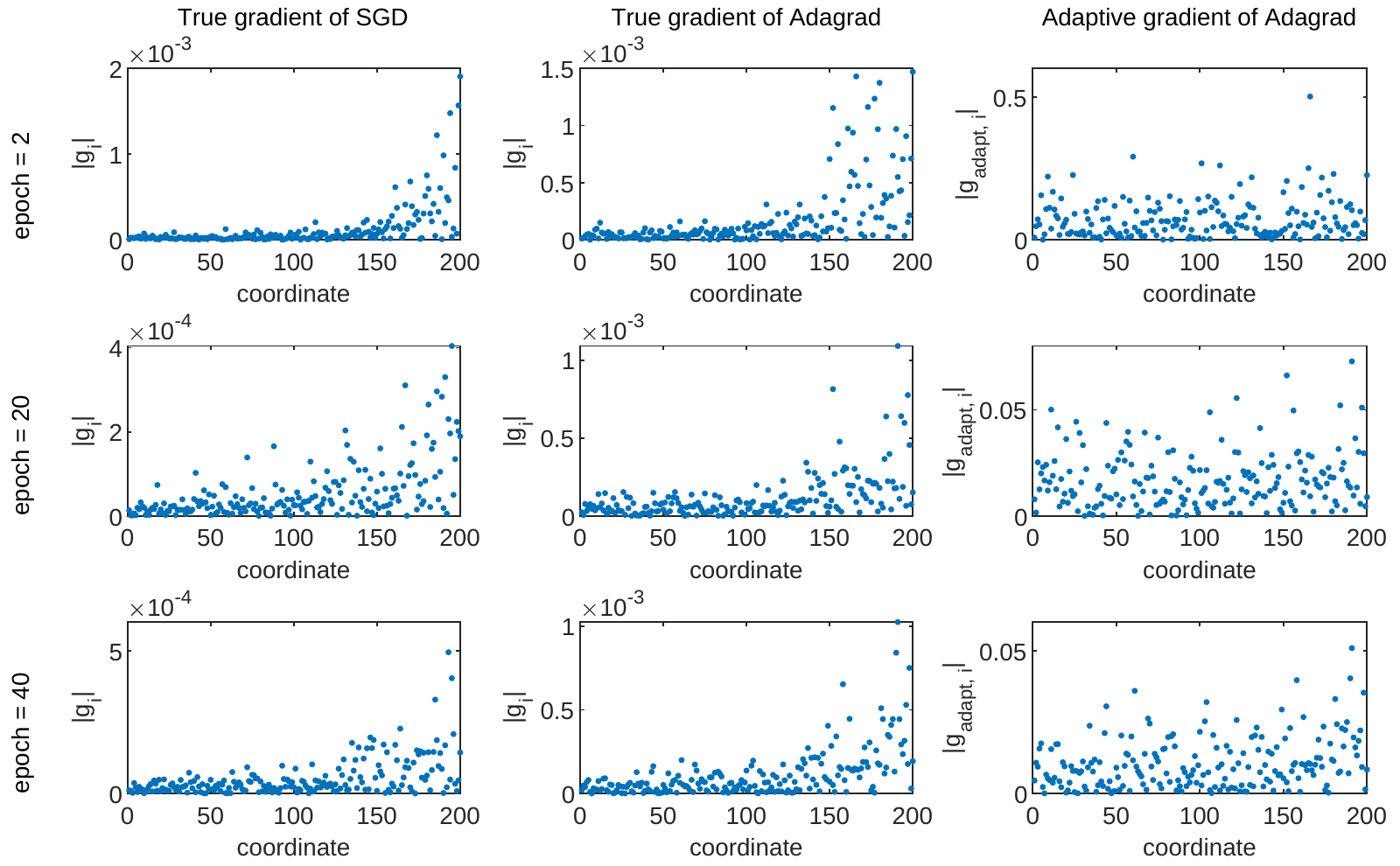}
		\caption{How the true gradient ($\{|g_{i_k}|\}_{k=1}^d$) and ``adaptive gradient'' ($\{|g_{\text{adapt},i_k}|\}_{k=1}^d$) align with diagonal of Hessian ($\{|H_{i_k,i_k}|\}_{k=1}^d$). Here coordinates are sorted such that $|H_{i_1,i_1}|\le|H_{i_2,i_2}|\le...\le|H_{i_d,i_d}|$ (suppose $H\in\mathbb{R}^{d\times d}$). See Section~\ref{sec:alignment} for more detail. Experiments were conducted on the model described in Section~\ref{subsec:SGD_vs_Adagrad}. This figure shows the results on the 12-th layer.}
		\label{fig:alignment_Adagrad}
	\end{figure}

\subsubsection{SGD with momentum vs. Adam}
	Figure~\ref{fig:alignment_BERT} presents the comparison between Adam and SGD with momentum on the sentence classification task using BERT-small. Here we add more results of the comparison of these two algorithms on the translation task described in Section~\ref{subsec:real_data}. Again, we select one layer as an example, as is shown in Figure~\ref{fig:alignment_translation}.
\subsection{Adding regularization and other tricks}\label{subsec:regularization}
In this section, we add weight decay to both Adam and SGD+M on the translation task described in Section~\ref{sec:empirical}. The momentum parameter $\beta$ in SGD was set as 0.9. The two momentum parameters $(\beta_1,\beta_2)$ of Adam were set as (0.9, 0.98). For both algorithms, we set the weight decay parameter as 0.001. We trained the model using constant learning rates for 60 epochs (1800 iterations). We tuned and chose the best learning rate 0.03 for SGD+M. The learning rate of Adam was set as 0.0001, under which Adam converged faster than SGD+M with its best learning rate 0.03. Figure~\ref{fig:regularization} shows the training losses and Table~\ref{tab:regularization} shows the values of $\RmedAda$, $\RmedSGDM$ and $\frac{\RmedSGDM}{\RmedAda}$ in some randomly selected layers.
	\begin{figure}[ht]
		\centering
		\includegraphics[width=0.8\linewidth]{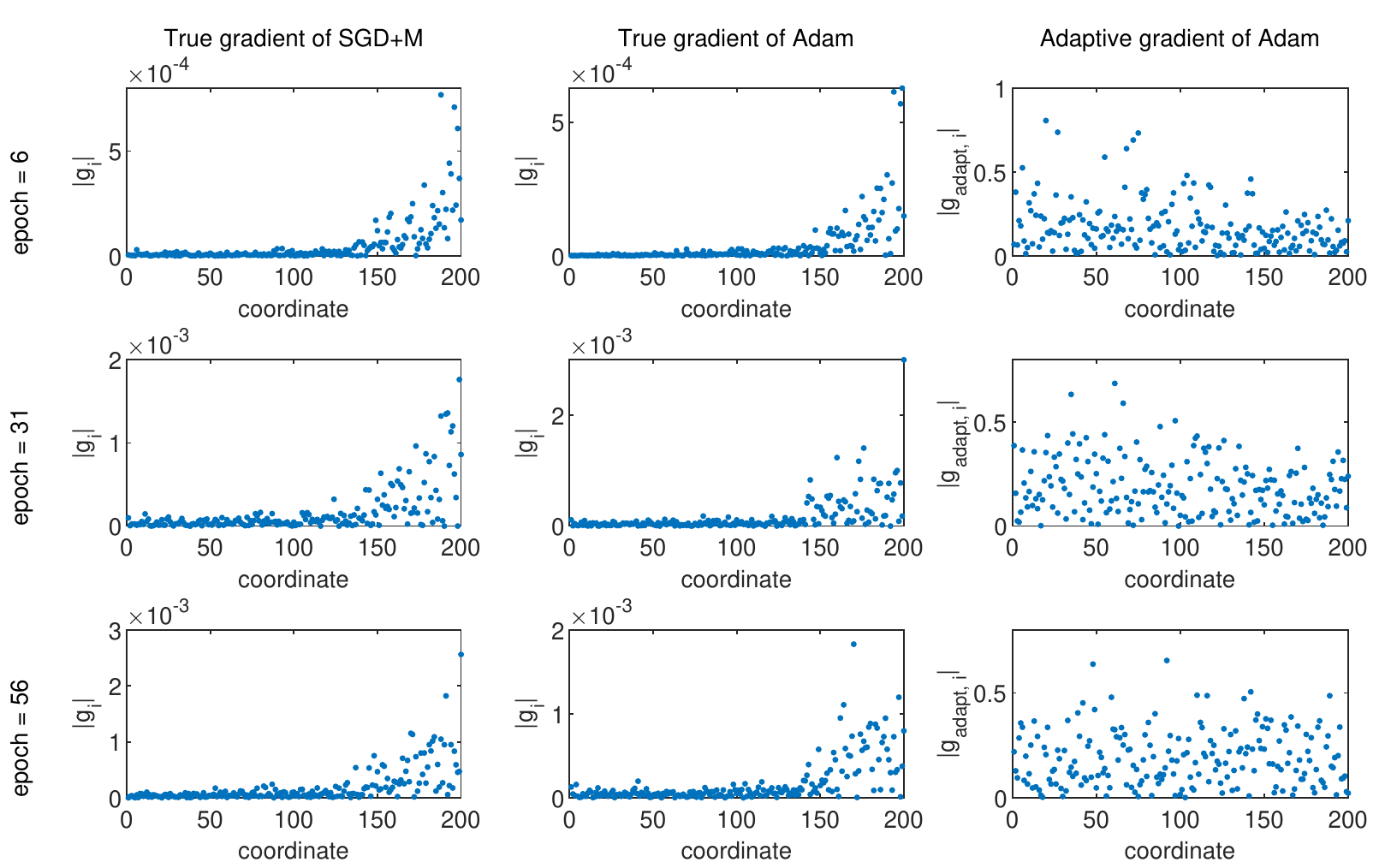}
		\caption{How the true gradient ($\{|g_{i_k}|\}_{k=1}^d$) and ``adaptive gradient'' ($\{|g_{\text{adapt},i_k}|\}_{k=1}^d$) align with diagonal of Hessian ($\{|H_{i_k,i_k}|\}_{k=1}^d$). Here coordinates are sorted such that $|H_{i_1,i_1}|\le|H_{i_2,i_2}|\le...\le|H_{i_d,i_d}|$ (suppose $H\in\mathbb{R}^{d\times d}$). See Section~\ref{sec:alignment} for more detail. Experiments were conducted on the translation task described in Section~\ref{subsec:real_data}. This figure shows the results on the 5-th layer.}
		\label{fig:alignment_translation}
	\end{figure}
\begin{figure}[H]
	\centering
	\begin{subfigure}{.4\textwidth}
		\centering
		\includegraphics[width=\linewidth]{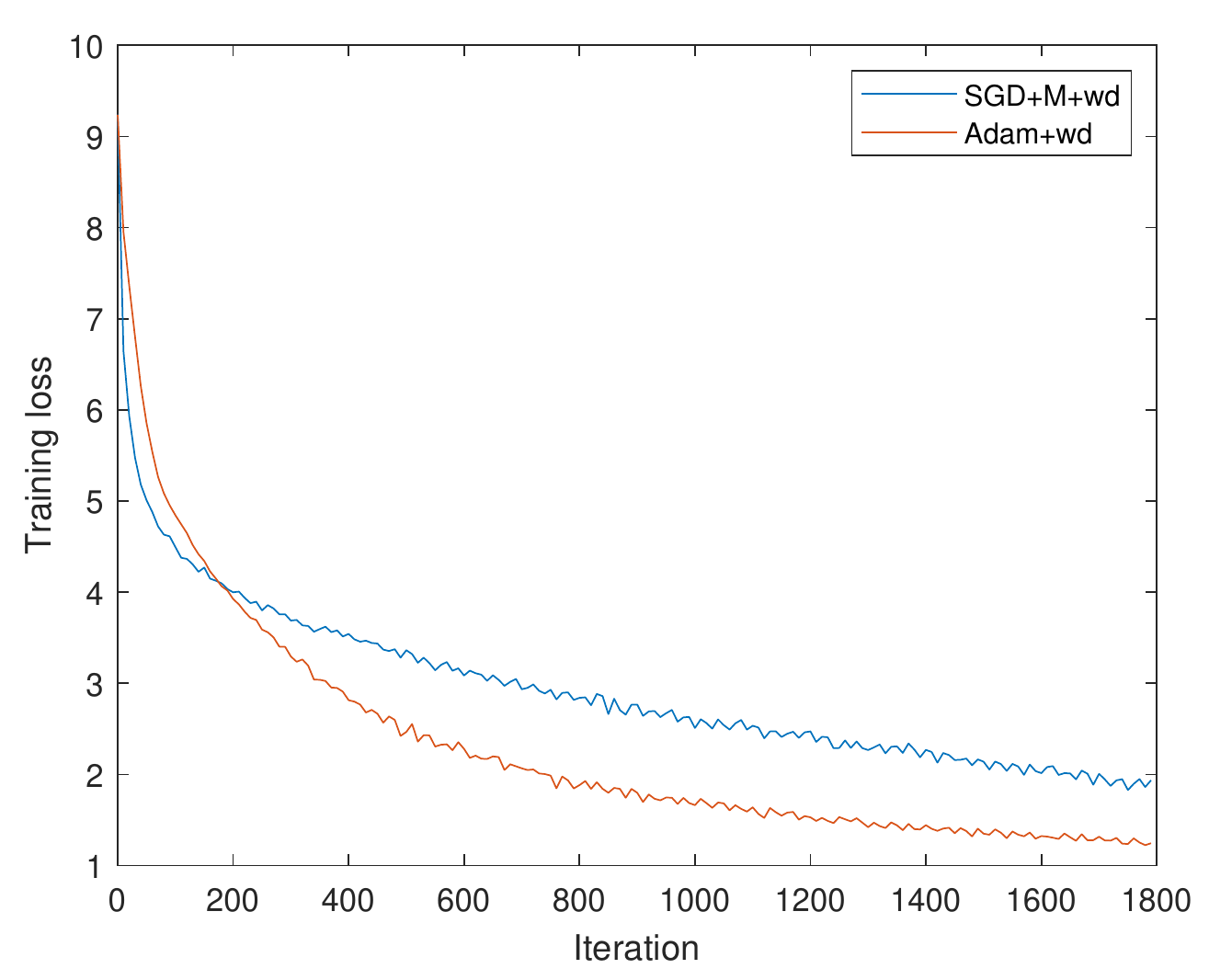}
		\caption{}
		\label{fig:regularization}
	\end{subfigure}
	\begin{subfigure}{.4\textwidth}
		\centering
		\includegraphics[width=\linewidth]{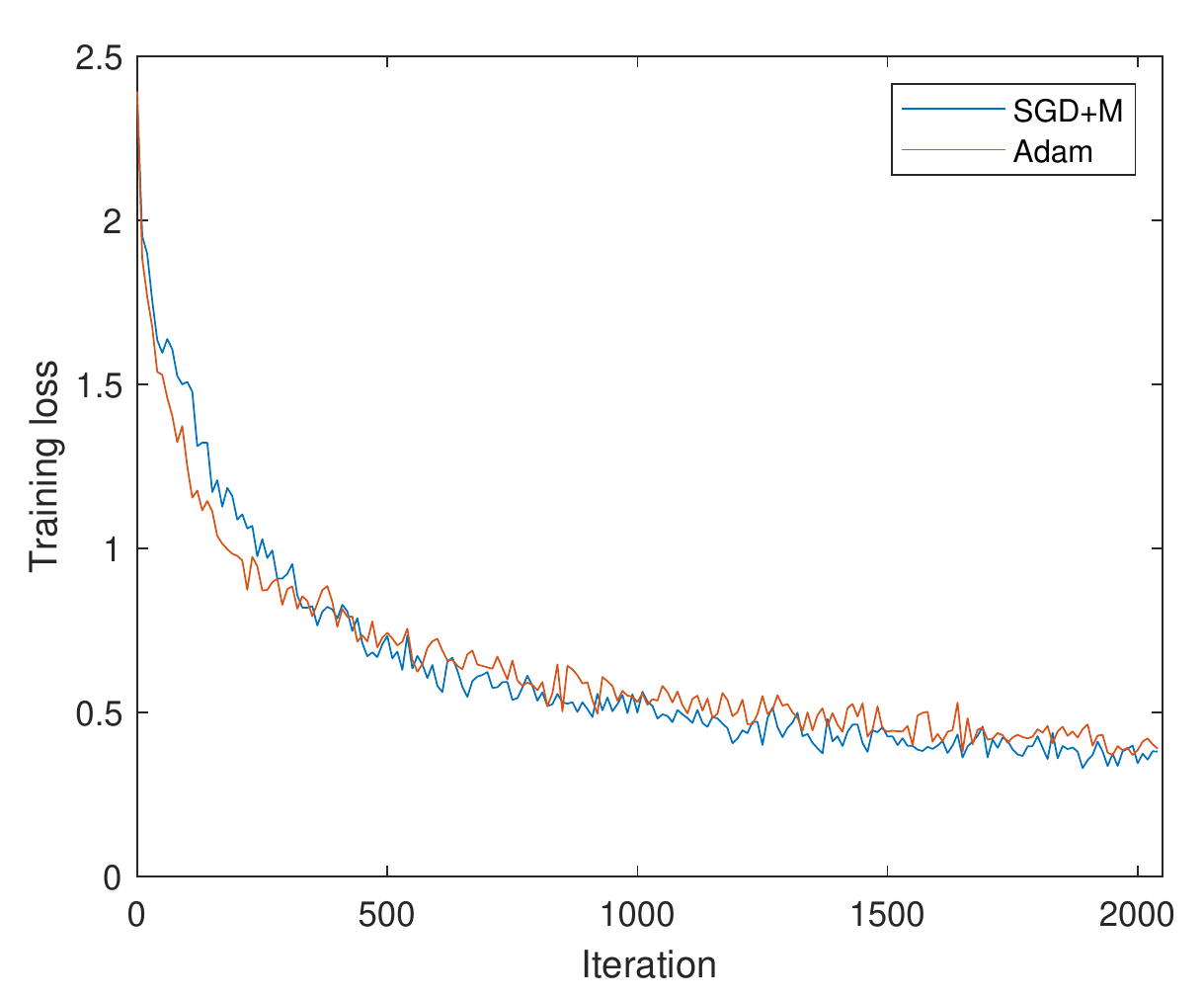}
		\caption{}
		\label{fig:resnet_cifar10}
	\end{subfigure}
	\caption{(a) Training losses of Adam and SGD+M for the translation task, both with weigh decay. (b) Training losses of Adam and SGD+M for a ResNet trained on CIFAR-10.}
\end{figure}

\subsection{Results on image tasks}\label{subsec:image_task}
We trained a ResNet\footnote{We borrowed the implementation here \url{https://pytorch-tutorial.readthedocs.io/en/latest/tutorial/chapter03_intermediate/3_2_2_cnn_resnet_cifar10/} and replace the ``layers'' array [2,2,2] with [1,1,1].} on CIFAR-10 dataset and compared the convergence speed and $\RmedOPT$ of SGD+M and Adam. The momentum parameter $\beta$ in SGD was set as 0.9. The two momentum parameters $(\beta_1,\beta_2)$ of Adam were set as (0.9, 0.98). The model was trained using constant learning rates for 41 epochs (2050 iterations). We tuned and chose the best learning rates for both algorithms: 0.5 for SGD+M and 0.005 for Adam. Figure~\ref{fig:resnet_cifar10} shows the training losses and Table~\ref{tab:resnet_cifar10} shows the values of $\RmedAda$, $\RmedSGDM$ and $\frac{\RmedSGDM}{\RmedAda}$.

\begin{table}[ht]
	\caption{$\RmedAda$ and $\RmedSGDM$ (both using weight decay) in some layers for the translation task.}
	\label{tab:regularization}
	\centering
	\resizebox{0.8\columnwidth}{!}{
		\begin{tabular}{ccccccccc}
			\toprule
			\multirow{2}{*}{Layer\#} & \multicolumn{2}{c}{Epoch 0} & \multicolumn{3}{c}{Epoch 30} & \multicolumn{3}{c}{Epoch 55}\\
			& $\RmedSGDM$ & $\RmedAda$ & $\RmedSGDM$ & $\RmedAda$ & $\frac{\RmedSGDM}{\RmedAda}$ & $\RmedSGDM$ & $\RmedAda$ & $\frac{\RmedSGDM}{\RmedAda}$ \\
			\midrule
			3 &	73.09&	 73.09&	   17.65	& 13.11&	  1.35	& 13.38	&  6.28	 & 2.13\\	
			\midrule
			5&	469.88&	469.88&	293.48	&310.85	&  0.94&	601.68&	588.12	 & 1.02	\\
			\midrule
			7&	80.78&	 80.78&  8.22&	 39.65	&  0.21&	 13.65	&  4.85	&  2.81	\\
			\midrule
			9&	494.27&	494.27&	150.14&	123.79	&  1.21&	301.89&	119.53&	  2.53	\\
			\midrule
			15&	632.10&	632.10	&	277.18&	175.34&	  1.58&	334.48&	282.88	&  1.18	\\
			\midrule
			18&	55.08&	 55.08	&  6.56	 & 4.45	&  1.47&	 23.88	&  4.52	&  5.29	\\
			\midrule
			21&	549.62&	549.62&	257.89&	 44.78	&  5.76&	515.99&	 53.79	&  9.59	
			\\
			\midrule
			24&	107.51&	107.51	&	  8.54&	  3.64	&  2.34&	 53.79	&  3.32&	 16.20	\\
			\midrule
			28&	13.77&	 13.77&	  4.74	&  2.37&	  2.00&	 15.60&	  2.15	&  7.24	\\
			\midrule
			30&	491.62&	491.62	&	  6.91&	  2.66&	  2.60	&  9.60	&  2.02	 & 4.77	
			\\
			\bottomrule
		\end{tabular}
	}
\end{table}

\begin{table}[ht]
	\caption{$\RmedAda$ and $\RmedSGDM$ for ResNet on CIFAR-10.}
	\label{tab:resnet_cifar10}
	\centering
	\resizebox{0.8\columnwidth}{!}{
	\begin{tabular}{cccccccccc}
		\toprule
		\multirow{2}{*}{Layer\#} & \multicolumn{3}{c}{Epoch 10} & \multicolumn{3}{c}{Epoch 20} & \multicolumn{3}{c}{Epoch 40}\\
		& $\RmedSGDM$ & $\RmedAda$ & $\frac{\RmedSGDM}{\RmedAda}$ & $\RmedSGDM$ & $\RmedAda$ & $\frac{\RmedSGDM}{\RmedAda}$ & $\RmedSGDM$ & $\RmedAda$ & $\frac{\RmedSGDM}{\RmedAda}$ \\
		\midrule
		1 &   6.88	& 25.34	&  0.27	 & 3.74	& 39.35	&  0.09	 & 4.39	& 15.80	 & 0.28	\\	
		\midrule
		2	& 110.19	& 35.93	 & 3.07	& 32.97	& 36.27	 & 0.91	& 60.69	& 28.06	&  2.16	\\
		\midrule
		3 &  40.89	& 16.92	&  2.42	& 13.98	& 15.92	 & 0.88	& 11.70	& 37.01	 & 0.32\\
		\midrule
		4 &  28.56	& 23.66	 & 1.21	& 11.48	& 13.04	 & 0.88	&  7.99	& 14.51	 & 0.55\\
		\midrule
		5	& 13.47	& 23.78	 & 0.57	&  8.64	& 12.07	 & 0.72	 & 6.52	& 14.23	 & 0.46\\
		\midrule
		6&	 18.72	& 12.49	 & 1.50	& 12.19&	  8.80	&  1.38	&  8.96&	 21.69	&  0.41\\
		\midrule
		7&	 18.85	& 39.25	 & 0.48	&  9.00	& 12.81	 & 0.70	& 13.87	& 11.42	 & 1.22\\
		\midrule
		8	& 13.79	& 19.91	&  0.69	&  8.87	& 11.72	&  0.76	&  7.48	&  9.34	&  0.80\\
		\midrule
		9	& 12.50	& 14.85	 & 0.84	&  9.62	&  8.06	 & 1.19	& 11.35	 & 8.08	 & 1.41\\
		\midrule
		10	& 14.89	& 14.53	 & 1.02	 & 8.15	&  5.80	 & 1.41	 & 6.21	 & 8.89	 & 0.70\\
		\bottomrule
	\end{tabular}
}
\end{table}

\section{Proof sketch of Theorem~\ref{thm:uniformity}}\label{sec:proof_sketch}
Now we give a proof sketch of Theorem~\ref{thm:uniformity}, which contains three major steps. The detailed proof can be found in Appendix~\ref{sec:connection}, \ref{sec:proof_GD} and \ref{sec:proof_Adam}.

First we relate the diagonal of Hessian to weight matrices $W_1, W_2$. Under Assumption~\ref{assump:setup}, denote $W_1[i,:]$ as the $i$-th row of $W_1$ and $W_2:=[w_{2i},w_{22},...,w_{2d}]$. Since the input dataset is whitened, we can show that
\[
\RmedOPTf=\frac{\max_i \left(w_{2i}^{(t)}\right)^2}{\text{median}\left(w_{2i}^{(t)}\right)^2},\quad \RmedOPTs=\frac{\max_i \left\|W_1^{(t)}[i,:]\right\|_2^2}{\text{median}\left\|W_1^{(t)}[i,:]\right\|_2^2}.
\]
Next, due to the one-dimensional output, we can prove that $W_1$ converges to an approximately rank-1 matrix. More precisely, we have
\begin{align*}
	W_1^{(t)}&=\boldsymbol{u}^{(t)}\boldsymbol{v}^{(t)T}+R_1^{(t)},\\
	W_2^{(t)}&=c^{(t)}\boldsymbol{u}^{(t)T}+R_2^{(t)T}.
\end{align*}
where $c^{(t)}$ is a scalar, $\boldsymbol{u}^{(t)},\boldsymbol{v}^{(t)},R_2^{(t)}\in\mathbb{R}^{d}$ and $R_1^{(t)}\in\mathbb{R}^{d\times d}$.Denote the $i$-th coordinate of $\boldsymbol{u}^{(t)},\boldsymbol{v}^{(t)}, R_2^{(t)}$ as $u_{i}^{(t)},v_{i}^{(t)}, R_{2i}^{(t)}$, respectively. Denote the $(i,j)$-th element of $R_1^{(t)}$ as $R_1^{(t)}[i,j]$. We have that $\forall i,j\in[d]:\quad \left|R_{2i}^{(t)}\right|\ll c^{(t)}\left|u_i^{(t)}\right|$ and $\left|R_1^{(t)}[i,j]\right|\ll \left|u_i^{(t)}v_i^{(t)}\right|$.

Using the rank 1 structure, we can further simplify $\RmedOPTf$ and $\RmedOPTs$ by
\begin{equation}\label{eqn:approx_R_unif}
	R^{\text{OPT}}_{\text{med}, k}(t)\approx\frac{\max_i \left(u_{i}^{(t)}\right)^2}{\text{median}\left(u_{i}^{(t)}\right)^2}, k=1,2 .
\end{equation}
The final step is the detailed analysis of $\boldsymbol{u}^{(t)}$.

For SGD+M, we can prove that $\boldsymbol{u}^{(t)}\approx C(t)[X_1,X_2,...,X_d]^T$ where $C(t)\in\mathbb{R}$ and $X_i,i\in[d]$ are i.i.d. Gaussian variables. Then we have with high probability, $\frac{\max_i \left(u_{i}^{(t)}\right)^2}{\text{median}\left(u_{i}^{(t)}\right)^2}=\Omega(\log d)$. For Adam, we can prove that $\forall i\in[d]: u_i^{(t)}\in\{\pm1\}$, which gives us $\frac{\max_i \left(u_{i}^{(t)}\right)^2}{\text{median }\left(u_{i}^{(t)}\right)^2}=1$. Substituting into eq.~\eqref{eqn:approx_R_unif} completes the proof.

\section{Analysis of SGD+M}\label{sec:proof_GD}
Note that $A=\frac{1}{m}YX^T$, $\Lambda_{xx}:=\frac{1}{m}XX^T$. Denote $g_k^{(t)}:=\nabla_{W_k}L(W^{(t)}), k=1,2$. We have that
\begin{align*}
	g_1^{(t)}=W_2^{(t)T}\left(W_2^{(t)}W_1^{(t)}-A\right),\quad g_2^{(t)}=\left(W_2^{(t)}W_1^{(t)}-A\right)W_1^{(t)T}.
\end{align*}
Let $\tilde A^{(t)}$, $\tilde \Lambda_{xx}^{(t)}$ and $\tilde g_k^{(t)},k=1,2$ be the corresponding batch versions at time $t$. Let $E^{(t)}:=W_2^{(t)}W_1^{(t)}-A$, and use $E_i^{(t)}$, $A_i$ and $\left(W_2^{(t)}W_1^{(t)}\right)_i$ to represent the $i$-th coordinates of $E^{(t)}$, $A$ and $W_2^{(t)}W_1^{(t)}$, respectively. By eq.~\eqref{eqn:equiv_loss}, the update rules of $W_1$ and $W_2$ for SGD+M are given by:
\begin{align*}
	W_1^{(t+1)}&=W_1^{(t)}-\eta\sum_{\tau=0}^t\beta^{t-\tau} W_2^{(\tau)T}\left(W_2^{(\tau)}W_1^{(\tau)}-A\right)-\eta\sum_{\tau=0}^t\beta^{t-\tau}Dg_1^{(\tau)},\\
	W_2^{(t+1)}&=W_2^{(t)}-\eta\sum_{\tau=0}^t\beta^{t-\tau}\left(W_2^{(\tau)}W_1^{(\tau)}-A\right)W_1^{(\tau)T}-\eta\sum_{\tau=0}^t\beta^{t-\tau}Dg_2^{(\tau)},
\end{align*}
where
\begin{align*}
	Dg_1^{(t)}&:=\tilde g_1^{(t)}-g_1^{(t)}=W_2^{(t)T}\left(W_2^{(t)}W_1^{(t)}\left(\tilde\Lambda_{xx}^{(t)}-\Lambda_{xx}\right)-\left(\tilde A^{(t)}-A\right)\right),\\
	Dg_2^{(t)}&:=\tilde g_2^{(t)}-g_2^{(t)}=\left(W_2^{(t)}W_1^{(t)}\left(\tilde\Lambda_{xx}^{(t)}-\Lambda_{xx}\right)-\left(\tilde A^{(t)}-A\right)\right)W_1^{(t)T}.
\end{align*}
Based on the magnitude of $W_2$ and $W_1$, we can intuitively divide the training procedure into 2 phases.
\begin{enumerate}
	\item \textbf{First phase}: the first several iterations when $W_1$ and $W_2$ are ``small'' so that $W_2W_1-A\approx -A$.
	\item \textbf{Second phase}: later iterations when $W_2W_1$ cannot be ignored.
\end{enumerate}
More formally, the boundary between the first and second phase is defined below.
\begin{definition}[End of the first phase]\label{def:first_phase}
	The end of the first phase (denoted as $T_1$) is defined as $T_1:=\inf\left\{t\ge0:\exists i,j\in[d]:\left|w_{2i}^{(t)}\right|\ge \frac{1}{d^{\frac{\alpha}{2}}}\text{or } \left|W_1^{(t)}[i,j]\right|\ge\frac{1}{d^{\frac{\alpha}{2}}}\right\}$.
\end{definition}
By Assumption~\ref{assump:gaussian_init} and the assumption that $\forall j\in[d]:A_j>0,A_j=\Theta(1)$, at the beginning, w.h.p., $\forall j\in[d]:(W_2W_1)_j-A_j<0$. During the training, each $(W_2W_1)_j$ increases and approaches $A_j$. We hope that by choosing a small learning rate, when $(W_2W_1)_j$ overshoots for some coordinate $j$, i.e. $(W_2W_1)_j>A_j$, it will be close to convergence. To analyze this overshooting issue more carefully, let's first define the following ``almost overshooting time''.
\begin{definition}[Almost overshooting time]\label{def:almost_overshooting} For $\epsilon>0$, denote $\epsilon_0:=\frac{1}{d^{\frac{1}{4}\alpha-1}}+\epsilon\log\sqrt{\frac{d}{\epsilon}}$. Define $T_2:=\inf\left\{t\ge0:\exists j\in[d]:\left(W_2^{(t)}W_1^{(t)}\right)_j-A_j\ge-\sqrt{\epsilon_0}\right\}$.
\end{definition}
\begin{definition}[Convergence time]\label{def:converge_gd}
	For $\epsilon>0$, we define the ``convergence time''\\ $T_3:=\inf\left\{t\ge0:\left\|E^{(t)}\right\|^2_2\le\epsilon\right\}$. 
\end{definition}
We can first show that after the first phase, i.e. when $t=T_1$, $W_1$ will become an approximately rank-1 matrix, as described in the following lemma.
\begin{lemma}\label{lemma:GD_rank1}
	Under Assumption~\ref{assump:setup}, \ref{assump:gaussian_init} and \ref{assump:large_batch}, suppose $\sigma\le\frac{\eta^{3/2}}{d^{\alpha/2+1}}$. By picking $\eta\le\mathcal{O}\left(\frac{1}{d^\alpha}\right)$, we have that when $t=T_1$, $\bar L\left(W^{(T_1)}\right)=\Theta(d)$, and that
	\begin{align*}
		W_1^{(T_1)}&=R_1^{(T_1)}+\boldsymbol{u}^{(T_1)}\boldsymbol{v}^{(T_1)T},\\
		W_2^{(T_1)}&=R_2^{(T_1)T}+c^{(T_1)}\boldsymbol{u}^{(T_1)T},
	\end{align*}
	where $c^{(T_1)}\in\mathbb{R}$, $\boldsymbol{u}^{(T_1)},\boldsymbol{v}^{(T_1)},R_2^{(T_1)}\in\mathbb{R}^{d}$ and $R_1^{(T_1)}\in\mathbb{R}^{d\times d}$. Denote the $i$-th coordinate of $\boldsymbol{u}^{(T_1)},\boldsymbol{v}^{(T_1)},R_2^{(T_1)}$ as $u_{i}^{(T_1)},v_{i}^{(T_1)},R_{2i}^{(T_1)}$, respectively, and the $(i,j)$-th element of $R_1^{(T_1)}$ as $R_1^{(T_1)}[i,j]$. Then w.h.p.,
	\[
	\forall 1\le i,j\le d:\quad\frac{\left|R_1^{(T_1)}[i,j]\right|}{\left|u_{i}^{(T_1)}v_j^{(T_1)}\right|}\le\tilde{\mathcal{O}}\left(\frac{1}{d^{\frac{1}{4}\alpha-1}}\right),\quad \frac{\left|R_{2i}^{(T_1)}\right|}{\left|c^{(T_1)}u_{i}^{(T_1)}\right|}\le\tilde{\mathcal{O}}\left(\frac{1}{d^{\frac{1}{4}\alpha-1}}\right).
	\]
\end{lemma}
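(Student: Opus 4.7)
The plan is to exploit two structural features of the first phase. First, by the definition of $T_1$, all entries of $W_1^{(t)}$ and $W_2^{(t)}$ remain $O(1/d^{\alpha/2})$ for $t \le T_1$, so the product $W_2^{(t)}W_1^{(t)}$ has entries of order $O(d\cdot 1/d^{\alpha}) = O(1/d^{\alpha-1}) = o(1)$, which is negligible next to $A = \Theta(1)$. This immediately yields the loss bound $\bar L(W^{(T_1)}) = \tfrac{1}{2}\|A\|_2^2(1+o(1)) = \Theta(d)$ and linearizes the gradients to the rank-one outer products $g_1 \approx -W_2^{T}A$ and $g_2 \approx -AW_1^{T}$. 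Second, since $A$ is a single row, every linearized update to $W_1$ has shape $(\text{column})\otimes A$, so the cumulative change to $W_1$ is automatically rank-one with right factor $A$; the only question is the direction of its left factor.

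To pin down that direction, I would introduce $\boldsymbol{p}^{(t)} := W_2^{(t)T} \in \mathbb{R}^d$ and $\boldsymbol{q}^{(t)} := W_1^{(t)} A^T \in \mathbb{R}^d$, absorb the SGD+M momentum into an effective step $\tilde\eta := \eta/(1-\beta)$, and derive (modulo controlled error) the coupled recursion
\begin{align*}
  \boldsymbol{p}^{(t+1)} &\approx \boldsymbol{p}^{(t)} + \tilde\eta\, \boldsymbol{q}^{(t)}, \\
  \boldsymbol{q}^{(t+1)} &\approx \boldsymbol{q}^{(t)} + \tilde\eta\, \|A\|_2^2\, \boldsymbol{p}^{(t)},
\end{align*}
whose linear part has eigenvalues $1 \pm \tilde\eta\|A\|_2$ with growing mode satisfying $\boldsymbol{q} = \|A\|_2 \boldsymbol{p}$. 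Since $\|\boldsymbol{p}^{(0)}\|_2 = \tilde\Theta(d^{1/2-\alpha})$ exceeds $\|\boldsymbol{q}^{(0)}\|_2/\|A\|_2 = \tilde\Theta(d^{1/2-2\alpha})$ by a factor of $\tilde\Theta(d^{\alpha})$, the projection onto the growing mode is dominated by $\boldsymbol{p}^{(0)} = W_2^{(0)T}$, so after a few steps $\boldsymbol{p}^{(t)} \approx \tfrac{1}{2}(1+\tilde\eta\|A\|_2)^t W_2^{(0)T}$. Because every increment to $W_1$ then lies in the fixed direction $W_2^{(0)T}\otimes A$, the cumulative update gives $W_1^{(T_1)} - W_1^{(0)} = c^{(T_1)} W_2^{(0)T}A$ for a scalar $c^{(T_1)}$, and analogously $W_2^{(T_1)} - W_2^{(0)} \propto W_2^{(0)}$. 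Taking $\boldsymbol{u}^{(T_1)}\propto W_2^{(0)T}$ and $\boldsymbol{v}^{(T_1)T}\propto A$ matches the lemma's rank-one factorization (with the same $\boldsymbol{u}^{(T_1)}$ shared between the decompositions of $W_1^{(T_1)}$ and $W_2^{(T_1)}$).

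For the residual bound, I invert $(1+\tilde\eta\|A\|_2)^{T_1}/d^\alpha \approx 1/d^{\alpha/2}$ to find $c^{(T_1)} = \tilde\Theta(d^{\alpha/2})$, so typical entries of $\boldsymbol{u}^{(T_1)}\boldsymbol{v}^{(T_1)T}$ are of order $1/d^{\alpha/2}$. The residual $R_1^{(T_1)}$ then collects four contributions: (a) the component of $W_1^{(0)}$ orthogonal to the rank-one direction, of size $\tilde O(1/d^{2\alpha})$; (b) the decaying mode, exponentially suppressed by $(1-\tilde\eta\|A\|_2)^{T_1}$; (c) the accumulated batch noise, kept below the signal by the assumption $\sigma \le \eta^{3/2}/d^{\alpha/2+1}$ once summed against the momentum kernel; and (d) the dropped $W_2W_1$ nonlinearity, of size $O(1/d^{3\alpha/2-1})$. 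The worst case of the ratio $|R_1^{(T_1)}[i,j]|/|u_i^{(T_1)}v_j^{(T_1)}|$ is realized when $|W_{2i}^{(0)}|$ is anomalously small; a union bound on $d$ Gaussians of standard deviation $1/d^\alpha$ gives $\min_i |W_{2i}^{(0)}| \ge 1/d^{\alpha+O(1)}$ w.h.p., so the ratio lands comfortably below the stated $\tilde O(1/d^{\alpha/4-1})$ thanks to the large choice $\alpha \ge 4(p+2)$.

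The main obstacle is propagating the linearization and the rank-one approximation jointly through all the iterations of phase one, since the error terms in the recursion for $(\boldsymbol{p}^{(t)},\boldsymbol{q}^{(t)})$ feed back into the coupled dynamics and naturally want to amplify at the same rate $(1+\tilde\eta\|A\|_2)^t$ as the signal. I would close this with a bootstrap argument, carrying as an induction hypothesis that at every intermediate step the iterates stay within a tube of relative width $o(1)$ around the leading rank-one trajectory; the hypothesis closes using the polynomial gap between the initial scales $1/d^{2\alpha}$ and $1/d^{\alpha}$ and the phase-one threshold $1/d^{\alpha/2}$, together with the coarse bound $\eta \le O(1/d^\alpha)$ which keeps per-step perturbations small. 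The precise variance scale $\sigma \le \eta^{3/2}/d^{\alpha/2+1}$ is calibrated exactly so that the stochastic contribution, accumulated against the momentum kernel up through time $T_1$, remains strictly below the growing signal.
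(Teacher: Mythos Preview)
Your proposal is correct and captures the same underlying mechanism as the paper: linearize the phase-one dynamics using $W_2W_1=o(A)$, identify the growing/decaying rates $1\pm\tilde\eta\|A\|_2$, and extract the rank-one update with $\boldsymbol v\propto A$ and $\boldsymbol u$ along the growing-mode coefficient (which is indeed $\approx\tfrac12 W_2^{(0)}$, since $W_1^{(0)}$ enters at the smaller scale $1/d^{2\alpha}$). The implementation differs: instead of running a bootstrap on your coupled $(\boldsymbol p,\boldsymbol q)$-system, the paper eliminates $\boldsymbol q=W_1A^T$ to obtain a second-order scalar recurrence $W_2^{(t+1)}=2W_2^{(t)}-(1-\tilde\eta^2\|A\|_2^2)W_2^{(t-1)}+r_3^{(t)}$ and solves it explicitly as $W_2^{(t)}=C_1\lambda_1^t+(C_2+r_5^{(t)})\lambda_2^t$. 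This closed form cleanly separates the signal $C_2\lambda_2^t$ from the accumulated forcing $r_5^{(t)}\lambda_2^t$, so the concern you flag about errors amplifying at the signal rate is handled by bounding the \emph{constant} ratio $|r_{5i}^{(t)}|/|C_{2i}|$ once, rather than by an inductive tube argument. The paper's anti-concentration lower bound $|C_{2i}|\ge\tilde\Omega(1/d^{5\alpha/4})$ (union bound calibrated to failure probability $1/d^{\alpha/4-1}$) plays exactly the role of your $\min_i|W_{2i}^{(0)}|\ge 1/d^{\alpha+O(1)}$. Both routes work; the explicit solution buys cleaner bookkeeping of the four residual sources you enumerate and sidesteps the delicate induction, while your coupled-system formulation is arguably more transparent about the underlying two-dimensional geometry.
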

The following lemma tells us that this approximate rank-1 structure is preserved when $T_1\le t\le\min\{T_2,T_3\}$.
\begin{lemma}\label{lemma:GD_rank1_phase2}
	Under Assumption~\ref{assump:setup}, \ref{assump:gaussian_init} and \ref{assump:large_batch}, suppose $\sigma\le\frac{\eta^{3/2}}{d^{\alpha/2+1}}$. By picking $\eta\le\mathcal{O}\left(\frac{\epsilon}{d^{\frac{7\alpha}{4}+4}}\right)$, we have that w.h.p. for $T_1\le t\le\min\{T_2,T_3\}$,
	\begin{align*}
		W_1^{(t)}&=\boldsymbol{u}^{(T_1)}\boldsymbol{v}^{(t)T}+R_1^{(t)},\\
		W_2^{(t)}&=c^{(t)}\boldsymbol{u}^{(T_1)T}+R_2^{(t)T}.
	\end{align*}
	where
	\[
	\forall1\le i,j\le d:\quad \frac{\left|R_1^{(t)}[i,j]\right|}{\left|u_{i}^{(T_1)}v_{j}^{(t)}\right|}\le\tilde{\mathcal{O}}(\epsilon_0),\quad \frac{\left|R_{2i}^{(t)}\right|}{\left|c^{(t)}u_{i}^{(T_1)}\right|}\le\tilde{\mathcal{O}}(\epsilon_0),
	\]
	and $\epsilon_0$ is defined in Definition~\ref{def:almost_overshooting}. Moreover, when $t=\min\{T_2,T_3\}$, $\bar L\left(W^{(t)}\right)=\mathcal{O}(\epsilon_0d)$.
\end{lemma}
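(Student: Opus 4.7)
The plan is to prove Lemma~\ref{lemma:GD_rank1_phase2} by induction on $t$, starting from $t=T_1$ where Lemma~\ref{lemma:GD_rank1} provides the base case with residual ratio bounded by $\tilde{\mathcal{O}}\bigl(1/d^{\alpha/4-1}\bigr)\le\tilde{\mathcal{O}}(\epsilon_0)$. At each iteration I would expand the gradient under the inductive hypothesis and show that, to leading order, the SGD+M update keeps the rank-1 structure aligned with the fixed direction $\boldsymbol{u}^{(T_1)}$, while all deviations feed into residuals whose ratios to the rank-1 part remain $\tilde{\mathcal{O}}(\epsilon_0)$.

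The first key computation is the product expansion
\[
W_2^{(t)}W_1^{(t)}=\bigl(c^{(t)}\boldsymbol{u}^{(T_1)T}+R_2^{(t)T}\bigr)\bigl(\boldsymbol{u}^{(T_1)}\boldsymbol{v}^{(t)T}+R_1^{(t)}\bigr)=c^{(t)}\bigl\|\boldsymbol{u}^{(T_1)}\bigr\|_2^2\,\boldsymbol{v}^{(t)T}+\Delta^{(t)},
\]
where $\Delta^{(t)}$ collects cross terms that are second-order in the residuals. Substituting into $\tilde g_1^{(t)}=W_2^{(t)T}E^{(t)}+Dg_1^{(t)}$ gives a leading piece $c^{(t)}\boldsymbol{u}^{(T_1)}E^{(t)}$, which is rank-1 in the direction $\boldsymbol{u}^{(T_1)}$ and therefore updates only $\boldsymbol{v}^{(t)}$; the remaining terms $R_2^{(t)T}E^{(t)}$, stochastic noise (controlled via Assumption~\ref{assump:large_batch} and the assumption $\sigma\le\eta^{3/2}/d^{\alpha/2+1}$), and contributions from $\Delta^{(t)}$ contribute only to $R_1^{(t+1)}$. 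An analogous expansion of $\tilde g_2^{(t)}=E^{(t)}W_1^{(t)T}+Dg_2^{(t)}$ yields a leading piece $E^{(t)}\boldsymbol{v}^{(t)}\boldsymbol{u}^{(T_1)T}$ updating $c^{(t)}$, with all other terms absorbed into $R_2^{(t+1)}$. Tracking the recursions for $c^{(t)},\boldsymbol{v}^{(t)},R_1^{(t)},R_2^{(t)}$ (including the momentum buffers, which I would treat by writing the momentum as the geometric sum $\sum_{\tau\le t}\beta^{t-\tau}\tilde g^{(\tau)}$ and bounding each term by the induction hypothesis at time $\tau$), the tiny learning rate $\eta\le\mathcal{O}\bigl(\epsilon/d^{7\alpha/4+4}\bigr)$ ensures that per-step residual growth is at most $\eta\cdot\mathrm{poly}(d)\cdot\tilde{\mathcal{O}}(\epsilon_0)$, and the total number of iterations up to $\min\{T_2,T_3\}$ remains small enough (at most $\mathrm{poly}(d)/\eta$) that the cumulative residual ratio stays $\tilde{\mathcal{O}}(\epsilon_0)$.

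For the loss bound at $t=\min\{T_2,T_3\}$, if $t=T_3$ then $\bar L(W^{(T_3)})\le\epsilon/2\le\mathcal{O}(\epsilon_0 d)$ by definition. If $t=T_2<T_3$, the rank-1 structure gives $E_j^{(t)}\approx c^{(t)}\|\boldsymbol{u}^{(T_1)}\|_2^2 v_j^{(t)}-A_j$, and the coordinate-wise update $v_j^{(t+1)}-v_j^{(t)}\approx-\eta c^{(t)}E_j^{(t)}$ (plus momentum history) means that all coordinates $v_j^{(t)}$ evolve as nearly-independent 1-D trajectories converging toward $A_j/(c^{(t)}\|\boldsymbol{u}^{(T_1)}\|_2^2)$ at comparable rates, so when the first coordinate satisfies $E_j\ge-\sqrt{\epsilon_0}$ (definition of $T_2$) the remaining ones still satisfy $|E_j|\le\mathcal{O}(\sqrt{\epsilon_0})$, giving $\bar L(W^{(T_2)})=\tfrac12\sum_j E_j^2\le\mathcal{O}(\epsilon_0 d)$.

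The main obstacle I expect is the bookkeeping of residuals across the momentum-coupled dynamics: because each update mixes in all past gradients via $\beta^{t-\tau}$, the residual at time $t+1$ depends on the entire history of residuals, and I must rule out exponential amplification. This requires a carefully phrased inductive invariant that bounds not just $R_1^{(t)},R_2^{(t)}$ but also the momentum buffers themselves in terms of $\epsilon_0$ and the current $c^{(t)},\boldsymbol{v}^{(t)}$, together with a telescoping argument using $\sum_\tau\beta^{t-\tau}\le 1/(1-\beta)$ so that the accumulated momentum error is only a constant factor worse than the single-step error. A secondary subtlety is verifying that $\boldsymbol{v}^{(t)}$ grows by the expected amount during $[T_1,T_2]$ so that the relative residual bound is meaningful — this requires a lower bound on the gradient magnitude along the $\boldsymbol{u}^{(T_1)}$ direction, coming from the fact that $\|E^{(t)}\|_2$ starts at $\Theta(\sqrt d)$ (since $\bar L(W^{(T_1)})=\Theta(d)$) and only decreases to $\Omega(\sqrt{\epsilon})$ at $T_3$.
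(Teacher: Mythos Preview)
Your high-level plan matches the paper, but the key step --- keeping the residual ratios at $\tilde{\mathcal{O}}(\epsilon_0)$ --- does not go through by naive summation. The update for $R_1$ contains the term $-\eta_t R_2^{(t)}E^{(t)}$, whose entrywise size under the inductive hypothesis is $\eta\cdot\tilde{\mathcal{O}}(\epsilon_0)\cdot|c^{(t)}u_i^{(T_1)}|\cdot|E_j^{(t)}|$; this is your ``$\eta\cdot\mathrm{poly}(d)\cdot\tilde{\mathcal{O}}(\epsilon_0)$''. But the second phase lasts $\Theta\bigl(d^{\alpha}\log(d/\epsilon)/\eta\bigr)$ iterations, so summing gives an absolute residual of order $\epsilon_0\, d^{\alpha}\,\mathrm{poly}(d)$, and dividing by $|u_i^{(T_1)}v_j^{(t)}|\le\tilde{\mathcal{O}}(d^{-1/4})$ yields a ratio of order $\epsilon_0\, d^{\alpha+\Theta(1)}\gg\epsilon_0$. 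A straight ``growth per step $\times$ number of steps'' accounting therefore fails by a factor of $d^{\alpha}$.

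The idea you are missing is a mediant (ratio-preserving) argument that hinges on the sign $E_j^{(t)}<0$ throughout $[T_1,T_2)$. This sign makes $v_j^{(t+1)}=v_j^{(t)}+\eta_t c^{(t)}(-E_j^{(t)})$ and $c^{(t+1)}=c^{(t)}+\eta_t(-E^{(t)})\boldsymbol{v}^{(t)}$ strictly increasing, and then, maintaining inductively that \emph{both} $|R_1^{(t)}[i,j]|/(|u_i^{(T_1)}|v_j^{(t)})$ and $|R_{2i}^{(t)}|/(c^{(t)}|u_i^{(T_1)}|)$ are bounded by the \emph{same} $\delta^{(t)}$, one gets
\[
|R_1^{(t+1)}[i,j]|\le\delta^{(t)}|u_i^{(T_1)}|v_j^{(t)}+\eta_t\,\delta^{(t)}c^{(t)}|u_i^{(T_1)}|(-E_j^{(t)})+|r_1^{(t)}[i,j]|=\delta^{(t)}|u_i^{(T_1)}|\,v_j^{(t+1)}+|r_1^{(t)}[i,j]|.
\]
So the dominant contribution $-\eta_t R_2^{(t)}E^{(t)}$ does not move the ratio at all; only the genuinely second-order momentum/noise error $r_1^{(t)}=\tilde{\mathcal{O}}(\eta^2 d^{11/4})$ accumulates, and \emph{its} sum over the phase is precisely what the choice $\eta\le\mathcal{O}(\epsilon/d^{7\alpha/4+4})$ is calibrated to control. (Your loss-bound sketch also needs more than ``comparable rates'': the paper obtains $E_j^{(t)}/E_i^{(t)}=\Theta(1)$ via a further decomposition $\boldsymbol{v}^{(t)}=a^{(t)}A+R_v^{(t)}$ with $|R_{vj}^{(t)}|/(a^{(t)}A_j)=\tilde{\mathcal{O}}(\epsilon_0)$, which requires its own inductive maintenance.)
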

The following lemma gives us a more detailed description of $\boldsymbol{u}^{(T_1)}$.
\begin{lemma}\label{lemma:u_structure}
	The $\boldsymbol{u}^{(T_1)}$ in Lemma~\ref{lemma:GD_rank1} and \ref{lemma:GD_rank1_phase2} can be written as $\boldsymbol{u}^{(T_1)}=X+Y$ where $X_{i},i\in[d]$ are i.i.d Gaussian random variables and that w.h.p. $\forall i\in[d]:\frac{|Y_{i}|}{|X_{i}|}\le\tilde{\mathcal{O}}\left(\frac{1}{d^{\frac{1}{4}\alpha-\frac{1}{2}}}\right)$.
\end{lemma}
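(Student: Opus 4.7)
The plan is to trace $\boldsymbol{u}^{(T_1)}$ back to $W_2^{(0)T}$, whose entries are i.i.d.\ Gaussian by Assumption~\ref{assump:gaussian_init}. By Lemma~\ref{lemma:GD_rank1} we may take $c^{(T_1)}\boldsymbol{u}^{(T_1)} = W_2^{(T_1)T}$ in the rank-1 decomposition of $W_2^{(T_1)}$ (a valid choice since it makes $R_2^{(T_1)}=0$, trivially satisfying the coordinate-wise bound of Lemma~\ref{lemma:GD_rank1}); with this choice $\boldsymbol{u}^{(T_1)}$ is simply $W_2^{(T_1)T}/c^{(T_1)}$. It therefore suffices to show that $W_2^{(T_1)T}$ decomposes as a scalar multiple of $W_2^{(0)T}$ plus a coordinate-wise negligible remainder.

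To this end I would unroll the SGD+M updates in the first phase. Using $E^{(t)}\approx -A$ (established in the proof of Lemma~\ref{lemma:GD_rank1}) and absorbing momentum into an effective step size $\eta':=\eta/(1-\beta)$, the noise-free updates reduce to $W_2^{(t+1)T}_i - W_2^{(t)T}_i = \eta'(W_1^{(t)}A^T)_i$ and $(W_1^{(t+1)}A^T)_i - (W_1^{(t)}A^T)_i = \eta'\|A\|_2^2\, W_2^{(t)T}_i$, which decouple across coordinates $i\in[d]$. Hence at each $i$ the pair $(a_i^{(t)},\,b_i^{(t)}) := (W_2^{(t)T}_i,\,(W_1^{(t)}A^T)_i)$ evolves via the $2\times 2$ recursion
\begin{align*}
\bigl(a^{(t+1)}_i,\,b^{(t+1)}_i\bigr)^T \;=\; M\,\bigl(a^{(t)}_i,\,b^{(t)}_i\bigr)^T, \qquad M = \begin{pmatrix} 1 & \eta' \\ \eta'\|A\|_2^2 & 1 \end{pmatrix},
\end{align*}
starting from $(a_i^{(0)},b_i^{(0)}) = (W_2^{(0)T}_i,\,(W_1^{(0)}A^T)_i)$. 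Diagonalising $M$ gives eigenvalues $1\pm\eta'\|A\|_2$ with eigenvectors $(1,\pm\|A\|_2)^T$, so the dominant mode after $T_1$ steps yields $a^{(T_1)}_i \approx \lambda^{(T_1)}\bigl(W_2^{(0)T}_i + (W_1^{(0)}A^T)_i/\|A\|_2\bigr) + \mathcal{E}^{(T_1)}_i$, where $\lambda^{(T_1)}=\tfrac{1}{2}(1+\eta'\|A\|_2)^{T_1}$ and $\mathcal{E}^{(T_1)}_i$ collects the exponentially decaying mode, the first-phase correction $E^{(t)}+A$, and the cumulative batch noise.

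Setting $X_i := (\lambda^{(T_1)}/c^{(T_1)})W_2^{(0)T}_i$ (i.i.d.\ Gaussian, being a scalar rescaling of $W_2^{(0)T}$) and $Y_i := u_i^{(T_1)} - X_i$, the dominant contribution to $Y_i$ is the independent Gaussian $(\lambda^{(T_1)}/(c^{(T_1)}\|A\|_2))(W_1^{(0)}A^T)_i$, whose per-coordinate standard deviation is a factor $\Theta(d^{-\alpha})$ smaller than that of $X_i$ (using $\|A\|_2=\Theta(\sqrt d)$ and the variances $d^{-2\alpha}$ for $W_2^{(0)}$ versus $d^{-4\alpha}$ for $W_1^{(0)}$). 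Gaussian anti-concentration gives $|X_i|\ge(\lambda^{(T_1)}/c^{(T_1)})\cdot d^{-\alpha}/\mathrm{poly}(d)$ uniformly in $i$ after a union bound, and sub-Gaussian concentration controls $|Y_i|$ from above; together they yield $|Y_i|/|X_i|\le \tilde{\mathcal{O}}(d^{-\alpha+1})$, well within the claimed $\tilde{\mathcal{O}}(d^{-\alpha/4+1/2})$.

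The main obstacle will be the coordinate-wise control of the remainder $\mathcal{E}^{(T_1)}$: it depends on the entire trajectory up to $T_1$, involves batch noise at every step (to be controlled via Azuma/Freedman-style martingale concentration and the assumption $\sigma\le\eta^{3/2}/d^{\alpha/2+1}$), and involves the higher-order first-phase correction $E^{(t)}+A$ whose size must be inherited from the proof of Lemma~\ref{lemma:GD_rank1}. A union bound across $d$ coordinates and $t\le T_1$ inflates estimates by polylog factors, absorbed by $\tilde{\mathcal{O}}(\cdot)$. The structural reason the argument closes cleanly is precisely that the noise-free recursion decouples coordinate-wise into identical $2\times 2$ systems, turning what could have been a complicated tensorial analysis into independent $2\times 2$ problems in which the initialization-time independence $W_1^{(0)}\perp W_2^{(0)}$ can be exploited directly.
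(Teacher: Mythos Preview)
Your approach is essentially the same as the paper's --- both reduce the first-phase dynamics to the scalar $2\times 2$ recursion with eigenvalues $1\pm\eta'\|A\|_2$ and read off $\boldsymbol{u}^{(T_1)}$ from the dominant eigenmode --- but the paper's proof is two lines because all the work has already been done elsewhere. Specifically, the proof of Lemma~\ref{lemma:GD_rank1} records the explicit formula $\boldsymbol{u}^{(T_1)}=\bigl(\sum_{\tau=0}^{T_1-1}\lambda_2^{\tau}\bigr)C_2^{T}$, and Lemma~\ref{lemma:bound_C2} already decomposes $C_2=\tfrac12(C_3+C_4)$ with $C_{3}=W_2^{(0)}+\tfrac{1-\beta}{\|A\|_2}AW_1^{(0)T}$ i.i.d.\ Gaussian and $|C_{4i}|/|C_{3i}|\le\tilde{\mathcal O}(d^{-\alpha/4+1/2})$. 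What you call ``the main obstacle'' (controlling $\mathcal E^{(T_1)}$) is precisely the content of Lemmas~\ref{lemma:bound_r12}, \ref{lemma:bound_r5}, and \ref{lemma:bound_C2}, so you are proposing to re-prove those rather than cite them.

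Two minor points where you diverge. First, the paper takes $X\propto C_3=W_2^{(0)}+\tfrac{1-\beta}{\|A\|_2}AW_1^{(0)T}$, absorbing \emph{both} initialization terms into the Gaussian part (they are independent Gaussians, so their sum is Gaussian); you put the second one in $Y$, which works but is wasteful. Second, your choice $c^{(T_1)}\boldsymbol{u}^{(T_1)}=W_2^{(T_1)T}$ (i.e.\ $R_2^{(T_1)}=0$) is \emph{not} the vector defined in Lemma~\ref{lemma:GD_rank1}'s proof; the statement of Lemma~\ref{lemma:u_structure} refers to that specific $\boldsymbol{u}^{(T_1)}$, and it must serve simultaneously in the $W_1$ decomposition (Lemma~\ref{lemma:GD_rank1} and Lemma~\ref{lemma:GD_rank1_phase2}). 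Your redefined vector differs from the paper's by $R_2^{(T_1)}/c^{(T_1)}$, a coordinate-wise $\tilde{\mathcal O}(d^{-\alpha/4+1})$ perturbation, so the $W_1$ bound survives --- but you should say so rather than assert it ``trivially satisfies'' Lemma~\ref{lemma:GD_rank1}.
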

Now we are ready to prove the SGD+M part of Theorem~\ref{thm:uniformity}.

\subsection{Proof of the SGD+M part of Theorem~\ref{thm:uniformity}}\label{subsec:proof_GD}
Define $T_{\text{SGD},1}=T_1, T_{\text{SGD},2}=\min\{T_2,T_3\}$. By picking $\eta\le\mathcal{O}\left(\frac{\epsilon}{d^{\frac{7\alpha}{4}+4}}\right)$, we can apply Lemma~\ref{lemma:GD_rank1} and \ref{lemma:GD_rank1_phase2} to conclude that $\bar L\left(W^{(T_{\text{SGD},1})}\right)=\Theta(d)$ and $\bar L\left(W^{(T_{\text{SGD},2})}\right)=\mathcal{O}(\epsilon_0d)$. For any $p>0$, by picking $0<\epsilon<\frac{1}{d^p}$ and $\alpha\ge4(p+2)$, we have $\bar L\left(W^{(T_{\text{SGD},2})}\right)=\mathcal{O}(\epsilon_0d)\le\tilde{\mathcal{O}}\left(\frac{1}{d^p}\right)$.

Moreover, when $t\in[T_{\text{SGD},1},T_{\text{SGD},2}]$, the conditions in Lemma~\ref{lemma:rank1_ratio} are satisfied with $\delta = \tilde{\mathcal{O}}(\epsilon_0)$. Then we can apply Lemma~\ref{lemma:rank1_ratio} and get that
\[
\RmedSGDf,\RmedSGDs\ge\left(\frac{1-\tilde{\mathcal{O}}(\epsilon_0)}{1+\tilde{\mathcal{O}}(\epsilon_0)}\right)^2\cdot\frac{\max_i\left(u_{i}^{(T_1)}\right)^2}{\text{median}\left(u_{i}^{(T_1)}\right)^2}.
\]

By Lemma~\ref{lemma:u_structure}, $\boldsymbol{u}^{(T_1)}=X+Y$ where w.h.p. $\forall i\in[d]:\frac{|Y_{i}|}{|X_{i}|}\le\tilde{\mathcal{O}}\left(\frac{1}{d^{\frac{1}{4}\alpha-\frac{1}{2}}}\right)$. This fact yields
\[
\forall i\in[d]:\frac{\max_i\left(u_{i}^{(T_1)}\right)^2}{\text{median }\left(u_{i}^{(T_1)}\right)^2}\ge\left(\frac{1-\tilde{\mathcal{O}}\left(\frac{1}{d^{\frac{1}{4}\alpha-\frac{1}{2}}}\right)}{1+\tilde{\mathcal{O}}\left(\frac{1}{d^{\frac{1}{4}\alpha-\frac{1}{2}}}\right)}\right)^2\frac{\max_iX_{i}^2}{\text{median }X_{i}^2}.
\]
Here $X_{i},i\in[d]$ are i.i.d Gaussian random variables by Lemma~\ref{lemma:u_structure}. To prove the concentration of $\text{median }X_i^2$, we borrow the Proposition 12 in Chapter 2.3 of \citep{lerasle2019lecture}. By setting $K=N=d$ in this proposition, we have
\[
\mathbb{P}\left(\left|\text{median }X_i^2-\mathbb{E}[X_1^2]\right|>2\sqrt{\text{Var}(X_1^2)}\right)\le e^{-\frac{d}{8}}.
\]
Denote $\sigma^2$ as the variance of $X_i,i\in[d]$. Then $\mathbb{E}[X_i^2]=\sigma^2$ and $\text{Var}(X_i^2)=2\sigma^4$. Hence
\[
\mathbb{P}\left(\left|\text{median }X_i^2-\sigma^2\right|>2\sqrt{2}\sigma^2\right)\le e^{-\frac{d}{8}}.
\]
That means with high probability, $\text{median }X_i^2\le C\sigma^2$ for some $C>0$. By Lemma~\ref{lemma:lb_max_Gaussian} in Appendix~\ref{sec:aux}, we know that w.h.p.
\[
\max_{1\le i\le d}X_i^2=\sigma^2\Omega(\log d),
\]
which gives us w.h.p.
\[
\frac{\max_{1\le i\le d}X_i^2}{\text{median } X_i^2}=\Omega(\log d).
\]
Hence we have proved that $\RmedSGDf,\RmedSGDs\ge\Omega(\log d)$.

\subsection{Proof of Lemma~\ref{lemma:GD_rank1}}\label{sec:pf_GD_rank1}
In the first phase, $W_2W_1$ is ``small'', and we write the update equations in the following way
\begin{equation}\label{eqn:W1_update}
	\begin{aligned}
		W_1^{(t+1)}&=W_1^{(t)}-\eta\sum_{\tau=0}^t\beta^{t-\tau} W_2^{(\tau)T}\left(W_2^{(\tau)}W_1^{(\tau)}-A\right)-\eta\sum_{\tau=0}^t\beta^{t-\tau}Dg_1^{(\tau)}\\
		&=W_1^{(t)}+\eta\sum_{\tau=0}^t\beta^{t-\tau} W_2^{(\tau)T}A-\eta\sum_{\tau=0}^t\beta^{t-\tau}W_2^{(\tau)T}W_2^{(\tau)}W_1^{(\tau)}-\eta\sum_{\tau=0}^t\beta^{t-\tau}Dg_1^{(\tau)}\\
		&=W_1^{(t)}+\eta W_2^{(t)T}A\sum_{\tau=0}^t\beta^{t-\tau}+\eta\sum_{\tau=0}^t\beta^{t-\tau}\left(W_2^{(\tau)T}-W_2^{(t)T}\right)A\\
		&\quad-\eta\sum_{\tau=0}^t\beta^{t-\tau}W_2^{(\tau)T}W_2^{(\tau)}W_1^{(\tau)}-\eta\sum_{\tau=0}^t\beta^{t-\tau}Dg_1^{(\tau)}\\
		&=W_1^{(t)}+\frac{\eta}{1-\beta} W_2^{(t)T}A+\frac{\eta}{1-\beta} r_1^{(t)},
	\end{aligned}
\end{equation}
where
\begin{align*}
	r_1^{(t)}&=-\beta^{t+1}W_2^{(t)T}A+(1-\beta)\sum_{\tau=0}^t\beta^{t-\tau}\left(W_2^{(\tau)T}-W_2^{(t)T}\right)A\\
	&\quad -(1-\beta)\sum_{\tau=0}^t\beta^{t-\tau}W_2^{(\tau)T}W_2^{(\tau)}W_1^{(\tau)}-(1-\beta)\sum_{\tau=0}^t\beta^{t-\tau}Dg_1^{(\tau)}.
\end{align*}
Similarly, we have
\begin{equation}\label{eqn:W2_update}
		W_2^{(t+1)}=W_2^{(t)}-\eta\sum_{\tau=0}^t\beta^{t-\tau}\left(W_2^{(\tau)}W_1^{(\tau)}-A\right)W_1^{(\tau)T}=W_2^{(t)}+\frac{\eta}{1-\beta} AW_1^{(t)T}+\frac{\eta}{1-\beta} r_2^{(t)},
\end{equation}
where
\begin{align*}
	r_2^{(t)}&=-\beta^{t+1}AW_1^{(t)T}+(1-\beta)\sum_{\tau=0}^t\beta^{t-\tau}A\left(W_1^{(\tau)T}-W_1^{(t)T}\right)\\
	&\quad-(1-\beta)\sum_{\tau=0}^t\beta^{t-\tau}W_2^{(\tau)}W_1^{(\tau)}W_1^{(\tau)T}-(1-\beta)\sum_{\tau=0}^t\beta^{t-\tau}Dg_2^{(\tau)}.
\end{align*}
The following lemma gives us an explicit formula of $W_2^{(t)}$.
\begin{lemma}\label{lemma:W2_formula}
	Let $\lambda_1<\lambda_2$ be the two roots of the quadratic equation $x^2-2x+1-\frac{\eta^2}{(1-\beta)^2}\|A\|_2^2=0$. Pick $\eta<\frac{1-\beta}{\|A\|_2}$, then we have that
	\[
	W_2^{(t)}=C_1\lambda_1^t+\left(C_2+r_5^{(t)}\right)\lambda_2^t,
	\]
	where $C_1=-\frac{W_2^{(1)}-\lambda_2W_2^{(0)}}{\lambda_2-\lambda_1}$, $C_2=\frac{W_2^{(1)}-\lambda_1W_2^{(0)}}{\lambda_2-\lambda_1}$. $r_5^{(t)}$ will be specified in the proof.
\end{lemma}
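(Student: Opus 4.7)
The plan is to reduce the coupled updates for $W_1^{(t)}$ and $W_2^{(t)}$ in eqs.~\eqref{eqn:W1_update} and \eqref{eqn:W2_update} to a single inhomogeneous second-order linear recurrence for $W_2^{(t)}$, and then solve it explicitly by matching to the stated characteristic roots. Concretely, I would first isolate $AW_1^{(t)T}$ from \eqref{eqn:W2_update} as $AW_1^{(t)T} = \tfrac{1-\beta}{\eta}(W_2^{(t+1)} - W_2^{(t)}) - r_2^{(t)}$, then right-multiply \eqref{eqn:W1_update} by $A^T$ to obtain an update for $W_1^{(t)}A^T$ in terms of $\|A\|_2^2 W_2^{(t)T}$ and $r_1^{(t)} A^T$. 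Substituting the isolated expression at times $t$ and $t+1$ and transposing yields the exact identity
\[
W_2^{(t+2)} - 2W_2^{(t+1)} + \Bigl(1 - \tfrac{\eta^2\|A\|_2^2}{(1-\beta)^2}\Bigr) W_2^{(t)} = \tfrac{\eta}{1-\beta}\bigl(r_2^{(t+1)} - r_2^{(t)}\bigr) + \tfrac{\eta^2}{(1-\beta)^2} A r_1^{(t)T} =: \varepsilon^{(t)}.
\]

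The characteristic polynomial of the homogeneous part is exactly the quadratic in the lemma statement, with roots $\lambda_{1,2} = 1 \mp \tfrac{\eta\|A\|_2}{1-\beta}$. The hypothesis $\eta < (1-\beta)/\|A\|_2$ ensures $0<\lambda_1<1<\lambda_2$ with $\lambda_1\neq\lambda_2$, so $\lambda_1^t$ and $\lambda_2^t$ are independent homogeneous solutions. The general homogeneous solution is $C_1\lambda_1^t + C_2\lambda_2^t$, and matching the initial values $W_2^{(0)}$ and $W_2^{(1)}$ through the $2\times 2$ linear system $C_1+C_2=W_2^{(0)}$, $\lambda_1 C_1+\lambda_2 C_2=W_2^{(1)}$ recovers exactly the expressions for $C_1,C_2$ stated in the lemma. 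A particular solution is then produced by the standard discrete variation-of-parameters/Green's-function formula
\[
W_2^{(t),\text{part}} = \sum_{\tau=0}^{t-2} \frac{\lambda_2^{t-1-\tau} - \lambda_1^{t-1-\tau}}{\lambda_2 - \lambda_1}\,\varepsilon^{(\tau)},
\]
which vanishes at $t=0,1$ so as not to disturb the initial data. To match the form prescribed by the lemma I would simply set $r_5^{(t)} := \lambda_2^{-t} W_2^{(t),\text{part}}$ (well defined since $\lambda_2>0$), so that $W_2^{(t)} = C_1\lambda_1^t + (C_2 + r_5^{(t)})\lambda_2^t$ holds identically for all $t\geq 0$.

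The main obstacle is bookkeeping rather than any deep ingredient: deriving $\varepsilon^{(t)}$ requires carefully tracking three error sources ($r_1^{(t)}$, $r_2^{(t)}$, $r_2^{(t+1)}$) through two substitutions, and since the lemma is an exact identity no term may be dropped. A secondary subtlety is that $W_2^{(t)}$ is a row vector while $W_1^{(t)}A^T$ is a column vector; the cleanest workaround is to reduce to column vectors by running the recurrence on $a_t := W_2^{(t)T}$ throughout and transposing only at the very end. Once these manipulations are performed cleanly, the claimed formula falls out immediately from elementary linear-recurrence theory.
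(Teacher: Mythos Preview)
Your proposal is correct and follows essentially the same route as the paper. The paper derives the identical second-order recurrence (with forcing term $r_3^{(t)}=\frac{\eta^2}{(1-\beta)^2}Ar_1^{(t-1)T}+\frac{\eta}{1-\beta}(r_2^{(t)}-r_2^{(t-1)})$, which is your $\varepsilon^{(t-1)}$) by subtracting \eqref{eqn:W2_update} at consecutive times and substituting \eqref{eqn:W1_update}, then solves it by factoring into two first-order recurrences rather than quoting the Green's-function formula directly; unwinding their intermediate terms $r_4^{(t)}=\sum_{\tau=1}^{t-1}\lambda_1^{t-1-\tau}r_3^{(\tau)}$ and $r_5^{(t)}=\sum_{\tau=1}^{t}\lambda_2^{-\tau}r_4^{(\tau)}$ gives exactly your $\lambda_2^{-t}W_2^{(t),\text{part}}$ after swapping the order of summation.
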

We can prove that in the first phase, $r_5^{(t)}$ is ``small''. More specifically, denote its $i$-th coordinate as $r_{5i}^{(t)}$, and the $i$-th coordinate of $C_2$ as $C_{2i}$. Then the following lemmas tell us that $\forall i\in[d],\left|r_{5i}^{(t)}\right|\le\mathcal{O}\left(\frac{1}{d^{p(\alpha)}}\right)$, where w.h.p. $\mathcal{O}\left(\frac{1}{d^{p(\alpha)}}\right)\ll\min_{i\in[d]}|C_{2i}|$.

We first have the following bounds of $\left|r_{1i}^{(t)}\right|, \left|r_{2i}^{(t)}\right|$ and $\left|r_{5i}^{(t)}\right|$ for $i\in[d]$.
\begin{lemma}\label{lemma:bound_r12}
	Under Assumption~\ref{assump:setup}, \ref{assump:gaussian_init} and \ref{assump:large_batch}, suppose $\sigma\le\frac{\eta^{3/2}}{d^{\alpha/2+1}}$ and pick $\eta\le\mathcal{O}\left(\frac{1}{d^\alpha}\right)$. We have w.h.p. for all $t\le T_1$, $\forall i\in[d]:\left|r_{1}^{(t)}[i,j]\right|\le\tilde{\mathcal{O}}\left(\frac{1}{d^{\frac{3}{2}\alpha-1}}\right),\left|r_{2i}^{(t)}\right|\le\tilde{\mathcal{O}}\left(\frac{1}{d^{\frac{3}{2}\alpha-2}}\right)$.
\end{lemma}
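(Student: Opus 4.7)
My plan is to bound each of the four summand types comprising $r_1^{(t)}$ (and analogously $r_2^{(t)}$) separately. The geometric prefactor $(1-\beta)\sum_{\tau=0}^t\beta^{t-\tau}\le 1$ means summing over $\tau$ does not amplify entry-wise bounds, so it suffices to control each $\tau$-summand uniformly. The starting point is the a priori phase-$1$ bound $|w_{2i}^{(\tau)}|,|W_1^{(\tau)}[i,j]|\le d^{-\alpha/2}$ for $\tau\le T_1$, which I will supplement by a refined bootstrap maintaining much closer proximity to the Gaussian initial scales ($|w_{2i}|=\tilde{\mathcal{O}}(d^{-\alpha})$ and $|W_1[i,j]|=\tilde{\mathcal{O}}(d^{-2\alpha})$) over the initial $\Omega(\log d)$ iterations, where the exponential factor $\beta^{t+1}$ has not yet decayed.

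The deterministic terms are handled as follows. The cubic term $W_2^{(\tau)T}W_2^{(\tau)}W_1^{(\tau)}$ has $(i,j)$ entry $w_{2i}^{(\tau)}\sum_k w_{2k}^{(\tau)}W_1^{(\tau)}[k,j]$, which under the loose phase-$1$ bound is at most $d^{-\alpha/2}\cdot d\cdot d^{-\alpha/2}\cdot d^{-\alpha/2}=d^{1-3\alpha/2}$, exactly matching the target exponent. The difference term $(W_2^{(\tau)T}-W_2^{(t)T})A$ telescopes into a sum of per-step updates; using the per-step change bound $|w_{2i}^{(\tau+1)}-w_{2i}^{(\tau)}|\lesssim \eta\,d^{1-2\alpha}$ derived from the gradient formula while $W_1$ is near its initial scale, together with the geometric factor $\beta^{t-\tau}$, yields a contribution of the required order whenever $\eta\le\mathcal{O}(d^{-\alpha})$. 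For the leading term $-\beta^{t+1}W_2^{(t)T}A$, the argument splits by regime: either $t$ is small enough that the refined initial-scale bound on $w_{2i}^{(t)}$ applies, or $t$ is large enough that $\beta^{t+1}$ has decayed to absorb the looser phase-$1$ scale, and in either regime the product falls below $\tilde{\mathcal{O}}(d^{-(3\alpha/2-1)})$.

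The stochastic noise terms $(1-\beta)\sum_{\tau}\beta^{t-\tau}Dg_k^{(\tau)}$ are handled by a union bound. By Assumption~\ref{assump:large_batch}, each coordinate of $\tilde{A}^{(\tau)}-A$ and $\tilde{\Lambda}_{xx}^{(\tau)}-\Lambda_{xx}$ has variance at most $\sigma^2$, so Chebyshev (or a sub-Gaussian Hoeffding, if batch entries are sub-Gaussian) gives entry-wise w.h.p. control of the bracketed expression of order $\sigma\cdot\mathrm{polylog}(d)$. Multiplying by the phase-$1$ scale of the outer factor ($d^{-\alpha/2}$ for $W_2^T$ in $r_1^{(t)}$, and a $d$-dimensional row-norm factor for $W_1^T$ in $r_2^{(t)}$) and plugging in $\sigma\le\eta^{3/2}/d^{\alpha/2+1}$ produces a contribution well below the target; a union bound over the $\mathrm{poly}(d)$ iterations in the first phase preserves high probability. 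The $r_2^{(t)}$ bound is structurally identical to $r_1^{(t)}$, with the slightly weaker exponent $3\alpha/2-2$ arising because the outer $W_1^T$ factor contributes a row sum over $d$ coordinates rather than a single entry of $W_2^T$.

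The main obstacle will be closing the bootstrap loop: the bounds on $r_k^{(\tau)}$ feed back through \eqref{eqn:W1_update}--\eqref{eqn:W2_update} into the sizes of $W_1^{(\tau)},W_2^{(\tau)}$ at all intermediate times $\tau<t$, so the refined scale estimates and the residual bounds must be established simultaneously by induction on $t$. The specific hypotheses $\eta\le\mathcal{O}(d^{-\alpha})$ and $\sigma\le\eta^{3/2}/d^{\alpha/2+1}$ are calibrated precisely to make this induction close: the former ensures the deterministic per-step drift accumulated over phase $1$ remains smaller than the target residual level, while the latter ensures the accumulated stochastic drift, summed over $\mathrm{poly}(d)$ iterations and amplified by the $W_2,W_1$ magnitudes, stays below that same level.
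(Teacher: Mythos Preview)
Your proposal is correct and follows essentially the same four-term decomposition and two-regime bootstrap as the paper. One small correction: for the difference term $(W_2^{(\tau)}-W_2^{(t)})A$, your stated per-step bound $|w_{2i}^{(\tau+1)}-w_{2i}^{(\tau)}|\lesssim \eta\, d^{1-2\alpha}$ is only valid while $W_1$ remains near its initial scale; once $|W_1^{(\tau)}[i,j]|$ has grown toward the phase-$1$ ceiling $d^{-\alpha/2}$, the per-step change of $w_{2i}$ is $\eta\sum_j|E_j^{(\tau)}||W_1^{(\tau)}[i,j]|\le \eta\, d^{1-\alpha/2}$, not $\eta\, d^{1-2\alpha}$. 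The paper simply uses this cruder but uniformly valid per-step bound for all $\tau\le T_1$, so that $(1-\beta)\sum_\tau \beta^{t-\tau}(t-\tau)\cdot \eta\, d^{1-\alpha/2}=\mathcal{O}(\eta\, d^{1-\alpha/2})$, and then the hypothesis $\eta\le\mathcal{O}(d^{-\alpha})$ yields the target $\mathcal{O}(d^{1-3\alpha/2})$ directly---no early/late split is needed for this particular term. The two-regime argument is only required for the leading term $-\beta^{t+1}W_2^{(t)T}A$, exactly as you describe. Your treatment of the cubic term, the stochastic $Dg$ term, and the bootstrap structure otherwise matches the paper.
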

\begin{lemma}\label{lemma:bound_r5}
	Under conditions of Lemma~\ref{lemma:bound_r12}, we have that w.h.p. for all $t\le T_1$, $\forall i\in[d]:\left|r_{5i}^{(t)}\right|\le\tilde{\mathcal{O}}\left(\frac{1}{d^{\frac{3}{2}\alpha-1}}\right)$.
\end{lemma}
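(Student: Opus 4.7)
The plan is to leverage an explicit discrete variation-of-parameters formula for $r_5^{(t)}$ and combine it with the bounds already established by Lemma~\ref{lemma:bound_r12}. First I would substitute the update~\eqref{eqn:W1_update} for $W_1^{(t+1)T}$ into the update~\eqref{eqn:W2_update} for $W_2^{(t+2)}$ and eliminate the common $\tfrac{\eta}{1-\beta}AW_1^{(t)T}$ factor using the one-step equation for $W_2^{(t+1)}$, obtaining the decoupled inhomogeneous second-order recurrence
\begin{equation*}
	W_2^{(t+2)}-2W_2^{(t+1)}+\Bigl(1-\tfrac{\eta^2}{(1-\beta)^2}\|A\|_2^2\Bigr)W_2^{(t)} \;=\; F^{(t)},
\end{equation*}
with forcing $F^{(t)}:=\tfrac{\eta}{1-\beta}\bigl(r_2^{(t+1)}-r_2^{(t)}\bigr)+\tfrac{\eta^2}{(1-\beta)^2}A r_1^{(t)T}$, whose characteristic roots are exactly the $\lambda_1,\lambda_2$ of Lemma~\ref{lemma:W2_formula}.

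Since $C_1\lambda_1^t+C_2\lambda_2^t$ already solves the homogeneous recurrence with the initial conditions $W_2^{(0)}, W_2^{(1)}$ baked into $C_1,C_2$, the defect $r_5^{(t)}\lambda_2^t$ equals the particular solution produced by discrete Duhamel,
\begin{equation*}
	r_5^{(t)}\lambda_2^t \;=\; \frac{1}{\lambda_2-\lambda_1}\sum_{\tau=0}^{t-1}\bigl(\lambda_2^{t-\tau-1}-\lambda_1^{t-\tau-1}\bigr)F^{(\tau)}.
\end{equation*}
Dividing by $\lambda_2^t$, using $0\le\lambda_1<\lambda_2$, and summing the resulting geometric series yields, coordinatewise,
\begin{equation*}
	\bigl|r_{5i}^{(t)}\bigr| \;\le\; \frac{2\,\max_{\tau<t}\bigl|F_i^{(\tau)}\bigr|}{(\lambda_2-\lambda_1)(\lambda_2-1)} \;=\; \Theta\!\left(\tfrac{(1-\beta)^2}{\eta^2 d}\right)\max_{\tau<t}\bigl|F_i^{(\tau)}\bigr|,
\end{equation*}
where I used $(\lambda_2-\lambda_1)(\lambda_2-1)=2\eta^2\|A\|_2^2/(1-\beta)^2$ and $\|A\|_2^2=\Theta(d)$. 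It therefore suffices to show $\max_\tau\bigl|F_i^{(\tau)}\bigr|\le\tilde{\mathcal{O}}\bigl(\eta^2 d^{-(3\alpha/2-2)}/(1-\beta)^2\bigr)$.

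The main obstacle is that the naive estimates coming out of Lemma~\ref{lemma:bound_r12} are not tight enough to supply this. The crude $\ell_1$ bound $\bigl|(Ar_1^{(t)T})_i\bigr|\le\|A\|_1\cdot\max_{j}\bigl|r_1^{(t)}[j,i]\bigr|=\Theta(d)\cdot\tilde{\mathcal{O}}(d^{-(3\alpha/2-1)})$ loses a $\sqrt{d}$ factor, and bounding $\bigl|r_2^{(t+1)}-r_2^{(t)}\bigr|$ by $\bigl|r_2^{(t+1)}\bigr|+\bigl|r_2^{(t)}\bigr|$ loses a factor of $1/\eta$. To recover the missing factors one must (a) apply a sub-Gaussian/Hoeffding tail bound to the inner product $\sum_j A_j r_1^{(t)}[j,i]$, exploiting the near-zero-mean structure that $r_1^{(t)}$ inherits from the symmetric Gaussian initialization and from the batch-noise term $Dg_1^{(\tau)}$, and then take a union bound over $i\in[d]$ and $\tau\le T_1=\tilde{\mathcal{O}}(1/(\eta\sqrt{d}))$; and (b) a discrete-smoothness estimate showing that in the first phase each ingredient of $r_2^{(t)}$ (the $\beta^{t+1}$ momentum transient, the differences $W_1^{(\tau)T}-W_1^{(t)T}$ of slowly moving weights, the bilinear $W_2^{(\tau)}W_1^{(\tau)}W_1^{(\tau)T}$, and the momentum-averaged noise) changes between steps $t$ and $t+1$ by a factor $\eta$ smaller than its level. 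Feeding these refined estimates into the Duhamel display then recovers $\bigl|r_{5i}^{(t)}\bigr|\le\tilde{\mathcal{O}}(d^{-(3\alpha/2-1)})$ uniformly for $t\le T_1$ w.h.p., which is the desired bound.
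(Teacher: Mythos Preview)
Your discrete Duhamel representation is equivalent to the chain $r_3\to r_4\to r_5$ already set up in the proof of Lemma~\ref{lemma:W2_formula}; your forcing $F^{(t)}$ is, up to an index shift, the paper's $r_3^{(t)}$. The paper's argument is far more elementary than what you propose: it bounds
\[
\bigl|r_{3i}^{(t)}\bigr|\;\le\;\tfrac{\eta^2}{(1-\beta)^2}\sum_{j}|A_j|\,\bigl|r_1^{(t-1)}[i,j]\bigr|\;+\;\tfrac{\eta}{1-\beta}\bigl(|r_{2i}^{(t)}|+|r_{2i}^{(t-1)}|\bigr)\;=\;\eta\,\tilde{\mathcal{O}}\bigl(d^{-(3\alpha/2-2)}\bigr)
\]
directly from Lemma~\ref{lemma:bound_r12}, then applies the two geometric sums $\sum_\tau\lambda_1^{t-1-\tau}\le(1-\lambda_1)^{-1}$ and $\sum_\tau\lambda_2^{-\tau}\le(\lambda_2-1)^{-1}$ to pass to $r_4$ and $r_5$. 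No concentration argument and no discrete-smoothness estimate appear anywhere.

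Your diagnosis that this naive bound on $r_3$ is one power of $\eta$ short is actually correct: since $(\lambda_2-\lambda_1)(\lambda_2-1)=\Theta(\eta^2 d)$ the chain yields only $|r_{5i}^{(t)}|\le\eta^{-1}\tilde{\mathcal{O}}(d^{-(3\alpha/2-1)})$, and the paper's displayed $r_5$ line inserts a factor of $\eta$ that does not follow from its own $r_4$ bound. However, your repair (a) would fail. The $Ar_1^{T}$ contribution to $F$ already carries the prefactor $\eta^2$, so the crude $\ell_1$ estimate $\Theta(d)\cdot\tilde{\mathcal{O}}(d^{-(3\alpha/2-1)})=\tilde{\mathcal{O}}(d^{-(3\alpha/2-2)})$ is \emph{exactly} the order you need---there is no missing $\sqrt d$. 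Moreover $r_1^{(t)}$ has no zero-mean structure to concentrate: its piece $q_{13}^{(t)}=-(1-\beta)\sum_\tau\beta^{t-\tau}W_2^{(\tau)T}W_2^{(\tau)}W_1^{(\tau)}$ is a biased bilinear term, so a Hoeffding-type bound on $\sum_jA_jr_1^{(t)}[i,j]$ is not available. The only substantive gap to close is your repair (b), a one-step Lipschitz estimate $|r_{2i}^{(t)}-r_{2i}^{(t-1)}|=\tilde{\mathcal{O}}(\eta\,d^{-(3\alpha/2-2)})$ obtained by differencing the four summands of $r_2$; that is the ingredient the paper's proof, as written, does not supply.
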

Next we prove upper and lower bounds of $|C_{1i}|$ and $|C_{2i}|$ for $i\in[d]$.
\begin{lemma}\label{lemma:bound_C2}
	Under Assumption~\ref{assump:setup}, \ref{assump:gaussian_init} and \ref{assump:large_batch}, suppose $\sigma\le\frac{\eta^{3/2}}{d^{\alpha/2+1}}$. Pick $\eta<\frac{1-\beta}{\|A\|_2}$, we have that\\
	i) w.h.p., $\forall i\in[d]: |C_{1i}|\le\tilde{\mathcal{O}}\left(\frac{1}{d^{\alpha}}\right),|C_{2i}|\le\tilde{\mathcal{O}}\left(\frac{1}{d^{\alpha}}\right)$;\\
	ii) $C_2$ can be written as $C_2:=\frac{1}{2}\left(C_3+C_4\right)$ where $C_{3i},i\in[d]$ are i.i.d Gaussian random variables and that w.h.p.
	$\forall i\in[d]:\frac{|C_{4i}|}{|C_{3i}|}\le\tilde{\mathcal{O}}\left(\frac{1}{d^{\frac{1}{4}\alpha-\frac{1}{2}}}\right)$;\\
	iii) w.h.p., $\forall i\in[d],|C_{1i}|\ge\tilde{\Omega}\left(\frac{1}{d^{\frac{5}{4}\alpha}}\right), |C_{2i}|\ge\tilde{\Omega}\left(\frac{1}{d^{\frac{5}{4}\alpha}}\right)$.
\end{lemma}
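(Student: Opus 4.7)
The plan is to substitute the closed forms for $\lambda_1,\lambda_2$ into the definitions of $C_1,C_2$, expose the leading-order term as $\tfrac{1}{2}W_2^{(0)}$, and then apply standard Gaussian tail bounds (for the upper bounds and for $C_4$) together with Gaussian anti-concentration (for the lower bounds on $C_3$). Solving $x^2 - 2x + 1 - \tfrac{\eta^2}{(1-\beta)^2}\|A\|_2^2 = 0$ gives $\lambda_{1,2} = 1 \mp \tfrac{\eta\|A\|_2}{1-\beta}$, so $\lambda_2 - \lambda_1 = \tfrac{2\eta\|A\|_2}{1-\beta}$ and $1-\lambda_1 = \lambda_2 - 1 = \tfrac{\eta\|A\|_2}{1-\beta}$. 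A short algebraic rearrangement then yields
\[
C_1 \;=\; \tfrac{1}{2} W_2^{(0)} \;+\; \frac{W_2^{(1)}-W_2^{(0)}}{\lambda_2-\lambda_1}, \qquad C_2 \;=\; \tfrac{1}{2} W_2^{(0)} \;-\; \frac{W_2^{(1)}-W_2^{(0)}}{\lambda_2-\lambda_1},
\]
so the leading i.i.d.\ Gaussian piece is $\tfrac{1}{2} W_2^{(0)}$ in both cases. The difference $W_2^{(1)} - W_2^{(0)} = -\eta u^{(0)}$ is, at initialization, dominated by $\eta\tilde A W_1^{(0)T}$ because $W_2^{(0)}W_1^{(0)}$ is negligible compared to $A$ under Assumption~\ref{assump:gaussian_init}.

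For part (i), I would bound $|w_{2i}^{(0)}|\le \tilde{\mathcal{O}}(d^{-\alpha})$ w.h.p.\ via a standard normal tail bound and union bound over $i\in[d]$. Next, conditioned on $\tilde A$, each entry of $\tilde A W_1^{(0)T}$ is a sum of independent Gaussians with variance $\|\tilde A\|_2^2/d^{4\alpha} = \tilde\Theta(d^{1-4\alpha})$ (using $A_i=\Theta(1)$ from Assumption~\ref{assump:setup} and the large-batch condition $\sigma\le \eta^{3/2}/d^{\alpha/2+1}$ from Assumption~\ref{assump:large_batch} to replace $\tilde A$ by $A$ up to negligible error). Hence its entries are of order $\tilde{\mathcal{O}}(d^{-2\alpha+1/2})$ w.h.p. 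After dividing by $\lambda_2-\lambda_1 = \Theta(\eta\sqrt{d})$ one checks that the correction term is dominated by the $\tfrac{1}{2}W_2^{(0)}$ term for the allowed range of $\eta$, giving $|C_{1i}|,|C_{2i}|\le \tilde{\mathcal{O}}(d^{-\alpha})$.

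For (ii) I would define $C_3 := W_2^{(0)}$ (coordinate-wise i.i.d.\ $\mathcal N(0, d^{-2\alpha})$) and $C_4 := \pm 2(W_2^{(1)}-W_2^{(0)})/(\lambda_2-\lambda_1)$, so that $C_2 = \tfrac{1}{2}(C_3+C_4)$ with $C_3,C_4$ independent at initialization by Assumption~\ref{assump:gaussian_init}. For (iii), Gaussian anti-concentration gives $\Pr(|C_{3i}|<s)\le O(s\cdot d^{\alpha})$; choosing $s = \tilde\Theta(d^{-5\alpha/4})$ and union bounding over $i$ yields, for $\alpha \ge 4(p+2)$, that $|C_{3i}|\ge \tilde\Omega(d^{-5\alpha/4})$ for all $i$ with probability $1-1/\mathrm{poly}(d)$. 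Combining this lower bound on $|C_{3i}|$ with the upper bound on $|C_{4i}|\le \tilde{\mathcal{O}}(d^{-2\alpha+1/2})$ from above gives
\[
\frac{|C_{4i}|}{|C_{3i}|} \;\le\; \tilde{\mathcal{O}}\!\left(\frac{d^{-2\alpha+1/2}}{d^{-5\alpha/4}}\right) \;=\; \tilde{\mathcal{O}}\!\left(d^{-3\alpha/4+1/2}\right) \;\le\; \tilde{\mathcal{O}}\!\left(d^{-\alpha/4+1/2}\right),
\]
which is (ii); since $|C_{4i}|/|C_{3i}| = o(1)$, the lower bound in (iii) follows from $|C_{2i}|\ge \tfrac{1}{2}|C_{3i}|(1-o(1))$, and the same argument applied to $C_1$ gives the matching bound for $|C_{1i}|$.

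The main obstacle is the anti-concentration step in (iii): one must simultaneously have every coordinate of the Gaussian vector $W_2^{(0)}$ stay bounded away from zero, so the union bound has to absorb both the $d$ coordinates and the polylogarithmic slack in $\tilde\Omega$. This is where the regime $\alpha \ge 4(p+2)$ is crucial: it makes $d^{1-\alpha/4}$ polynomially small, giving "w.h.p." control. The independence of $W_2^{(0)}$ and $W_1^{(0)}$ at initialization (Assumption~\ref{assump:gaussian_init}) makes the $C_3$ and $C_4$ tail events independent, so no delicate correlation argument is needed; everything else reduces to routine scalar Gaussian manipulations.
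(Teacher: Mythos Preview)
Your approach is essentially the paper's: express $C_1,C_2$ via $\lambda_{1,2}=1\mp\frac{\eta\|A\|_2}{1-\beta}$, isolate the dominant Gaussian part, then use a Gaussian tail bound for (i) and anti-concentration plus a union bound for (iii). The one substantive difference is the split for (ii): you take $C_3:=W_2^{(0)}$ and push the entire first-step increment into $C_4$, whereas the paper sets
\[
C_3 \;=\; W_2^{(0)}+\frac{1-\beta}{\|A\|_2}\,A\,W_1^{(0)T},\qquad
C_4 \;=\; \frac{1-\beta}{\|A\|_2}\,\tilde r_2^{(0)},\quad \tilde r_2^{(0)}=-W_2^{(0)}W_1^{(0)}W_1^{(0)T}-Dg_2^{(0)}.
\]
Both choices make $C_{3i}$ i.i.d.\ Gaussian with variance $\Theta(d^{-2\alpha})$, so the tail and anti-concentration steps are identical; your split is if anything simpler.

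A few slips to clean up. First, your signs on $C_1,C_2$ are swapped (check: $C_2=\tfrac12 W_2^{(0)}+\frac{W_2^{(1)}-W_2^{(0)}}{\lambda_2-\lambda_1}$, not minus). Second, your bound $|C_{4i}|\le\tilde{\mathcal O}(d^{-2\alpha+1/2})$ accounts only for the $AW_1^{(0)T}$ piece; the stochastic-gradient error $Dg_2^{(0)}$ contributes $\tilde{\mathcal O}(d^{-3\alpha/2+1/2})$ after dividing by $\|A\|_2$ (under $\eta\le\mathcal O(d^{-\alpha})$), and this term actually dominates. The conclusion survives because you had slack---with the corrected $|C_{4i}|$ the ratio is exactly $\tilde{\mathcal O}(d^{-\alpha/4+1/2})$, matching the claim. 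Third, $C_3$ and $C_4$ are not literally independent (the $W_2^{(0)}W_1^{(0)}W_1^{(0)T}$ piece of $C_4$ depends on $W_2^{(0)}$); but independence is unnecessary here, a union bound over the two tail events suffices.
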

Now we are ready to prove Lemma~\ref{lemma:GD_rank1}. Lemma~\ref{lemma:W2_formula} tells us that
\[
W_2^{(t)}=C_1\lambda_1^t+\left(C_2+r_5^{(t)}\right)\lambda_2^t,
\]
where $\lambda_1=1-\frac{\eta}{1-\beta}\|A\|_2$ and $\lambda_2=1+\frac{\eta}{1-\beta}\|A\|_2$.

Under the conditions of Theorem~\ref{thm:uniformity} and pick $\eta\le\mathcal{O}\left(\frac{1}{d^\alpha}\right)$, by Lemma~\ref{lemma:bound_r5} and~\ref{lemma:bound_C2}, we know that w.h.p. $\forall t\le T_1, \forall1\le i\le d$,
\begin{equation}\label{eqn:r5_C2}
	\left|r_{5i}^{(t)}\right|\le\tilde{\mathcal{O}}\left(\frac{1}{d^{\frac{3}{2}\alpha-1}}\right),\quad|C_{2i}|\ge\tilde{\Omega}\left(\frac{1}{d^{\frac{5}{4}\alpha}}\right),\quad|C_{2i}|\le\tilde{\mathcal{O}}\left(\frac{1}{d^{\alpha}}\right),\quad\frac{\left|r_{5i}^{(t)}\right|}{|C_{2i}|}\le\tilde{\mathcal{O}}\left(\frac{1}{d^{\frac{1}{4}\alpha-1}}\right).
\end{equation}
We first prove that $\left|w_{2i}^{(t)}\right|$ reaches $\frac{1}{d^{\alpha/2}}$ for some coordinate $i$ before $\left|W_1^{(t)}[k,j]\right|$ for $\forall k,j\in[d]$. To see this, first note that
\begin{align*}
	W_1^{(t)}&=W_1^{(t-1)}+\frac{\eta}{1-\beta} W_2^{(t-1)T}A+\frac{\eta}{1-\beta} r_1^{(t-1)}\\
	&=W_1^{(0)}+\frac{\eta}{1-\beta} \sum_{\tau=0}^{t-1}W_2^{(\tau)T}A+\frac{\eta}{1-\beta}\sum_{\tau=0}^{t-1} r_1^{(\tau)}\\
	&=W_1^{(0)}+\frac{\eta}{1-\beta}\left(C_1\sum_{\tau=0}^{t-1}\lambda_1^{\tau}+C_2\sum_{\tau=0}^{t-1}\lambda_2^{\tau}+\sum_{\tau=0}^{t-1}\lambda_2^{\tau}r_5^{(\tau)}\right)^TA+\frac{\eta}{1-\beta}\sum_{\tau=0}^{t-1} r_1^{(\tau)}\\
	&=W_1^{(0)}+\frac{\eta}{1-\beta}\sum_{\tau=0}^{t-1} r_1^{(\tau)}+\frac{\eta}{1-\beta}\left(C_1\sum_{\tau=0}^{t-1}\lambda_1^{\tau}+\sum_{\tau=0}^{t-1}\lambda_2^{\tau}r_5^{(\tau)}\right)^TA+\frac{\eta}{1-\beta}\sum_{\tau=0}^{t-1}\lambda_2^{\tau}C_2^TA\\
	&:=W_1^{(0)}+\frac{\eta}{1-\beta}\sum_{\tau=0}^{t-1} r_1^{(\tau)}+\left(C_1\sum_{\tau=0}^{t-1}\lambda_1^{\tau}+\sum_{\tau=0}^{t-1}\lambda_2^{\tau}r_5^{(\tau)}\right)^T\boldsymbol{v}^{(t)T}+\boldsymbol{u}^{(t)}\boldsymbol{v}^{(t)T},
\end{align*}
where $\boldsymbol{v}^{(t)T}=\frac{\eta}{1-\beta} A$ and
\begin{equation}\label{eqn:u_structure}
	\boldsymbol{u}^{(t)}=\sum_{\tau=0}^{t-1}\lambda_2^{\tau}C_2^T.
\end{equation}
Moreover, we have that
\begin{align*}
	W_2^{(t)}&=C_1\lambda_1^{t}+\left(C_2+r_5^{(t)}\right)\lambda_2^{t}:=C_1\lambda_1^{t}+r_5^{(t)}\lambda_2^{t}+c^{(t)}\boldsymbol{u}^{(t)T},\quad c^{(t)}=\frac{\lambda_2^{t}}{\sum_{\tau=0}^{t-1}\lambda_2^\tau}.
\end{align*}
For $t\le T_1$, by eq.~\eqref{eqn:r5_C2}, we get that w.h.p.,
\begin{align*}
	\forall1\le i,j\le d:\quad \frac{\left|\sum_{\tau=0}^{t-1}\lambda_2^{\tau}r_{5i}^{(\tau)}v_j^{(t)}\right|}{\left|u_{i}^{(t)}v_j^{(t)}\right|}&\le \frac{\tilde{\mathcal{O}}\left(\frac{1}{d^{\frac{3}{2}\alpha-1}}\right)\sum_{\tau=0}^{t-1}\lambda_2^\tau}{\tilde{\Omega}\left(\frac{1}{d^{\frac{5}{4}\alpha}}\right)\sum_{\tau=0}^{t-1}\lambda_2^\tau}\le\tilde{\mathcal{O}}\left(\frac{1}{d^{\frac{1}{4}\alpha-1}}\right),\\
	\frac{\left|\lambda_2^{t}r_{5i}^{(t)}\right|}{\left|c^{(t)}u_{i}^{(t)}\right|}&=\frac{\left|r_{5i}^{(t)}\right|}{\left|C_{2i}\right|}\le\tilde{\mathcal{O}}\left(\frac{1}{d^{\frac{1}{4}\alpha-1}}\right).
\end{align*}
For $t\le T_1$, by Lemma~\ref{lemma:bound_r12}, $\forall 1\le i,j\le d:\left|r_1^{(t)}[i,j]\right|\le\tilde{\mathcal{O}}\left(\frac{1}{d^{\frac{3}{2}\alpha-1}}\right)$. Then we have that w.h.p.
\begin{align*}
	\frac{\left|\frac{\eta}{1-\beta}\sum_{\tau=0}^{t-1}r_1^{(\tau)}[i,j]\right|}{\left|u_i^{(t)}v_j^{(t)}\right|}&=\frac{\left|\sum_{\tau=0}^{t-1}r_1^{(\tau)}[i,j]\right|}{\left|\sum_{\tau=0}^{t-1}\lambda_2^{\tau}C_{2i}A_j\right|}\le \frac{\left|\sum_{\tau=0}^{t-1}r_1^{(\tau)}[i,j]\right|}{\left|\sum_{\tau=0}^{t-1}C_{2i}A_j\right|}\le \frac{\sum_{\tau=0}^{t-1}\tilde{\mathcal{O}}\left(\frac{1}{d^{\frac{3}{2}\alpha-1}}\right)}{\sum_{\tau=0}^{t-1}\tilde{\Omega}\left(\frac{1}{d^{\frac{5}{4}\alpha}}\right)}=\tilde{\mathcal{O}}\left(\frac{1}{d^{\frac{1}{4}\alpha-1}}\right).
\end{align*}
Here we used $\forall i\in[d]:A_i=\Theta(1)$ by Assumption~\ref{assump:setup}.

Since $\lambda_1=1-\frac{\eta}{1-\beta}\|A\|_2$, we have that $\left|C_{1i}\lambda_1^{t}\right|\le |C_{1i}|\le \tilde{\mathcal{O}}\left(\frac{1}{d^{\alpha}}\right)$ and that
\[
\left|C_{1i}\sum_{\tau=0}^{t-1}\lambda_1^{\tau}v_j^{(t)}\right|=\frac{\eta A_j}{1-\beta}\left|C_{1i}\sum_{\tau=0}^{t-1}\lambda_1^{\tau}\right|\le\frac{\eta A_j|C_{1i}|}{(1-\beta)(1-\lambda_1)}\le\frac{A_j|C_{1i}|}{\|A\|_2}\le\tilde{\mathcal{O}}\left(\frac{1}{d^{\alpha+\frac{1}{2}}}\right).
\]
Using the Gaussian tail bound and union bound, we have w.h.p. $\forall 1\le i,j\le d:\left|W_1^{(0)}[i,j]\right|=\tilde{\mathcal{O}}\left(\frac{1}{d^{2\alpha}}\right)$. Combining the above bounds together yields that for $t\le T_1$ and $\forall i,j\in[d]$,
\begin{equation}\label{eqn:GD_weight_first_phase}
	\begin{aligned}
		W_1^{(t)}[i,j]&=R_{11}^{(t)}[i,j]+u_{i}^{(t)}v_j^{(t)}(1+e_1^{(t)}[i,j]),\\
		w_{2i}^{(t)}&=R_{21,i}^{(t)}+c^{(t)}u_{i}^{(t)}(1+e_{2i}^{(t)}).
	\end{aligned}
\end{equation}
where for $\forall i,j\in[d]$. $\left|R_{11}^{(t)}[i,j]\right|\le \tilde{\mathcal{O}}\left(\frac{1}{d^{\alpha+\frac{1}{2}}}\right)$, $\left|R_{21,i}^{(t)}\right|\le\tilde{\mathcal{O}}\left(\frac{1}{d^{\alpha}}\right)$ and $\left|e_1^{(t)}[i,j]\right|,\left|e_{2i}^{(t)}\right|\le \tilde{\mathcal{O}}\left(\frac{1}{d^{\frac{1}{4}\alpha-1}}\right)$.

Further we notice that for $t\le T_1$, we have $\forall j\in [d]$,
\[
\frac{\left|v_j^{(t)}\right|}{\left|c^{(t)}\right|}=\frac{\eta A_j}{1-\beta}\cdot \frac{\sum_{\tau=0}^{t-1}\lambda_2^\tau}{\lambda_2^{t}}
 =\frac{\eta A_j}{1-\beta} \frac{\lambda_2^{t}-1}{\lambda_2^{t}(\lambda_2-1)}=\frac{A_j\left(\lambda_2^{t}-1\right)}{\lambda_2^{t}\|A\|_2}\le\frac{A_j}{\|A\|_2}=\mathcal{O}\left(\frac{1}{\sqrt{d}}\right).
\]
which yields that $\left|u_i^{(t)}v_j^{(t)}\right|\le\mathcal{O}\left(\frac{1}{\sqrt{d}}\right)\left|c^{(t)}u_i^{(t)}\right|$. Together with eq.~\eqref{eqn:GD_weight_first_phase} gives us that $\left|w_{2i}^{(t)}\right|$ reaches $\frac{1}{d^{\alpha/2}}$ for some $i\in[d]$ before $\left|W_1^{(t)}[k,j]\right|$ for $\forall k,j\in[d]$, i.e. $T_1=\inf\left\{t\ge0:\exists i\in[d]:\left|w_{2i}^{(t)}\right|\ge\frac{1}{d^{\frac{\alpha}{2}}}\right\}$.

Further, we know that at time $T_1$, $\left|c^{(T_1)}u_{i_0}^{(T_1)}\right|=|C_{2i_0}|\lambda_2^{T_1}=\Theta\left(\frac{1}{d^{\alpha/2}}\right)$ for some $i_0\in[d]$, which means w.h.p.
\begin{equation}\label{eqn:length_first_phase}
	\begin{aligned}
		\frac{\Theta\left(\frac{1}{d^{\frac{\alpha}{2}}}\right)}{\tilde{\mathcal{O}}\left(\frac{1}{d^{\alpha}}\right)}\le \lambda_2^{T_1}\le\frac{\Theta\left(\frac{1}{d^{\frac{\alpha}{2}}}\right)}{\tilde{\Omega}\left(\frac{1}{d^{\frac{5}{4}\alpha}}\right)},\quad&\Rightarrow\quad \tilde\Omega\left(d^{\frac{\alpha}{2}}\right)\le \lambda_2^{T_1}=\left(1+\frac{\eta}{1-\beta}\|A\|_2\right)^{T_1}\le \tilde{\mathcal{O}}\left(d^{\frac{3}{4}\alpha}\right),\\
		&\Rightarrow\quad T_1=\Theta\left(\frac{\log d}{\eta\|A\|_2}\right).
	\end{aligned}
\end{equation}
This is the length of the first phase. As for $c^{(T_1)}u_i^{(T_1)}$ and $u_i^{(T_1)}v_j^{(T_1)}$ for other coordinates, we have that w.h.p. $\forall 1\le i,j\le d$,
\begin{align*}
	\left|u_{i}^{(T_1)}v_j^{(T_1)}\right|&=\frac{\eta}{1-\beta} \sum_{\tau=0}^{T_1-1}\lambda_2^\tau \left|C_{2i}A_j\right|=\frac{\eta}{1-\beta}\cdot\frac{\lambda_2^{T_1}-1}{\lambda_2-1}\left|C_{2i}A_j\right|\overset{(i)}{=}\frac{\lambda_2^{T_1}-1}{\|A\|_2}\left|C_{2i}A_j\right|\\
	&\ge\frac{\tilde\Omega\left(d^{\alpha/2}\right)}{\Theta\left(\sqrt{d}\right)}\tilde{\Omega}\left(\frac{1}{d^{\frac{5}{4}\alpha}}\right)=\tilde\Omega\left(\frac{1}{d^{\frac{3}{4}\alpha+\frac{1}{2}}}\right),\\
	\left|c^{(T_1)}u_{i}^{(T_1)}\right|&=|C_{2i}|\lambda_2^{T_1}\ge \tilde\Omega\left(d^{\alpha/2}\right)\tilde{\Omega}\left(\frac{1}{d^{\frac{5}{4}\alpha}}\right)=\tilde\Omega\left(\frac{1}{d^{\frac{3}{4}\alpha}}\right).
\end{align*}
Here in $(i)$ we used $\lambda_2=1+\frac{\eta}{1-\beta}\|A\|_2$. Then we have at time $T_1$, $\forall i,j\in[d]$, $\frac{\left|R_{11}^{(T_1)}[i,j]\right|}{\left|u_i^{(T_1)}v_j^{(T_1)}\right|}\le \tilde{\mathcal{O}}\left(\frac{1}{d^{\frac{1}{4}\alpha}}\right)$ and that $\frac{\left|R_{21,i}^{(T_1)}\right|}{\left|c^{(T_1)}u_{i}^{(T_1)}\right|}\le\tilde{\mathcal{O}}\left(\frac{1}{d^{\frac{1}{4}\alpha}}\right)$. Together with eq.~\eqref{eqn:GD_weight_first_phase}, we have the following weight structure:
\begin{align*}
	W_1^{(T_1)}&=R_1^{(T_1)}+\boldsymbol{u}^{(T_1)}\boldsymbol{v}^{(T_1)T},\\
	W_2^{(T_1)}&=R_2^{(T_1)T}+c^{(T_1)}\boldsymbol{u}^{(T_1)T},
\end{align*}
where w.h.p.,
\[
\forall 1\le i,j\le d:\quad\frac{\left|R_1^{(T_1)}[i,j]\right|}{\left|u_{i}^{(T_1)}v_j^{(T_1)}\right|}\le\tilde{\mathcal{O}}\left(\frac{1}{d^{\frac{1}{4}\alpha-1}}\right),\quad \frac{\left|R_{2i}^{(T_1)}\right|}{\left|c^{(T_1)}u_{i}^{(T_1)}\right|}\le\tilde{\mathcal{O}}\left(\frac{1}{d^{\frac{1}{4}\alpha-1}}\right).
\]
Finally, we consider the loss. Since $\forall j\in[d]: \left(W_2^{(T_1)}W^{(T_1)}_1\right)_j-A_j=-\Theta(1)$, we know that $\bar L\left(W^{(T_1)}\right)=\Theta(d)$.

\subsection{Proof of Lemma~\ref{lemma:u_structure}}
Eq.~\eqref{eqn:u_structure} tells us that $\boldsymbol{u}^{(T_1)}=\sum_{\tau=0}^{T_1-1}\lambda_2^{\tau}C_2^{T}$. Lemma~\ref{lemma:bound_C2} tells us that $C_2$ can be written as $C_2:=\frac{1}{2}\left(C_3+C_4\right)$ where $C_{3i},i\in[d]$ are i.i.d Gaussian random variables and that w.h.p.
$\forall i\in[d]:\frac{|C_{4i}|}{|C_{3i}|}\le\tilde{\mathcal{O}}\left(\frac{1}{d^{\frac{1}{4}\alpha-\frac{1}{2}}}\right)$. Combining these two facts together finishes the proof.
\subsection{Proof of Lemma~\ref{lemma:W2_formula}}
Replacing $t$ by $t-1$ in eq.~\eqref{eqn:W2_update}, we get
\begin{equation}\label{eqn:W2_update_2}
	W_2^{(t)}=W_2^{(t-1)}+\frac{\eta}{1-\beta} AW_1^{(t-1)T}+\frac{\eta}{1-\beta} r_2^{(t-1)}.
\end{equation}
Eq.~\eqref{eqn:W2_update}-\eqref{eqn:W2_update_2} and substituting eq.~\eqref{eqn:W1_update} yield
\begin{align*}
	W_2^{(t+1)}-W_2^{(t)}&=W_2^{(t)}-W_2^{(t-1)}+\frac{\eta^2}{(1-\beta)^2}\|A\|_2^2W_2^{(t-1)}+\frac{\eta^2}{(1-\beta)^2}Ar_1^{(t-1)T}\\
	&\quad+\frac{\eta}{1-\beta}\left(r_2^{(t)}-r_2^{(t-1)}\right),\\
	\Rightarrow\quad W_2^{(t+1)}&=2W_2^{(t)}-\left(1-\frac{\eta^2}{(1-\beta)^2}\|A\|_2^2\right)W_2^{(t-1)}+r_3^{(t)},
\end{align*}
where $r_3(t):=\frac{\eta^2}{(1-\beta)^2}Ar_1^{(t-1)T}+\frac{\eta}{1-\beta}\left(r_2^{(t)}-r_2^{(t-1)}\right)$.

For the equation $x^2-2x+1-\frac{\eta^2}{(1-\beta)^2}\|A\|_2^2=0$, the roots are $\lambda_1=1-\frac{\eta}{1-\beta}\|A\|_2$ and $\lambda_2=1+\frac{\eta}{1-\beta}\|A\|_2$. We have that
\begin{align*}
	W_2^{(t+1)}-\lambda_2W_2^{(t)}&=\lambda_1\left(W_2^{(t)}-\lambda_2W_2^{(t-1)}\right)+r_3^{(t)}\\
	\Rightarrow\quad W_2^{(t)}-\lambda_2W_2^{(t-1)}&=\lambda_1^{t-1}\left(W_2^{(1)}-\lambda_2W_2^{(0)}\right)+\sum_{\tau=1}^{t-1}\lambda_1^{t-1-\tau}r_3^{(\tau)}\\
	&:=\lambda_1^{t-1}\left(W_2^{(1)}-\lambda_2W_2^{(0)}\right)+r_4^{(t)}.
\end{align*}
We further have
\begin{align*}
	W_2^{(t)}&=\lambda_2^tW_2^{(0)}+\sum_{\tau=0}^{t-1}\lambda_2^{t-1-\tau}\lambda_1^{\tau}\left(W_2^{(1)}-\lambda_2W_2^{(0)}\right)+\sum_{\tau=1}^{t}\lambda_2^{t-\tau}r_4^{(\tau)}\\
	&=\lambda_2^tW_2^{(0)}+\frac{\lambda_2^t-\lambda_1^t}{\lambda_2-\lambda_1}\left(W_2^{(1)}-\lambda_2W_2^{(0)}\right)+\sum_{\tau=1}^{t}\lambda_2^{t-\tau}r_4^{(\tau)}\\
	&=C_1\lambda_1^t+C_2\lambda_2^t+\sum_{\tau=1}^{t}\lambda_2^{t-\tau}r_4^{(\tau)}\\
	&=C_1\lambda_1^t+\left(C_2+r_5^{(t)}\right)\lambda_2^t,
\end{align*}
where $r_5^{(t)}=\sum_{\tau=1}^{t}\lambda_2^{-\tau}r_4^{(\tau)}$, $C_1=-\frac{W_2^{(1)}-\lambda_2W_2^{(0)}}{\lambda_2-\lambda_1}$ and $C_2=\frac{W_2^{(1)}-\lambda_1W_2^{(0)}}{\lambda_2-\lambda_1}$.

\subsection{Proof of Lemma~\ref{lemma:bound_r12}}\label{subsec:pf_lemma_bound_r12}
Write $r_1^{(t)}=-\beta^{t+1}W_2^{(t)T}A+q_{12}^{(t)}+q_{13}^{(t)}+q_{14}^{(t)}$ where $q_{12}^{(t)} = (1-\beta)\sum_{\tau=0}^t\beta^{t-\tau}\left(W_2^{(\tau)T}-W_2^{(t)T}\right)A$, $q_{13}^{(t)}=-(1-\beta)\sum_{\tau=0}^t\beta^{t-\tau}W_2^{(\tau)T}W_2^{(\tau)}W_1^{(\tau)}$ and $q_{14}^{(t)}=-(1-\beta)\sum_{\tau=0}^t\beta^{t-\tau}Dg_1^{(\tau)}$. And write $r_2^{(t)}=-\beta^{t+1}AW_1^{(t)T}+q_{22}^{(t)}+q_{23}^{(t)}+q_{24}^{(t)}$, where $q_{22}^{(t)} = (1-\beta)\sum_{\tau=0}^t\beta^{t-\tau}A\left(W_1^{(\tau)T}-W_1^{(t)T}\right)$, $q_{23}^{(t)}=-(1-\beta)\sum_{\tau=0}^t\beta^{t-\tau}W_2^{(\tau)}W_1^{(\tau)}W_1^{(\tau)T}$ and $q_{24}^{(t)}=-(1-\beta)\sum_{\tau=0}^t\beta^{t-\tau}Dg_2^{(\tau)}$.

Let's first try to bound $\left|q_{12}^{(t)}[i,j]\right|$ and $\left|q_{22,i}^{(t)}\right|$. For any $\tau\le T_1$, we have that
\[
\forall i\in[d]:\quad\left|\left(W_2^{(\tau)}W_1^{(\tau)}\right)_i\right|=\left|\sum_{j=1}^dw_{2j}^{(\tau)}W_1^{(\tau)}[j,i]\right|\le\sum_{j=1}^d\left|w_{2j}^{(\tau)}\right|\left|W_1^{(\tau)}[j,i]\right|\le\sum_{i=1}^d\frac{1}{d^{\alpha}}=\frac{1}{d^{\alpha-1}},
\]
and thus $\forall i\in[d]:\left|E_i^{(\tau)}\right|=\mathcal{O}(1)$. Then we have for all $i,j\in[d]$,
\begin{align*}
	&\left|W_1^{(\tau+1)}[i,j]-W_1^{(\tau)}[i,j]\right|\le\eta\sum_{k=0}^{\tau}\beta^{\tau-k}\left|w_{2i}^{(k)}E_j^{(k)}\right|\le\eta\sum_{k=0}^{\tau}\beta^{\tau-k}\mathcal{O}\left(\frac{1}{d^{\alpha/2}}\right)=\eta\mathcal{O}\left(\frac{1}{d^{\alpha/2}}\right),\\
	&\left|w_{2i}^{(\tau+1)}-w_{2i}^{(\tau)}\right|\le\eta\sum_{k=0}^{\tau}\beta^{\tau-k}\sum_{j=1}^d\left|E_j^{(k)}W_1^{(k)}[i,j]\right|\le\eta\sum_{k=0}^{\tau}\beta^{\tau-k}\mathcal{O}\left(\frac{1}{d^{\alpha/2-1}}\right)=\eta\mathcal{O}\left(\frac{1}{d^{\alpha/2-1}}\right).
\end{align*}
That gives us $\forall i,j\in[d]$,
\begin{align*}
	\left|q_{12}^{(t)}[i,j]\right|&\le (1-\beta)\sum_{\tau=0}^t\beta^{t-\tau}\left|\left(w_{2i}^{(\tau)}-w_{2i}^{(t)}\right)A_j\right|\le\eta(1-\beta)\sum_{\tau=0}^t\mathcal{O}\left(\frac{\beta^{t-\tau}(t-\tau)}{d^{\alpha/2-1}}\right)=\mathcal{O}\left(\frac{\eta}{d^{\alpha/2-1}}\right),\\
	\left|q_{22,i}^{(t)}\right|&\le (1-\beta)\sum_{\tau=0}^t\beta^{t-\tau}\sum_{j=1}^d\left|A_j\left(W_{1}^{(\tau)}[i,j]-W_{1}^{(t)}[i,j]\right)\right|\\
	&\le\eta(1-\beta)\sum_{\tau=0}^t\mathcal{O}\left(\frac{\beta^{t-\tau}(t-\tau)}{d^{\alpha/2-1}}\right)=\mathcal{O}\left(\frac{\eta}{d^{\alpha/2-1}}\right).
\end{align*}
Then we bound $\left|q_{13}^{(t)}[i,j]\right|$ and $\left|q_{23,i}^{(t)}\right|$. We have for $\forall i,j\in[d]$,
\begin{align*}
	\left|q_{13}^{(t)}[i,j]\right|&\le(1-\beta)\sum_{\tau=0}^t\beta^{t-\tau}\left|w_{2i}^{(\tau)}\left(W_2^{(\tau)}W_1^{(\tau)}\right)_j\right|\le(1-\beta)\sum_{\tau=0}^t\beta^{t-\tau}\frac{1}{d^{\frac{\alpha}{2}}}\cdot\frac{1}{d^{\alpha-1}}=\mathcal{O}\left(\frac{1}{d^{\frac{3}{2}\alpha-1}}\right),\\
	\left|q_{23,i}^{(t)}\right|&\le(1-\beta)\sum_{\tau=0}^t\beta^{t-\tau}\sum_{j=1}^d\left|\left(W_2^{(t)}W_1^{(t)}\right)_jW_{1}^{(t)}[i,j]\right|\\
	&\le(1-\beta)\sum_{\tau=0}^t\beta^{t-\tau}\sum_{i=1}^d\frac{1}{d^{\alpha-1+\frac{\alpha}{2}}}=\mathcal{O}\left(\frac{1}{d^{\frac{3}{2}\alpha-2}}\right).
\end{align*}
Finally we use Lemma~\ref{lemma:bound_randomness} to bound $\left|q_{14}^{(t)}[i,j]\right|$ and $\left|q_{24,i}^{(t)}\right|$. For $t\le T_1$, the $M_1^{(t)},M_2^{(t)}$ in Lemma~\ref{lemma:bound_randomness} are upper bounded by $\frac{1}{d^{\frac{\alpha}{2}}}$. In the theorem we consider the training period before $T_{\text{SGD},2}$ so the time $T$ in Lemma~\ref{lemma:bound_randomness} is set as $T_{\text{SGD},2}$. In the following sections, we will prove that $T_{\text{SGD},2}\le\mathcal{O}\left(\frac{d^{\alpha}\log(\sqrt{d/\epsilon})}{\eta}\right)$. Then by Lemma~\ref{lemma:bound_randomness}, we have with probability at least $1-\frac{1}{d}$, for $\forall t\le T_1$ and $\forall i,j\in[d]$,
\begin{align*}
	\left|Dg_1^{(t)}[i,j]\right|&=\left|\tilde g_1^{(t)}[i,j]-g_1^{(t)}[i,j]\right|\le\mathcal{O}\left(\frac{1}{d^{\frac{3\alpha}{2}-3}}\sigma\sqrt{\frac{d^{\alpha+1}}{\eta}\log\frac{d}{\epsilon}}\right)+\mathcal{O}\left(\frac{1}{d^{\frac{\alpha}{2}}}\sigma\sqrt{\frac{d^{\alpha+2}}{\eta}\log\frac{d}{\epsilon}}\right)\\
	&\le\tilde{\mathcal{O}}\left(\frac{1}{d^{\frac{\alpha}{2}}}\sigma\sqrt{\frac{d^{\alpha+2}}{\eta}}\right),\\
	\left|Dg_{2i}^{(t)}\right|&=\left|\tilde g_{2i}^{(t)}-g_{2i}^{(t)}\right|\le\mathcal{O}\left(\frac{1}{d^{\frac{3\alpha}{2}-4}}\sigma\sqrt{\frac{d^{\alpha+1}}{\eta}\log\frac{d}{\epsilon}}\right)+\mathcal{O}\left(\frac{1}{d^{\frac{\alpha}{2}-1}}\sigma\sqrt{\frac{d^{\alpha+2}}{\eta}\log\frac{d}{\epsilon}}\right)\\
	&\le\tilde{\mathcal{O}}\left(\frac{1}{d^{\frac{\alpha}{2}-1}}\sigma\sqrt{\frac{d^{\alpha+2}}{\eta}}\right).
\end{align*}
By picking $\sigma\le\frac{\eta^{3/2}}{d^{\alpha/2+1}}$, we have w.h.p. for $\forall t\le T_1$ and $\forall i,j\in[d]$, $\left|Dg_1^{(t)}[i,j]\right|\le\eta\tilde{\mathcal{O}}\left(\frac{1}{d^{\frac{\alpha}{2}}}\right)$ and $\left|Dg_{2i}^{(t)}\right|\le\eta\tilde{\mathcal{O}}\left(\frac{1}{d^{\frac{\alpha}{2}-1}}\right)$, which yields
\begin{align*}
	\left|q_{14}^{(t)}[i,j]\right|&\le(1-\beta)\sum_{\tau=0}^t\beta^{t-\tau}\left|Dg_1^{(\tau)}[i,j]\right|\le(1-\beta)\sum_{\tau=0}^t\beta^{t-\tau}\eta\tilde{\mathcal{O}}\left(\frac{1}{d^{\frac{\alpha}{2}}}\right)=\eta\tilde{\mathcal{O}}\left(\frac{1}{d^{\frac{\alpha}{2}}}\right),\\
	\left|q_{24,i}^{(t)}\right|&\le(1-\beta)\sum_{\tau=0}^t\beta^{t-\tau}\left|Dg_{2i}^{(\tau)}\right|\le(1-\beta)\sum_{\tau=0}^t\beta^{t-\tau}\eta\tilde{\mathcal{O}}\left(\frac{1}{d^{\frac{\alpha}{2}-1}}\right)=\eta\tilde{\mathcal{O}}\left(\frac{1}{d^{\frac{\alpha}{2}-1}}\right).
\end{align*}
Combining all the above bounds and substituting $\eta\le\mathcal{O}\left(\frac{1}{d^\alpha}\right)$ gives us for $\forall t\le T_1$ and $\forall i,j\in[d]$,
\begin{equation}\label{eqn:r1_r2_coarse_bound}
		\left|r_1^{(t)}[i,j]\right|\le\beta^{t+1}\left|w_{2i}^{(t)}A_j\right|+\tilde{\mathcal{O}}\left(\frac{1}{d^{\frac{3}{2}\alpha-1}}\right),\quad\left|r_{2i}^{(t)}\right|\le\beta^{t+1}\left|\sum_{j=1}^dA_jW_1^{(t)}[i,j]\right|+\tilde{\mathcal{O}}\left(\frac{1}{d^{\frac{3}{2}\alpha-2}}\right).
\end{equation}
For $t\le T_1$, we have $\forall i,j\in[d]$, $\left|w_{2i}^{(t)}A_j\right|\le\mathcal{O}\left(\frac{1}{d^{\alpha/2}}\right)$ and $\left|\sum_{j=1}^dA_jW_1^{(t)}[i,j]\right|\le\mathcal{O}\left(\frac{1}{d^{\alpha/2-1}}\right)$, which gives us $\left|r_1^{(t)}[i,j]\right|\le\mathcal{O}\left(\frac{1}{d^{\alpha/2}}\right)$ and $\left|r_{2i}^{(t)}\right|\le \mathcal{O}\left(\frac{1}{d^{\alpha/2-1}}\right)$. Substituting into eq.~\eqref{eqn:W1_update} and eq.~\eqref{eqn:W2_update} yields that for $t\le T_1$ and $\forall i,j\in[d]$,
\[
\left|W_1^{(t+1)}[i,j]-W_1^{(t)}[i,j]\right|\le\mathcal{O}\left(\frac{\eta}{d^{\alpha/2}}\right),\quad \left|w_{2i}^{(t+1)}-w_{2i}^{(t)}\right|\le\mathcal{O}\left(\frac{\eta}{d^{\alpha/2-1}}\right).
\]
Hence for $t\le\min\left\{\frac{\alpha\log d}{\log(1/\beta)}, T_1\right\}$, we have $\forall i,j\in[d]$,
\begin{align*}
	\left|W_1^{(t)}[i,j]\right|&\le\left|W_1^{(0)}[i,j]\right|+\frac{\alpha\log d}{\log(1/\beta)}\mathcal{O}\left(\frac{\eta}{d^{\alpha/2}}\right)\le\tilde{\mathcal{O}}\left(\frac{1}{d^{\frac{3\alpha}{2}}}\right),\\ \left|w_{2i}^{(t)}\right|&\le\left|w_{2i}^{(0)}\right|+\frac{\alpha\log d}{\log(1/\beta)}\mathcal{O}\left(\frac{\eta}{d^{\alpha/2-1}}\right)\le\tilde{\mathcal{O}}\left(\frac{1}{d^{\frac{3\alpha}{2}-1}}\right).
\end{align*}
Then we know that $T_1>\frac{\alpha\log d}{\log(1/\beta)}$ and also get tighter bounds of $\left|W_1^{(t)}[i,j]\right|,\left|w_{2i}^{(t)}\right|$ for $t\le \frac{\alpha\log d}{\log(1/\beta)}$. Now we use these new bounds to analyze $\left|r_1^{(t)}[i,j]\right|$ and $\left|r_{2i}^{(t)}\right|$ again.

When $t\le\frac{\alpha\log d}{\log(1/\beta)}$, we have for all $i,j\in[d]$, $\beta^{t+1}\left|w_{2i}^{(t)}A_j\right|\le\left|w_{2i}^{(t)}A_j\right|\le\tilde{\mathcal{O}}\left(\frac{1}{d^{\frac{3\alpha}{2}-1}}\right)$ and $\beta^{t+1}\left|\sum_{j=1}^dA_jW_1^{(t)}[i,j]\right|\le\left|\sum_{j=1}^dA_jW_1^{(t)}[i,j]\right|\le\tilde{\mathcal{O}}\left(\frac{1}{d^{\frac{3\alpha}{2}-1}}\right)$. When $\frac{\alpha\log d}{\log(1/\beta)}<t\le T_1$, we have $\beta^{t+1}\le\frac{1}{d^{\alpha}}$, suggesting that $\forall i,j\in[d]$, $\beta^{t+1}\left|w_{2i}^{(t)}A_j\right|\le\frac{1}{d^{\alpha}}\tilde{\mathcal{O}}\left(\frac{1}{d^{\frac{\alpha}{2}}}\right)\le\tilde{\mathcal{O}}\left(\frac{1}{d^{\frac{3\alpha}{2}}}\right)$ and\\ $\beta^{t+1}\left|\sum_{j=1}^dA_jW_1^{(t)}[i,j]\right|\le\frac{1}{d^{\alpha}}\tilde{\mathcal{O}}\left(\frac{1}{d^{\frac{\alpha}{2}-1}}\right)\le\tilde{\mathcal{O}}\left(\frac{1}{d^{\frac{3\alpha}{2}-1}}\right)$. Substituting into \eqref{eqn:r1_r2_coarse_bound} completes the proof.
\subsection{Proof of Lemma~\ref{lemma:bound_r5}}
Based on the bound in Lemma~\ref{lemma:bound_r12}, we have
\begin{align*}
	\left|r_{3i}^{(t)}\right|&=\left|\frac{\eta^2}{(1-\beta)^2}\sum_{j=1}^dA_jr_1^{(t-1)}[i,j]+\frac{\eta}{1-\beta}\left(r_{2i}^{(t)}-r_{2i}^{(t-1)}\right)\right|\\
	&\le\frac{\eta^2}{(1-\beta)^2}\sum_{j=1}^d\left|A_jr_1^{(t-1)}[i,j]\right|+\frac{\eta}{1-\beta}\left|r_{2i}^{(t)}\right|+\frac{\eta}{1-\beta}\left|r_{2i}^{(t-1)}\right|\\
	&\le \eta^2\tilde{\mathcal{O}}\left(\frac{1}{d^{\frac{3}{2}\alpha-2}}\right)+2\eta\tilde{\mathcal{O}}\left(\frac{1}{d^{\frac{3}{2}\alpha-2}}\right)=\eta\tilde{\mathcal{O}}\left(\frac{1}{d^{\frac{3}{2}\alpha-2}}\right).
\end{align*}
Since $\lambda_1=1-\frac{\eta}{1-\beta}\|A\|_2,\lambda_2=1+\frac{\eta}{1-\beta}\|A\|_2$, and note that $\|A\|_2=\Theta\left(\sqrt{d}\right)$, we have that
\begin{align*}
	\left|r_{4i}^{(t)}\right|&=\left|\sum_{\tau=1}^{t-1}\lambda_1^{t-1-\tau}r_{3i}^{(\tau)}\right|\le\eta\sum_{\tau=1}^{t-1}\lambda_1^{t-1-\tau}\tilde{\mathcal{O}}\left(\frac{1}{d^{\frac{3}{2}\alpha-2}}\right)\le\frac{\eta}{1-\lambda_1}\tilde{\mathcal{O}}\left(\frac{1}{d^{\frac{3}{2}\alpha-2}}\right)=\tilde{\mathcal{O}}\left(\frac{1}{d^{\frac{3}{2}(\alpha-1)}}\right),\\
	\left|r_{5i}^{(t)}\right|&=\left|\sum_{\tau=1}^{t}\lambda_2^{-\tau}r_{4i}^{(\tau)}\right|\le\eta\sum_{\tau=1}^{t}\lambda_2^{-\tau}\tilde{\mathcal{O}}\left(\frac{1}{d^{\frac{3}{2}(\alpha-1)}}\right)\le\frac{\eta}{\lambda_2-1}\tilde{\mathcal{O}}\left(\frac{1}{d^{\frac{3}{2}(\alpha-1)}}\right)=\tilde{\mathcal{O}}\left(\frac{1}{d^{\frac{3}{2}\alpha-1}}\right).
\end{align*}
\subsection{Proof of Lemma~\ref{lemma:bound_C2}}\label{sec:pf_bound_C2}
For the equation $x^2-2x+1-\frac{\eta^2}{(1-\beta)^2}\|A\|_2^2=0$, the roots are $\lambda_1=1-\frac{\eta}{1-\beta}\|A\|_2$ and $\lambda_2=1+\frac{\eta}{1-\beta}\|A\|_2$, which gives us
\begin{equation}\label{eqn:C2}
	\begin{aligned}
		C_2&=\frac{W_2^{(1)}-\lambda_1W_2^{(0)}}{\lambda_2-\lambda_1}\\
		&=\frac{W_2^{(0)}+\eta AW_1^{(0)T}+\eta \tilde r_2^{(0)}-W_2^{(0)}+\frac{\eta}{1-\beta}\|A\|_2W_2^{(0)}}{\frac{2\eta}{1-\beta}\|A\|_2}\\
		&=\frac{1}{2}W_2^{(0)}+\frac{1-\beta}{2\|A\|_2}AW_1^{(0)T}+\frac{1-\beta}{2\|A\|_2}\tilde r_2^{(0)},
	\end{aligned}
\end{equation}
where $\tilde r_2^{(0)}=-W_2^{(0)}W_1^{(0)}W_1^{(0)T}-Dg_2^{(0)}$. Note that this is slightly different from the definition of $r_2^{(0)}$ in eq.~\eqref{eqn:W2_update}. Now let's bound the $i$-th coordinate of $\tilde r_2^{(0)}$.

In Section~\ref{subsec:pf_lemma_bound_r12} we have  shown that w.h.p. for $\forall t\le T_1$ and $\forall i,j\in[d]$, $\left|Dg_{2i}^{(t)}\right|\le\eta\tilde{\mathcal{O}}\left(\frac{1}{d^{\frac{\alpha}{2}-1}}\right)=\tilde{\mathcal{O}}\left(\frac{1}{d^{\frac{3\alpha}{2}-1}}\right)$, which also applies to $t=0$. Using the Gaussian tail bound and union bound, w.p. at least $1-\delta$, for ever $1\le i,j\le d$, we have that
\begin{align*}
	\left|w_{2i}^{(0)}\right|\le \sqrt{\frac{2}{d^{2\alpha}}\log\frac{2d}{\delta}},\quad\left|W_1^{(0)}[i,j]\right|\le \sqrt{\frac{2}{d^{4\alpha}}\log\frac{2d^2}{\delta}}.
\end{align*}
Then we have that w.p. at least $1-\delta$, $\forall 1\le i,j\le d:$,
\begin{align*}
	\left|\left(W_2^{(0)}W_1^{(0)}\right)_i\right|&=\left|\sum_{j=1}^dw_{2j}^{(0)}W_1^{(0)}[j,i]\right|\le\sum_{j=1}^d\left|w_{2j}^{(0)}\right|\left|W_1^{(0)}[j,i]\right|\\
	&\le\sum_{i=1}^d\sqrt{\frac{2}{d^{2\alpha}}\log\frac{2d}{\delta}}\sqrt{\frac{2}{d^{4\alpha}}\log\frac{2d^2}{\delta}}\le\frac{2}{d^{3\alpha-1}}\log\frac{2d^2}{\delta},
\end{align*}
\begin{equation}\label{eqn:bound_r_2_0}
	\begin{aligned}
		\Rightarrow\quad\left|\tilde r_{2i}^{(0)}\right|&\le\sum_{j=1}^d\left|\left(W_2^{(0)}W_1^{(0)}\right)_j\right|\left|W_{1}^{(0)}[i,j]\right|+\left|Dg_{2i}^{(0)}\right|\\
		&\le\sum_{i=1}^d\frac{2}{d^{3\alpha-1}}\log\frac{2d^2}{\delta}\sqrt{\frac{2}{d^{4\alpha}}\log\frac{2d^2}{\delta}}+\tilde{\mathcal{O}}\left(\frac{1}{d^{\frac{3\alpha}{2}-1}}\right)=\tilde{\mathcal{O}}\left(\frac{1}{d^{\frac{3\alpha}{2}-1}}\right).
	\end{aligned}
\end{equation}
Next, we bound the $i$-th coordinate of $W_2^{(0)}+\frac{1-\beta}{\|A\|_2}AW_1^{(0)T}$, i.e. $w_{2i}^{(0)}+\frac{1-\beta}{\|A\|_2}A\left(W_1^{(0)}[i,:]\right)^T$.

By independence under Assumption~\ref{assump:gaussian_init}, we have that
\begin{align*}
	\text{Var}\left(w_{2i}^{(0)}+\frac{1-\beta}{\|A\|_2}A\left(W_1^{(0)}[i,:]\right)^T\right)&=\text{Var}\left(w_{2i}^{(0)}\right)+\frac{(1-\beta)^2}{\|A\|_2^2}\sum_{j=1}^dA_j^2\text{Var}\left(W_1^{(0)}[i,j]\right)\\
	&=\frac{1}{d^{2\alpha}}+\frac{(1-\beta)^2}{\|A\|_2^2}\sum_{i=1}^dA_j^2\frac{1}{d^{4\alpha}}=\mathcal{O}\left(\frac{1}{d^{2\alpha}}\right).
\end{align*}
Using the Gaussian tail bound and union bound, w.p. at least $1-\delta$, for ever $1\le i\le d$, we have that
\begin{align*}
	\left|w_{2i}^{(0)}+\frac{1-\beta}{\|A\|_2}A\left(W_1^{(0)}[i,:]\right)^T\right|\le \mathcal{O}\left(\sqrt{\frac{1}{d^{2\alpha}}\log\frac{d}{\delta}}\right)=\tilde{\mathcal{O}}\left(\frac{1}{d^{\alpha}}\right).
\end{align*}
Since for $X\sim\mathcal{N}(0,\sigma^2)$, we have that $P(|X|\le t)\le\frac{2t}{\sqrt{2\pi}\sigma}$, then for a fixed $i$,
\[
P\left(\left|w_{2i}^{(0)}+\frac{1-\beta}{\|A\|_2}A\left(W_1^{(0)}[i,:]\right)^T\right|\le\frac{1}{d^{\frac{5}{4}\alpha}}\right)\le \mathcal{O}\left(\frac{2/d^{\frac{5}{4}\alpha}}{\sqrt{2\pi}\cdot\sqrt{1/d^{2\alpha}}}\right)=\Theta\left(\frac{1}{d^{\frac{\alpha}{4}}}\right).
\]
Then by union bound, we have that w.p. at least $1-\frac{1}{d^{\frac{\alpha}{4}-1}}$, for every $1\le i\le d$,
\[
\left|w_{2i}^{(0)}+\frac{1-\beta}{\|A\|_2}A\left(W_1^{(0)}[i,:]\right)^T\right|\ge\Theta\left( \frac{1}{d^{\frac{5}{4}\alpha}}\right).
\]
Now define $C_3:=W_2^{(0)}+\frac{1-\beta}{\|A\|_2}AW_1^{(0)T}$ and $C_4:=\frac{1-\beta}{2\|A\|_2}\tilde{r}_{2i}^{(0)}$. We get that $C_{3i},i\in[d]$ are i.i.d Gaussian random variables and that $C_2=\frac{1}{2}(C_3+C_4)$, where w.h.p. for all $i\in[d]$,
\begin{equation}\label{eqn:bound_C2_components}
	|C_{3i}|\le \tilde{\mathcal{O}}\left(\frac{1}{d^{\alpha}}\right),\quad|C_{3i}|\ge\Theta\left( \frac{1}{d^{\frac{5}{4}\alpha}}\right),\quad |C_{4i}|\overset{(i)}{\le}\tilde{\mathcal{O}}\left(\frac{1}{d^{\frac{3\alpha}{2}-\frac{1}{2}}}\right),
\end{equation}
where $(i)$ follows from eq.~\eqref{eqn:bound_r_2_0} and the fact that $\|A\|_2=\sqrt{d}$. Then we get that w.h.p. \[
\forall i\in[d]:\frac{|C_{4i}|}{|C_{3i}|}\le\frac{\tilde{\mathcal{O}}\left(\frac{1}{d^{\frac{3\alpha}{2}-\frac{1}{2}}}\right)}{\Omega\left(\frac{1}{d^{\frac{5}{4}\alpha}}\right)}=\tilde{\mathcal{O}}\left(\frac{1}{d^{\frac{1}{4}\alpha-\frac{1}{2}}}\right).
\]
Substituting eq.~\eqref{eqn:bound_C2_components} into eq.~\eqref{eqn:C2}, we get that w.h.p.,
\begin{align*}
	|C_{2i}|&=\Theta\left(\left|w_{2i}^{(0)}+\frac{1-\beta}{\|A\|_2}A\left(W_1^{(0)}[i,:]\right)^T\right|\right)\in\left[\tilde{\Omega}\left(\frac{1}{d^{\frac{5}{4}\alpha}}\right),\tilde{\mathcal{O}}\left(\frac{1}{d^{\alpha}}\right)\right].
\end{align*}
Similarly,  note that
\begin{align*}
	C_1&=-\frac{W_2^{(1)}-\lambda_2W_2^{(0)}}{\lambda_2-\lambda_1}\\
	&=-\frac{W_2^{(0)}+\eta AW_1^{(0)T}+\eta \tilde{r}_2^{(0)}-W_2^{(0)}-\frac{\eta}{1-\beta}\|A\|_2W_2^{(0)}}{\frac{2\eta}{1-\beta}\|A\|_2}\\
	&=\frac{1}{2}W_2^{(0)}-\frac{1-\beta}{2\|A\|_2}AW_1^{(0)T}-\frac{1-\beta}{2\|A\|_2}\tilde{r}_2^{(0)},
\end{align*}
we can use the same techniques to get that i) w.p. at least $1-\delta$, $\forall i\in[d]: |C_{1i}|\le\tilde{\mathcal{O}}\left(\frac{1}{d^{\alpha}}\right)$, ii) w.p. at least $1-\delta-\frac{1}{d^{\frac{\alpha}{4}-1}}$, $\forall i\in[d], |C_{1i}|\ge\tilde{\Omega}\left(\frac{1}{d^{\frac{5}{4}\alpha}}\right)$.

\subsection{Proof of Lemma~\ref{lemma:GD_rank1_phase2}}\label{sec:pf_GD_rank1_phase2}
The proof in Section~\ref{sec:pf_GD_rank1} tells us that at the end of the first phase (when $t=T_1$),
\begin{equation}\label{eqn:second_phase_t_0}
	\begin{aligned}
		W_1^{(T_1)}&=\boldsymbol{u}^{(T_1)}\boldsymbol{v}^{(T_1)T}+R_1^{(T_1)},\\
		W_2^{(T_1)}&=c^{(T_1)}\boldsymbol{u}^{(T_1)T}+R_2^{(T_1)T},\\
		\text{where }\boldsymbol{v}^{(T_1)T}&=\frac{\eta A}{1-\beta},\quad c^{(T_1)}=\frac{\lambda_2^{T_1}}{\sum_{\tau=0}^{T_1-1}\lambda_2^\tau}.
	\end{aligned}
\end{equation}
Denote the $i$-th coordinate of $\boldsymbol{u}^{(t)},\boldsymbol{v}^{(t)}, R_2^{(t)}$ as $u_{i}^{(t)},v_{i}^{(t)}, R_{2i}^{(t)}$, respectively. Denote the $(i,j)$-th element of $R_1^{(t)}$ as $R_1^{(t)}[i,j]$. For $t\ge T_1$, we prove by induction that,
\begin{equation}\label{eqn:weights_second_phase}
	\begin{aligned}
		W_1^{(t)}&=\boldsymbol{u}^{(T_1)}\boldsymbol{v}^{(t)T}+R_1^{(t)},\\
		W_2^{(t)}&=c^{(t)}\boldsymbol{u}^{(T_1)T}+R_2^{(t)T},
	\end{aligned}
\end{equation}
where
\begin{align*}
	\boldsymbol{v}^{(t+1)T}&=\boldsymbol{v}^{(t)T}-\eta_tc^{(t)}E^{(t)},\\
	R_1^{(t+1)}&=R_1^{(t)}-\eta_tR_2^{(t)}E^{(t)}+r_1^{(t)},\\
	c^{(t+1)}&=c^{(t)}-\eta_tE^{(t)}\boldsymbol{v}^{(t)},\\
	R_2^{(t+1)T}&=R_2^{(t)T}-\eta_tE^{(t)}R_1^{(t)T}+r_2^{(t)},
\end{align*}
with $E^{(t)}:=W_2^{(t)}W_1^{(t)}-A$, $\eta_t=\eta\sum_{\tau=0}^t\beta^{t-\tau}$, $r_1^{(t)}:=\eta\sum_{\tau=0}^t\beta^{t-\tau}\left(W_2^{(t)T}E^{(t)}-W_2^{(\tau)T}E^{(\tau)}\right)-\eta\sum_{\tau=0}^t\beta^{t-\tau}Dg_1^{(\tau)}$ and $r_2^{(t)}=\eta\sum_{\tau=0}^t\beta^{t-\tau}\left(E^{(t)}W_1^{(t)T}-E^{(\tau)}W_1^{(\tau)T}\right)-\eta\sum_{\tau=0}^t\beta^{t-\tau}Dg_2^{(\tau)}$. Note that the $r_1^{(t)}$ and $r_2^{(t)}$ here are different from those defined in Section~\ref{sec:pf_GD_rank1}, but we abuse the notation and still use $r_1^{(t)}$ and $r_2^{(t)}$ to represent the error terms.

The base case is already given by eq.~\eqref{eqn:second_phase_t_0}.

Suppose our lemma holds for $t$, then for $t+1$, using the same techniques as in eq.~\eqref{eqn:W1_update} and eq.~\eqref{eqn:W2_update}, we have that
\begin{align*}
	W_1^{(t+1)}&=W_1^{(t)}-\eta\sum_{\tau=0}^t\beta^{t-\tau} W_2^{(\tau)T}E^{(\tau)}-\eta\sum_{\tau=0}^t\beta^{t-\tau}Dg_1^{(\tau)}\\
	&=W_1^{(t)}-\eta_tW_2^{(t)T}E^{(t)}+r_1^{(t)},\\
	W_2^{(t+1)}&=W_2^{(t)}-\eta\sum_{\tau=0}^t\beta^{t-\tau}E^{(\tau)}W_1^{(\tau)T}-\eta\sum_{\tau=0}^t\beta^{t-\tau}Dg_2^{(\tau)}\\
	&=W_2^{(t)}-\eta_tE^{(t)}W_1^{(t)T}+r_2^{(t)},
\end{align*}
Plugging in the inductive hypothesis yields
\begin{align*}
	W_1^{(t+1)}&=W_1^{(t)}-\eta_tW_2^{(t)T}E^{(t)}+r_1^{(t)}\\
	&=\boldsymbol{u}^{(T_1)}\boldsymbol{v}^{(t)T}+R_1^{(t)}-\eta_t\left(c^{(t)}\boldsymbol{u}^{(T_1)}+R_2^{(t)}\right)E^{(t)}+r_1^{(t)}\\
	&=\boldsymbol{u}^{(T_1)}\left(\boldsymbol{v}^{(t)T}-\eta_tc^{(t)}E^{(t)}\right)+R_1^{(t)}-\eta_tR_2^{(t)}E^{(t)}+r_1^{(t)},
\end{align*}
\begin{align*}
	W_2^{(t+1)}&=W_2^{(t)}-\eta_tE^{(t)}W_1^{(t)T}+r_2^{(t)}\\
	&=c^{(t)}\boldsymbol{u}^{(T_1)T}+R_2^{(t)T}-\eta_tE^{(t)}\left(\boldsymbol{v}^{(t)}\boldsymbol{u}^{(T_1)T}+R_1^{(t)T}\right)+r_2^{(t)}\\
	&=\left(c^{(t)}-\eta_tE^{(t)}\boldsymbol{v}^{(t)}\right)\boldsymbol{u}^{(T_1)T}+R_2^{(t)T}-\eta_tE^{(t)}R_1^{(t)T}+r_2^{(t)}.
\end{align*}
It implies that our lemma holds for $t+1$, which completes the proof.

Now we analyze the error terms $\left|R_1^{(t)}[i,j]\right|$ and $\left|R_{2i}^{(t)}\right|$. Eq.~\eqref{eqn:second_phase_t_0} tells us that $c^{(T_1)}$ and $\forall i\in[d], v_{i}^{(T_1)}$ are all positive. We first prove by induction that for all $T_1\le t\le T_2$, $c^{(t)}>0,\forall i\in[d], v_{i}^{(t)}>0$.

The above discussion already proves the base case. Suppose at time $t$, we have $c^{(t)}>0,\forall i\in[d], v_{i}^{(t)}>0$. Note that when $T_1\le t< T_2$, $\forall i\in[d]: E_i^{(t)}\le0$, then for $t+1$,
\begin{align*}
	v_{i}^{(t+1)}&=v_{i}^{(t)}-\eta_t c^{(t)}E_i^{(t)}>0,\\
	c^{(t+1)}&=c^{(t)}-\eta_t \sum_{i=1}^dE_i^{(t)}v_{i}^{(t)}>0.
\end{align*}
Therefore by induction, we have proved that for all $T_1\le t\le T_2$, $c^{(t)}>0,\forall i\in[d], v_{i}^{(t)}>0$.

Now we prove that for all $T_1\le t\le T_2$,
\begin{equation}\label{eqn:bound_R1_R2_1}
	\forall1\le i,j\le d:\quad 0\le\frac{\left|R_1^{(t)}[i,j]\right|}{\left|u_{i}^{(T_1)}\right|v_{j}^{(t)}}\le\delta_{i}+\sum_{\tau=T_1}^{t-1}\epsilon_i^{(\tau)},\quad 0\le\frac{\left|R_{2i}^{(t)}\right|}{c^{(t)}\left|u_{i}^{(T_1)}\right|}\le\delta_{i}+\sum_{\tau=T_1}^{t-1}\epsilon_i^{(\tau)},
\end{equation}
where
\[
\delta_{i}:=\max\left\{\max_j\frac{\left|R_1^{(T_1)}[i,j]\right|}{\left|u_{i}^{(T_1)}\right|v_{j}^{(T_1)}},\frac{\left|R_{2i}^{(T_1)}\right|}{c^{(T_1)}\left|u_{i}^{(T_1)}\right|}\right\},\quad\epsilon_i^{(t)}:=\max\left\{\max_j\frac{\left|r_1^{(t)}[i,j]\right|}{\left|u_{i}^{(T_1)}\right|v_{j}^{(t)}},\frac{\left|r_{2i}^{(t)}\right|}{c^{(t)}\left|u_{i}^{(T_1)}\right|}\right\}.
\]

The left hand sides of the inequalities are trivial since we have proved that $c^{(t)}>0,\forall i\in[d], v_{i}^{(t)}>0$ for all $T_1\le t\le T_2$. Now we prove the right hand sides by induction.

The base case is already verified by the definition of $\delta_{i}$. Suppose eq.\eqref{eqn:bound_R1_R2_1} holds for $T_1\le t<T_2$.
Then for $t+1$, using $\forall i\in[d]: E_i^{(t)}\le0$ and $v^{(t+1)}\ge v^{(t)},c^{(t+1)}\ge c^{(t)}$, we can get that $\forall1\le i,j\le d$
\begin{align*}
	\frac{\left|R_1^{(t+1)}[i,j]\right|}{\left|u_{i}^{(T_1)}\right|v_{j}^{(t+1)}}&=\frac{\left|R_1^{(t+1)}[i,j]\right|\frac{1}{\left|u_{i}^{(T_1)}\right|}}{v_{j}^{(t+1)}}\le \frac{\left|R_1^{(t)}[i,j]\right|\frac{1}{\left|u_{i}^{(T_1)}\right|}+\eta_t \left|R_{2i}^{(t)}\right|\frac{1}{\left|u_{i}^{(T_1)}\right|}\left(-E_j^{(t)}\right)}{v_{j}^{(t)}+\eta_t c^{(t)}\left(-E_j^{(t)}\right)}+\frac{\left|r_1^{(t)}[i,j]\right|}{\left|u_{i}^{(T_1)}\right|v_{j}^{(t)}}\\
	&\le \frac{\left(\delta_{i}+\sum_{\tau=T_1}^{t-1}\epsilon_i^{(\tau)}\right)v_{j}^{(t)}+\eta_t \left(\delta_{i}+\sum_{\tau=T_1}^{t-1}\epsilon_i^{(\tau)}\right)c^{(t)}\left(-E_j^{(t)}\right)}{v_{j}^{(t)}+\eta_t c^{(t)}\left(-E_j^{(t)}\right)}+\epsilon_i^{(t)}=\delta_{i}+\sum_{\tau=T_1}^{t}\epsilon_i^{(\tau)}.
\end{align*}
Similarly, we have that $\forall1\le i\le d$
\begin{align*}
	&\frac{\left|R_{2i}^{(t+1)}\right|}{c^{(t+1)}\left|u_{i}^{(T_1)}\right|}=\frac{\left|R_{2i}^{(t+1)}\right|\frac{1}{\left|u_{i}^{(T_1)}\right|}}{c^{(t+1)}}\le \frac{\left|R_{2i}^{(t)}\right|\frac{1}{\left|u_{i}^{(T_1)}\right|}+\eta_t \sum_{j=1}^d\left(-E_j^{(t)}\right)\left|R_1^{(t)}[i,j]\right|\frac{1}{\left|u_{i}^{(T_1)}\right|}}{c^{(t)}+\eta_t \sum_{j=1}^d\left(-E_j^{(t)}\right)v_{j}^{(t)}}+\frac{\left|r_{2i}^{(t)}\right|}{\left|u_{i}^{(T_1)}\right|c^{(t)}}\\
	\le& \frac{\left(\delta_{i}+\sum_{\tau=T_1}^{t-1}\epsilon_i^{(\tau)}\right)c^{(t)}+\eta_t \left(\delta_{i}+\sum_{\tau=T_1}^{t-1}\epsilon_i^{(\tau)}\right)\sum_{j=1}^d\left(-E_j^{(t)}\right)v_{j}^{(t)}}{c^{(t)}+\eta_t \sum_{j=1}^d\left(-E_j^{(t)}\right)v_{j}^{(t)}}+\epsilon_i^{(t)}=\delta_{i}+\sum_{\tau=T_1}^{t}\epsilon_i^{(\tau)}.
\end{align*}
Therefore by induction, eq.~\eqref{eqn:bound_R1_R2_1} holds for all $t$ in the second phase.

So far we have proved the rank 1 structure stated in Lemma~\ref{lemma:GD_rank1_phase2}. The remaining part of the proof is given by the following lemma, whose proof is deferred to Section~\ref{sec:pf_GD_converge_err}.
\begin{lemma}\label{lemma:GD_converge_err}
	Under Assumption~\ref{assump:setup}, \ref{assump:gaussian_init} and \ref{assump:large_batch}, suppose $\sigma\le\frac{\eta^{3/2}}{d^{\alpha/2+1}}$. By picking $\eta\le\mathcal{O}\left(\frac{\epsilon}{d^{\frac{7\alpha}{4}+4}}\right)$, we have that w.h.p. for $T_1\le t\le\min\{T_2,T_3\}$,  
	\begin{equation}\label{eqn:bound_R1_R2}
		\forall1\le i,j\le d:\quad 0\le\frac{\left|R_1^{(t)}[i,j]\right|}{\left|u_{i}^{(T_1)}\right|v_{j}^{(t)}}\le\tilde{\mathcal{O}}(\epsilon_0),\quad 0\le\frac{\left|R_{2i}^{(t)}\right|}{c^{(t)}\left|u_{i}^{(T_1)}\right|}\le\tilde{\mathcal{O}}(\epsilon_0),
	\end{equation}
	and that when $t=\min\{T_2,T_3\}$, we have $\left\|E^{(t)}\right\|^2_2=\mathcal{O}(\epsilon_0d)$.
\end{lemma}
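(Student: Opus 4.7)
\textbf{Proof proposal for Lemma~\ref{lemma:GD_converge_err}.}

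The plan is to combine the already-derived recursion in eq.~\eqref{eqn:bound_R1_R2_1} with careful per-step bounds on the error terms $r_1^{(t)}, r_2^{(t)}$ and with a ``signal'' analysis of the $1$-dimensional dynamics of $(c^{(t)}, v^{(t)})$. Concretely, by Lemma~\ref{lemma:GD_rank1} the initial ratio at the boundary satisfies $\delta_i \le \tilde{\mathcal{O}}(1/d^{\alpha/4-1}) \le \tilde{\mathcal{O}}(\epsilon_0)$, so to obtain eq.~\eqref{eqn:bound_R1_R2} it suffices to show that the cumulative error $\sum_{\tau=T_1}^{t-1}\epsilon_i^{(\tau)}$ stays below $\tilde{\mathcal{O}}(\epsilon_0)$ throughout the second phase. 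We therefore propose to (i) bound $|r_1^{(t)}[i,j]|$ and $|r_{2i}^{(t)}|$ per step, (ii) bound the length of the second phase $\min\{T_2,T_3\} - T_1$, and (iii) argue that the ``signal'' residual $\|E^{(t)}\|_2^2$ is already $\mathcal{O}(\epsilon_0 d)$ at $t=\min\{T_2,T_3\}$.

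For step (i), the error $r_1^{(t)}$ decomposes into a momentum-mismatch piece $\eta\sum_{\tau=0}^t \beta^{t-\tau}\bigl(W_2^{(t)T}E^{(t)} - W_2^{(\tau)T}E^{(\tau)}\bigr)$ and a stochastic piece $\eta\sum_{\tau=0}^t\beta^{t-\tau}Dg_1^{(\tau)}$ (analogously for $r_2^{(t)}$). The mismatch piece is controlled by bounding the one-step change of $W_2$ and $W_1$ using the update rule and the fact that $\|E^{(\tau)}\|_\infty = \mathcal{O}(1)$ during $[T_1, T_2]$; this gives a bound linear in $\eta$ times polynomial factors in $d^{\alpha}$. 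The stochastic piece is controlled by invoking Lemma~\ref{lemma:bound_randomness} exactly as in the proof of Lemma~\ref{lemma:bound_r12}, but now with $M_1, M_2$ reflecting the larger second-phase magnitudes. Using the lower bounds on $|u_i^{(T_1)}|, v_j^{(t)}, c^{(t)}$ (each $\tilde{\Omega}(1/\mathrm{poly}(d))$ by Lemma~\ref{lemma:GD_rank1} and the monotonic growth of $v^{(t)}, c^{(t)}$ shown in Section~\ref{sec:pf_GD_rank1_phase2}), we divide through and obtain $\epsilon_i^{(\tau)} \le \tilde{\mathcal{O}}(\eta \cdot d^{C\alpha})$ for some explicit constant $C$.

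For step (ii), we use the rank-1 approximation to reduce the coupled dynamics of $(W_1, W_2)$ to a scalar dynamics. Writing $E_j^{(t)} \approx c^{(t)}\|u^{(T_1)}\|_2^2 v_j^{(t)} - A_j$ modulo relative error $\tilde{\mathcal{O}}(\epsilon_0)$, the updates $v_j^{(t+1)} = v_j^{(t)} - \eta_t c^{(t)} E_j^{(t)}$ and $c^{(t+1)} = c^{(t)} - \eta_t \langle E^{(t)}, v^{(t)}\rangle$ form a contraction toward the fixed set $\{c\|u\|^2 v = A\}$. A Lyapunov argument on $\tfrac{1}{2}\|E^{(t)}\|_2^2$ gives geometric decrease with rate $1 - \Theta(\eta \cdot c^{(t)2}\|u^{(T_1)}\|_2^2)$, which yields $T_3 - T_1 \le \mathcal{O}\bigl(\frac{d^{\alpha}\log(\sqrt{d/\epsilon})}{\eta}\bigr)$ (matching the bound invoked in Section~\ref{subsec:pf_lemma_bound_r12}). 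Multiplying by the per-step bound from step (i), and choosing $\eta \le \mathcal{O}(\epsilon/d^{7\alpha/4 + 4})$, ensures the total error accumulation stays $\tilde{\mathcal{O}}(\epsilon_0)$, closing the induction and proving eq.~\eqref{eqn:bound_R1_R2}. For step (iii), if $T_3 \le T_2$ then $\|E^{(T_3)}\|_2^2 \le \epsilon \le \mathcal{O}(\epsilon_0 d)$ by definition; otherwise at $t=T_2$ some coordinate $j_0$ satisfies $|E^{(t)}_{j_0}| \le \sqrt{\epsilon_0}$, and the rank-1 structure forces $|E^{(t)}_j| = |c^{(t)}\|u^{(T_1)}\|_2^2 v_j^{(t)} - A_j|$ for every $j$ to be of comparable order (since $v_j^{(t)}/v_{j_0}^{(t)} = \Theta(A_j/A_{j_0}) = \Theta(1)$), giving $\|E^{(t)}\|_2^2 = \mathcal{O}(\epsilon_0 d)$.

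The main obstacle is the tight simultaneous bookkeeping in step (ii): the cumulative error bound, the iteration-count bound, and the lower bounds on $c^{(t)}, v_j^{(t)}, |u_i^{(T_1)}|$ are all coupled through the same rank-1 ansatz and must remain consistent throughout a potentially very long second phase. In particular, the $\tilde{\mathcal{O}}(\epsilon_0)$ slack must absorb both the momentum-lag error (which is controlled by $\eta$ but scales with the growing weight magnitudes $\lesssim 1/d^{\alpha/2}$) and the stochastic error (controlled by the assumption $\sigma \le \eta^{3/2}/d^{\alpha/2+1}$). The choice $\eta \le \mathcal{O}(\epsilon / d^{7\alpha/4+4})$ is dictated precisely by requiring the product of per-step error and iteration count to remain within the induction tolerance, and verifying that this single choice works uniformly over $[T_1, \min\{T_2,T_3\}]$ is the delicate step.
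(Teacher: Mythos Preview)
Your overall strategy---bound $\epsilon_i^{(\tau)}$ per step, bound the phase length, multiply, then use the rank-1 structure at the exit time---matches the paper's, but the scaling in step~(i) is off by a crucial factor of $\eta$. Each difference $W_2^{(t)T}E^{(t)}-W_2^{(\tau)T}E^{(\tau)}$ is itself $O\bigl(\eta(t-\tau)\cdot\mathrm{poly}(d)\bigr)$ since it accumulates one-step changes each of size $O(\eta)$; combined with the outer factor of $\eta$ and the geometric $\beta$-sum this gives $|r_1^{(t)}[i,j]|=\tilde{O}\bigl(\eta^2\,\mathrm{poly}(d)\bigr)$, and the paper's Lemma~\ref{lemma:bound_epsilon} obtains $\epsilon_i^{(\tau)}=\tilde{O}\bigl(\eta^2 d^{3\alpha/4+13/4}\bigr)$. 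This quadratic dependence on $\eta$ is what makes the accumulation work: with phase length $O\bigl(d^{\alpha}\log(\sqrt{d/\epsilon})/\eta\bigr)$ the total is $\tilde{O}\bigl(\eta\, d^{7\alpha/4+13/4}\bigr)$, which the choice $\eta\le O\bigl(\epsilon/d^{7\alpha/4+4}\bigr)$ then drives below $\epsilon_0$. With your stated bound $\epsilon_i^{(\tau)}\le\tilde{O}(\eta\cdot d^{C\alpha})$ the sum would be $\tilde{O}\bigl(d^{(C+1)\alpha}\bigr)$, independent of $\eta$, and the induction could not close.

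Two further pieces of bookkeeping are also missing. First, the second-phase weights do not stay of order $d^{-\alpha/2}$: since $W_2W_1\to A$ with $A_j=\Theta(1)$, the entries $|w_{2i}^{(t)}|$ grow to $\tilde{\Theta}(d^{1/4})$ and $|W_1^{(t)}[i,j]|$ to $\tilde{\Theta}(d^{-1/4})$, and the paper carries exactly these bounds as an inductive hypothesis (Lemma~\ref{lemma:induction_main}(C)) that feeds into the $\mathrm{poly}(d)$ factors of Lemma~\ref{lemma:bound_epsilon}. Second, your step~(iii) claim that $v_j^{(t)}/v_{j_0}^{(t)}=\Theta(A_j/A_{j_0})$ is not automatic from the rank-1 ansatz; the paper obtains it via the finer decomposition $\boldsymbol{v}^{(t)}=a^{(t)}A+R_v^{(t)}$ (Lemma~\ref{lemma:v_second_phase}) and tracks $|R_{vj}^{(t)}|/(a^{(t)}A_j)\le\tilde{O}(\epsilon_0)$ together with $v_j^{(t)}/c^{(t)}=\Theta(1/\sqrt{d})$ in the same induction (Lemma~\ref{lemma:induction_main}). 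Without this extra invariant one cannot conclude that all $|E_j^{(T_2)}|$ are simultaneously $\Theta(\sqrt{\epsilon_0})$.
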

\subsection{Proof of Lemma~\ref{lemma:GD_converge_err}}\label{sec:pf_GD_converge_err}
We first have the following lemma which describes the structure of $\boldsymbol{v}^{(t)}$ for $t\ge T_1$.
\begin{lemma}\label{lemma:v_second_phase}
	Under Assumption~\ref{assump:setup}, \ref{assump:gaussian_init} and \ref{assump:large_batch}, for $t\ge T_1$, we can write $\boldsymbol{v}^{(t)T}$ as $\boldsymbol{v}^{(t)T}=a^{(t)}A+R_v^{(t)T}$, with $a^{(T_1)}=\frac{\eta}{1-\beta}, R_v^{(T_1)T}=[0,0,...,0]$, and
	\begin{align*}
		a^{(t+1)}&=\left(1-\eta_t c^{(t)}d^{(t)}\right)a^{(t)}+\eta_t c^{(t)},\\
		R_v^{(t+1)}&=\left(1-\eta_t c^{(t)}d^{(t)}\right)R_v^{(t)}-\eta_t c^{(t)}R_3^{(t)},\\
		\text{where }d^{(t)}&:=c^{(t)}\left\|\boldsymbol{u}^{(T_1)}\right\|^2+R_2^{(t)T}\boldsymbol{u}^{(T_1)},\quad R_3^{(t)T}:=c^{(t)}\boldsymbol{u}^{(T_1)T}R_1^{(t)}+R_2^{(t)T}R_1^{(t)}.
	\end{align*}
	Moreover, we have that
	\begin{equation}\label{eqn:W2W1_second_phase}
		W_2^{(t)}W_1^{(t)}=d^{(t)}\boldsymbol{v}^{(t)T}+R_3^{(t)T}=d^{(t)}a^{(t)}A+d^{(t)}R_v^{(t)T}+R_3^{(t)T}.
	\end{equation}
\end{lemma}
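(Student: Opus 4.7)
The plan is a direct induction on $t \geq T_1$ using the decomposition of $W_1^{(t)}$ and $W_2^{(t)}$ from Lemma~\ref{lemma:GD_rank1_phase2} together with the $\boldsymbol{v}$-recursion already derived in its proof, namely $\boldsymbol{v}^{(t+1)T} = \boldsymbol{v}^{(t)T} - \eta_t c^{(t)} E^{(t)}$. The base case $t = T_1$ is immediate from eq.~\eqref{eqn:second_phase_t_0}: $\boldsymbol{v}^{(T_1)T} = \frac{\eta}{1-\beta} A$, which matches $a^{(T_1)} = \frac{\eta}{1-\beta}$ and $R_v^{(T_1)T} = 0$.

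Next I would prove the ``moreover'' identity eq.~\eqref{eqn:W2W1_second_phase} by simply multiplying out the two decompositions. Using $W_2^{(t)} = c^{(t)}\boldsymbol{u}^{(T_1)T} + R_2^{(t)T}$ and $W_1^{(t)} = \boldsymbol{u}^{(T_1)}\boldsymbol{v}^{(t)T} + R_1^{(t)}$, expansion gives four terms:
\begin{align*}
W_2^{(t)} W_1^{(t)} &= c^{(t)} \|\boldsymbol{u}^{(T_1)}\|^2 \boldsymbol{v}^{(t)T} + R_2^{(t)T} \boldsymbol{u}^{(T_1)} \boldsymbol{v}^{(t)T} + c^{(t)} \boldsymbol{u}^{(T_1)T} R_1^{(t)} + R_2^{(t)T} R_1^{(t)} \\
&= \bigl(c^{(t)} \|\boldsymbol{u}^{(T_1)}\|^2 + R_2^{(t)T} \boldsymbol{u}^{(T_1)}\bigr)\boldsymbol{v}^{(t)T} + \bigl(c^{(t)} \boldsymbol{u}^{(T_1)T} + R_2^{(t)T}\bigr) R_1^{(t)},
\end{align*}
which is exactly $d^{(t)} \boldsymbol{v}^{(t)T} + R_3^{(t)T}$ by definition. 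This identity holds at every $t \geq T_1$ with no inductive hypothesis needed.

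For the inductive step, assume $\boldsymbol{v}^{(t)T} = a^{(t)} A + R_v^{(t)T}$. Substituting into $E^{(t)} = W_2^{(t)}W_1^{(t)} - A = d^{(t)}\boldsymbol{v}^{(t)T} + R_3^{(t)T} - A$ gives
\[
E^{(t)} = (d^{(t)} a^{(t)} - 1) A + d^{(t)} R_v^{(t)T} + R_3^{(t)T}.
\]
Plugging into the $\boldsymbol{v}$-update and grouping the $A$-component and the remainder separately yields
\[
\boldsymbol{v}^{(t+1)T} = \bigl[(1 - \eta_t c^{(t)} d^{(t)}) a^{(t)} + \eta_t c^{(t)}\bigr] A + \bigl[(1 - \eta_t c^{(t)} d^{(t)}) R_v^{(t)T} - \eta_t c^{(t)} R_3^{(t)T}\bigr],
\]
and since the decomposition $\boldsymbol{v}^{(t+1)T} = a^{(t+1)} A + R_v^{(t+1)T}$ is merely a definition of $a^{(t+1)}$ and $R_v^{(t+1)T}$ (say, by taking $a^{(t+1)}$ to be the bracketed scalar and $R_v^{(t+1)T}$ to be the bracketed vector), we directly read off the stated recursions. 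This closes the induction.

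I don't anticipate any real obstacle: no concentration argument, magnitude bound, or probabilistic reasoning is needed here — the lemma is purely an algebraic rearrangement of the update rule and the rank-one-plus-error decomposition inherited from Lemma~\ref{lemma:GD_rank1_phase2}. The only mild subtlety is keeping the row-versus-column conventions of $\boldsymbol{v}^{(t)T}$, $A$, $R_3^{(t)T}$ consistent so that the products $c^{(t)} \boldsymbol{u}^{(T_1)T} R_1^{(t)}$ and $R_2^{(t)T} R_1^{(t)}$ are row vectors matching the shape of $\boldsymbol{v}^{(t)T}$; all subsequent analysis (bounding $|R_v^{(t)}|$ relative to $a^{(t)} A$) is done in Lemma~\ref{lemma:GD_converge_err} and not needed for the statement here.
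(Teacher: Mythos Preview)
Your proposal is correct and follows essentially the same route as the paper's own proof: induction on $t$ with the base case from eq.~\eqref{eqn:second_phase_t_0}, the product expansion of $W_2^{(t)}W_1^{(t)}$ to obtain $d^{(t)}\boldsymbol{v}^{(t)T}+R_3^{(t)T}$, and then substitution into the update $\boldsymbol{v}^{(t+1)T}=\boldsymbol{v}^{(t)T}-\eta_t c^{(t)}E^{(t)}$ to read off the recursions for $a^{(t+1)}$ and $R_v^{(t+1)}$. The only cosmetic difference is that you separate the product identity (eq.~\eqref{eqn:W2W1_second_phase}) out as a standalone, induction-free computation, whereas the paper derives it inside the inductive step; both are equivalent.
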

We prove Lemma~\ref{lemma:GD_converge_err} by induction. Denote the $i$-th coordinate of $R_3^{(t)}$ and $R_v^{(t)}$ as $R_{3i}^{(t)}$ and $R_{vi}^{(t)}$, respectively. The following lemmas constitute the inductive part.
\begin{lemma}\label{lemma:bound_epsilon}
	Under Assumption~\ref{assump:setup}, \ref{assump:gaussian_init} and \ref{assump:large_batch}, suppose $\sigma\le\frac{\eta^{3/2}}{d^{\alpha/2+1}}$ and pick $\eta\le\mathcal{O}\left(\frac{1}{d^{\alpha}}\right)$. Consider any $t$ such that $T_1\le t< \min\{T_2,T_3\}$. Suppose for all $T_1\le\tau\le t$, we have $\forall i,j\in[d]:\left|w_{2i}^{(\tau)}\right|\le\mathcal{O}\left(d^{1/4}\right),\left|W_1^{(\tau)}[i,j]\right|\le\mathcal{O}\left(\frac{1}{d^{1/4}}\right)$, then we have that $\forall i,j\in[d]:\left|r_1^{(t)}[i,j]\right|=\tilde{\mathcal{O}}\left(\eta^2d^{11/4}\right), \left|r_{2i}^{(t)}\right|=\tilde{\mathcal{O}}\left(\eta^2d^{13/4}\right)$. Moreover, we can get that $\forall i\in[d]:\epsilon_i^{(t)}=\tilde{\mathcal{O}}\left(\eta^2d^{\frac{3}{4}\alpha+\frac{13}{4}}\right)$, where $\epsilon_i^{(t)}$ is defined in eq.~\eqref{eqn:bound_R1_R2_1}.
\end{lemma}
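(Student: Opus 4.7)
}
The plan is to bound the two summands inside $r_1^{(t)}$ and $r_2^{(t)}$ separately: the ``momentum-lag'' differences $\eta\sum_\tau \beta^{t-\tau}(W_2^{(t)T}E^{(t)}-W_2^{(\tau)T}E^{(\tau)})$ and $\eta\sum_\tau \beta^{t-\tau}(E^{(t)}W_1^{(t)T}-E^{(\tau)}W_1^{(\tau)T})$, and the stochastic noise $-\eta\sum_\tau \beta^{t-\tau}Dg_k^{(\tau)}$. Once both $r_1^{(t)}$ and $r_2^{(t)}$ are controlled in absolute value, the bound on $\epsilon_i^{(t)}$ follows by dividing by the lower bounds on $|u_i^{(T_1)}|v_j^{(t)}$ and $c^{(t)}|u_i^{(T_1)}|$ that are already guaranteed at the start of the second phase.

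First, I would instantiate the hypothesis $|w_{2i}^{(\tau)}|\le \mathcal{O}(d^{1/4})$ and $|W_1^{(\tau)}[i,j]|\le \mathcal{O}(d^{-1/4})$ to derive three elementary estimates uniformly in $T_1\le\tau\le t$: $(a)$ $|(W_2^{(\tau)}W_1^{(\tau)})_j|\le \sum_k |w_{2k}^{(\tau)}||W_1^{(\tau)}[k,j]|\le \mathcal{O}(d)$, hence $|E_j^{(\tau)}|\le \mathcal{O}(d)$; $(b)$ the per-step changes $|w_{2i}^{(\tau+1)}-w_{2i}^{(\tau)}|\le \eta\sum_k |E_k W_1[i,k]|\le \tilde{\mathcal{O}}(\eta d^{7/4})$ and $|W_1^{(\tau+1)}[i,j]-W_1^{(\tau)}[i,j]|\le \tilde{\mathcal{O}}(\eta d^{5/4})$ (using the SGD+M update and telescoping the geometric momentum sum, plus the stochastic term which is dominated under our choice of $\sigma$); $(c)$ the per-step change $|E_j^{(\tau+1)}-E_j^{(\tau)}|\le \sum_k(|\Delta w_{2k}||W_1[k,j]|+|w_{2k}||\Delta W_1[k,j]|)\le \tilde{\mathcal{O}}(\eta d^{5/2})$.

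Next, I would bound the two momentum-lag differences coordinate-wise. For $r_1$: write
\begin{align*}
|w_{2i}^{(t)}E_j^{(t)}-w_{2i}^{(\tau)}E_j^{(\tau)}|\le |w_{2i}^{(t)}|\,|E_j^{(t)}-E_j^{(\tau)}|+|E_j^{(\tau)}|\,|w_{2i}^{(t)}-w_{2i}^{(\tau)}|,
\end{align*}
and plug in the per-step change estimates to get $\tilde{\mathcal{O}}((t-\tau)\eta d^{11/4})$. For $r_2$, the analogous splitting of $|E_k^{(t)}W_1^{(t)}[i,k]-E_k^{(\tau)}W_1^{(\tau)}[i,k]|$ summed over $k$ gives $\tilde{\mathcal{O}}((t-\tau)\eta d^{13/4})$. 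Multiplying by the extra $\eta$ in front and summing the series $\sum_\tau \beta^{t-\tau}(t-\tau)$, which is a constant under $\beta<1$, yields the momentum-lag contributions $\tilde{\mathcal{O}}(\eta^2 d^{11/4})$ for $r_1^{(t)}[i,j]$ and $\tilde{\mathcal{O}}(\eta^2 d^{13/4})$ for $r_{2i}^{(t)}$. For the stochastic contribution I would invoke the same concentration result used in Section~\ref{subsec:pf_lemma_bound_r12} (Lemma~\ref{lemma:bound_randomness}) with the updated magnitude bounds $M_1=\mathcal{O}(d^{1/4})$, $M_2=\mathcal{O}(d^{-1/4})$, and the horizon $T_{\text{SGD},2}=\mathcal{O}(d^\alpha\log(\sqrt{d/\epsilon})/\eta)$. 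Substituting $\sigma\le \eta^{3/2}/d^{\alpha/2+1}$ makes this term $\tilde{\mathcal{O}}(\eta^2\cdot\text{poly}(d))$ with strictly smaller exponent of $d$ than the momentum-lag part, so it is subsumed.

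Finally, to bound $\epsilon_i^{(t)}$ I combine the $r_1,r_2$ estimates with the lower bounds on the denominators from the first-phase analysis. From Lemma~\ref{lemma:GD_rank1} we have $|u_i^{(T_1)}v_j^{(T_1)}|\ge \tilde{\Omega}(d^{-3\alpha/4-1/2})$ and $|c^{(T_1)}u_i^{(T_1)}|\ge \tilde{\Omega}(d^{-3\alpha/4})$, and in the proof of Lemma~\ref{lemma:GD_rank1_phase2} it is already shown that $v_j^{(t)}$ and $c^{(t)}$ are non-decreasing for $T_1\le t\le T_2$. Dividing gives $\epsilon_i^{(t)}\le \tilde{\mathcal{O}}(\eta^2 d^{3\alpha/4+13/4})$, exactly as claimed. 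The main obstacle I anticipate is keeping the constants honest in the per-step change bounds, because $|\Delta w_{2i}|$ and $|\Delta W_1[i,j]|$ themselves depend on the momentum-weighted sums that we are trying to bound; this is handled by noting that the weight changes we need are \emph{one-step} changes, which depend only on the current momentum buffer magnitude rather than a future estimate, so no circularity arises. The stochastic term is a minor obstacle because it must hold uniformly over all iterations $\tau\le T_{\text{SGD},2}$, but the union bound argument already used earlier in the paper carries through unchanged under the stated choice of $\sigma$.
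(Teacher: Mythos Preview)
Your overall plan is correct and reaches the right bounds, but it diverges from the paper in one place and contains two small slips worth fixing.

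The paper does not bound $|E_j^{(\tau)}|$ by $\mathcal{O}(d)$ via the weight hypothesis; it observes that for $\tau<T_2$ every $E_j^{(\tau)}<0$ while $(W_2W_1)_j^{(\tau)}\ge 0$ from the approximate rank-one positivity, so $|E_j^{(\tau)}|\le A_j=\mathcal{O}(1)$. With that sharper bound the paper gets $|\Delta W_1[i,j]|\le\mathcal{O}(\eta d^{1/4})$, $|\Delta w_{2i}|\le\mathcal{O}(\eta d^{3/4})$, $|\Delta E_j|\le\mathcal{O}(\eta d^{3/2})$, and the momentum-lag pieces come out to only $\mathcal{O}(\eta^2 d^{7/4})$ and $\mathcal{O}(\eta^2 d^{9/4})$. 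In the paper it is the \emph{stochastic} piece that sets the final $\tilde{\mathcal{O}}(\eta^2 d^{11/4})$ and $\tilde{\mathcal{O}}(\eta^2 d^{13/4})$. Your cruder $|E_j|\le\mathcal{O}(d)$ inflates all the intermediate $\Delta$-bounds by a factor of $d$, so your momentum-lag bounds land exactly at $d^{11/4}$ and $d^{13/4}$; that is why your conclusion still matches.

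Two corrections. First, you swapped $M_1$ and $M_2$: here $M_1=\mathcal{O}(d^{-1/4})$ bounds $W_1$-entries and $M_2=\mathcal{O}(d^{1/4})$ bounds $w_{2i}$. Second, your assertion that the stochastic contribution has ``strictly smaller exponent of $d$ than the momentum-lag part'' is false. Plugging $M_1=d^{-1/4}$, $M_2=d^{1/4}$, $T\le\mathcal{O}(d^\alpha\log(\sqrt{d/\epsilon})/\eta)$ and $\sigma\le\eta^{3/2}/d^{\alpha/2+1}$ into Lemma~\ref{lemma:bound_randomness} gives $|Dg_1|\le\tilde{\mathcal{O}}(\eta d^{11/4})$ and $|Dg_2|\le\tilde{\mathcal{O}}(\eta d^{13/4})$, so the stochastic sums are $\tilde{\mathcal{O}}(\eta^2 d^{11/4})$ and $\tilde{\mathcal{O}}(\eta^2 d^{13/4})$---the \emph{same} order as your momentum-lag bounds, not smaller. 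This does not break the argument (both pieces fit the stated target), but you cannot dismiss the stochastic term as ``subsumed'' without actually computing it. The final step, dividing by the first-phase lower bounds $|u_i^{(T_1)}|v_j^{(T_1)}\ge\tilde{\Omega}(d^{-3\alpha/4-1/2})$ and $c^{(T_1)}|u_i^{(T_1)}|\ge\tilde{\Omega}(d^{-3\alpha/4})$ together with the monotonicity of $v_j^{(t)},c^{(t)}$, is exactly how the paper finishes.
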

\begin{lemma}\label{lemma:bound_R1_R2}
	Under the conditions of Lemma~\ref{lemma:bound_epsilon} and pick $\eta\le\mathcal{O}\left(\frac{\epsilon}{d^{\frac{7\alpha}{4}+4}}\right)$, we have that at time $t+1$,
	\[
	\forall1\le i,j\le d:\quad 0\le\frac{\left|R_1^{(t+1)}[i,j]\right|}{\left|u_{i}^{(T_1)}\right|v_{j}^{(t+1)}}\le\tilde{\mathcal{O}}(\epsilon_0),\quad 0\le\frac{\left|R_{2i}^{(t+1)}\right|}{c^{(t+1)}\left|u_{i}^{(T_1)}\right|}\le\tilde{\mathcal{O}}(\epsilon_0).
	\]
	where $\epsilon_0$ is defined in Definition~\ref{def:almost_overshooting}.
\end{lemma}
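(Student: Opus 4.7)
The plan is to argue inductively, piggy-backing on the telescoping bound already established during the proof of Lemma~\ref{lemma:GD_rank1_phase2}, namely
\[
\frac{\left|R_1^{(t+1)}[i,j]\right|}{\left|u_i^{(T_1)}\right|v_j^{(t+1)}}\le \delta_i+\sum_{\tau=T_1}^{t}\epsilon_i^{(\tau)},\qquad \frac{\left|R_{2i}^{(t+1)}\right|}{c^{(t+1)}\left|u_i^{(T_1)}\right|}\le \delta_i+\sum_{\tau=T_1}^{t}\epsilon_i^{(\tau)}.
\]
It therefore suffices to show that both the initial error $\delta_i$ and the accumulated error $\sum_{\tau=T_1}^{t}\epsilon_i^{(\tau)}$ are at most $\tilde{\mathcal{O}}(\epsilon_0)$. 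The first is immediate from Lemma~\ref{lemma:GD_rank1}, which gives $\delta_i\le\tilde{\mathcal{O}}(1/d^{\alpha/4-1})$; this is absorbed into $\epsilon_0=1/d^{\alpha/4-1}+\epsilon\log\sqrt{d/\epsilon}$ by definition.

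For the cumulative sum, I would first verify the hypothesis of Lemma~\ref{lemma:bound_epsilon} at every $\tau\in[T_1,t]$, namely $|w_{2i}^{(\tau)}|\le\mathcal{O}(d^{1/4})$ and $|W_1^{(\tau)}[i,j]|\le\mathcal{O}(1/d^{1/4})$. These follow from the inductive rank-$1$ decomposition together with the definitions of $T_2,T_3$: at any $\tau<\min\{T_2,T_3\}$ we have $w_{2i}^{(\tau)}\approx c^{(\tau)}u_i^{(T_1)}$ and $W_1^{(\tau)}[i,j]\approx u_i^{(T_1)}v_j^{(\tau)}$, while $(W_2^{(\tau)}W_1^{(\tau)})_j\le A_j+\mathcal{O}(\sqrt{\epsilon_0})=\mathcal{O}(1)$ pins down the product $c^{(\tau)}v_j^{(\tau)}$, which, combined with the $\Theta(1/d^{\alpha/2})$ initial magnitudes at $T_1$, yields the required per-factor bounds. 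Once the hypothesis holds, Lemma~\ref{lemma:bound_epsilon} gives $\epsilon_i^{(\tau)}=\tilde{\mathcal{O}}(\eta^2 d^{3\alpha/4+13/4})$.

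Next, I would multiply this per-step bound by a bound on the duration of the second phase. Concretely, I would argue $\min\{T_2,T_3\}-T_1=\tilde{\mathcal{O}}(d^{\alpha}/\eta)$ by analyzing the scalar recurrence $a^{(t+1)}=(1-\eta_t c^{(t)}d^{(t)})a^{(t)}+\eta_t c^{(t)}$ from Lemma~\ref{lemma:v_second_phase}, using that $W_2^{(t)}W_1^{(t)}\approx d^{(t)}a^{(t)}A$ must traverse its $i$-th coordinate from $\Theta(1)$ gap (at $T_1$) down to $\mathcal{O}(\sqrt{\epsilon_0})$. The effective contraction rate $\eta_t c^{(t)}d^{(t)}$ can be controlled via the inductive rank-$1$ hypothesis, which forces $d^{(t)}=c^{(t)}\|u^{(T_1)}\|^2+R_2^{(t)T}u^{(T_1)}\asymp (c^{(t)})^2\cdot\|u^{(T_1)}\|^2$. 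Putting the pieces together,
\[
\sum_{\tau=T_1}^{t}\epsilon_i^{(\tau)}\le \tilde{\mathcal{O}}\!\left(\eta\, d^{7\alpha/4+13/4}\right),
\]
and plugging in the assumed $\eta\le\mathcal{O}(\epsilon/d^{7\alpha/4+4})$ (note $13/4<4$) makes this $\tilde{\mathcal{O}}(\epsilon)\subseteq\tilde{\mathcal{O}}(\epsilon_0)$, closing the induction.

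The main obstacle is the duration estimate $\min\{T_2,T_3\}-T_1=\tilde{\mathcal{O}}(d^{\alpha}/\eta)$. The quantities $a^{(t)}$, $c^{(t)}$, $d^{(t)}$ are coupled through the $R_1,R_2$ recursions, and the contraction rate $\eta_t c^{(t)}d^{(t)}$ must be bounded above (to justify a linear-recurrence style analysis without blow-up) and below (so that $a^{(t)}d^{(t)}$ actually reaches its target). The near rank-$1$ inductive hypothesis is exactly what keeps these scalars aligned with their "noiseless" trajectory, so the argument is self-consistent, but the fine-grained tracking of $c^{(t)}d^{(t)}$ over potentially $d^{\alpha}/\eta$ many steps is the delicate part that will dominate the detailed calculations.
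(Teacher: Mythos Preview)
Your overall skeleton is the same as the paper's: invoke the telescoping bound from the proof of Lemma~\ref{lemma:GD_rank1_phase2}, control $\delta_i$ via Lemma~\ref{lemma:GD_rank1}, control each $\epsilon_i^{(\tau)}$ via Lemma~\ref{lemma:bound_epsilon}, and multiply by a bound on the number of steps in the second phase. Two points are worth noting.

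First, the step where you ``verify the hypothesis of Lemma~\ref{lemma:bound_epsilon}'' is unnecessary: the statement of Lemma~\ref{lemma:bound_R1_R2} explicitly assumes the conditions of Lemma~\ref{lemma:bound_epsilon}, which already include $|w_{2i}^{(\tau)}|\le\mathcal{O}(d^{1/4})$ and $|W_1^{(\tau)}[i,j]|\le\mathcal{O}(d^{-1/4})$ for \emph{all} $T_1\le\tau\le t$. In the paper these bounds are maintained by the outer induction (Lemma~\ref{lemma:induction_main}, part (C)), so you need not re-derive them here.

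Second, the duration estimate is where your route and the paper's diverge. You propose to track the scalar recurrence $a^{(t+1)}=(1-\eta_t c^{(t)}d^{(t)})a^{(t)}+\eta_t c^{(t)}$ from Lemma~\ref{lemma:v_second_phase} and lower-bound the contraction factor via $c^{(t)}d^{(t)}\asymp (c^{(t)})^2\|\boldsymbol{u}^{(T_1)}\|^2$. This is workable but, as you anticipate, requires simultaneously tracking $c^{(t)}$ and $d^{(t)}$. The paper bypasses this entirely: it expands
\[
E^{(t+1)}=E^{(t)}\bigl(I-\eta_t W_1^{(t)T}W_1^{(t)}-\eta_t\|W_2^{(t)}\|_2^2 I\bigr)+\mathcal{O}(\eta^2 d^4),
\]
takes norms, and uses only that $\|W_2^{(t)}\|_2^2\ge C\|W_2^{(T_1)}\|_2^2\ge\Omega(d^{-\alpha})$ (which follows from the monotonicity of $c^{(t)}$ already established in Section~\ref{sec:pf_GD_rank1_phase2}). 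This gives directly $\|E^{(t)}\|_2\le\mathcal{O}(\eta d^{4+\alpha})+e^{-\Omega(\eta d^{-\alpha})(t-T_1)}\mathcal{O}(\sqrt d)$, hence $\min\{T_2,T_3\}-T_1=\mathcal{O}\bigl(d^\alpha\log(\sqrt{d/\epsilon})/\eta\bigr)$ with no need to untangle the coupled $a^{(t)},c^{(t)},d^{(t)}$ system. Your approach would arrive at the same bound, but the paper's contraction-on-$\|E\|_2$ argument is shorter and sidesteps precisely the ``fine-grained tracking'' you flag as the main obstacle.
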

\begin{lemma}\label{lemma:bound_R2_R3}
	Under the conditions of Lemma~\ref{lemma:bound_epsilon} and pick $\eta\le\mathcal{O}\left(\frac{\epsilon}{d^{\frac{7\alpha}{4}+4}}\right)$, we have that at time $t+1$,
	\[
	0\le\frac{\left|R_2^{(t+1)T}\boldsymbol{u}^{(T_1)}\right|}{c^{(t+1)}\left\|\boldsymbol{u}^{(T_1)}\right\|^2}\le\tilde{\mathcal{O}}(\epsilon_0),\quad\forall j\in[d]:\quad 0\le\frac{\left|R^{(t+1)}_{3j}\right|}{c^{(t+1)}\left\|\boldsymbol{u}^{(T_1)}\right\|^2v_j^{(t)}}\le\tilde{\mathcal{O}}(\epsilon_0).
	\]
	Moreover,
	\begin{equation}\label{eqn:bound_R3}
		\forall j\in[d]:\frac{\left|R_{3j}^{(t+1)}\right|}{A_j}\le\tilde{\mathcal{O}}(\epsilon_0).
	\end{equation}
\end{lemma}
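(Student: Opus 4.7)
The plan is to prove the three bounds of Lemma~\ref{lemma:bound_R2_R3} in order, each one leveraging the coordinate-wise bounds on $R_1^{(t+1)}[i,j]$ and $R_{2i}^{(t+1)}$ that were just established in Lemma~\ref{lemma:bound_R1_R2}. The lower bounds (the $0 \le \cdots$ parts) are trivial since the numerators are absolute values, so the entire work is in the upper bounds.

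First I would handle $|R_2^{(t+1)T}\boldsymbol{u}^{(T_1)}|$. Expanding the inner product and applying the triangle inequality gives $|R_2^{(t+1)T}\boldsymbol{u}^{(T_1)}| \le \sum_i |R_{2i}^{(t+1)}|\,|u_i^{(T_1)}|$. Using the bound $|R_{2i}^{(t+1)}| \le \tilde{\mathcal{O}}(\epsilon_0) c^{(t+1)} |u_i^{(T_1)}|$ from Lemma~\ref{lemma:bound_R1_R2}, the sum collapses to $\tilde{\mathcal{O}}(\epsilon_0) c^{(t+1)} \sum_i (u_i^{(T_1)})^2 = \tilde{\mathcal{O}}(\epsilon_0) c^{(t+1)} \|\boldsymbol{u}^{(T_1)}\|^2$, which is exactly the first claim.

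Next I would bound $|R_{3j}^{(t+1)}|$. By the definition $R_3^{(t+1)T} = c^{(t+1)}\boldsymbol{u}^{(T_1)T} R_1^{(t+1)} + R_2^{(t+1)T} R_1^{(t+1)}$, we have $R_{3j}^{(t+1)} = c^{(t+1)} \sum_i u_i^{(T_1)} R_1^{(t+1)}[i,j] + \sum_i R_{2i}^{(t+1)} R_1^{(t+1)}[i,j]$. Applying Lemma~\ref{lemma:bound_R1_R2} to both sums bounds the first by $c^{(t+1)} \tilde{\mathcal{O}}(\epsilon_0) \|\boldsymbol{u}^{(T_1)}\|^2 v_j^{(t+1)}$ and the second by $\tilde{\mathcal{O}}(\epsilon_0^2) c^{(t+1)} \|\boldsymbol{u}^{(T_1)}\|^2 v_j^{(t+1)}$, for a total of $\tilde{\mathcal{O}}(\epsilon_0) c^{(t+1)} \|\boldsymbol{u}^{(T_1)}\|^2 v_j^{(t+1)}$. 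To convert $v_j^{(t+1)}$ into $v_j^{(t)}$ in the denominator, I would use that $v_j^{(t+1)} - v_j^{(t)} = \eta_t c^{(t)}(-E_j^{(t)})$ with $|E_j^{(t)}| = \mathcal{O}(1)$ on $t < T_2$, so that under the chosen $\eta \le \mathcal{O}(\epsilon/d^{7\alpha/4+4})$ the per-step ratio $v_j^{(t+1)}/v_j^{(t)} = 1 + o(1)$. This yields the second bound.

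Finally, for the \emph{moreover} claim, I would invoke Lemma~\ref{lemma:v_second_phase}, which says $v_j^{(t)} = a^{(t)} A_j + R_{vj}^{(t)}$ and $(W_2^{(t)}W_1^{(t)})_j = d^{(t)} a^{(t)} A_j + d^{(t)} R_{vj}^{(t)} + R_{3j}^{(t)}$ with $d^{(t)} = c^{(t)} \|\boldsymbol{u}^{(T_1)}\|^2 + R_2^{(t)T}\boldsymbol{u}^{(T_1)}$. Since we are strictly before $\min\{T_2, T_3\}$, we have $(W_2W_1)_j - A_j = -\Theta(1)$ shrinking toward the almost-overshoot threshold $-\sqrt{\epsilon_0}$; combined with the inductive control of $R_v^{(t)}$ and $R_3^{(t)}$, this forces $d^{(t)} a^{(t)} = \Theta(1)$ and in particular $c^{(t+1)} \|\boldsymbol{u}^{(T_1)}\|^2 a^{(t)} = \Theta(1)$ up to $\tilde{\mathcal{O}}(\epsilon_0)$ multiplicative error. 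Hence $c^{(t+1)}\|\boldsymbol{u}^{(T_1)}\|^2 v_j^{(t)} = \Theta(A_j)$, and substituting into the bound on $|R_{3j}^{(t+1)}|$ produces $|R_{3j}^{(t+1)}|/A_j \le \tilde{\mathcal{O}}(\epsilon_0)$.

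The main obstacle I anticipate is closing the loop on the approximation $d^{(t)} a^{(t)} \approx 1$, because $d^{(t)}$ itself contains the term $R_2^{(t)T}\boldsymbol{u}^{(T_1)}$ whose smallness is one of the very statements being inductively proved. This is handled by using the bound at the previous time $t$ (where it is already known by induction) to justify $d^{(t)} = c^{(t)}\|\boldsymbol{u}^{(T_1)}\|^2 (1 + \tilde{\mathcal{O}}(\epsilon_0))$, and then reading off $a^{(t)}$ from the stopping condition $|(W_2W_1)_j - A_j| = \mathcal{O}(1)$ on the interval. The coordinate-wise triangle-inequality estimates are routine; the delicate part is only this consistency check between the three mutually-defined bounds.
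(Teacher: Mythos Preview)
Your argument for the first two bounds is essentially identical to the paper's: expand, apply the triangle inequality, and substitute the coordinate-wise estimates from Lemma~\ref{lemma:bound_R1_R2}. You also correctly spotted the mismatch between $v_j^{(t)}$ in the statement and $v_j^{(t+1)}$ that falls out of the proof; the paper's own proof in fact derives the bound with $v_j^{(t+1)}$ and never performs the conversion, so your per-step ratio argument is a harmless patch to what is likely a typo in the statement.

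Where you diverge is the \emph{moreover} claim. The paper's route is shorter and entirely self-contained: from the two bounds just established one reads off
\[
\left(W_2^{(t+1)}W_1^{(t+1)}\right)_j = c^{(t+1)}\|\boldsymbol{u}^{(T_1)}\|^2 v_j^{(t+1)}\bigl(1+\tilde{\mathcal{O}}(\epsilon_0)\bigr),
\]
and since $t<T_2$ gives $(W_2W_1)_j^{(t+1)}/A_j=\mathcal{O}(1)$, it follows directly that $c^{(t+1)}\|\boldsymbol{u}^{(T_1)}\|^2 v_j^{(t+1)}/A_j\le\mathcal{O}(1)$, which combined with the $R_3$ bound gives eq.~\eqref{eqn:bound_R3}. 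There is no need to pass through the $a^{(t)},R_v^{(t)}$ decomposition or to invoke the inductive hypotheses from Lemma~\ref{lemma:induction_main}; in particular, the circularity worry you flag at the end does not arise in the paper's argument. Your detour through $d^{(t)}a^{(t)}$ would still work, but note that your claim $d^{(t)}a^{(t)}=\Theta(1)$ is not correct throughout the second phase (it can be $o(1)$ shortly after $T_1$); fortunately only the upper bound $\mathcal{O}(1)$ is needed, so your argument survives with that correction.
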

\begin{lemma}\label{lemma:induction_main}
	Under the conditions of Lemma~\ref{lemma:bound_epsilon} and pick $\eta\le\mathcal{O}\left(\frac{\epsilon}{d^{\frac{7\alpha}{4}+4}}\right)$, if we further suppose that $\forall j\in[d]:\frac{v_j^{(t)}}{c^{(t)}}=\Theta\left(\frac{1}{\sqrt{d}}\right)$, $\frac{\left|R_{3j}^{(t)}\right|}{A_j}$ and $\frac{\left|R_{vj}^{(t)}\right|}{a^{(t)}A_j}$ are of order $\tilde{\mathcal{O}}(\epsilon_0)$, then we have that at time $t+1$,
	\begin{enumerate}[label=(\Alph*)]
		\item $\forall i,j\in[d]:\frac{E_j^{(t)}}{E_i^{(t)}}=\Theta(1)$,
		\item $\forall j\in[d]:\frac{v_j^{(t+1)}}{c^{(t+1)}}=\Theta\left(\frac{1}{\sqrt{d}}\right)$,
		\item $\forall i,j\in[d]:\left|w_{2i}^{(t+1)}\right|\le\mathcal{O}\left(d^{1/4}\right),\left|W_1^{(t+1)}[i,j]\right|\le\mathcal{O}\left(\frac{1}{d^{1/4}}\right)$,
		\item $\forall j\in[d]$, $\frac{\left|R_{3j}^{(t+1)}\right|}{A_j}$ and $\frac{\left|R_{vj}^{(t+1)}\right|}{a^{(t+1)}A_j}$ are of order $\tilde{\mathcal{O}}(\epsilon_0)$.
	\end{enumerate}
\end{lemma}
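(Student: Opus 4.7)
The plan is to establish the four conclusions (A)--(D) at time $t+1$ in sequence, using equation~\eqref{eqn:W2W1_second_phase} as the central identity and invoking Lemmas~\ref{lemma:bound_R1_R2} and~\ref{lemma:bound_R2_R3} to supply the companion bounds on $R_1^{(t+1)}, R_2^{(t+1)}, R_3^{(t+1)}$ that were proved in parallel. The inductive hypotheses on $v^{(t)}/c^{(t)}$, $R_v^{(t)}/(a^{(t)}A)$, and $R_3^{(t)}/A$ at time $t$ are used throughout.

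For (A), I would rewrite $E_j^{(t)} = (W_2^{(t)}W_1^{(t)})_j - A_j$ via eq.~\eqref{eqn:W2W1_second_phase} as
\[
E_j^{(t)} = (d^{(t)}a^{(t)}-1)A_j + d^{(t)}R_{vj}^{(t)} + R_{3j}^{(t)}.
\]
The two hypotheses $|R_{vj}^{(t)}|/(a^{(t)}A_j), |R_{3j}^{(t)}|/A_j = \tilde{\mathcal{O}}(\epsilon_0)$ collapse this to $E_j^{(t)} = A_j(d^{(t)}a^{(t)}-1)\bigl(1 + \tilde{\mathcal{O}}(\epsilon_0/|d^{(t)}a^{(t)}-1|)\bigr)$. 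Since $t<T_2$ forces $E_j^{(t)} \le -\sqrt{\epsilon_0}$ for every $j$ and $A_j=\Theta(1)$, this gives $|d^{(t)}a^{(t)}-1| = \Omega(\sqrt{\epsilon_0})$; plugging back yields $E_j^{(t)}/E_i^{(t)} = (A_j/A_i)(1 + \tilde{\mathcal{O}}(\sqrt{\epsilon_0})) = \Theta(1)$. Part (B) then follows by dividing the updates from Lemma~\ref{lemma:v_second_phase} to get
\[
\frac{v_j^{(t+1)}}{c^{(t+1)}} = \frac{v_j^{(t)}/c^{(t)} - \eta_t E_j^{(t)}}{1 - \eta_t \sum_i E_i^{(t)} v_i^{(t)}/c^{(t)}}.
\]
Using $v_i^{(t)}/c^{(t)} = \Theta(1/\sqrt{d})$ and the ratio estimate from (A), the denominator's correction equals $\Theta(\sqrt{d})$ times $-\eta_t E_j^{(t)}$, so numerator and denominator increase in proportion and $v_j^{(t+1)}/c^{(t+1)}$ stays $\Theta(1/\sqrt{d})$.

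For (C) I would substitute Lemma~\ref{lemma:bound_R1_R2} into the rank-1 decomposition to obtain $|w_{2i}^{(t+1)}| \le (1+\tilde{\mathcal{O}}(\epsilon_0))\, c^{(t+1)}|u_i^{(T_1)}|$ and $|W_1^{(t+1)}[i,j]| \le (1+\tilde{\mathcal{O}}(\epsilon_0))\,|u_i^{(T_1)}|\, v_j^{(t+1)}$, and then bound the right-hand sides using (B) together with the overshoot-preventing constraint $d^{(t)} a^{(t)} \le 1 + \tilde{\mathcal{O}}(\sqrt{\epsilon_0})$ that $t<T_2$ imposes through $W_2^{(t)}W_1^{(t)} \approx d^{(t)}a^{(t)}A$. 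Because $d^{(t)} = c^{(t)}\|\boldsymbol{u}^{(T_1)}\|_2^2(1+\tilde{\mathcal{O}}(\epsilon_0))$, this controls the product $c^{(t+1)}|u_i^{(T_1)}|$ in terms of $|u_i^{(T_1)}|/(a^{(t)}\|\boldsymbol{u}^{(T_1)}\|_2^2)$, which Lemma~\ref{lemma:u_structure} estimates coordinatewise. For (D), the $R_{3}$ part is already delivered by Lemma~\ref{lemma:bound_R2_R3}; for $R_{v}$ I would divide the recursion $R_v^{(t+1)} = (1-\eta_t c^{(t)} d^{(t)})R_v^{(t)} - \eta_t c^{(t)} R_3^{(t)}$ by $a^{(t+1)}A_j$, exploit $a^{(t+1)} = (1-\eta_t c^{(t)} d^{(t)})a^{(t)} + \eta_t c^{(t)}$ so that the linear term inherits the inductive ratio, and bound the driving term $\eta_t c^{(t)}|R_{3j}^{(t)}|/(a^{(t+1)}A_j)$ using the $R_3/A$ bound together with $a^{(t+1)} \ge a^{(T_1)} = \eta/(1-\beta)$.

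The main obstacle will be step (C). Establishing uniform $\mathcal{O}(d^{\pm 1/4})$ entrywise control requires simultaneously tracking the coupled growth of $c^{(t+1)}$ and $v_j^{(t+1)}$, ruling out blow-up at the heaviest coordinate of $\boldsymbol{u}^{(T_1)}$ (whose Gaussian tail from Lemma~\ref{lemma:u_structure} is $\Theta(\sqrt{\log d})$ above the typical scale), and carefully combining the overshoot constraint on $d^{(t)}a^{(t)}$ with the ratio estimates from (B). Once this is in hand, the remaining propagation of $\tilde{\mathcal{O}}(\epsilon_0)$ bounds in (D) is routine.
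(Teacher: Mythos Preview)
Your treatment of (A) and (B) is essentially the paper's: write $E_j^{(t)}$ via eq.~\eqref{eqn:W2W1_second_phase}, use $t<T_2$ to force $|d^{(t)}a^{(t)}-1|=\Omega(\sqrt{\epsilon_0})$, and then propagate the ratio $v_j/c$ through the coupled updates. No issue there.

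The real divergence is in (C). You flag it as the main obstacle and propose to control $c^{(t+1)}|u_i^{(T_1)}|$ by first bounding $c^{(t+1)}$ via $d^{(t)}a^{(t)}\le 1+\tilde{\mathcal{O}}(\sqrt{\epsilon_0})$ and then invoking Lemma~\ref{lemma:u_structure} to handle $|u_i^{(T_1)}|/\|\boldsymbol{u}^{(T_1)}\|_2^2$ coordinatewise, worrying about the $\sqrt{\log d}$ Gaussian tail. The paper bypasses all of this with a two-line trick: from eq.~\eqref{eqn:bound_c_u2_v} one has $c^{(t+1)}\|\boldsymbol{u}^{(T_1)}\|_2^2\,v_j^{(t+1)}\le\mathcal{O}(1)$, and multiplying by $c^{(t+1)}/v_j^{(t+1)}=\Theta(\sqrt{d})$ (respectively $v_j^{(t+1)}/c^{(t+1)}=\Theta(1/\sqrt{d})$) from (B) gives $(c^{(t+1)})^2\|\boldsymbol{u}^{(T_1)}\|_2^2\le\mathcal{O}(\sqrt{d})$ and $\|\boldsymbol{u}^{(T_1)}\|_2^2(v_j^{(t+1)})^2\le\mathcal{O}(1/\sqrt{d})$. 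Then the crude bound $|u_i^{(T_1)}|\le\|\boldsymbol{u}^{(T_1)}\|_2$ immediately yields $c^{(t+1)}|u_i^{(T_1)}|\le\mathcal{O}(d^{1/4})$ and $|u_i^{(T_1)}|v_j^{(t+1)}\le\mathcal{O}(d^{-1/4})$. No Gaussian tails, no coordinatewise estimates from Lemma~\ref{lemma:u_structure}; the $\ell_2$ norm absorbs everything.

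For (D) your outline is right in spirit but has two loose ends. First, you need $1-\eta_t c^{(t)}d^{(t)}>0$ for the ``linear term inherits the inductive ratio'' step to make sense; the paper checks this explicitly using $c^{(t)}d^{(t)}\le\mathcal{O}(\sqrt{d})$ (the time-$t$ analogue of the bound you just derived in (C)) and the smallness of $\eta$. Second, bounding the driving term via $a^{(t+1)}\ge a^{(T_1)}=\eta/(1-\beta)$ is too weak: it leaves you with $\eta_t c^{(t)}/a^{(T_1)}\cdot\tilde{\mathcal{O}}(\epsilon_0)$, and $c^{(t)}$ is not $\mathcal{O}(1)$. The paper instead uses the mediant inequality on the full recursion: since both $(1-\eta_t c^{(t)}d^{(t)})|R_{vj}^{(t)}|\le\tilde{\mathcal{O}}(\epsilon_0)\cdot(1-\eta_t c^{(t)}d^{(t)})a^{(t)}A_j$ and $\eta_t c^{(t)}|R_{3j}^{(t)}|\le\tilde{\mathcal{O}}(\epsilon_0)\cdot\eta_t c^{(t)}A_j$, with both denominators positive, the sum over the sum is again $\tilde{\mathcal{O}}(\epsilon_0)$, with no degradation.
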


By combining Lemma~\ref{lemma:bound_epsilon},~\ref{lemma:bound_R1_R2} and \ref{lemma:induction_main}, we can prove by induction that for all $T_1\le t \le\min\{T_2,T_3\}$, eq.~\eqref{eqn:bound_R1_R2} holds (which follows from Lemma~\ref{lemma:bound_R1_R2}),	and
\begin{equation}\label{eqn:GD_E_same_order}
	\forall i,j\in[d]:\frac{E_j^{(t)}}{E_i^{(t)}}=\Theta(1),
\end{equation}
which follows from the part (A) of Lemma~\ref{lemma:induction_main}. Now the only thing to verify is the base case, i.e. when $t=T_1$. More specifically, we want to prove that 1) $\forall i,j\in[d]:\left|w_{2i}^{(T_1)}\right|\le\mathcal{O}\left(d^{1/4}\right),\left|W_1^{(T_1)}[i,j]\right|\le\mathcal{O}\left(\frac{1}{d^{1/4}}\right)$ and that 2) $\forall j\in[d]:\frac{v_j^{(T_1)}}{c^{(T_1)}}=\Theta\left(\frac{1}{\sqrt{d}}\right)$, and that 3) $\frac{\left|R_{3j}^{(T_1)}\right|}{A_j}$ and $\frac{\left|R_{vj}^{(T_1)}\right|}{a^{(T_1)}A_j}$ are of order $\tilde{\mathcal{O}}(\epsilon_0)$. All of them can be verified by the proof in Section~\ref{sec:pf_GD_rank1} and the definition of $R_v^{(t)}, R_3^{(t)}$.

So far we have proved eq.~\eqref{eqn:bound_R1_R2} in Lemma~\ref{lemma:GD_converge_err}. Now let's prove when $t=\min\{T_2,T_3\}$, we have that $\left\|E^{(t)}\right\|^2_2=\mathcal{O}(\epsilon_0d)$.

If $\min\{T_2,T_3\}=T_3$, by Definition~\ref{def:converge_gd}, we have $\left\|E^{(t)}\right\|_2^2\le\epsilon$. If $\min\{T_2,T_3\}=T_2$, by Definition~\ref{def:almost_overshooting}, there exists $j\in[d]$ such that $E_j^{(t)}=-\Theta\left(\sqrt{\epsilon_0}\right)$. Combining with eq.~\eqref{eqn:GD_E_same_order} gives us $\forall i\in[d]:E_i^{(t)}=-\Theta\left(\sqrt{\epsilon_0}\right)$. Combining these two cases, we get that when $t=\min\{T_2,T_3\}$, $\left\|E^{(t)}\right\|^2_2\le\max\{\epsilon, \Theta\left(\epsilon_0d\right)\}=\mathcal{O}\left(\epsilon_0d\right)$.

\subsection{Proof of Lemma~\ref{lemma:v_second_phase}}
We prove this lemma by induction. The base case ($t=T_1$) of $\boldsymbol{v}^{(t)}$ is verified by eq.~\eqref{eqn:second_phase_t_0}.

Suppose at time $t$, $\boldsymbol{v}^{(t)T}=a^{(t)}A+R_v^{(t)T}$, then by eq.~\ref{eqn:weights_second_phase}, we have that
\begin{align*}
	W_2^{(t)}W_1^{(t)}&=\left(c^{(t)}\boldsymbol{u}^{(T_1)T}+R_2^{(t)T}\right)\left(\boldsymbol{u}^{(T_1)}\boldsymbol{v}^{(t)T}+R_1^{(t)}\right)\\
	&=\left(c^{(t)}\left\|\boldsymbol{u}^{(T_1)}\right\|^2+R_2^{(t)T}\boldsymbol{u}^{(T_1)}\right)\boldsymbol{v}^{(t)}+c^{(t)}\boldsymbol{u}^{(T_1)T}R_1^{(t)}+R_2^{(t)T}R_1^{(t)}\\
	&=d^{(t)}\boldsymbol{v}^{(t)T}+R_3^{(t)T}\\
	&=d^{(t)}a^{(t)}A+d^{(t)}R_v^{(t)T}+R_3^{(t)T},
\end{align*}
where $d^{(t)}:=c^{(t)}\left\|\boldsymbol{u}^{(T_1)}\right\|^2+R_2^{(t)T}\boldsymbol{u}^{(T_1)},\quad R_3^{(t)T}:=c^{(t)}\boldsymbol{u}^{(T_1)T}R_1^{(t)}+R_2^{(t)T}R_1^{(t)}$. That gives us
\begin{align*}
	\boldsymbol{v}^{(t+1)T}&=\boldsymbol{v}^{(t)T}-\eta_t c^{(t)}E^{(t)}\\
	&=a^{(t)}A+R_v^{(t)T}-\eta_t c^{(t)}\left(d^{(t)}a^{(t)}A+d^{(t)}R_v^{(t)T}+R_3^{(t)T}-A\right)\\
	&=\left(\left(1-\eta_t c^{(t)}d^{(t)}\right)a^{(t)}+\eta_t c^{(t)}\right)A+\left(1-\eta_t c^{(t)}d^{(t)}\right)R_v^{(t)T}-\eta_t c^{(t)}R_3^{(t)T}\\
	&:=a^{(t+1)}A+R_v^{(t+1)T}.
\end{align*}
Therefore we have proved by induction that for $t$ in the second phase, $\boldsymbol{v}^{(t)}=a^{(t)}A+R_v^{(t)T}$. The above steps also proved eq.~\eqref{eqn:W2W1_second_phase}.

\subsection{Proof of Lemma~\ref{lemma:bound_epsilon}}
Write $r_1^{(t)}=q_{11}^{(t)}+q_{12}^{(t)}$ where we have $q_{11}^{(t)}=\eta\sum_{\tau=0}^t\beta^{t-\tau}\left(W_2^{(t)T}E^{(t)}-W_2^{(\tau)T}E^{(\tau)}\right)$,\\ $q_{12}^{(t)}=-\eta\sum_{\tau=0}^t\beta^{t-\tau}Dg_1^{(\tau)}$. Write $r_2^{(t)}=q_{21}^{(t)}+q_{22}^{(t)}$ where $q_{22}^{(t)}=-\eta\sum_{\tau=0}^t\beta^{t-\tau}Dg_2^{(\tau)}$, $q_{21}^{(t)}=\eta\sum_{\tau=0}^t\beta^{t-\tau}\left(E^{(t)}W_1^{(t)T}-E^{(\tau)}W_1^{(\tau)T}\right)$.

Let's first bound $\left|q_{11}^{(t)}[i,j]\right|$ and $\left|q_{21,i}^{(t)}\right|$. By definition of $T_2$, we know that for $T_1\le\tau\le t$, $\forall i\in[d]:\left|E_i^{(\tau)}\right|=\mathcal{O}(1)$. Then we have for all $i,j\in[d]$,
\begin{equation}\label{eqn:bound_delta_W}
	\begin{aligned}
		&\left|W_1^{(\tau+1)}[i,j]-W_1^{(\tau)}[i,j]\right|\le\eta\sum_{k=0}^{\tau}\beta^{\tau-k}\left|w_{2i}^{(k)}E_j^{(k)}\right|\le\eta\sum_{k=0}^{\tau}\beta^{\tau-k}\mathcal{O}\left(d^{1/4}\right)=\eta\mathcal{O}\left(d^{1/4}\right),\\
		&\left|w_{2i}^{(\tau+1)}-w_{2i}^{(\tau)}\right|\le\eta\sum_{k=0}^{\tau}\beta^{\tau-k}\sum_{j=1}^d\left|E_j^{(k)}W_1^{(k)}[j,i]\right|\le\eta\sum_{k=0}^{\tau}\beta^{\tau-k}\mathcal{O}\left(d^{3/4}\right)=\eta\mathcal{O}\left(d^{3/4}\right).
	\end{aligned} 
\end{equation}
Note that
\begin{align*}
	\left|E_j^{(\tau+1)}-E_j^{(\tau)}\right|=&\sum_{i=1}^d\left(\left(w_{2i}^{(\tau+1)}-w_{2i}^{(\tau)}\right)W_1^{(\tau)}[i,j]+w_{2i}^{(\tau)}\left(W_1^{(\tau+1)}[i,j]-W_1^{(\tau)}[i,j]\right)\right)\\
	&+\sum_{i=1}^d\left(\left(w_{2i}^{(\tau+1)}-w_{2i}^{(\tau)}\right)\left(W_1^{(\tau+1)}[i,j]-W_1^{(\tau)}[i,j]\right)\right).
\end{align*}
We can further get that for $\forall j\in[d]$,
\begin{align*}
	\left|E_j^{(\tau+1)}-E_j^{(\tau)}\right|&\le\eta d\mathcal{O}\left(d^{3/4}\right)\mathcal{O}\left(d^{-1/4}\right)+\eta d\mathcal{O}\left(d^{1/4}\right)\mathcal{O}\left(d^{1/4}\right)+\eta^2 d\mathcal{O}\left(d^{3/4}\right)\mathcal{O}\left(d^{1/4}\right)\\
	&=\mathcal{O}\left(\eta d^{3/2}+\eta^2d^2\right)=\mathcal{O}\left(\eta d^{3/2}\right).
\end{align*}
Combining the above inequalities gives us $\forall i,j\in[d]$,
\begin{align*}
	\left|q_{11}^{(t)}[i,j]\right|&=\eta\left|\sum_{\tau=0}^t\beta^{t-\tau}\left(w_{2i}^{(t)}E_j^{(t)}-w_{2i}^{(\tau)}E_j^{(\tau)}\right)\right|\\
	&\le \eta\sum_{\tau=0}^t\beta^{t-\tau}\left(\left|w_{2i}^{(t)}-w_{2i}^{(\tau)}\right|\left|E_j^{(t)}\right|+\left|w_{2i}^{(\tau)}\right|\left|E_j^{(t)}-E_j^{(\tau)}\right|\right)\\
	&\le\eta^2\sum_{\tau=0}^t\beta^{t-\tau}(t-\tau)\left(\mathcal{O}\left(d^{3/4}\right)\mathcal{O}(1)+\mathcal{O}\left(d^{1/4}\right)\mathcal{O}\left(d^{3/2}\right)\right)=\mathcal{O}\left(\eta^2d^{7/4}\right),
\end{align*}
\begin{align*}
	\left|q_{12,i}^{(t)}\right|&=\eta\left|\sum_{\tau=0}^t\beta^{t-\tau}\sum_{j=1}^d\left(E_j^{(t)}W_{1}^{(t)}[i,j]-E_j^{(\tau)}W_{1}^{(\tau)}[i,j]\right)\right|\\
	&\le \eta\sum_{\tau=0}^t\beta^{t-\tau}\sum_{j=1}^d\left(\left|E_j^{(t)}\right|\left|W_{1}^{(t)}[i,j]-W_{1}^{(\tau)}[i,j]\right|+\left|E_j^{(t)}-E_j^{(\tau)}\right|\left|W_{1}^{(\tau)}[i,j]\right|\right)\\
	&\le\eta^2d\sum_{\tau=0}^t\beta^{t-\tau}(t-\tau)\left(\mathcal{O}(1)\mathcal{O}\left(d^{1/4}\right)+\mathcal{O}\left(d^{3/2}\right)\mathcal{O}\left(d^{-1/4}\right)\right)=\mathcal{O}\left(\eta^2d^{9/4}\right).
\end{align*}
Next let's bound  $\left|q_{12}^{(t)}[i,j]\right|$ and $\left|q_{22,i}^{(t)}\right|$. By the assumption of this lemma and the analysis before $T_1$, we know that for all $\tau\le t$, the $M_1^{(\tau)},M_2^{(\tau)}$ in Lemma~\ref{lemma:bound_randomness} are upper bounded by $\mathcal{O}\left(\frac{1}{d^{1/4}}\right)$ and $\mathcal{O}\left(d^{1/4}\right)$, respectively. In the theorem we consider the training period before $T_{\text{SGD},2}$ so the time $T$ in Lemma~\ref{lemma:bound_randomness} is set as $T_{\text{SGD},2}$. In the following sections, we will prove that $T_{\text{SGD},2}\le\mathcal{O}\left(\frac{d^{\alpha}\log(\sqrt{d/\epsilon})}{\eta}\right)$. Then by Lemma~\ref{lemma:bound_randomness}, we have with probability at least $1-\frac{1}{d}$, for $\forall \tau\le t$ and $\forall i,j\in[d]$,
\begin{align*}
	\left|Dg_1^{(\tau)}[i,j]\right|=\left|\tilde g_1^{(\tau)}[i,j]-g_1^{(\tau)}[i,j]\right|&\le\mathcal{O}\left(d^{\frac{13}{4}}\sigma\sqrt{\frac{d^{\alpha+1}}{\eta}\log\frac{d}{\epsilon}}\right)+\mathcal{O}\left(d^{\frac{1}{4}}\sigma\sqrt{\frac{d^{\alpha+2}}{\eta}\log\frac{d}{\epsilon}}\right)\\
	&\le\tilde{\mathcal{O}}\left(d^{\frac{13}{4}}\sigma\sqrt{\frac{d^{\alpha+1}}{\eta}}\right),
\end{align*}
\begin{align*}
	\left|Dg_{2i}^{(\tau)}\right|=\left|\tilde g_{2i}^{(\tau)}-g_{2i}^{(\tau)}\right|&\le\mathcal{O}\left(d^{\frac{15}{4}}\sigma\sqrt{\frac{d^{\alpha+1}}{\eta}\log\frac{d}{\epsilon}}\right)+\mathcal{O}\left(d^{\frac{3}{4}}\sigma\sqrt{\frac{d^{\alpha+2}}{\eta}\log\frac{d}{\epsilon}}\right)\\
	&=\tilde{\mathcal{O}}\left(d^{\frac{15}{4}}\sigma\sqrt{\frac{d^{\alpha+1}}{\eta}}\right).
\end{align*}
By picking $\sigma\le\frac{\eta^{3/2}}{d^{\alpha/2+1}}$, we have $\left|Dg_1^{(\tau)}[i,j]\right|\le\eta\tilde{\mathcal{O}}\left(d^{\frac{11}{4}}\right)$ and $\left|Dg_{2i}^{(\tau)}\right|\le\eta\tilde{\mathcal{O}}\left(d^{\frac{13}{4}}\right)$, which yields
\begin{align*}
	\left|q_{12}^{(t)}[i,j]\right|&\le\eta\sum_{\tau=0}^t\beta^{t-\tau}\left|Dg_1^{(\tau)}[i,j]\right|\le\tilde{\mathcal{O}}\left(\eta^2d^{\frac{11}{4}}\right),\\
	\left|q_{22,i}^{(t)}\right|&\le\eta\sum_{\tau=0}^t\beta^{t-\tau}\left|Dg_{2i}^{(\tau)}\right|\le\tilde{\mathcal{O}}\left(\eta^2d^{\frac{13}{4}}\right).
\end{align*}
Combining the above bounds, we get that $\forall i,j\in[d]$,
\[
\left|r_{1}^{(t)}[i,j]\right|\le\tilde{\mathcal{O}}\left(\eta^2d^{\frac{11}{4}}\right),\quad\left|r_{2i}^{(t)}\right|\le\tilde{\mathcal{O}}\left(\eta^2d^{\frac{13}{4}}\right).
\]
By the analysis in Section~\ref{sec:pf_GD_rank1}, we know that at time $T_1$, for some $i_0\in[d]$, $c^{(T_1)}\left|u_{i_0}^{(T_1)}\right|=\Theta\left(\frac{1}{d^{\frac{\alpha}{2}}}\right)$, and for $\forall i,j\in[d]$, we have $c^{(T_1)}\left|u_{i}^{(T_1)}\right|=\tilde{\Omega}\left(\frac{1}{d^{\frac{3\alpha}{4}}}\right)$ and $\left|u_{i}^{(T_1)}\right|v_j^{(T_1)}=\tilde{\Omega}\left(\frac{1}{d^{\frac{3\alpha}{4}+\frac{1}{2}}}\right)$, which gives us $\forall i,j\in[d]$,
\[
\frac{\left|r_1^{(t)}[i,j]\right|}{\left|u_{i}^{(T_1)}\right|v_{j}^{(t)}}\le\frac{\left|r_1^{(t)}[i,j]\right|}{\left|u_{i}^{(T_1)}\right|v_{j}^{(T_1)}}=\tilde{\mathcal{O}}\left(\eta^2d^{\frac{3}{4}\alpha+\frac{13}{4}}\right),\frac{\left|r_{2i}^{(t)}\right|}{c^{(t)}\left|u_{i}^{(T_1)}\right|}\le\frac{\left|r_{2i}^{(t)}\right|}{c^{(T_1)}\left|u_{i}^{(T_1)}\right|}=\tilde{\mathcal{O}}\left(\eta^2d^{\frac{3}{4}\alpha+\frac{13}{4}}\right).
\]
Hence we get the bound $\forall i\in[d]:\epsilon_i^{(t)}\le\tilde{\mathcal{O}}\left(\eta^2d^{\frac{3}{4}\alpha+\frac{13}{4}}\right)$.
\subsection{Proof of Lemma~\ref{lemma:bound_R1_R2}}
Let's first try to bound the length of $\min\{T_2,T_3\}$. More formally, we prove that under the conditions of Lemma~\ref{lemma:bound_epsilon} and pick $\eta\le\mathcal{O}\left(\frac{\epsilon}{d^{\frac{7\alpha}{4}+4}}\right)$, we have that $\min\{T_2,T_3\}\le\mathcal{O}\left(\frac{d^{\alpha}\log(\sqrt{d/\epsilon})}{\eta}\right)$.

Under the conditions of Lemma~\ref{lemma:bound_epsilon}, we know that
\begin{align*}
	\forall j\in[d]:\quad \left|\left(r_2^{(t)}W_1^{(t)}\right)_j\right|\le\sum_{i=1}^d\left|r_{2i}^{(t)}W_{1}^{(t)}[i,j]\right|=\mathcal{O}\left(\eta^2d^4\right),\\
	\left|\left(W_2^{(t)}r_1^{(t)}\right)_j\right|\le\sum_{i=1}^d\left|w_{2i}^{(t)}r_1^{(t)}[i,j]\right|=\mathcal{O}\left(\eta^2d^4\right).
\end{align*}
Combining with eq.~\eqref{eqn:bound_delta_W}, we get
\begin{align*}
	&E^{(t+1)}\\
	=&E^{(t)}+\left(W_2^{(t+1)}-W_2^{(t)}\right)W_1^{(t)}+W_2^{(t)}\left(W_1^{(t+1)}-W_1^{(t)}\right)+\left(W_2^{(t+1)}-W_2^{(t)}\right)\left(W_1^{(t+1)}-W_1^{(t)}\right)\\
	=&E^{(t)}-\eta_tE^{(t)}W_1^{(t)T}W_1^{(t)}+r_2^{(t)}W_1^{(t)}-\eta_tW_2^{(t)}W_2^{(t)T}E^{(t)}+W_2^{(t)}r_1^{(t)}+\mathcal{O}\left(\eta^2d\right)\\
	=&E^{(t)}\left(I-\eta_tW_1^{(t)T}W_1^{(t)}-\eta_t\left\|W_2^{(t)}\right\|_2^2I\right)+\mathcal{O}\left(\eta^2d^4\right)+\mathcal{O}\left(\eta^2d^4\right)+\mathcal{O}\left(\eta^2d\right).
\end{align*}
Then we have
\begin{align*}
	\left\|E^{(t+1)}\right\|_2&\le\left\|E^{(t)}\right\|_2\left\|I-\eta_tW_1^{(t)T}W_1^{(t)}-\eta_t\left\|W_2^{(t)}\right\|_2^2I\right\|_2+\mathcal{O}\left(\eta^2d^4\right)\\
	&\le \left(1-\eta_t\left\|W_2^{(t)}\right\|_2^2\right)\left\|E^{(t)}\right\|_2+\mathcal{O}\left(\eta^2d^4\right).
\end{align*}
When $T_1\le t<T_2$, we have proved that $c^{(t)}$ is increasing over time in Section~\ref{sec:pf_GD_rank1_phase2}, which implies that $\left\|W_2^{(t)}\right\|_2^2\ge C\left\|W_2^{(T_1)}\right\|_2^2$ since $c^{(t)}\boldsymbol{u}^{(T_1)T}$ is the leading term of $W_2^{(t)}$. Combining with $\eta_t\ge\eta$ gives us
\begin{align*}
	\left\|E^{(t+1)}\right\|_2&\le \left(1-\eta C\left\|W_2^{(T_1)}\right\|_2^2\right)\left\|E^{(t)}\right\|_2+\mathcal{O}\left(\eta^2d^4\right),\\
	\Rightarrow\quad \left\|E^{(t)}\right\|_2&\le\frac{\mathcal{O}\left(\eta^2d^4\right)}{\eta C\left\|W_2^{(T_1)}\right\|_2^2}+\left(1-\eta C\left\|W_2^{(T_1)}\right\|_2^2\right)^{t-T_1}\left(\left\|E^{(T_1)}\right\|_2-\frac{\mathcal{O}\left(\eta^2d^4\right)}{\eta C\left\|W_2^{(T_1)}\right\|_2^2}\right)\\
	&\overset{(i)}{\le}\mathcal{O}\left(\frac{\eta d^4}{\left\|W_2^{(T_1)}\right\|_2^2} \right)+\exp\left(-\eta C\left\|W_2^{(T_1)}\right\|_2^2(t-T_1)\right)\mathcal{O}\left(\sqrt{d}\right),
\end{align*}
where $(i)$ uses $\left\|E^{(T_1)}\right\|_2=\mathcal{O}(\sqrt{d})$. By picking $\eta\le\mathcal{O}\left(\frac{\epsilon}{d^{\frac{7\alpha}{4}+4}}\right)$ and noticing that $\left\|W_2^{(T_1)}\right\|_2^2\ge\Omega\left(\frac{1}{d^{\alpha}}\right)$, we have $\frac{\eta d^4}{\left\|W_2^{(T_1)}\right\|_2^2}<\frac{\sqrt{\epsilon}}{2}$. Hence when $t-T_1\ge\Theta\left(\frac{\log\left(\sqrt{d/\epsilon}\right)}{\eta\left\|W_2^{(T_1)}\right\|_2^2}\right)$, we have that $\left\|E^{(t)}\right\|_2\le \sqrt{\epsilon}$, i.e. $\left\|E^{(t)}\right\|^2_2\le \epsilon$.

That means after at most $\mathcal{O}\left(\frac{\log\left(\sqrt{d/\epsilon}\right)}{\eta\left\|W_2^{(T_1)}\right\|_2^2}\right)$ steps from $T_1$, either $t\ge T_2$, or we have $\left\|E^{(t)}\right\|^2_2\le \epsilon$. In other words, $\min\{T_2,T_3\}\le T_1+\mathcal{O}\left(\frac{\log\left(\sqrt{d/\epsilon}\right)}{\eta\left\|W_2^{(T_1)}\right\|_2^2}\right)\le\mathcal{O}\left(\frac{d^{\alpha}\log\left(\sqrt{d/\epsilon}\right)}{\eta}\right)$.

Now we are ready to bound eq.~\ref{eqn:bound_R1_R2_1}.

Combining $\min\{T_2,T_3\}\le\mathcal{O}\left(\frac{d^{\alpha}\log(\sqrt{d/\epsilon})}{\eta}\right)$ and Lemma~\ref{lemma:bound_epsilon} yields that for $t+1\le\min\{T_2,T_3\}$,
\[
\forall i\in[d]: \sum_{\tau=T_1}^{t+1}\epsilon_i^{(\tau)}\le\left(t+1-T_1\right)\tilde{\mathcal{O}}\left(\eta^2d^{\frac{3}{4}\alpha+\frac{13}{4}}\right)\le\tilde{\mathcal{O}}\left(\eta d^{\frac{7}{4}\alpha+\frac{13}{4}}\log\sqrt{\frac{d}{\epsilon}}\right)=\tilde{\mathcal{O}}\left(\epsilon\log\sqrt{\frac{d}{\epsilon}}\right).
\]
Lemma~\ref{lemma:GD_rank1} tells us that $\delta_{i}=\tilde{\mathcal{O}}\left(\frac{1}{d^{\frac{1}{4}\alpha-1}}\right)$. Substituting these bounds into eq.~\eqref{eqn:bound_R1_R2_1} completes the proof.

\subsection{Proof of Lemma~\ref{lemma:bound_R2_R3}}
The proof in Section~\ref{sec:pf_GD_rank1_phase2} tells us that for $T_1\le\tau\le T_2$, $c^{(\tau)}>0,\forall j\in[d]:v_j^{(\tau)}>0$, which gives us $0\le\frac{\left|R_2^{(t+1)T}\boldsymbol{u}^{(T_1)}\right|}{c^{(t+1)}\left\|\boldsymbol{u}^{(T_1)}\right\|^2}$ and $0\le\frac{\left|R^{(t+1)}_{3j}\right|}{c^{(t+1)}\left\|\boldsymbol{u}^{(T_1)}\right\|^2v_j^{(t+1)}}$. By Lemma~\ref{lemma:bound_R1_R2}, we have that
\[
\forall1\le i,j\le d:\quad 0\le\frac{\left|R_1^{(t+1)}[i,j]\right|}{\left|u_{i}^{(T_1)}\right|v_{j}^{(t+1)}}\le \tilde{\mathcal{O}}(\epsilon_0),\quad 0\le\frac{\left|R_{2i}^{(t+1)}\right|}{c^{(t+1)}\left|u_{i}^{(T_1)}\right|}\le \tilde{\mathcal{O}}(\epsilon_0),
\]
which gives us
\[
\frac{\left|R_2^{(t+1)T}\boldsymbol{u}^{(T_1)}\right|}{c^{(t+1)}\left\|\boldsymbol{u}^{(T_1)}\right\|^2}\le \frac{\sum_{i=1}^d\left|u_{i}^{(T_1)}\right|\left|R_{2i}^{(t+1)}\right|}{c^{(t+1)}\sum_{i=1}^d\left|u_{i}^{(T_1)}\right|^2}\le \frac{\tilde{\mathcal{O}}(\epsilon_0)c^{(t+1)}\sum_{i=1}^d\left|u_{i}^{(T_1)}\right|^2}{c^{(t+1)}\sum_{i=1}^d\left|u_{i}^{(T_1)}\right|^2}=\tilde{\mathcal{O}}(\epsilon_0).
\]
Lemma~\ref{lemma:v_second_phase} tells us that
\begin{align*}
	R_3^{(t+1)T}=c^{(t+1)}\boldsymbol{u}^{(T_1)T}R_1^{(t+1)}+R_2^{(t+1)T}R_1^{(t+1)}.
\end{align*}
And we have that
\begin{align*}
	\frac{\left|\left(c^{(t+1)}\boldsymbol{u}^{(T_1)T}R_1^{(t+1)}\right)_j\right|}{c^{(t+1)}\left\|\boldsymbol{u}^{(T_1)}\right\|^2v^{(t+1)}_j}&\le \frac{c^{(t+1)}\sum_{i=1}^d\left|u_{i}^{(T_1)}\right|\left|R_1^{(t+1)}[i,j]\right|}{c^{(t+1)}\sum_{i=1}^d\left|u_{i}^{(T_1)}\right|^2v^{(t+1)}_j}\le \frac{\tilde{\mathcal{O}}(\epsilon_0) c^{(t+1)}\sum_{i=1}^d\left|u_{i}^{(T_1)}\right|^2v^{(t+1)}_j}{c^{(t+1)}\sum_{i=1}^d\left|u_{i}^{(T_1)}\right|^2v^{(t+1)}_j}\\&=\tilde{\mathcal{O}}(\epsilon_0),\\
	\frac{\left|\left(R_2^{(t+1)T}R_1^{(t+1)}\right)_j\right|}{c^{(t+1)}\left\|\boldsymbol{u}^{(T_1)}\right\|^2v^{(t+1)}_j}&\le\frac{\sum_{i=1}^d\left|R_{2i}^{(t+1)}\right|\left|R_1^{(t+1)}[i,j]\right|}{c^{(t+1)}\sum_{i=1}^d\left|u_{i}^{(T_1)}\right|^2v^{(t+1)}_j}\le \frac{\tilde{\mathcal{O}}\left(\epsilon_0^2\right)c^{(t+1)}\sum_{i=1}^d\left|u_{i}^{(T_1)}\right|^2v^{(t+1)}_j}{c^{(t+1)}\sum_{i=1}^d\left|u_{i}^{(T_1)}\right|^2v^{(t+1)}_j}\\&=\tilde{\mathcal{O}}\left(\epsilon_0^2\right).
\end{align*}
Therefore
\[
\frac{\left|R^{(t+1)}_{3j}\right|}{c^{(t+1)}\left\|\boldsymbol{u}^{(T_1)}\right\|^2v_j^{(t+1)}}\le\frac{\left|\left(c^{(t+1)}\boldsymbol{u}^{(T_1)T}R_1^{(t+1)}\right)_j\right|}{c^{(t+1)}\left\|\boldsymbol{u}^{(T_1)}\right\|^2v^{(t+1)}_j}+\frac{\left|\left(R_2^{(t+1)T}R_1^{(t+1)}\right)_j\right|}{c^{(t+1)}\left\|\boldsymbol{u}^{(T_1)}\right\|^2v^{(t+1)}_j}\le\tilde{\mathcal{O}}(\epsilon_0).
\]
By Lemma~\ref{lemma:v_second_phase},
\[
W_2^{(t+1)}W_1^{(t+1)}=c^{(t+1)}\left\|\boldsymbol{u}^{(T_1)}\right\|^2\boldsymbol{v}^{(t+1)T}+R_2^{(t+1)T}\boldsymbol{u}^{(T_1)}\boldsymbol{v}^{(t+1)T}+R_3^{(t+1)T}.
\]
Then we have that
\begin{equation}\label{eqn:W2W1_sec_phase}
	\forall j\in[d]:\quad \left(W_2^{(t+1)}W_1^{(t+1)}\right)_j=c^{(t+1)}\left\|\boldsymbol{u}^{(T_1)}\right\|^2v_j^{(t+1)}\left(1+e_j^{(t+1)}\right),\quad \text{where }\left|e_j^{(t+1)}\right|\le \tilde{\mathcal{O}}(\epsilon_0).
\end{equation}
Since $t<T_2$, we have $\forall j\in[d]:\frac{\left(W_2^{(t+1)}W_1^{(t+1)}\right)_j}{A_j}=\mathcal{O}(1)$, which yields
\begin{equation}\label{eqn:bound_c_u2_v}
	0\le\frac{c^{(t+1)}\left\|\boldsymbol{u}^{(T_1)}\right\|^2v_j^{(t+1)}}{A_i}\le\mathcal{O}(1),
\end{equation}
which proves eq.~\eqref{eqn:bound_R3}, since $\forall j\in[d]:\quad 0\le\frac{\left|R^{(t+1)}_{3j}\right|}{c^{(t+1)}\left\|\boldsymbol{u}^{(T_1)}\right\|^2v_j^{(t+1)}}\le\tilde{\mathcal{O}}(\epsilon_0)$.

\subsection{Proof of Lemma~\ref{lemma:induction_main}}
(A) Under the conditions of Lemma~\ref{lemma:bound_epsilon} and pick $\eta\le\mathcal{O}\left(\frac{\epsilon}{d^{\frac{7\alpha}{4}+4}}\right)$, we can apply the technique when proving eq.~\eqref{eqn:W2W1_sec_phase} to show that eq.~\eqref{eqn:W2W1_sec_phase} also holds at time $t$. Since $\frac{\left|R_{vj}^{(t)}\right|}{a^{(t)}A_j}\le\tilde{\mathcal{O}}(\epsilon_0)$, we get that
\[
v^{(t)}_j=a^{(t)}A_j+R_{vj}^{(t)}=a^{(t)}A_j\left(1+e_{vj}^{(t)}\right),\quad \text{where }\left|e_{vj}^{(t)}\right|\le\tilde{\mathcal{O}}(\epsilon_0).
\]
Substituting into the time $t$ version of eq.\eqref{eqn:W2W1_sec_phase} yields
\[
\forall j\in[d]:\quad \left(W_2^{(t)}W_1^{(t)}\right)_j=a^{(t)}c^{(t)}\left\|\boldsymbol{u}^{(T_1)}\right\|^2A_j\left(1+\tilde{e}_j^{(t)}\right),\quad \text{where }\left|\tilde{e}_j^{(t)}\right|\le \tilde{\mathcal{O}}(\epsilon_0),
\]
That gives us
\[
\forall j\in[d]:E_j^{(t)}=A_j\left(a^{(t)}c^{(t)}\left\|\boldsymbol{u}^{(T_1)}\right\|^2-1+a^{(t)}c^{(t)}\left\|\boldsymbol{u}^{(T_1)}\right\|^2\tilde{e}_j^{(t)}\right).
\]
Since $t<T_2$, we have $E_j^{(t)}<-\sqrt{\epsilon_0}$. Combining with $A_j=\Theta(1)$, gives us $a^{(t)}c^{(t)}\left\|\boldsymbol{u}^{(T_1)}\right\|^2-1=-\Omega\left(\sqrt{\epsilon_0}\right)$. Then we can rewrite $E_j^{(t)}$ as $\forall j\in[d]$,
\begin{align*}
	E_j^{(t)}&=A_j\left(a^{(t)}c^{(t)}\left\|\boldsymbol{u}^{(T_1)}\right\|^2-1\right)\left(1+\frac{a^{(t)}c^{(t)}\left\|\boldsymbol{u}^{(T_1)}\right\|^2}{a^{(t)}c^{(t)}\left\|\boldsymbol{u}^{(T_1)}\right\|^2-1}\tilde{e}_j^{(t)}\right)\\
	&:=A_j\left(a^{(t)}c^{(t)}\left\|\boldsymbol{u}^{(T_1)}\right\|^2-1\right)\left(1+e_{Ej}^{(t)}\right),
\end{align*}
where $\left|e_{Ej}^{(t)}\right|=\tilde{\mathcal{O}}(\sqrt{\epsilon_0})$. Hence $\forall i,j\in[d]:\frac{E_j^{(t)}}{E_i^{(t)}}=\Theta(1)$.

(B) Note that we assume $\forall j\in[d]:\quad\frac{v_j^{(t)}}{c^{(t)}}=\Theta\left(\frac{1}{\sqrt{d}}\right)$, then we have for $j\in[d]$,
\begin{align*}
	\frac{-E^{(t)}\boldsymbol{v}^{(t)}}{c^{(t)}\left(-E_j^{(t)}\right)}&=\frac{\sum_{i=1}^d\left(-E_i^{(t)}\right)v_i^{(t)}}{c^{(t)}\left(-E_j^{(t)}\right)}=\sum_{i=1}^d\frac{E_i^{(t)}}{E_j^{(t)}}\cdot\frac{v_i^{(t)}}{c^{(t)}}=\sum_{i=1}^d\Theta\left(\frac{1}{\sqrt{d}}\right)=\Theta\left(\sqrt{d}\right),\\
	\Rightarrow\quad \frac{c^{(t)}\left(-E_j^{(t)}\right)}{-E^{(t)}\boldsymbol{v}^{(t)}}&=\Theta\left(\frac{1}{\sqrt{d}}\right).
\end{align*}
Then for $t+1$, we have that for $j\in[d]$,
\begin{align*}
	\frac{v_j^{(t+1)}}{c^{(t+1)}}=\frac{v_j^{(t)}+\eta_t c^{(t)}\left(-E_j^{(t)}\right)}{c^{(t)}+\eta_t \left(-E^{(t)}\right)\boldsymbol{v}^{(t)}}=\Theta\left(\frac{1}{\sqrt{d}}\right).
\end{align*}
(C) Combining eq.~\eqref{eqn:bound_c_u2_v} and $\forall j\in[d]:A_j=\Theta(1)$, we know that
\[
c^{(t+1)}\left\|\boldsymbol{u}^{(T_1)}\right\|^2v_j^{(t+1)}\le\mathcal{O}(1),
\]
which yields $\forall j\in[d]$,
\begin{equation}\label{eqn:bound_c2_u2}
	\left\|\boldsymbol{u}^{(T_1)}\right\|^2\left(v_j^{(t+1)}\right)^2\le\frac{v_j^{(t+1)}}{c^{(t+1)}}\mathcal{O}(1)=\mathcal{O}\left(\frac{1}{\sqrt{d}}\right),\left(c^{(t+1)}\right)^2\left\|\boldsymbol{u}^{(T_1)}\right\|^2\le\frac{c^{(t+1)}}{v_j^{(t+1)}}\mathcal{O}(1)=\mathcal{O}\left(\sqrt{d}\right).
\end{equation}
Hence $\forall i,j\in[d]$,
\begin{align*}
	c^{(t+1)}\left|u_i^{(T_1)}\right|=\mathcal{O}\left(d^{1/4}\right)\quad\Rightarrow\quad \left|w_{2i}^{(t+1)}\right|\le c^{(t+1)}\left|u_i^{(T_1)}\right|+\left|R_{2i}^{(t+1)}\right|\overset{(i)}{=}\mathcal{O}\left(d^{1/4}\right),\\
	\left|u_i^{(T_1)}\right|v_j^{(t+1)}=\mathcal{O}\left(\frac{1}{d^{1/4}}\right)\quad\Rightarrow\quad \left|W_{1}^{(t+1)}[i,j]\right|\le \left|u_i^{(T_1)}\right|v_j^{(t+1)}+\left|R_{1}^{(t+1)}[i,j]\right|\overset{(ii)}{=}\mathcal{O}\left(\frac{1}{d^{1/4}}\right),
\end{align*}
where $(i)$ and $(ii)$ use Lemma~\ref{lemma:bound_R1_R2}.

(D) The fact that $\forall j\in[d]:\frac{\left|R_{3j}^{(t+1)}\right|}{A_j}\le\tilde{\mathcal{O}}(\epsilon_0)$ was already proved in Lemma~\ref{lemma:bound_R2_R3} in eq.\eqref{eqn:bound_R3}. To analyze $\frac{\left|R_{vi}^{(t+1)}\right|}{a^{(t+1)}A_i}$, we first prove that $1-\eta_t c^{(t)}d^{(t)}>0$.

It is not hard to prove that eq.\eqref{eqn:bound_c2_u2} also holds for time $t$. Recall that $d^{(t)}=c^{(t)}\left\|\boldsymbol{u}^{(T_1)}\right\|^2+R_2^{(t)T}\boldsymbol{u}^{(T_1)}$ and Lemma~\ref{lemma:bound_R2_R3} tells us that $0\le\frac{\left|R_2^{(t)T}\boldsymbol{u}^{(T_1)}\right|}{c^{(t)}\left\|\boldsymbol{u}^{(T_1)}\right\|^2}\le\tilde{\mathcal{O}}(\epsilon_0)$, then we have
\[
c^{(t)}d^{(t)}=\left(c^{(t)}\right)^2\left\|\boldsymbol{u}^{(T_1)}\right\|^2+c^{(t)}R_2^{(t)T}\boldsymbol{u}^{(T_1)}\le \mathcal{O}\left(\sqrt{d}\right).
\]
Under the conditions of Lemma~\ref{lemma:bound_epsilon}, and pick $\eta\le\mathcal{O}\left(\frac{\epsilon}{d^{\frac{7\alpha}{4}+4}}\right)$, we have that $1-\eta_t c^{(t)}d^{(t)}\ge 1-\eta c^{(t)}d^{(t)}>0$.

The assumption $\forall j\in[d]:\frac{\left|R_{3j}^{(t)}\right|}{A_j}\le\tilde{\mathcal{O}}(\epsilon_0)$ together with $c^{(t)}>0$ gives us
\[
\frac{\eta_t c^{(t)}\left|R_{3j}^{(t)}\right|}{\eta_t c^{(t)}A_j}\le\tilde{\mathcal{O}}(\epsilon_0).
\]
Combining with the assumption $\frac{\left|R_{vi}^{(t)}\right|}{a^{(t)}A_i}\le\tilde{\mathcal{O}}(\epsilon_0)$ yields
\begin{align*}
	\forall i\in[d]:\quad\frac{\left|R_{vi}^{(t+1)}\right|}{a^{(t+1)}A_i}\le\frac{\left(1-\eta_t c^{(t)}d^{(t)}\right)\left|R_v^{(t)}\right|+\eta_t c^{(t)}\left|R_{3i}^{(t)}\right|}{\left(1-\eta_t c^{(t)}d^{(t)}\right)a^{(t)}A_i+\eta_t c^{(t)}A_i}\le \tilde{\mathcal{O}}(\epsilon_0).
\end{align*}

\section{Analysis of Adam}\label{sec:proof_Adam}
Note that $A=\frac{1}{m}YX^T$, $\Lambda_{xx}:=\frac{1}{m}XX^T$. Denote $g_k^{(t)}:=\nabla_{W_k}L(W^{(t)}), k=1,2$. We have that
\begin{align*}
	g_1^{(t)}=W_2^{(t)T}\left(W_2^{(t)}W_1^{(t)}-A\right),\quad g_2^{(t)}=\left(W_2^{(t)}W_1^{(t)}-A\right)W_1^{(t)T}.
\end{align*}
Let $\tilde A^{(t)}$, $\tilde \Lambda_{xx}^{(t)}$ and $\tilde g_k^{(t)},k=1,2$ be the corresponding batch versions at time $t$. Let $E^{(t)}:=W_2^{(t)}W_1^{(t)}-A$, and denote $E_j^{(t)}$ as the $j$-th component of $E^{(t)}$. We also denote $\Delta w_{2i}^{(t)}:=w_{2i}^{(t+1)}-w_{2i}^{(t)}$, $\Delta W_1^{(t)}[i,j]:=W_1^{(t+1)}[i,j]-W_1^{(t)}[i,j]$. By eq.~\eqref{eqn:equiv_loss}, the update equations of Adam are given by
\begin{equation}\label{eqn:Ada_update}
	\begin{aligned}
		\eta_t &= \eta \cdot \frac{\sqrt{1 - \beta_2^{t+1}}}{1 - \beta_1^{t+1}},\quad g_1^{(t)}[i,j] = w_{2i}^{(t)}E_j^{(t)},\quad g_{2i}^{(t)}=\left\langle E^{(t)},W_1^{(t)}[i,:]\right\rangle,\\
		W_1^{(t+1)}[i,j]&=W_1^{(t)}[i,j]-\eta_t\frac{m_1^{(t)}[i,j]}{\sqrt{v_1^{(t)}[i,j]}}\\
		&=W_1^{(t)}[i,j]-\eta_t\frac{(1 - \beta_1) \sum_{\tau=0}^{t} \beta_1^{t - \tau}\tilde g_1^{(\tau)}[i,j]}{\sqrt{(1 - \beta_2) \sum_{\tau=0}^{t} \beta_2^{t - \tau}\left(\tilde g_1^{(\tau)}[i,j]\right)^2}+\xi}\\
		&=W_1^{(t)}[i,j]-\eta_t\frac{(1 - \beta_1) \sum_{\tau=0}^{t} \beta_1^{t - \tau}g_1^{(\tau)}[i,j]+r_{1n}^{(t)}[i,j]}{\sqrt{(1 - \beta_2) \sum_{\tau=0}^{t} \beta_2^{t - \tau}\left(g_1^{(\tau)}[i,j]\right)^2+r_{1d}^{(t)}[i,j]}+\xi},\\
		w_{2i}^{(t+1)}&=w_{2i}^{(t)}-\eta_t\frac{m_{2i}^{(t)}}{\sqrt{v_{2i}^{(t)}}}=w_{2i}^{(t)}-\eta_t\frac{(1-\beta_1)\sum_{\tau=0}^t\beta_1^{t-\tau}\tilde g_{2i}^{(\tau)}}{\sqrt{(1 - \beta_2) \sum_{\tau=0}^{t} \beta_2^{t - \tau}\left(\tilde g_{2i}^{(\tau)}\right)^2}+\xi}\\
		&=w_{2i}^{(t)}-\eta_t\frac{(1-\beta_1)\sum_{\tau=0}^t\beta_1^{t-\tau}g_{2i}^{(\tau)}+r_{2n,i}^{(t)}}{\sqrt{(1 - \beta_2) \sum_{\tau=0}^{t} \beta_2^{t - \tau}\left(g_{2i}^{(\tau)}\right)^2++r_{2d,i}^{(t)}}+\xi}.
	\end{aligned}
\end{equation}
where $Dg_1^{(t)}:=\tilde g_1^{(t)}-g_1^{(t)}$ and $Dg_2^{(t)}:=\tilde g_2^{(t)}-g_2^{(t)}$, and
\begin{equation}\label{eqn:r_Adam}
	\begin{aligned}
		r_{1n}^{(t)}[i,j]&:=(1 - \beta_1) \sum_{\tau=0}^{t} \beta_1^{t - \tau}Dg_1^{(\tau)}[i,j],\\
		r_{1d}^{(t)}[i,j]&=(1 - \beta_2) \sum_{\tau=0}^{t} \beta_2^{t - \tau}\left(2g_1^{(\tau)}[i,j]Dg_1^{(\tau)}[i,j]+\left(Dg_1^{(\tau)}[i,j]\right)^2\right),\\
		r_{2n,i}^{(t)}&:=(1 - \beta_1) \sum_{\tau=0}^{t} \beta_1^{t - \tau}Dg_{2i}^{(\tau)},\\
		r_{2d,i}^{(t)}&=(1 - \beta_2) \sum_{\tau=0}^{t} \beta_2^{t - \tau}\left(2g_{2i}^{(\tau)}Dg_{2i}^{(\tau)}+\left(Dg_{2i}^{(\tau)}\right)^2\right).
	\end{aligned} 
\end{equation}
Denote the $i$-th coordinate of $W_2W_1$ and $A$ as $(W_2W_1)_i$ and $A_i$, respectively. By Assumption~\ref{assump:gaussian_init} and the assumption that $\forall i\in[d]:A_i>0,A_i=\Omega(1)$, at the beginning, w.h.p., $\forall i\in[d]:(W_2W_1)_i-A_i<0$. Based on this, we divide the training procedure into two phases (note that these two phases are different from those of GD).
\begin{enumerate}
	\item \textbf{First phase}: when the error $(W_2W_1)_i-A_i$ is negative and its absolute value is big for all $i\in[d]$.
	\item \textbf{Second phase}: when $(W_2W_1)_i-A_i$ is close to zero for some coordinate $i\in[d]$.
\end{enumerate}
More formally, we define the boundary between the two phases below.
\begin{definition}[End of the first phase]\label{def:first_phase_ada}
	The end of the first phase (denoted as $T_1$) is defined as $T_1=\inf\left\{t>0:\exists i\in[d]:E^{(t)}_i\ge-\sqrt{\eta d}\right\}$.
\end{definition}
In the second phase, we define some time points.
\begin{definition}\label{def:time_g_2_small}
	Define $T_g:=\inf\left\{t>T_1:\exists i\in[d]:\left|g_{2i}^{(t)}\right|\le d\sqrt{\eta}\right\}$.
\end{definition}

For $t<T_1$, we have $\forall i\in[d]:E_i^{(t)}<0$ by Definition~\ref{def:first_phase_ada}. For $t>T_1$, some $E_i^{(t)}$ may flip the sign and become positive. For certain coordinate $i$, we define the following ``flip time''.
\begin{definition}\label{def:flip_time}
	Define $T_{f,i}:=\inf\left\{t>T_1: E_i^{(t)}\ge-\sqrt{\eta d}\right\}$. Define $T_f:=\max_iT_{f,i}$ as the largest ``flip time'' over all $i\in[d]$, i.e. the ``flip time'' of the last $E_i$ which flips the sign. Moreover, denote $\tilde T:=\min\left\{T_g,T_f\right\}$.
\end{definition}
We can first show that after a few steps in the first phase, $W_1$ will become an approximately rank-1 matrix, as described in the following lemma.
\begin{lemma}\label{lemma:Ada_rank1_p1}
	Under Assumption~\ref{assump:setup}, \ref{assump:gaussian_init} and \ref{assump:large_batch}, suppose $\sigma\le\frac{\eta^{3/2}\xi^2}{d^{13/4}}$. By picking $\eta\le\mathcal{O}\left(\frac{1}{d^{3\alpha}}\right),\xi\le\sqrt{\frac{\eta}{d^{3\alpha-1}}}$, and $\beta_2=\beta_1^2$, there exists $t_{\text{inc}}>0$ such that w.h.p. for $t_{\text{inc}}\le t<T_1$,
	\begin{align*}
		\forall i,j\in[d]:\quad w_{2i}^{(t)}&=\text{sign}\left(w_{2i}^{(0)}\right)\eta\left(t-t_{\text{inc}}\right)+R_{2i}^{(t)},\\
		W_1^{(t)}[i,j] &=\text{sign}\left(w_{2i}^{(0)}\right)\eta\left(t-t_{\text{inc}}\right)+R_{1}^{(t)}[i,j],
	\end{align*}
	where $\frac{\left|R_{1}^{(t)}[i,j]\right|}{\eta\left(t-t_{\text{inc}}\right)}=\tilde{\mathcal{O}}\left(\sqrt{\eta}+\frac{1}{\eta\left(t-t_{\text{inc}}\right)d^{\alpha}}\right),\quad\frac{\left|R_{2i}^{(t)}\right|}{\eta\left(t-t_{\text{inc}}\right)}=\tilde{\mathcal{O}}\left(\sqrt{\eta}+\frac{1}{\eta\left(t-t_{\text{inc}}\right)d^\alpha}\right)$.
	
	Specially, when $t=T_1$, we have that
	\begin{align*}
		\forall i,j\in[d]:\quad w_{2i}^{(T_1)}&=\text{sign}\left(w_{2i}^{(0)}\right)\eta\left(T_1-t_{\text{inc}}\right)+R_{2i}^{(T_1)},\\
		W_1^{(T_1)}[i,j] &=\text{sign}\left(w_{2i}^{(0)}\right)\eta(T_1-t_{\text{inc}})+R_{1}^{(T_1)}[i,j],
	\end{align*}
	where
	\[
	\eta\left(T_1-t_{\text{inc}}\right)=\Theta\left(\frac{1}{\sqrt{d}}\right),\quad\frac{\left|R_{1}^{(T_1)}[i,j]\right|}{\eta\left(T_1-t_{\text{inc}}\right)}=\tilde{\mathcal{O}}\left(\sqrt{\eta}+\frac{1}{d^{\alpha-\frac{1}{2}}}\right),\quad\frac{\left|R_{2i}^{(T_1)}\right|}{\eta\left(T_1-t_{\text{inc}}\right)}=\tilde{\mathcal{O}}\left(\sqrt{\eta}+\frac{1}{d^{\alpha-\frac{1}{2}}}\right).
	\]
\end{lemma}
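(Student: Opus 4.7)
The plan is to split the first phase into an incubation window $[0,t_{\text{inc}})$, during which the second-moment buffers $v^{(t)}$ saturate above $\xi$ and the gradient signs stabilize, followed by the main window $[t_{\text{inc}},T_1)$ on which the Adam step reduces to essentially $-\eta\,\text{sign}(g^{(t)})$. The key observation is that $E_j^{(t)}<0$ for every $j$ while $t<T_1$ by Definition~\ref{def:first_phase_ada}, so writing $s_i:=\text{sign}(w_{2i}^{(0)})$, one has $\text{sign}(g_1^{(t)}[i,j])=\text{sign}(w_{2i}^{(t)}E_j^{(t)})=-s_i$ whenever $w_{2i}^{(t)}$ retains its initial sign, and $\text{sign}(g_{2i}^{(t)})=\text{sign}(\sum_jE_j^{(t)}W_1^{(t)}[i,j])=-s_i$ once $W_1^{(t)}[i,:]$ has consistent sign $s_i$ across $j$. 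After incubation, each Adam step therefore adds $+\eta s_i$ to both $w_{2i}^{(t)}$ and $W_1^{(t)}[i,j]$, producing the advertised linear growth with the same sign pattern in the two weight blocks.

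To pin down $t_{\text{inc}}$, I would use Gaussian concentration to bound the initialization by $|w_{2i}^{(0)}|=\tilde{\mathcal O}(d^{-\alpha})$ and $|W_1^{(0)}[i,j]|=\tilde{\mathcal O}(d^{-2\alpha})$, which gives $|g_1^{(0)}[i,j]|=\tilde{\mathcal O}(d^{-\alpha})$ and $|g_{2i}^{(0)}|=\tilde{\mathcal O}(d^{-2\alpha+1/2})$. The sign of $g_{2i}^{(0)}$ is not a priori $-s_i$ since $W_1^{(0)}[i,:]\perp w_{2i}^{(0)}$, but the per-step Adam increment $\eta$ on each $W_1^{(t)}[i,j]$ overtakes the Gaussian initialization after $\mathcal O(1/(\eta d^{2\alpha}))=\mathcal O(d^{\alpha})$ steps, after which $\text{sign}(W_1^{(t)}[i,j])=s_i$ and hence $\text{sign}(g_{2i}^{(t)})=-s_i$ permanently; I take $t_{\text{inc}}$ as this time (also verifying $\sqrt{v^{(t)}}\gg\xi$ by then, which is automatic under $\beta_2=\beta_1^2$). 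During $[0,t_{\text{inc}})$ the wrong-direction drift of $w_{2i}$ is at most $\eta t_{\text{inc}}=\tilde{\mathcal O}(d^{-2\alpha})\ll|w_{2i}^{(0)}|$, so $w_{2i}^{(t)}$ never flips sign and its displacement is absorbed into the residuals $R_1,R_2$.

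For $t\ge t_{\text{inc}}$ I would telescope \eqref{eqn:Ada_update} via the expansion
\[
\frac{m^{(t)}}{\sqrt{v^{(t)}}+\xi}=\text{sign}(g^{(t)})+e^{(t)},
\]
decomposing $e^{(t)}$ into a $\xi$-correction of order $\xi/\sqrt{v^{(t)}}$ (absorbed by $\xi\le\sqrt{\eta/d^{3\alpha-1}}$), a momentum-lag term from the slow drift of $|g^{(\tau)}|$ over $\tau\in[t-\mathcal O(1/(1-\beta_1)),t]$ (tamed by $\beta_2=\beta_1^2$, which makes numerator and denominator scale consistently), and a stochastic perturbation $r_{1n},r_{1d},r_{2n},r_{2d}$ from \eqref{eqn:r_Adam} (bounded via Lemma~\ref{lemma:bound_randomness} using $\sigma\le\eta^{3/2}\xi^2/d^{13/4}$). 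Summing $\eta_\tau s_i e^{(\tau)}$ over $\tau\in[t_{\text{inc}},t]$ and combining with the $\tilde{\mathcal O}(d^{-\alpha})$ contribution from initial weights and incubation drift yields the claimed relative bound $\tilde{\mathcal O}(\sqrt\eta+1/(\eta(t-t_{\text{inc}})d^{\alpha}))$. The final $T_1$ statement then follows by noting $s_i^2=1$, so $(W_2^{(t)}W_1^{(t)})_j\approx d\eta^2(t-t_{\text{inc}})^2$ independent of $j$ to leading order; combined with $A_j=\Theta(1)$ and the threshold $E_j^{(T_1)}\ge-\sqrt{\eta d}$, this forces $\eta(T_1-t_{\text{inc}})=\Theta(1/\sqrt d)$, reducing the residual bound to $\tilde{\mathcal O}(\sqrt\eta+1/d^{\alpha-1/2})$.

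The main obstacle will be controlling $e^{(t)}$ uniformly across the first phase. The three pieces of $e^{(t)}$ couple nontrivially: as $|g^{(t)}|$ drifts upward during linear growth, $m^{(t)}$ lags so $|m^{(t)}|<|g^{(t)}|$, while $\sqrt{v^{(t)}}$ still reflects past smaller gradient magnitudes, and the cancellation between numerator and denominator that yields a $\sqrt\eta$ (rather than $\Theta(1)$) deviation from $\text{sign}(g^{(t)})$ hinges on the precise $\beta_2=\beta_1^2$ calibration equating the time scales of the two moving averages. Simultaneously, the stochastic noise enters both $m^{(t)}$ and $v^{(t)}$, and the stringent large-batch assumption $\sigma\le\eta^{3/2}\xi^2/d^{13/4}$ is needed precisely to keep the noisy $v^{(t)}$ from corrupting this cancellation. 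Verifying these three controls uniformly while maintaining the sign-consistency inductive hypothesis is the delicate part of the proof.
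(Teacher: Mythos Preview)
Your proposal follows essentially the same route as the paper: an incubation window to stabilize signs, followed by showing the Adam step reduces to $-\eta(\text{sign}(g^{(t)})+e^{(t)})$ with $|e^{(t)}|=\tilde{\mathcal O}(\sqrt\eta)$, then telescoping. The paper organizes this into auxiliary lemmas (truncating the moving averages to a window of length $H$, an incubation lemma establishing $t_{\text{inc}}$, a gradient lower-bound lemma, and a slow-drift lemma), which together amount to your three-piece decomposition of $e^{(t)}$.

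One ingredient you understate is the uniform gradient lower bound $|g_1^{(t)}[i,j]|\ge\tilde\Omega(\sqrt\eta)$ across all of $[t_{\text{inc}},T_1)$ (the paper's Lemma~\ref{lemma:lb_g}). Your ``momentum-lag term'' is $\tilde{\mathcal O}(\sqrt\eta)$ in \emph{relative} terms only because the per-step change $|\Delta g|=\tilde{\mathcal O}(\eta)$ is small \emph{compared to} $|g^{(t)}|$; the calibration $\beta_2=\beta_1^2$ merely makes $m/\sqrt v$ equal $\pm1$ when the gradient is exactly constant, and by itself does not produce the $\sqrt\eta$ rate. Establishing this lower bound is not automatic: $|g_1^{(t)}[i,j]|=|w_{2i}^{(t)}|\cdot|E_j^{(t)}|$ is a product of one increasing and one decreasing factor, so could in principle dip below $\sqrt\eta$ in the middle. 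The paper handles it by arguing the product first increases then decreases and checking both endpoints: at $t=t_{\text{inc}}$ one has $|w_{2i}|\ge\Omega(d^{-3\alpha/2})$, $|E_j|=\Theta(1)$; at $t=T_1$ one has $|w_{2i}|=\tilde\Theta(d^{-1/2})$, $|E_j|=\Theta(\sqrt{\eta d})$; both give $\tilde\Omega(\sqrt\eta)$. A parallel argument is needed for $|g_{2i}^{(t)}|\ge\tilde\Omega(\sqrt\eta\, d)$. This lower bound is also what makes the $\xi$-correction $\xi/|g|$ controllable and what keeps $\epsilon_{1n},\epsilon_{1d}=\tilde{\mathcal O}(\eta\xi^2)$ negligible relative to the numerator and denominator.

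A second minor point: the paper chooses $t_{\text{inc}}$ slightly later than ``when $W_1$ signs stabilize,'' namely once $|W_1^{(t)}[i,j]|=\Theta(d^{-3\alpha/2-1})$, so that $|g_{2i}^{(t_{\text{inc}})}|\ge\Omega(\xi)$ and the sign-consistency conditions over the full window $[t_{\text{inc}}-H,t_{\text{inc}}]$ are verified. Your earlier $t_{\text{inc}}$ also works for the residual bookkeeping, but you would still need to check the gradient lower bound holds from that point, which effectively forces you through the same argument.
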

The following lemma tells us that this approximate rank-1 structure is preserved when $T_1\le t\le\tilde T$.
\begin{lemma}\label{lemma:Ada_rank1_p2}
	Under Assumption~\ref{assump:setup}, \ref{assump:gaussian_init} and \ref{assump:large_batch}, suppose $\sigma\le\frac{\eta^{3/2}\xi^2}{d^{13/4}}$. By picking $\eta\le\mathcal{O}\left(\frac{1}{d^{3\alpha}}\right),\xi\le\sqrt{\frac{\eta}{d^{3\alpha-1}}}$, and $\beta_2=\beta_1^2$, we have w.h.p. for $T_1\le t<\tilde T$, 
	\begin{align*}
		\forall i,j\in[d]:\quad w_{2i}^{(t)}&=\text{sign}\left(w_{2i}^{(0)}\right)c^{(t)}+R_{2i}^{(t)},\\
		W_1^{(t)}[i,j]&=\text{sign}\left(w_{2i}^{(0)}\right)V_j^{(t)}+R_1^{(t)}[i,j],\\
		\text{where}\qquad \frac{\left|R_{2i}^{(t)}\right|}{\left|c^{(t)}\right|}&=\tilde{\mathcal{O}}\left(\sqrt{\eta}+\frac{1}{d^{\alpha-1/2}}\right),\quad\frac{\left|R_1^{(t)}[i,j]\right|}{\left|V_j^{(t)}\right|}\le\tilde{\mathcal{O}}\left(\eta^{\frac{1}{4}}+\frac{1}{d^{\frac{\alpha}{2}-\frac{1}{4}}}\right),
	\end{align*}
	and that $L\left(W^{(\tilde T)}\right)\le \tilde{\mathcal{O}}\left(\eta d^4\right)$.
\end{lemma}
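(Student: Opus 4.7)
\textbf{Proof plan for Lemma~\ref{lemma:Ada_rank1_p2}.} I would argue by induction on $t$ from $T_1$ to $\tilde T$, with the base case supplied by Lemma~\ref{lemma:Ada_rank1_p1} (taking $c^{(T_1)} = V_j^{(T_1)} = \eta(T_1 - t_{\text{inc}}) = \Theta(1/\sqrt{d})$ and absorbing $R_1^{(T_1)}, R_2^{(T_1)}$ into the stated error bounds). The inductive step exploits Adam's sign-normalizing property: when the true gradient magnitude dominates both the stochastic noise $Dg^{(\tau)}$ and the denominator perturbation $\xi$, the update $\eta_t m^{(t)}/(\sqrt{v^{(t)}}+\xi)$ equals $-\eta_t \operatorname{sgn}(g^{(t)})$ up to a small multiplicative error.

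First I would track the signs. Using the inductive structure, the rank-$1$ relation $W_1^{(t)}[i,j] \approx s_i V_j^{(t)}$ (with $s_i := \operatorname{sgn}(w_{2i}^{(0)})$) and the identity $\sum_i s_i^2 = d$ yield $(W_2 W_1)_j^{(t)} \approx d\, s_i \cdot s_i c^{(t)} V_j^{(t)} = d c^{(t)} V_j^{(t)}$. Because $t < \tilde T \le T_f$, every $E_j^{(t)}$ that has not yet flipped satisfies $E_j^{(t)} < -\sqrt{\eta d}$, so $\operatorname{sgn}(g_1^{(t)}[i,j]) = \operatorname{sgn}(w_{2i}^{(t)} E_j^{(t)}) = -s_i$. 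Plugging into eq.~\eqref{eqn:Ada_update} then gives $W_1^{(t+1)}[i,j] - W_1^{(t)}[i,j] = \eta_t s_i(1 + \text{error})$, which is exactly the increment consistent with $V_j^{(t+1)} = V_j^{(t)} + \eta_t$ (and the symmetric computation for $w_{2i}^{(t+1)}$, using $g_{2i}^{(t)} \approx s_i \sum_j E_j^{(t)} V_j^{(t)}$ and $t < T_g$ to secure $|g_{2i}^{(t)}| \ge d\sqrt{\eta}$, gives the $c^{(t+1)}$ increment). The coordinates $j$ whose $E_j$ has already flipped contribute residual updates that bounce $E_j$ near zero; I would show that after $T_{f,j}$, $V_j^{(t)}$ deviates from the un-flipped trajectory by at most $\tilde{\mathcal{O}}(\eta \sqrt{d})$, which is absorbed in the $\eta^{1/4}$ error budget.

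The error accounting is the main technical labor. Three sources of slack must be tracked: (a) the stochastic terms $r_{1n}, r_{1d}, r_{2n}, r_{2d}$ of eq.~\eqref{eqn:r_Adam}, which Lemma~\ref{lemma:bound_randomness} (invoked with $T \le \operatorname{poly}(d)/\eta$, the same horizon bound needed as in Section~\ref{subsec:pf_lemma_bound_r12}) controls at order $\tilde{\mathcal{O}}(\sigma \sqrt{\operatorname{poly}(d)/\eta})$, which is $o(\eta_t)$ under $\sigma \le \eta^{3/2}\xi^2/d^{13/4}$; (b) the $\xi$ regularizer, which contributes a multiplicative error $\xi/\sqrt{v^{(t)}} \le \xi/(|g| \sqrt{1-\beta_2}) \lesssim \xi \cdot d^{3\alpha/2}/\sqrt{\eta}$, bounded by $\tilde{\mathcal{O}}(1/d^{\alpha/2-1/4})$ using $\xi \le \sqrt{\eta/d^{3\alpha - 1}}$; (c) the deviation of $m^{(t)}/\sqrt{v^{(t)}}$ from $\pm 1$ arising from non-constant $|g^{(\tau)}|$ across $\tau \le t$, controlled by the choice $\beta_2 = \beta_1^2$ (which makes $(m^{(t)})^2$ and $v^{(t)}$ algebraically compatible) and by the inductive hypothesis that $|E_j^{(\tau)}|$ and $|w_{2i}^{(\tau)}|$ vary smoothly. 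Summing the one-step relative errors $\tilde{\mathcal{O}}(\sqrt{\eta})$ over at most $\tilde{\mathcal{O}}(1/(\eta\sqrt{d}))$ steps (the time to reach $\tilde T$, upper-bounded analogously to Lemma~\ref{lemma:bound_R1_R2} by running $c^{(t)}$ up from $\Theta(1/\sqrt{d})$ to $\Theta(1)$) and carefully propagating the ratio $|R_1^{(t)}[i,j]|/|V_j^{(t)}|$ via the recursion used in Section~\ref{sec:pf_GD_rank1_phase2} yields the claimed bounds.

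Finally, for the loss bound at $\tilde T$, I would split into the two cases defining $\tilde T = \min\{T_f, T_g\}$. If $\tilde T = T_f$, every $E_j^{(\tilde T)}$ has reached the threshold $-\sqrt{\eta d}$ at some point, and the coupling $(W_2 W_1)_j \approx d c^{(t)} V_j^{(t)}$ together with the near-uniformity of $V_j^{(t)}$ (relative error $\tilde{\mathcal{O}}(\eta^{1/4})$) forces $|E_j^{(\tilde T)}| \le \tilde{\mathcal{O}}(\sqrt{\eta d^3})$ for every $j$, giving $L \le \tilde{\mathcal{O}}(\eta d^4)$. If $\tilde T = T_g$, then $|g_{2i}^{(\tilde T)}| \le d\sqrt{\eta}$ for some $i$; using $g_{2i}^{(\tilde T)} \approx s_i \sum_j E_j^{(\tilde T)} V_j^{(\tilde T)}$ and the approximate proportionality of $E_j^{(\tilde T)}$ across $j$ (from the same rank-$1$ structure), one converts this single-coordinate bound into $\|E^{(\tilde T)}\|_2^2 \le \tilde{\mathcal{O}}(\eta d^4)$. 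The hard part throughout is item~(c) of the error accounting --- showing that Adam's ratio update stays within a $1\pm\tilde{\mathcal{O}}(\eta^{1/4})$ factor of pure sign despite $|g^{(\tau)}|$ drifting by a constant factor over the long horizon --- and its tight interaction with the flipping behavior of the $E_j$'s past $T_{f,j}$.
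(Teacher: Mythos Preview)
Your plan is largely on the right track for $W_2$: since $t < T_g$ ensures $|g_{2i}^{(t)}|\ge d\sqrt\eta$ and the sign of $g_{2i}^{(t)}$ never flips, the Adam update is indeed $\eta\,\operatorname{sgn}(w_{2i}^{(0)})(1+\tilde{\mathcal O}(\sqrt\eta))$, and because $c^{(t)}$ grows monotonically by $\eta$ per step while the absolute error grows by $\eta\cdot\tilde{\mathcal O}(\sqrt\eta)$, the \emph{ratio} $|R_{2i}^{(t)}|/c^{(t)}$ stays $\tilde{\mathcal O}(\sqrt\eta)$ without any summing. This is essentially what the paper does.

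The genuine gap is in your treatment of $W_1$, and it is exactly the ``hard part'' you flag at the end but do not resolve. Once some $E_j^{(t)}$ has passed its flip time $T_{f,j}$, it oscillates near zero, so $g_1^{(t)}[i,j]=w_{2i}^{(t)}E_j^{(t)}$ can be arbitrarily small and can change sign from step to step. Your claim that $\Delta W_1^{(t)}[i,j]=\eta_t s_i(1+\text{error})$ breaks down here: the Adam ratio $m/\sqrt v$ is far from $\pm 1$ when the momentum-average $\sum_\tau\beta_1^\tau g_1^{(t-\tau)}[i,j]$ nearly cancels while the second-moment average $\sum_\tau\beta_2^\tau g_1^{(t-\tau)}[i,j]^2$ does not. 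Your proposed cures --- summing $\tilde{\mathcal O}(\sqrt\eta)$ relative errors over $\tilde{\mathcal O}(1/(\eta\sqrt d))$ steps, or porting the monotone-ratio recursion from Section~\ref{sec:pf_GD_rank1_phase2} --- both fail: the sum blows up to $\tilde{\mathcal O}(1/\sqrt{\eta d})$, and the SGD recursion relied on $v_j^{(t)}$ being monotone, which $V_j^{(t)}$ is not after the flip. (Also, $\beta_2=\beta_1^2$ does not make $(m^{(t)})^2$ and $v^{(t)}$ ``algebraically compatible'' in the sense you suggest; it is used for a Cauchy--Schwarz step, see below.)

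The paper's fix is a threshold argument that you are missing, and it is precisely where the $\eta^{1/4}$ exponent comes from. Using the already-proved $W_2$ structure, write $g_1^{(t)}[i,j]=s_i\,c^{(t)}E_j^{(t)}+R_{g}^{(t)}[i,j]$ with $|R_g^{(t)}[i,j]|\le\tilde{\mathcal O}(\sqrt\eta)\cdot c^{(t)}|E_j^{(t)}|$, and split into two cases on whether $\bigl|\sum_\tau\beta_1^\tau R_g^{(t-\tau)}[i,j]\bigr|\le(\eta^{1/4}+d^{-(\alpha/2-1/4)})\bigl|\sum_\tau\beta_1^\tau c^{(t-\tau)}E_j^{(t-\tau)}\bigr|$. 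In Case~1 the Adam update factors as $-s_i\,v_j^{(t)}(1+\tilde{\mathcal O}(\eta^{1/4}))$ where $v_j^{(t)}$ depends only on $j$ (this is what preserves rank one, not $v_j^{(t)}\equiv\eta$). In Case~2, Cauchy--Schwarz together with $\beta_1^2=\beta_2$ bounds the numerator by $\tilde{\mathcal O}(\sqrt\eta)\cdot\sqrt{\sum_\tau\beta_2^\tau(g_1^{(t-\tau)}[i,j])^2}$, so the whole update is at most $\eta\cdot\tilde{\mathcal O}(\eta^{1/4})$; over the $\tilde{\mathcal O}(1/(\eta\sqrt d))$ horizon this contributes $\tilde{\mathcal O}(\eta^{1/4}/\sqrt d)$ absolutely, i.e.\ $\tilde{\mathcal O}(\eta^{1/4})$ relative to $|V_j^{(t)}|=\tilde\Theta(1/\sqrt d)$. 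The threshold $\eta^{1/4}=\sqrt{\sqrt\eta}$ is chosen exactly to balance these two cases. Without this mechanism your induction cannot close for the $W_1$ bound.
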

Now we are ready to prove the Adam part of Theorem~\ref{thm:uniformity}.

\subsection{Proof of the Adam part of Theorem~\ref{thm:uniformity}}\label{subsec:proof_Adam}
Define $T_{\text{Adam},1}=t_{\text{inc}}+\frac{1}{\eta d^{\frac{\alpha}{2}}}$. Note that this choice of $T_{\text{Adam},1}$ gives $\eta\left(T_{\text{Adam},1}-t_{\text{inc}}\right)=\frac{1}{d^{\frac{\alpha}{2}}}$. By picking $\eta\le\mathcal{O}\left(\frac{1}{d^{3\alpha}}\right),\xi\le\sqrt{\frac{\eta}{d^{3\alpha-1}}}$  and $\beta_2=\beta_1^2$, we can apply Lemma~\ref{lemma:Ada_rank1_p1} to get that $\forall i,j\in[d]: \left|w_{2i}^{(T_{\text{Adam},1})}\right|=\Theta\left(\frac{1}{d^{\frac{\alpha}{2}}}\right), \left|W_1^{(T_{\text{Adam},1})}[i,j]\right|=\Theta\left(\frac{1}{d^{\frac{\alpha}{2}}}\right)$, and therefore $\forall i\in[d]: E_i^{(T_{\text{Adam},1})}= -\Theta(1)$ and $L\left(W^{(T_{\text{Adam},1})}\right)=\Theta(d)$. Define $T_{\text{Adam},2}=\tilde T$. By Lemma~\ref{lemma:Ada_rank1_p2}, we have $L\left(W^{(T_{\text{Adam},2})}\right)=\tilde{\mathcal{O}}\left(\eta d^4\right)$. For any $p>0$, by picking $\alpha\ge\frac{p+4}{3}$, we have $L\left(W^{(T_{\text{Adam},2})}\right)=\tilde{\mathcal{O}}\left(\eta d^4\right)\le\tilde{\mathcal{O}}\left(\frac{1}{d^p}\right)$.

Moreover, combining Lemma~\ref{lemma:Ada_rank1_p1} and \ref{lemma:Ada_rank1_p2}, we get that when $t\in[T_{\text{Adam},1},T_{\text{Adam},2}]$, the conditions in Lemma~\ref{lemma:rank1_ratio} are satisfied with $\delta = \tilde{\mathcal{O}}\left(\eta^{\frac{1}{4}}+\frac{1}{d^{\frac{\alpha}{2}-\frac{1}{4}}}\right)$. The $i$-th component of the $\boldsymbol{u}$ vector (denoted as $u_i$) is $\sgn\left(w_{2i}^{(0)}\right)$. That means $\forall i\in[d]:u_i^2=1$ and $\frac{\max_i(u_{i})^2}{\text{median}(u_{i})^2}=1$. Then we can apply Lemma~\ref{lemma:rank1_ratio} and get that
\begin{align*}
	\RmedAdaf,\RmedAdas&\in\left[\left(\frac{1-\delta}{1+\delta}\right)^2\frac{\max_i(u_{i})^2}{\text{median}(u_{i})^2},\left(\frac{1+\delta}{1-\delta}\right)^2\frac{\max_i(u_{i})^2}{\text{median}(u_{i})^2}\right]\\
	&=\left[\left(\frac{1-\delta}{1+\delta}\right)^2,\left(\frac{1+\delta}{1-\delta}\right)^2\right],\\
	\Rightarrow\quad \RmedAdaf, \RmedAdas&=1\pm\mathcal{O}(\delta)=1\pm\tilde{\mathcal{O}}\left(\eta^{\frac{1}{4}}+\frac{1}{d^{\frac{\alpha}{2}-\frac{1}{4}}}\right).
\end{align*}

\subsection{Proof of Lemma~\ref{lemma:Ada_rank1_p1}}\label{sec:pf_Ada_rank1_p1}
For some time $t$, we introduce two conditions.
\begin{condition}\label{cond:g1}
	\[
	\forall\tau\in[H]:\text{sign}\left(g_1^{(t-\tau)}[i,j]\right)=s_1^{(t)}[i,j],\quad (1-\beta_1)\left|\sum_{\tau=0}^H\beta_1^{(\tau)}g_1^{(t-\tau)}[i,j]\right|\ge\Omega(\xi).
	\]
\end{condition}
\begin{condition}\label{cond:g2}
	\[
	\forall\tau\in[H]:\text{sign}\left(g_{2i}^{(t-\tau)}\right)=s_{2i}^{(t)},\quad (1-\beta_1)\left|\sum_{\tau=0}^H\beta_1^{(\tau)}g_{2i}^{(t-\tau)}\right|\ge\Omega(\xi).
	\]
\end{condition}
Next prove that, under Assumption~\ref{assump:setup} and \ref{assump:gaussian_init}, by picking $\eta\le\mathcal{O}\left(\frac{1}{d^{3\alpha}}\right),\xi\le\sqrt{\frac{\eta}{d^{3\alpha-1}}}$, and $\beta_2=\beta_1^2$, there exists $t_{\text{inc}}>0$ such that for $t_{\text{inc}}\le t<T_1$, the weights can be approximated in the following way.
\begin{equation}\label{eqn:Ada_rank1_p1}
	\begin{aligned}
		W_1^{(t+1)}[i,j]&=W_1^{(t)}[i,j]-\eta\left(\sgn\left(g_1^{(t)}[i,j]\right)+e_1^{(t)}[i,j]\right),\\
		w_{2i}^{(t+1)}&=w_{2i}^{(t)}-\eta\left(\sgn\left(g_{2i}^{(t)}\right)+e_{2i}^{(t)}\right),
	\end{aligned}
\end{equation}
where $\left|e_1^{(t)}[i,j]\right|=\tilde{\mathcal{O}}\left(\sqrt{\eta}\right),\quad\left|e_{2i}^{(t)}\right|=\tilde{\mathcal{O}}\left(\sqrt{\eta}\right)$.

Before we dive into the proof, let's introduce some useful lemmas.

The following lemma reflects our key idea: converting the exponential average in Adam to a finite-step average, and trying to bound the stochastic error terms in eq.~\eqref{eqn:r_Adam}.
\begin{lemma}\label{lemma:Ada_trunc_update}
	Under Assumption~\ref{assump:setup}, \ref{assump:gaussian_init} and \ref{assump:large_batch} and pick $\beta_2=\beta_1^2$. Let $M_1^{(t)}:=\max_{i,j\in [d], \tau\le t}\left|W_1^{(\tau)}[i,j]\right|$, $M_2^{(t)}:=\max_{i,j\in [d],\tau\le t}\left|w_{2i}^{(\tau)}\right|$, $G_1^{(t)}:=\max_{i,j\in[d],\tau\le t}\left|g_1^{(\tau)}[i,j]\right|$ and $G_2^{(t)}:=\max_{i,j\in[d],\tau\le t}\left|g_{2i}^{(\tau)}\right|$. We have that w.h.p., for all $t\le\tilde{\mathcal{O}}\left(\frac{1}{\sqrt{d}\eta}\right)$ and $\forall i,j\in[d]$,
	\begin{align*}
		\Delta W_1^{(t)}[i,j]&=-\eta_t\frac{(1-\beta_1)\sum_{\tau=0}^{H} \beta_1^{\tau}g_1^{(t-\tau)}[i,j]+\epsilon_{1n}^{(t)}[i,j]}{\sqrt{(1-\beta_2)\sum_{\tau=0}^{H}\beta_2^{\tau}\left(g_1^{(t-\tau)}[i,j]\right)^2+\epsilon_{1d}^{(t)}[i,j]}+\xi},\\
		\quad\Delta w_{2i}^{(t)}&=-\eta_t\frac{(1-\beta_1)\sum_{\tau=0}^{H} \beta_1^{\tau}g_{2i}^{(t-\tau)}+\epsilon_{2n,i}^{(t)}}{\sqrt{(1-\beta_2)\sum_{\tau=0}^{H} \beta_2^{\tau}\left(g_{2i}^{(t-\tau)}\right)^2+\epsilon_{2d,i}^{(t)}}+\xi},
	\end{align*}
	where $H\ge\frac{1}{1-\beta_1}\log\frac{\max\left\{G_1^{(t)},G_2^{(t)},\left(G_1^{(t)}\right)^2,\left(G_2^{(t)}\right)^2\right\}}{\eta\xi^2}$ and
	\begin{align*}
		\left|\epsilon_{1n}^{(t)}[i,j]\right|\le\mathcal{O}(\eta\xi^2)+\mathcal{O}\left(D_1^{(t)}\right),\quad\left|\epsilon_{1d}^{(t)}[i,j]\right|\le\mathcal{O}(\eta\xi^2)+\mathcal{O}\left(D_1^{(t)}G_1^{(t)}+\left(D_1^{(t)}\right)^2\right),\\
		\quad\left|\epsilon_{2n,i}^{(t)}\right|\le\mathcal{O}(\eta\xi^2)+\mathcal{O}\left(D_2^{(t)}\right),\quad\left|\epsilon_{2d,i}^{(t)}\right|\le\mathcal{O}(\eta\xi^2)+\mathcal{O}\left(D_2^{(t)}G_2^{(t)}+\left(D_2^{(t)}\right)^2\right),
	\end{align*}
	with
	\begin{align*}
		D_1^{(t)}&\le\tilde{\mathcal{O}}\left(d^3M_1^{(t)}\left(M_2^{(t)}\right)^2\sigma\sqrt{\frac{d^{1/2}}{\eta}}\right)+\tilde{\mathcal{O}}\left(M_2^{(t)}\sigma\sqrt{\frac{d^{3/2}}{\eta}}\right),\\
		D_2^{(t)}&\le\tilde{\mathcal{O}}\left(d^{4}\left(M_1^{(t)}\right)^2M_2^{(t)}\sigma\sqrt{\frac{d^{1/2}}{\eta}}\right)+\tilde{\mathcal{O}}\left(dM_1^{(t)}\sigma\sqrt{\frac{d^{3/2}}{\eta}}\right).
	\end{align*} 
\end{lemma}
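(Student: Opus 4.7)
The plan is to decompose each Adam numerator and denominator sum into three pieces: a truncated deterministic moving average (the main term displayed in the lemma), a deterministic tail that is discarded because $\beta_1,\beta_2$ are close to $1$, and a stochastic residual coming from $\tilde g - g$. Writing $\tilde g = g + Dg$, I would define
\[
\epsilon_{1n}^{(t)}[i,j]:=(1-\beta_1)\sum_{\tau=H+1}^{t}\beta_1^{\tau}g_1^{(t-\tau)}[i,j]+(1-\beta_1)\sum_{\tau=0}^{t}\beta_1^{t-\tau}Dg_1^{(\tau)}[i,j],
\]
and call the two pieces $T_{\text{tail}}$ and $T_{\text{stoch}}$. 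The companion object $\epsilon_{1d}^{(t)}[i,j]$ is defined analogously after expanding $\tilde g_1^2 = g_1^2 + (2g_1 Dg_1 + (Dg_1)^2)$ and using $\beta_2=\beta_1^2$ in place of $\beta_1$. The $w_2$ updates are handled identically, replacing $Dg_1$ by $Dg_2$ and $D_1^{(t)}$ by $D_2^{(t)}$; then the claimed identity of the lemma is tautological, and the task reduces to bounding $T_{\text{tail}}$ and $T_{\text{stoch}}$ (and their denominator analogues).

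For the truncation part I would bound $|g_1^{(t-\tau)}[i,j]|\le G_1^{(t)}$ uniformly and geometrically sum, giving $|T_{\text{tail}}|\le G_1^{(t)}\beta_1^{H+1}$. The prescribed choice $H\ge\frac{1}{1-\beta_1}\log\frac{\max\{G_1^{(t)},G_2^{(t)},(G_1^{(t)})^2,(G_2^{(t)})^2\}}{\eta\xi^2}$ combined with $\log(1/\beta_1)\ge 1-\beta_1$ then yields $|T_{\text{tail}}|=\mathcal{O}(\eta\xi^2)$. For the denominator tail, the squared gradient is bounded by $(G_1^{(t)})^2$ (already inside the $\max$) and decays at rate $\beta_2^{H+1}=\beta_1^{2(H+1)}$, which is even faster, so the same $\mathcal{O}(\eta\xi^2)$ bound holds. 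This produces the $\mathcal{O}(\eta\xi^2)$ term in both $\epsilon_{1n}$ and $\epsilon_{1d}$.

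For the stochastic residual I would plug in the explicit form $Dg_1^{(\tau)}[i,j] = w_{2i}^{(\tau)}\bigl((W_2^{(\tau)}W_1^{(\tau)})(\tilde\Lambda_{xx}^{(\tau)}-\Lambda_{xx})-(\tilde A^{(\tau)}-A)\bigr)_j$ and invoke Lemma~\ref{lemma:bound_randomness} together with Assumption~\ref{assump:large_batch}, controlling $(W_2W_1)_j$ by $d M_1^{(t)} M_2^{(t)}$, to obtain the uniform envelope $\sup_{\tau\le t}|Dg_1^{(\tau)}[i,j]|\le D_1^{(t)}$ w.h.p. The two summands in the stated $D_1^{(t)}$ correspond respectively to the $(W_2W_1)(\tilde\Lambda_{xx}-\Lambda_{xx})$ and $\tilde A-A$ contributions. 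Since $(1-\beta_1)\sum_{\tau}\beta_1^{t-\tau}\le 1$, the triangle inequality gives $|T_{\text{stoch}}|\le D_1^{(t)}$; for the denominator, the $2g_1 Dg_1$ weighted sum is bounded by $2G_1^{(t)}D_1^{(t)}$ and the $(Dg_1)^2$ weighted sum by $(D_1^{(t)})^2$, matching the stated form. The $w_2$ analogue inherits an extra factor of $d$ from the sum over the columns of $W_1$ present in $g_{2i}=\langle E, W_1[i,:]\rangle$, which is exactly the discrepancy between the $D_1^{(t)}$ and $D_2^{(t)}$ formulas.

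The main obstacle is making these envelopes hold simultaneously for all $t\le\tilde{\mathcal{O}}(1/(\sqrt d\eta))$ and every $(i,j)$ rather than at a single time and coordinate. I would handle this by invoking Lemma~\ref{lemma:bound_randomness} once with time horizon $T=\tilde{\mathcal{O}}(1/(\sqrt d\eta))$ and then union-bounding over the $\mathcal{O}(d^2)$ coordinate pairs, noting that the failure probability afforded by Lemma~\ref{lemma:bound_randomness} is at most $1/\mathrm{poly}(d)$ so the union bound preserves the w.h.p. conclusion at the cost only of logarithmic factors already absorbed into $\tilde{\mathcal{O}}$. The residual verification is then purely algebraic: substituting the running maxima $M_1^{(t)},M_2^{(t)},G_1^{(t)},G_2^{(t)}$ into the Lemma~\ref{lemma:bound_randomness} envelope and matching the resulting two-term expressions against the stated $D_1^{(t)}, D_2^{(t)}$.
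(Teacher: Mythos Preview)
Your proposal is correct and follows essentially the same argument as the paper: decompose each $\epsilon$ into a deterministic tail bounded via geometric summation (the paper's Lemma~\ref{lemma:truncation}) and a stochastic residual bounded via Lemma~\ref{lemma:bound_randomness} with horizon $T=\tilde{\mathcal{O}}(1/(\sqrt{d}\eta))$, then push the uniform envelopes $D_k^{(t)}$ through the $(1-\beta)\sum\beta^\tau$ averages. The one redundancy is your explicit union bound over the $\mathcal{O}(d^2)$ coordinate pairs---Lemma~\ref{lemma:bound_randomness} already delivers its conclusion simultaneously for all $t\le T$ and all $i,j\in[d]$, so no further union bound is needed.
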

\begin{corollary}\label{cor:Ada_trunc_update}
	Under the conditions of Lemma~\ref{lemma:Ada_trunc_update} and suppose $\sigma\le\frac{\eta^{3/2}\xi^2}{d^{13/4}}$. Consider any $t\le\tilde{\mathcal{O}}\left(\frac{1}{\sqrt{d}\eta}\right)$. If $M_1^{(t)},M_2^{(t)}\le\tilde{\mathcal{O}}\left(\frac{1}{\sqrt{d}}\right)$, $G_1^{(t)}\le\tilde{\mathcal{O}}\left(\frac{1}{\sqrt{d}}\right),G_2^{(t)}\le\tilde{\mathcal{O}}\left(\sqrt{d}\right)$, then $H$ in Lemma~\ref{lemma:Ada_trunc_update} can be picked as $\frac{1}{1-\beta_1}\log\frac{d}{\eta\xi^2}$ and we can get that $\forall i,j\in[d]$, $\left|\epsilon_{1n}^{(t)}[i,j]\right|,\left|\epsilon_{1d}^{(t)}[i,j]\right|,\left|\epsilon_{2n,i}^{(t)}\right|,\left|\epsilon_{2d,i}^{(t)}\right|\le\tilde{\mathcal{O}}(\eta\xi^2)$.
\end{corollary}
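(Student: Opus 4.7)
The plan is to treat Corollary~\ref{cor:Ada_trunc_update} as a direct algebraic consequence of Lemma~\ref{lemma:Ada_trunc_update}: once we substitute the assumed bounds on $M_1^{(t)},M_2^{(t)},G_1^{(t)},G_2^{(t)}$ into the general expressions for $H,D_1^{(t)},D_2^{(t)}$ in Lemma~\ref{lemma:Ada_trunc_update}, and use the noise bound $\sigma\le\frac{\eta^{3/2}\xi^2}{d^{13/4}}$, all four error quantities collapse to $\tilde{\mathcal{O}}(\eta\xi^2)$. There is no new probabilistic content to introduce — the only ``hard'' step is keeping the polynomial bookkeeping in $d,\eta,\xi$ clean.

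First I will verify the choice of $H$. Lemma~\ref{lemma:Ada_trunc_update} requires $H\ge\frac{1}{1-\beta_1}\log\frac{\max\{G_1^{(t)},G_2^{(t)},(G_1^{(t)})^2,(G_2^{(t)})^2\}}{\eta\xi^2}$. Under the hypotheses $G_1^{(t)}\le\tilde{\mathcal{O}}(d^{-1/2})$ and $G_2^{(t)}\le\tilde{\mathcal{O}}(\sqrt{d})$, the maximum inside the logarithm is at most $\tilde{\mathcal{O}}(d)$, so $H=\frac{1}{1-\beta_1}\log\frac{d}{\eta\xi^2}$ is sufficient (absorbing the extra $\log$ factors into $\tilde{\mathcal{O}}$).

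Next I will bound $D_1^{(t)}$ and $D_2^{(t)}$. Plugging $M_1^{(t)},M_2^{(t)}\le\tilde{\mathcal{O}}(d^{-1/2})$ into the formula from Lemma~\ref{lemma:Ada_trunc_update} gives
\begin{align*}
D_1^{(t)}&\le\tilde{\mathcal{O}}\!\left(d^3\cdot d^{-1/2}\cdot d^{-1}\cdot\sigma\sqrt{d^{1/2}/\eta}\right)+\tilde{\mathcal{O}}\!\left(d^{-1/2}\sigma\sqrt{d^{3/2}/\eta}\right)=\tilde{\mathcal{O}}\!\left(\sigma d^{7/4}/\sqrt{\eta}\right),\\
D_2^{(t)}&\le\tilde{\mathcal{O}}\!\left(d^{4}\cdot d^{-1}\cdot d^{-1/2}\cdot\sigma\sqrt{d^{1/2}/\eta}\right)+\tilde{\mathcal{O}}\!\left(d\cdot d^{-1/2}\sigma\sqrt{d^{3/2}/\eta}\right)=\tilde{\mathcal{O}}\!\left(\sigma d^{11/4}/\sqrt{\eta}\right).
\end{align*}
Substituting the assumption $\sigma\le\eta^{3/2}\xi^2/d^{13/4}$ then yields $D_1^{(t)}\le\tilde{\mathcal{O}}(\eta\xi^2/d^{3/2})$ and $D_2^{(t)}\le\tilde{\mathcal{O}}(\eta\xi^2/d^{1/2})$, both of which are at most $\tilde{\mathcal{O}}(\eta\xi^2)$.

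Finally I will plug these into the error formulas from Lemma~\ref{lemma:Ada_trunc_update}. Immediately $|\epsilon_{1n}^{(t)}[i,j]|,|\epsilon_{2n,i}^{(t)}|\le\mathcal{O}(\eta\xi^2)+\mathcal{O}(D_1^{(t)}\vee D_2^{(t)})\le\tilde{\mathcal{O}}(\eta\xi^2)$. For the denominator errors I need to check the cross terms: $D_1^{(t)}G_1^{(t)}\le\tilde{\mathcal{O}}(\eta\xi^2/d^2)$, $(D_1^{(t)})^2\le\tilde{\mathcal{O}}(\eta^2\xi^4/d^3)$, and similarly $D_2^{(t)}G_2^{(t)}\le\tilde{\mathcal{O}}(\eta\xi^2)$, $(D_2^{(t)})^2\le\tilde{\mathcal{O}}(\eta^2\xi^4/d)$; each is dominated by $\tilde{\mathcal{O}}(\eta\xi^2)$ (using $\eta,\xi\le 1$), giving $|\epsilon_{1d}^{(t)}[i,j]|,|\epsilon_{2d,i}^{(t)}|\le\tilde{\mathcal{O}}(\eta\xi^2)$. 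The main (mild) obstacle is only to confirm that the quadratic term $D_2^{(t)}G_2^{(t)}$, which is the largest of these cross contributions, does not exceed $\tilde{\mathcal{O}}(\eta\xi^2)$; but a direct computation $\tilde{\mathcal{O}}(\eta\xi^2/\sqrt{d})\cdot\tilde{\mathcal{O}}(\sqrt{d})=\tilde{\mathcal{O}}(\eta\xi^2)$ settles this. All constraints $t\le\tilde{\mathcal{O}}(1/(\sqrt{d}\eta))$ carry over directly from the hypothesis of Lemma~\ref{lemma:Ada_trunc_update}, completing the proof.
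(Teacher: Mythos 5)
Your proposal is correct and follows essentially the same route as the paper's proof: verify that the chosen $H$ dominates the required threshold using the bounds on $G_1^{(t)},G_2^{(t)}$, substitute the bounds on $M_1^{(t)},M_2^{(t)}$ into the expressions for $D_1^{(t)},D_2^{(t)}$ to get $\tilde{\mathcal{O}}(d^{7/4}\sigma\eta^{-1/2})$ and $\tilde{\mathcal{O}}(d^{11/4}\sigma\eta^{-1/2})$, then apply $\sigma\le\eta^{3/2}\xi^2/d^{13/4}$ and plug into the error formulas. Your explicit verification of the cross terms $D_1^{(t)}G_1^{(t)}$, $(D_1^{(t)})^2$, $D_2^{(t)}G_2^{(t)}$, $(D_2^{(t)})^2$ is slightly more detailed than the paper's, which leaves that bookkeeping implicit.
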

The following lemma analyzes the magnitude of weights during a short period at the beginning.
\begin{lemma}\label{lemma:same_sign}
	Under Assumption~\ref{assump:setup}, \ref{assump:gaussian_init} and \ref{assump:large_batch}, suppose $\sigma\le\frac{\eta^{3/2}\xi^2}{d^{13/4}}$. Pick $\xi \le \frac{1}{d^{\frac{3}{2}\alpha}}$, then there exists some time point $t_{\text{inc}}\in(H,T_1)$, such that w.h.p., for $t\le t_{\text{inc}}$, for every $i,j\in[d]$,
	\begin{align*}
		&\left|\Delta W_{1}^{(t)}[i,j]\right|\le\tilde{\mathcal{O}}(\eta),\left|\Delta w_{2i}^{(t)}\right|\le\tilde{\mathcal{O}}(\eta),\\
		&\left|W_{1}^{(t)}[i,j]\right|\le\mathcal{O}\left(\frac{1}{d^{\frac{3}{2}\alpha+1}}\right),\Omega\left(\frac{1}{d^{\frac{3}{2}\alpha}}\right)\le\left|w_{2i}^{(t)}\right|\le\mathcal{O}\left(\frac{1}{d^\alpha}\right).
	\end{align*}
	Specifically, when $t=t_{\text{inc}}$, we have
	$\text{sign}\left(w_{2i}^{(t_{\text{inc}})}\right)=\text{sign}\left(W_1^{(t_{\text{inc}})}[i,j]\right)=\text{sign}\left(w_{2i}^{(0)}\right)$, $\left|W_1^{(t_{\text{inc}})}[i,j]\right|=\Theta\left(\frac{1}{d^{\frac{3}{2}\alpha+1}}\right)$ and $\left|g_1^{\left(t_{\text{inc}}\right)}[i,j]\right|\ge\Omega(\xi),\left|g_{2i}^{\left(t_{\text{inc}}\right)}\right|\ge\Omega(\xi)$. Moreover, Condition~\ref{cond:g1} and \ref{cond:g2} are satisfied for $t=t_{\text{inc}}$. The $s_1^{(t)}[i,j]$ and $s_{2i}^{(t)}$ in the conditions are both $-\text{sign}\left(w_{2i}^{(0)}\right)$.
\end{lemma}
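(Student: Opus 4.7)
My plan is to exploit the fact that during a short initial window, both $W_1$ and $W_2$ remain small enough that $E^{(t)} := W_2^{(t)} W_1^{(t)} - A \approx -A$. Consequently the true gradients have the clean forms $g_1^{(t)}[i,j] \approx -w_{2i}^{(t)} A_j$ and $g_{2i}^{(t)} \approx -\sum_j A_j W_1^{(t)}[i,j]$; combined with $A_j = \Theta(1) > 0$, this yields $\sgn(g_1^{(t)}[i,j]) = -\sgn(w_{2i}^{(t)})$ independent of $W_1[i,j]$. Gaussian tail and anti-concentration bounds (plus a union bound over $i,j\in[d]$, valid for $\alpha$ large) guarantee w.h.p.\ that $|W_1^{(0)}[i,j]| \le \tilde{\mathcal{O}}(1/d^{2\alpha})$ and $\Omega(1/d^{3\alpha/2}) \le |w_{2i}^{(0)}| \le \tilde{\mathcal{O}}(1/d^{\alpha})$, and each Adam step has magnitude at most $\tilde{\mathcal{O}}(\eta)$ (the $\xi$ regularization in the denominator keeps this bound in force even when the second-moment estimate is small).

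\textbf{Sign/magnitude induction.} I will show by induction on $t \le t_{\text{inc}}$ that $\sgn(w_{2i}^{(t)}) = \sgn(w_{2i}^{(0)})$ and $|w_{2i}^{(t)}| = \Theta(|w_{2i}^{(0)}|)$. Under this hypothesis, $g_1^{(t)}[i,j]$ always has sign $-\sgn(w_{2i}^{(0)})$, so the Adam update to $W_1[i,j]$ equals $+\eta_t \cdot \sgn(w_{2i}^{(0)})$ plus small error (from Corollary~\ref{cor:Ada_trunc_update} when $t \ge H$, or a direct expansion of the defining recursion when $t < H$). Starting from $|W_1^{(0)}[i,j]| \le \tilde{\mathcal{O}}(1/d^{2\alpha})$ of arbitrary sign, this monotone drift dominates the initial value after a burn-in of $O(1/(\eta d^{2\alpha}))$ steps, after which $\sgn(W_1^{(t)}[i,j]) = \sgn(w_{2i}^{(0)})$ for every $(i,j)$. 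I then define $t_{\text{inc}}$ as the first time $\eta\cdot(t-t_{\text{burn}}) = \Theta(1/d^{3\alpha/2+1})$, so $t_{\text{inc}} = \tilde{\Theta}(1/(\eta d^{3\alpha/2+1}))$ (which exceeds $H = \tilde{\mathcal{O}}(\log(d/(\eta\xi^2))/(1-\beta_1))$ for $\eta$ small enough), yielding the target $|W_1^{(t_{\text{inc}})}[i,j]| = \Theta(1/d^{3\alpha/2+1})$. To close the induction on $w_{2i}$, the total drift satisfies $|w_{2i}^{(t_{\text{inc}})} - w_{2i}^{(0)}| \le t_{\text{inc}} \cdot \tilde{\mathcal{O}}(\eta) = \tilde{\mathcal{O}}(1/d^{3\alpha/2+1}) \ll \Omega(1/d^{3\alpha/2}) \le |w_{2i}^{(0)}|$, so its sign and order of magnitude are preserved regardless of the (possibly oscillating) direction of its per-step updates.

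\textbf{Verifying the terminal conditions at $t_{\text{inc}}$.} At $t = t_{\text{inc}}$, $|g_1^{(t_{\text{inc}})}[i,j]| = |w_{2i}^{(t_{\text{inc}})}| \cdot |E_j^{(t_{\text{inc}})}| \ge \Omega(1/d^{3\alpha/2})$ and $|g_{2i}^{(t_{\text{inc}})}| \ge d\cdot\Omega(1)\cdot\Theta(1/d^{3\alpha/2+1}) = \Omega(1/d^{3\alpha/2})$, both at least $\Omega(\xi)$ by $\xi \le 1/d^{3\alpha/2}$. Condition~\ref{cond:g1} follows because the sign $-\sgn(w_{2i}^{(0)})$ of $g_1^{(t)}[i,j]$ is constant over $t \in [t_{\text{inc}}-H, t_{\text{inc}}]$ from the induction, so the truncated sum is a weighted average of $H$ aligned terms of magnitude $\ge \Omega(\xi)$, hence itself $\ge \Omega(\xi)$. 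Condition~\ref{cond:g2} requires the analogous consistency for $g_{2i}$; this holds once $W_1^{(t)}[i,:]$ has all entries of sign $\sgn(w_{2i}^{(0)})$, which is the case throughout $[t_{\text{inc}}-H, t_{\text{inc}}]$ because this window lies past the $W_1$-sign-stabilization time $O(1/(\eta d^{2\alpha}))$ (using $\alpha$ large and $H$ polylogarithmic, so $t_{\text{inc}} - H \gg 1/(\eta d^{2\alpha})$).

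\textbf{Main obstacle.} The principal difficulty is the coupling of the sign dynamics: in the first few steps, $g_{2i}^{(t)}$ depends on the random sign pattern of $W_1^{(0)}[i,:]$, so the direction of updates to $w_{2i}$ is unpredictable and could in principle push $w_{2i}$ toward zero. The resolution is the quantitative gap between the anti-concentration lower bound $|w_{2i}^{(0)}| \ge \Omega(1/d^{3\alpha/2})$ and the total drift budget $\tilde{\mathcal{O}}(\eta t_{\text{inc}}) = \tilde{\mathcal{O}}(1/d^{3\alpha/2+1})$---a factor of $d$---which guarantees the sign of $w_{2i}$ cannot flip even under worst-case adversarial drift, thereby keeping the $g_1$-sign analysis intact and the induction closed. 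A secondary technical point is uniformly controlling the stochastic error terms $\epsilon_{1n}, \epsilon_{1d}, \epsilon_{2n}, \epsilon_{2d}$ from Corollary~\ref{cor:Ada_trunc_update}; this is precisely where the hypothesis $\sigma \le \eta^{3/2}\xi^2/d^{13/4}$ enters, forcing these stochastic deviations to be dominated by $\eta\xi^2$ and hence never large enough to alter the sign of the Adam step.
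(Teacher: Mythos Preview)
Your proposal is correct and follows essentially the same route as the paper: Gaussian tail and anti-concentration bounds on the initialization (the paper's Lemma~\ref{lemma:bound_init}), the $\tilde{\mathcal{O}}(\eta)$ per-step increment bound and the $\tilde{\Theta}(\eta)$ signed update under consistent gradient signs (the paper's Lemma~\ref{lemma:increment_order}), followed by the key observation that the total drift budget $\tilde{\mathcal{O}}(1/d^{3\alpha/2+1})$ on $w_{2i}$ is a factor of $d$ smaller than the anti-concentration floor $\Omega(1/d^{3\alpha/2})$, so the sign of $w_{2i}$ cannot flip regardless of the unpredictable early $g_{2i}$ direction. The paper packages this via auxiliary stopping times $t_d$, $t_{\sgn}$, $t_0$ and then sets $t_{\text{inc}}=t_0+\tilde{\Theta}(1/(\eta d^{3\alpha/2+1}))$, but the logic is identical to yours; one small correction is that when you write ``from Corollary~\ref{cor:Ada_trunc_update}'' for the signed $\approx\eta_t\,\sgn(w_{2i}^{(0)})$ update, you actually need the full content of Lemma~\ref{lemma:increment_order} (Corollary~\ref{cor:Ada_trunc_update} only bounds the stochastic error terms, not the sign/magnitude of the update itself).
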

The following lemma gives us lower bounds of $\left|g_1^{(t)}[i,j]\right|$ and $\left|g_{2i}^{(t)}\right|$.
\begin{lemma}\label{lemma:lb_g}
	Under Assumption~\ref{assump:setup}, \ref{assump:gaussian_init} and \ref{assump:large_batch}, suppose $\sigma\le\frac{\eta^{3/2}\xi^2}{d^{13/4}}$. Pick $\xi\le\sqrt{\frac{\eta}{d^{3\alpha-1}}}$, $\eta\le\mathcal{O}\left(\frac{1}{d^{3\alpha}}\right)$. Consider $t_{\text{inc}}$ in Lemma~\ref{lemma:same_sign}. We have w.h.p. for any $t\in[t_{\text{inc}},T_1)$, and for $\forall i,j\in[d]$, $\sgn\left(\Delta W_1^{(t)}[i,j]\right)=\sgn\left(\Delta w_{2i}^{(t)}\right)=\sgn\left(w_{2i}^{(0)}\right)$ and that
	$\forall i,j\in[d]:\left|g_1^{(t)}[i,j]\right|\ge\tilde\Omega\left(\sqrt{\eta}\right),\left|g_{2i}^{(t)}\right|\ge\tilde\Omega\left(\sqrt{\eta}d\right)$. Moreover, we have $\forall \tau\le t$, $\forall i,j\in[d]:\left|W_1^{(\tau)}[i,j]\right|\le\tilde{\mathcal{O}}\left(\frac{1}{\sqrt{d}}\right),\left|w_{2i}^{(\tau)}\right|\le\tilde{\mathcal{O}}\left(\frac{1}{\sqrt{d}}\right)$ and $\left|g_1^{(\tau)}[i,j]\right|\le\tilde{\mathcal{O}}\left(\frac{1}{\sqrt{d}}\right),\left|g_{2i}^{(\tau)}\right|\le\tilde{\mathcal{O}}\left(\sqrt{d}\right)$.
\end{lemma}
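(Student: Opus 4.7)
The plan is to prove Lemma~\ref{lemma:lb_g} by an induction on $t$ that simultaneously propagates several invariants from the base case $t = t_{\text{inc}}$ (supplied by Lemma~\ref{lemma:same_sign}) up to $t = T_1 - 1$. At each step $t$ the hypotheses to be preserved are: (a) $\sgn(w_{2i}^{(t)}) = \sgn(W_1^{(t)}[i,j]) = \sgn(w_{2i}^{(0)})$ for all $i,j \in [d]$; (b) the weight magnitudes $|w_{2i}^{(t)}|, |W_1^{(t)}[i,j]| \le \tilde{\mathcal{O}}(1/\sqrt{d})$; and (c) Conditions~\ref{cond:g1}, \ref{cond:g2} hold at time $t$ with $s_1^{(t)}[i,j] = s_{2i}^{(t)} = -\sgn(w_{2i}^{(0)})$, so that Corollary~\ref{cor:Ada_trunc_update} can be applied. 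The output of one inductive step is the sign and near-uniform magnitude of the updates $\Delta W_1^{(t)}[i,j]$ and $\Delta w_{2i}^{(t)}$, which then carry the invariants forward.

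For the inductive step I would proceed as follows. First, invoke Corollary~\ref{cor:Ada_trunc_update}: under (b) and the upper gradient bounds to be established, the stochastic perturbations $\epsilon_{1n}^{(t)}, \epsilon_{1d}^{(t)}, \epsilon_{2n,i}^{(t)}, \epsilon_{2d,i}^{(t)}$ are all $\tilde{\mathcal{O}}(\eta\xi^2)$. Using Conditions~\ref{cond:g1} and~\ref{cond:g2} together with elementary algebra on the truncated exponential averages, both the numerator $(1-\beta_1)\sum_{\tau=0}^H \beta_1^\tau g^{(t-\tau)}$ and the denominator $\sqrt{(1-\beta_2)\sum_{\tau=0}^H \beta_2^\tau (g^{(t-\tau)})^2}$ are of matching order and $\gg \xi$; by the choice $\beta_2 = \beta_1^2$ and $\xi \le \sqrt{\eta/d^{3\alpha-1}}$ the Adam ratio collapses to $\sgn(g^{(t)})\cdot (1 \pm \tilde{\mathcal{O}}(\sqrt{\eta}))$, so $\Delta w_{2i}^{(t)} = -\eta\,\sgn(g_{2i}^{(t)})(1 \pm \tilde{\mathcal{O}}(\sqrt{\eta}))$ and analogously for $\Delta W_1^{(t)}[i,j]$. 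Because $E_j^{(t)} < -\sqrt{\eta d} < 0$ throughout the first phase, invariant (a) forces both $g_1^{(t)}[i,j] = w_{2i}^{(t)} E_j^{(t)}$ and $g_{2i}^{(t)} = \sum_j E_j^{(t)} W_1^{(t)}[i,j]$ to carry the sign $-\sgn(w_{2i}^{(0)})$ constructively; each update therefore pushes $w_{2i}^{(t)}$ and every $W_1^{(t)}[i,\cdot]$ by $\eta\sgn(w_{2i}^{(0)})$, preserving (a) and (b) (the upper bound $\tilde{\mathcal{O}}(1/\sqrt{d})$ is valid because $\eta(T_1 - t_{\text{inc}}) = \Theta(1/\sqrt{d})$ by Lemma~\ref{lemma:Ada_rank1_p1}). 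The gradient upper bounds then follow from the weight upper bounds and $|E_j^{(t)}| \le \mathcal{O}(1)$, with $|g_{2i}^{(t)}| \le d \cdot \max_j |W_1[i,j]| \cdot \max_j |E_j| \le \tilde{\mathcal{O}}(\sqrt{d})$.

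The gradient lower bounds require the most care and, I expect, constitute the main obstacle. For $|g_1^{(t)}[i,j]| \ge \tilde\Omega(\sqrt{\eta})$, I would split on $s := \eta(t - t_{\text{inc}})$: when $s$ is small, $|w_{2i}^{(t)}|$ is dominated by $|w_{2i}^{(t_{\text{inc}})}| \ge \Omega(1/d^{3\alpha/2})$ from Lemma~\ref{lemma:same_sign}, combined with $|E_j^{(t)}| = \Theta(1)$ this yields $|g_1| \ge \Omega(1/d^{3\alpha/2}) \ge \Omega(\sqrt{\eta})$ via $\eta \le 1/d^{3\alpha}$; when $s$ is larger, linear growth $|w_{2i}^{(t)}| \approx \eta(t-t_{\text{inc}})$ paired with $|E_j^{(t)}| \ge \sqrt{\eta d}$ gives $|g_1^{(t)}[i,j]| \gtrsim s \sqrt{\eta d} \gtrsim \sqrt{\eta}$ near $t = T_1$ where $s \sim 1/\sqrt{d}$. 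The bound on $|g_{2i}^{(t)}|$ is analogous but exploits the constructive sum of $d$ sign-aligned terms to gain the factor of $d$. The delicate step is to verify that Conditions~\ref{cond:g1}/\ref{cond:g2} continue to hold not just at time $t$ but across the entire backward window of length $H = (1-\beta_1)^{-1}\log(d/(\eta\xi^2))$ demanded by Corollary~\ref{cor:Ada_trunc_update}; this requires chaining back through the pre-$t_{\text{inc}}$ regime covered by Lemma~\ref{lemma:same_sign} to certify the initial window, and then propagating sign and magnitude information forward synchronously with the main induction.
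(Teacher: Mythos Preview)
Your inductive framework (invariants (a)--(c) feeding Corollary~\ref{cor:Ada_trunc_update} to pin down the sign and size of each Adam update) is essentially the content of the paper's Lemma~\ref{lemma:phase1_coarse_analysis}, which the paper proves separately by the same induction and then invokes as a black box inside the proof of Lemma~\ref{lemma:lb_g}. You have folded the two together, which is fine. One small circularity to clean up: you justify the weight upper bound (b) by citing $\eta(T_1-t_{\text{inc}})=\Theta(1/\sqrt{d})$ from Lemma~\ref{lemma:Ada_rank1_p1}, but that lemma itself rests on Lemma~\ref{lemma:lb_g}. The paper avoids this by deriving $|w_{2i}^{(t+1)}|,|W_1^{(t+1)}[i,j]|\le\tilde{\mathcal O}(1/\sqrt d)$ directly from the constraint $(W_2^{(t+1)}W_1^{(t+1)})_j\le\mathcal O(1)$ built into the definition of $T_1$ together with the sign alignment and near-equal growth; you should do the same.

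The substantive difference is in how the lower bound $|g_1^{(t)}[i,j]|\ge\tilde\Omega(\sqrt\eta)$ is obtained. The paper does not split on $s$: instead it decomposes
\[
\Delta g_1^{(t)}[i,j]=w_{2i}^{(t+1)}\Delta E_j^{(t)}+\Delta w_{2i}^{(t)}E_j^{(t)},
\]
notes the two summands have opposite signs, and tracks which one dominates to conclude that $|g_1^{(t)}[i,j]|$ is unimodal on $[t_{\text{inc}},T_1)$ (increasing then decreasing). The minimum is therefore at an endpoint, and both endpoints are checked directly. Your case split on $s=\eta(t-t_{\text{inc}})$ is a legitimate alternative and arguably more elementary, but as written it leaves the seam between the two regimes vague: the ``large $s$'' bound $s\sqrt{\eta d}\gtrsim\sqrt\eta$ needs $s\gtrsim 1/\sqrt d$, not merely ``near $T_1$'', while the ``small $s$'' claim $|E_j^{(t)}|=\Theta(1)$ holds precisely because $(W_2W_1)_j\approx\tilde\Theta(ds^2)=o(1)$ whenever $s\le c/\sqrt d$ for a small enough constant $c$. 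You need to state explicitly that these two regimes overlap at a constant multiple of $1/\sqrt d$ so no intermediate range is missed; the paper's unimodality argument sidesteps this bookkeeping by reducing everything to the two endpoints.
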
	
The following lemma shows that when $t_{\text{inc}}\le t<T_1$, we have $\forall i,j\in[d]:\left|g^{(t)}_{2i}\right|\gg\left|g_{2i}^{(t)}-g_{2i}^{(t-1)}\right|$ and that $\left|g^{(t)}_1[i,j]\right|\gg\left|g_1^{(t)}[i,j]-g_1^{(t-1)}[i,j]\right|$.
\begin{lemma}\label{lemma:delta_g_over_g}
	Under Assumption~\ref{assump:setup}, \ref{assump:gaussian_init} and \ref{assump:large_batch}, suppose $\sigma\le\frac{\eta^{3/2}\xi^2}{d^{13/4}}$. Pick $\xi\le\sqrt{\frac{\eta}{d^{3\alpha-1}}}$, $\eta\le\mathcal{O}\left(\frac{1}{d^{3\alpha}}\right)$. For $t_{\text{inc}}$ in Lemma~\ref{lemma:same_sign}, we have that w.h.p. for $t_{\text{inc}}\le t<T_1$ and $\tau\le t$, $\forall i,j\in[d]$,
	\begin{equation}\label{eqn:delta_g_over_g}
		\frac{\left|g_1^{(t)}[i,j]-g_1^{(t-\tau)}[i,j]\right|}{\left|g_1^{(t)}[i,j]\right|}=\tilde{\mathcal{O}}\left(\sqrt{\eta}\tau\right),\quad\frac{\left|g_{2i}^{(t)}-g_{2i}^{(t-\tau)}\right|}{\left|g_{2i}^{(t)}\right|}=\tilde{\mathcal{O}}\left(\sqrt{\eta}\tau\right),
	\end{equation}
	\begin{equation}\label{eqn:delta_g2_over_g2}
		\frac{\left|\left(g_1^{(t)}[i,j]\right)^2-\left(g_1^{(t-\tau)}[i,j]\right)^2\right|}{\left(g_1^{(t)}[i,j]\right)^2}=\tilde{\mathcal{O}}\left(\sqrt{\eta}\tau\right)+\tilde{\mathcal{O}}\left(\eta\tau^2\right),\frac{\left|\left(g_{2i}^{(t)}\right)^2-\left(g_{2i}^{(t-\tau)}\right)^2\right|}{\left(g_{2i}^{(t)}\right)^2}=\tilde{\mathcal{O}}\left(\sqrt{\eta}\tau\right)+\tilde{\mathcal{O}}\left(\eta\tau^2\right).
	\end{equation}
\end{lemma}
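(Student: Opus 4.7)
}
The plan is to derive the two ratio bounds by controlling the per-step change in $g_1^{(t)}[i,j]=w_{2i}^{(t)}E_j^{(t)}$ and $g_{2i}^{(t)}=\langle E^{(t)},W_1^{(t)}[i,:]\rangle$, summing over the window of length $\tau$, and then dividing by the lower bounds on $|g_1^{(t)}[i,j]|$ and $|g_{2i}^{(t)}|$ already supplied by Lemma~\ref{lemma:lb_g}. The statements for the squared gradients will then follow from the identity $a^2-b^2=(a-b)(a+b)$ together with the ratio bounds on the signed quantities.

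\paragraph{Step 1: per-step change of the weights.}
Combining eq.~\eqref{eqn:Ada_rank1_p1} with Lemma~\ref{lemma:lb_g}, for every $\tau'\in[t_{\text{inc}},t]$ and every $i,j\in[d]$,
\[
\left|\Delta W_1^{(\tau')}[i,j]\right|\le \mathcal{O}(\eta),\qquad \left|\Delta w_{2i}^{(\tau')}\right|\le \mathcal{O}(\eta),
\]
because Adam's step is essentially $\eta$ times a sign plus a $\tilde{\mathcal{O}}(\sqrt{\eta})$ error. Using the expansion
\[
E^{(\tau'+1)}-E^{(\tau')}=\left(W_2^{(\tau'+1)}-W_2^{(\tau')}\right)W_1^{(\tau')}+W_2^{(\tau')}\!\left(W_1^{(\tau'+1)}-W_1^{(\tau')}\right)+\left(\Delta W_2^{(\tau')}\right)\!\left(\Delta W_1^{(\tau')}\right),
\]
and the uniform bounds $\left|W_1^{(\tau')}[i,j]\right|,\left|w_{2i}^{(\tau')}\right|\le\tilde{\mathcal{O}}(1/\sqrt{d})$ from Lemma~\ref{lemma:lb_g}, every coordinate satisfies
\[
\left|E_j^{(\tau'+1)}-E_j^{(\tau')}\right|\le d\cdot \mathcal{O}(\eta)\cdot\tilde{\mathcal{O}}(1/\sqrt d)+d\cdot\tilde{\mathcal{O}}(1/\sqrt d)\cdot\mathcal{O}(\eta)+d\cdot\mathcal{O}(\eta^2)=\tilde{\mathcal{O}}\!\left(\eta\sqrt{d}\right).
\]

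\paragraph{Step 2: per-step change of $g_1$ and $g_{2i}$.}
Writing $g_1^{(\tau'+1)}[i,j]-g_1^{(\tau')}[i,j]=\Delta w_{2i}^{(\tau')}\,E_j^{(\tau'+1)}+w_{2i}^{(\tau')}\!\left(E_j^{(\tau'+1)}-E_j^{(\tau')}\right)$ and using $|E_j^{(\tau'+1)}|=\mathcal{O}(1)$ in the first phase, Step~1 yields $\left|g_1^{(\tau'+1)}[i,j]-g_1^{(\tau')}[i,j]\right|\le\tilde{\mathcal{O}}(\eta)$. Summing $\tau$ consecutive such bounds gives
\[
\left|g_1^{(t)}[i,j]-g_1^{(t-\tau)}[i,j]\right|\le\tilde{\mathcal{O}}(\eta\tau).
\]
Analogously, $g_{2i}^{(\tau'+1)}-g_{2i}^{(\tau')}=\sum_j\!\big(\Delta W_1^{(\tau')}[i,j]\,E_j^{(\tau'+1)}+W_1^{(\tau')}[i,j]\,(E_j^{(\tau'+1)}-E_j^{(\tau')})\big)$, so summing $d$ terms gives $\left|g_{2i}^{(\tau'+1)}-g_{2i}^{(\tau')}\right|\le\tilde{\mathcal{O}}(\eta d)$, hence $\left|g_{2i}^{(t)}-g_{2i}^{(t-\tau)}\right|\le\tilde{\mathcal{O}}(\eta d\,\tau)$.

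\paragraph{Step 3: divide by the lower bounds and handle the squares.}
Lemma~\ref{lemma:lb_g} gives $|g_1^{(t)}[i,j]|\ge\tilde\Omega(\sqrt{\eta})$ and $|g_{2i}^{(t)}|\ge\tilde\Omega(\sqrt{\eta}\,d)$, so
\[
\frac{\left|g_1^{(t)}[i,j]-g_1^{(t-\tau)}[i,j]\right|}{\left|g_1^{(t)}[i,j]\right|}\le\frac{\tilde{\mathcal{O}}(\eta\tau)}{\tilde\Omega(\sqrt\eta)}=\tilde{\mathcal{O}}(\sqrt\eta\,\tau),
\]
and the same bound holds for the $g_{2i}$ ratio. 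For the squared version I will write $(g^{(t)})^2-(g^{(t-\tau)})^2=(g^{(t)}-g^{(t-\tau)})(g^{(t)}+g^{(t-\tau)})$ and rewrite $g^{(t-\tau)}/g^{(t)}=1-(g^{(t)}-g^{(t-\tau)})/g^{(t)}$, whence the ratio factor becomes $(2-\tilde{\mathcal{O}}(\sqrt\eta\tau))$, giving $\tilde{\mathcal{O}}(\sqrt\eta\tau)+\tilde{\mathcal{O}}(\eta\tau^2)$ as claimed.

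\paragraph{Main obstacle.}
The only subtle point is making sure the uniform bounds from Lemma~\ref{lemma:lb_g} (on $W_1,W_2,g_1,g_2$) and eq.~\eqref{eqn:Ada_rank1_p1} can be invoked on \emph{every} step of the window $[t-\tau,t]$, not merely at the endpoint; this is legitimate because those lemmas are stated for the entire interval $[t_{\text{inc}},T_1)$, which contains $t-\tau$ provided $\tau\le t-t_{\text{inc}}$. Careful bookkeeping of the factors of $d$ in Step~1 (in particular the $\sqrt d$ that appears in $|E_j^{(\tau'+1)}-E_j^{(\tau')}|$ and cancels against the $\sqrt d$ gap between the lower bounds $\tilde\Omega(\sqrt\eta)$ and $\tilde\Omega(\sqrt\eta\,d)$) is what makes the two ratios come out to the same $\tilde{\mathcal{O}}(\sqrt\eta\tau)$.
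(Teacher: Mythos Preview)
Your proposal is correct and follows essentially the same route as the paper: bound the per-step change of the weights, deduce $|\Delta E_j^{(\tau')}|\le\tilde{\mathcal{O}}(\eta\sqrt d)$, plug into the product decompositions of $\Delta g_1$ and $\Delta g_{2i}$ to get per-step bounds $\tilde{\mathcal{O}}(\eta)$ and $\tilde{\mathcal{O}}(\eta d)$, telescope over $\tau$ steps, and divide by the lower bounds $\tilde\Omega(\sqrt\eta)$ and $\tilde\Omega(\sqrt\eta\,d)$ from Lemma~\ref{lemma:lb_g}. The squared statement is handled the same way (the paper writes it as $|a^2-b^2|/a^2\le 2|a-b|/|a|+(|a-b|/|a|)^2$, which is algebraically equivalent to your $(a-b)(a+b)$ route).

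The one point where you leave a small gap is precisely the one you flag: the lemma is stated for all $\tau\le t$, so $t-\tau$ may drop below $t_{\text{inc}}$, and eq.~\eqref{eqn:Ada_rank1_p1} together with Lemma~\ref{lemma:lb_g} only cover $[t_{\text{inc}},T_1)$. The paper closes this by also invoking Lemma~\ref{lemma:same_sign} (together with Lemma~\ref{lemma:phase1_coarse_analysis}), which supplies the same $\tilde{\mathcal{O}}(\eta)$ increment bounds and $\tilde{\mathcal{O}}(1/\sqrt d)$ weight bounds on the pre-$t_{\text{inc}}$ interval, so that the per-step estimates for $\Delta E_j$, $\Delta g_1$, $\Delta g_{2i}$ in fact hold for \emph{all} $t<T_1$, not only $t\ge t_{\text{inc}}$. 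Once you add that citation your argument is complete and matches the paper's.
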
	
Equipped with these lemmas, now let's prove eq.~\eqref{eqn:Ada_rank1_p1}.

For any $t\in[t_{\text{inc}}, T_1)$, by Lemma~\ref{lemma:lb_g}, we know that $M_1^{(t)},M_2^{(t)}\le\tilde{\mathcal{O}}\left(\frac{1}{\sqrt{d}}\right)$, and that $G_1^{(t)}\le\tilde{\mathcal{O}}\left(\frac{1}{\sqrt{d}}\right),G_2^{(t)}\le\tilde{\mathcal{O}}\left(\sqrt{d}\right)$. At the end of the proof for this lemma, we will show that $T_1=\Theta\left(\frac{1}{\sqrt{d}\eta}\right)$. Then we can pick $H:=\frac{1}{1-\beta_1}\log\frac{d}{\eta\xi^2}$ and apply Lemma~\ref{lemma:Ada_trunc_update} and Corollary~\ref{cor:Ada_trunc_update} to get that, w.h.p., for all $t\in[t_{\text{inc}}, T_1)$ and $\forall i,j\in[d]$, eq.~\eqref{eqn:Ada_update} can be written as
\begin{equation}\label{eqn:Ada_trunc_update}
	\begin{aligned}
		\Delta W_1^{(t)}[i,j]&=-\eta_t\frac{(1-\beta_1)\sum_{\tau=0}^{H} \beta_1^{\tau}g_1^{(t-\tau)}[i,j]+\epsilon_{1n}^{(t)}[i,j]}{\sqrt{(1-\beta_2)\sum_{\tau=0}^{H}\beta_2^{\tau}\left(g_1^{(t-\tau)}[i,j]\right)^2+\epsilon_{1d}^{(t)}[i,j]}+\xi},\\
		\Delta w_{2i}^{(t)}&=-\eta_t\frac{(1-\beta_1)\sum_{\tau=0}^{H} \beta_1^{\tau}g_{2i}^{(t-\tau)}+\epsilon_{2n,i}^{(t)}}{\sqrt{(1-\beta_2)\sum_{\tau=0}^{H} \beta_2^{\tau}\left(g_{2i}^{(t-\tau)}\right)^2+\epsilon_{2d,i}^{(t)}}+\xi},
	\end{aligned}
\end{equation}
where $\forall i,j\in[d]$, $\left|\epsilon_{1n}^{(t)}[i,j]\right|,\left|\epsilon_{1d}^{(t)}[i,j]\right|,\left|\epsilon_{2n,i}^{(t)}\right|,\left|\epsilon_{2d,i}^{(t)}\right|\le\tilde{\mathcal{O}}(\eta\xi^2)$.

Let's first look at the update of $W_1^{(t)}[i,j]$. For $t$ in the first phase, we write the RHS of eq.~\eqref{eqn:Ada_trunc_update} as
\begin{align*}
	&\frac{(1-\beta_1)\sum_{\tau=0}^{H} \beta_1^{\tau}g_1^{(t-\tau)}[i,j]+\epsilon_{1n}^{(t)}[i,j]}{\sqrt{(1-\beta_2)\sum_{\tau=0}^{H}\beta_2^{\tau}\left(g_1^{(t-\tau)}[i,j]\right)^2+\epsilon_{1d}^{(t)}[i,j]}+\xi}\\
	=&\frac{(1-\beta_1)g_1^{(t)}[i,j]\sum_{\tau=0}^{H} \beta_1^{\tau}+(1-\beta_1)\sum_{\tau=0}^H\beta_1^{\tau}\left(g_1^{(t-\tau)}[i,j]-g_1^{(t)}[i,j]\right)+\epsilon_{1n}^{(t)}[i,j]}{\sqrt{(1-\beta_2)\left(g_1^{(t)}[i,j]\right)^2\sum_{\tau=0}^{H} \beta_2^{\tau}+(1-\beta_2)\sum_{\tau=0}^{H} \beta_2^{\tau}\left(\left(g_1^{(t-\tau)}[i,j]\right)^2-\left(g_1^{(t)}[i,j]\right)^2\right)+\epsilon_{1d}^{(t)}[i,j]}+\xi}\\
	:=&\frac{g_1^{(t)}[i,j](1-\beta_1^{H+1})+e_{1n}^{(t)}[i,j]+\epsilon_{1n}^{(t)}[i,j]}{\sqrt{\left(g_1^{(t)}[i,j]\right)^2(1-\beta_2^{H+1})+e_{1d}^{(t)}[i,j]+\epsilon_{1d}^{(t)}[i,j]}+\xi},
\end{align*}
where
\begin{align*}
	e_{1n}^{(t)}[i,j]&:=(1-\beta_1)\sum_{\tau=0}^H\beta_1^{\tau}\left(g_1^{(t-\tau)}[i,j]-g_1^{(t)}[i,j]\right),\\
	e_{1d}^{(t)}[i,j]&:=(1-\beta_2)\sum_{\tau=0}^{H} \beta_2^{\tau}\left(\left(g_1^{(t-\tau)}[i,j]\right)^2-\left(g_1^{(t)}[i,j]\right)^2\right).
\end{align*}
We have already shown that $\left|\epsilon_{1n}^{(t)}[i,j]\right|,\left|\epsilon_{1d}^{(t)}[i,j]\right|\le\tilde{\mathcal{O}}(\eta\xi^2)$. By Lemma~\ref{lemma:delta_g_over_g}, we have that $\forall i,j\in[d]$,
\begin{align*}
	\left|e_{1n}^{(t)}[i,j]\right|&\le(1-\beta_1)\sum_{\tau=0}^H\beta_1^{\tau}\left|g_1^{(t-\tau)}[i,j]-g_1^{(t)}[i,j]\right|\\
	&\le\left|g_1^{(t)}[i,j]\right|\tilde{\mathcal{O}}\left(\sqrt{\eta}\right)(1-\beta_1)\sum_{\tau=0}^H\beta_1^{\tau}\tau=\left|g_1^{(t)}[i,j]\right|\tilde{\mathcal{O}}\left(\sqrt{\eta}\right).
\end{align*}
Similarly, we have $\forall i,j\in[d]$,
\begin{align*}
	\left|e_{1d}^{(t)}[i,j]\right|&\le\left(g_1^{(t)}[i,j]\right)^2\tilde{\mathcal{O}}\left(\sqrt{\eta}\right)(1-\beta_2)\sum_{\tau=0}^H\beta_1^{\tau}\tau+\left(g_1^{(t)}[i,j]\right)^2\tilde{\mathcal{O}}\left(\sqrt{\eta}\right)(1-\beta_2)\sum_{\tau=0}^H\beta_1^{\tau}\tau^2\\
	&=\left(g_1^{(t)}[i,j]\right)^2\tilde{\mathcal{O}}\left(\sqrt{\eta}\right).
\end{align*}
By Lemma~\ref{lemma:lb_g}, we know that $\left|g_{1}^{(t)}[i,j]\right|=\Omega\left(\sqrt{\eta}\right)$. Then we have that
\[
\forall i,j\in[d]:\quad\left|\epsilon_{1n}^{(t)}[i,j]\right|\le\tilde{\mathcal{O}}(\eta\xi^2)\le\tilde{\mathcal{O}}\left(\sqrt{\eta}\right)\left|g_1^{(t)}[i,j]\right|,\quad\left|\epsilon_{1d}^{(t)}[i,j]\right|\le\tilde{\mathcal{O}}\left(\sqrt{\eta}\right)\xi^2.
\]
Therefore by Lemma~\ref{lemma:fraction} in Appendix~\ref{sec:aux}, we have
\[
\frac{g_1^{(t)}[i,j]\left(1-\beta_1^{H+1}\right)+e_{1n}^{(t)}[i,j]+\epsilon_{1n}^{(t)}[i,j]}{\sqrt{\left(g_1^{(t)}[i,j]\right)^2\left(1-\beta_2^{H+1}\right)+e_{1d}^{(t)}[i,j]+\epsilon_{1d}^{(t)}[i,j]}+\xi}=\frac{1 - \beta_1^{H+1}}{\sqrt{1 - \beta_2^{H+1}}}\left(\text{sign}\left(g_1^{(t)}[i,j]\right)+\tilde e_1^{(t)}[i,j]\right),
\]
where $\left|\tilde e_1^{(t)}[i,j]\right|=\tilde{\mathcal{O}}\left(\sqrt{\eta}\right)$.

Since $\beta\in(0,1)$, we know that $\log\beta\le\beta-1<0$. Then our choice of $H$  gives us $H=\frac{1}{1-\beta_1}\log\frac{d}{\eta\xi^2}\ge\frac{\log\frac{\eta\xi^2}{d}}{\log\beta_1}$ and $H>\frac{1}{1-\beta_2}\log\frac{d}{\eta\xi^2}\ge\frac{\log\frac{\eta\xi^2}{d}}{\log\beta_2}$, which implies that $\beta_1^H, \beta_2^H\le \eta\xi^2/d$. Hence for $t\ge t_{\text{inc}}>H$, $\eta_t\frac{1 - \beta_1^{H+1}}{\sqrt{1 - \beta_2^{H+1}}}=\eta\frac{\sqrt{1 - \beta_2^{t+1}}}{\sqrt{1 - \beta_2^{H+1}}}\frac{1 - \beta_1^{H+1}}{1 - \beta_1^{t+1}}=\eta(1\pm\mathcal{O}(\eta))$.

Combining all of the above yields that
\begin{align*}
	W_1^{(t+1)}[i,j]&=W_1^{(t)}[i,j]-\eta_t\frac{(1 - \beta_1) \sum_{\tau=0}^{t} \beta_1^{\tau}g_1^{(t-\tau)}[i,j]}{\sqrt{(1 - \beta_2) \sum_{\tau=0}^{t} \beta_2^{\tau}\left(g_1^{(t-\tau)}[i,j]\right)^2}+\xi}\\
	&=W_1^{(t)}[i,j]-\eta_t\frac{1 - \beta_1^{H+1}}{\sqrt{1 - \beta_2^{H+1}}}\left(\text{sign}\left(g_1^{(t)}[i,j]\right)+\tilde e_1^{(t)}[i,j]\right)\\
	&=W_1^{(t)}[i,j]-\eta\left(\text{sign}\left(g_1^{(t)}[i,j]\right)+e_1^{(t)}[i,j]\right),
\end{align*}
where $\left|e_1^{(t)}[i,j]\right|=\tilde{\mathcal{O}}\left(\sqrt{\eta}\right)$. The proof for $w_{2i}^{(t)}$ is similar.

So far we have successfully proved eq.~\eqref{eqn:Ada_rank1_p1}. By $\sgn\left(\Delta W_1^{(t)}[i,j]\right)=\sgn\left(\Delta w_{2i}^{(t)}\right)=\sgn\left(w_{2i}^{(0)}\right)$ in Lemma~\ref{lemma:lb_g}, we know that $\sgn\left(-g_1^{(t)}[i,j]\right)=\sgn\left(-g_{2i}^{(t)}\right)=\sgn\left(w_{2i}^{(0)}\right)$, which gives us
\begin{align*}
	\forall i,j\in[d]:\quad w_{2i}^{(t)}&=\text{sign}\left(w_{2i}^{(0)}\right)\eta\left(t-t_{\text{inc}}\right)+R_{2i}^{(t)},\\
	W_1^{(t)}[i,j] &=\text{sign}\left(w_{2i}^{(0)}\right)\eta\left(t-t_{\text{inc}}\right)+R_{1}^{(t)}[i,j],
\end{align*}
where $\frac{\left|R_{1}^{(t)}[i,j]\right|}{\eta\left(t-t_{\text{inc}}\right)}=\tilde{\mathcal{O}}\left(\sqrt{\eta}+\frac{\left|W_1^{(t_{\text{inc}})}[i,j]\right|}{\eta\left(t-t_{\text{inc}}\right)}\right)$ and $\frac{\left|R_{2i}^{(t)}\right|}{\eta\left(t-t_{\text{inc}}\right)}=\tilde{\mathcal{O}}\left(\sqrt{\eta}+\frac{\left|w_{2i}^{(t_{\text{inc}})}\right|}{\eta\left(t-t_{\text{inc}}\right)}\right)$.
Now it suffices to show that $\forall i,j\in[d]:\left|w_{2i}^{(t_{\text{inc}})}\right|\le\mathcal{O}\left(\frac{1}{d^{\alpha}}\right),\left|W_1^{(t_{\text{inc}})}[i,j]\right|\le\mathcal{O}\left(\frac{1}{d^{\alpha}}\right)$, which is implied by Lemma~\ref{lemma:same_sign}.

Finally to complete the proof, we show that $T_1=\Theta\left(\frac{1}{\sqrt{d}\eta}\right)$. When $t=T_1$, we have $\forall j\in[d]:\sum_{i=1}^dw_{2i}^{(T_1)}W_1^{(T_1)}[i,j]=\Theta(1)$. Combining with the above results, we know that $d\eta^2(T_1-t_{\text{inc}})^2=\Theta(1)$, i.e. $\eta(T_1-t_{\text{inc}})=\Theta\left(\frac{1}{\sqrt{d}}\right)$. In Section~\ref{subsec:pf_same_sign}, we will prove $t_{\text{inc}}=\Theta\left(\frac{1}{\eta d^{\frac{3}{2}\alpha+1}}\right)$. Then we have $T_1=\Theta\left(\frac{1}{\sqrt{d}\eta}\right)$.

\subsection{Proof of Lemma~\ref{lemma:Ada_trunc_update}}
For certain $t$ and $H$, we write eq.~\eqref{eqn:Ada_update} as
\begin{align*}
	\Delta W_1^{(t)}[i,j]&=-\eta_t\frac{(1-\beta_1)\sum_{\tau=0}^{H} \beta_1^{\tau}g_1^{(t-\tau)}[i,j]+\epsilon_{1n}^{(t)}[i,j]}{\sqrt{(1-\beta_2)\sum_{\tau=0}^{H}\beta_2^{\tau}\left(g_1^{(t-\tau)}[i,j]\right)^2+\epsilon_{1d}^{(t)}[i,j]}+\xi},\\
	\Delta w_{2i}^{(t)}&=-\eta_t\frac{(1-\beta_1)\sum_{\tau=0}^{H} \beta_1^{\tau}g_{2i}^{(t-\tau)}+\epsilon_{2n,i}^{(t)}}{\sqrt{(1-\beta_2)\sum_{\tau=0}^{H} \beta_2^{\tau}\left(g_{2i}^{(t-\tau)}\right)^2+\epsilon_{2d,i}^{(t)}}+\xi},
\end{align*}
where
\begin{align*}
	\epsilon_{1n}^{(t)}[i,j]&:=\underbrace{(1 - \beta_1) \sum_{\tau=H+1}^{t} \beta_1^{\tau}g_1^{(t-\tau)}[i,j]}_{:=q_{1n}^{(t)}[i,j]}+r_{1n}^{(t)}[i,j],\epsilon_{2n,i}^{(t)}:=\underbrace{(1 - \beta_1) \sum_{\tau=H+1}^{t} \beta_1^{\tau}g_{2i}^{(t-\tau)}}_{:=q_{2n,i}^{(t)}}+r_{2n,i}^{(t)},\\
	\epsilon_{1d}^{(t)}[i,j]&:=\underbrace{(1 - \beta_2) \sum_{\tau=H+1}^{t} \beta_2^{\tau}\left(g_1^{(t-\tau)}[i,j]\right)^2}_{:=q_{1d}^{(t)}[i,j]}+r_{1d}^{(t)}[i,j],\epsilon_{2d,i}^{(t)}=\underbrace{(1 - \beta_2) \sum_{\tau=H+1}^{t} \beta_2^{\tau}\left(g_{2i}^{(t-\tau)}\right)^2}_{:=q_{2d,i}^{(t)}}+r_{2d,i}^{(t)},
\end{align*}
and $r_{1n}^{(t)}[i,j],r_{1d}^{(t)}[i,j],r_{2n,i}^{(t)},r_{2d,i}^{(t)}$ are defined in eq.~\eqref{eqn:r_Adam}.

Since $\beta_2=\beta_1^2<\beta_1$, then if we pick $H\ge\frac{1}{1-\beta_1}\log\frac{\max\left\{G_1^{(t)},G_2^{(t)},\left(G_1^{(t)}\right)^2,\left(G_2^{(t)}\right)^2\right\}}{\eta\xi^2}$, we can get that $H\ge\frac{1}{1-\beta_1}\log\frac{G_1^{(t)}}{\eta\xi^2}, H\ge \frac{1}{1-\beta_2}\log\frac{\left(G_1^{(t)}\right)^2}{\eta\xi^2}$, $H\ge\frac{1}{1-\beta_1}\log\frac{G_2^{(t)}}{\eta\xi^2}, H\ge \frac{1}{1-\beta_2}\log\frac{\left(G_2^{(t)}\right)^2}{\eta\xi^2}$. Hence we can apply Lemma~\ref{lemma:truncation} in Appendix~\ref{sec:aux} to get that $\left|q_{1n}^{(t)}[i,j]\right|,\left|q_{1d}^{(t)}[i,j]\right|,\left|q_{2n,i}^{(t)}\right|,\left|q_{2d,i}^{(t)}\right|\le\eta\xi^2$.

Pick $T$ in Lemma~\ref{lemma:bound_randomness} as of order $\tilde{\mathcal{O}}\left(\frac{1}{\sqrt{d}\eta}\right)$. By Lemma~\ref{lemma:bound_randomness}, we have with probability at least $1-\frac{1}{d}$, for all $t\le T$, $\forall \tau\le t$ and $\forall i,j\in[d]$,
\begin{align*}
	\left|Dg_1^{(\tau)}[i,j]\right|&=\left|\tilde g_1^{(\tau)}[i,j]-g_1^{(\tau)}[i,j]\right|\le\tilde{\mathcal{O}}\left(d^3M_1^{(t)}\left(M_2^{(t)}\right)^2\sigma\sqrt{\frac{d^{1/2}}{\eta}}\right)+\tilde{\mathcal{O}}\left(M_2^{(t)}\sigma\sqrt{\frac{d^{3/2}}{\eta}}\right):=D_1^{(t)},\\
	\left|Dg_{2i}^{(\tau)}\right|&=\left|\tilde g_{2i}^{(\tau)}-g_{2i}^{(\tau)}\right|\le\tilde{\mathcal{O}}\left(d^{4}\left(M_1^{(t)}\right)^2M_2^{(t)}\sigma\sqrt{\frac{d^{1/2}}{\eta}}\right)+\tilde{\mathcal{O}}\left(dM_1^{(t)}\sigma\sqrt{\frac{d^{3/2}}{\eta}}\right):=D_2^{(t)}.
\end{align*} 
Plugging into eq.~\eqref{eqn:r_Adam} gives us
\begin{align*}
	\left|r_{1n}^{(t)}[i,j]\right|&\le(1 - \beta_1) \sum_{\tau=0}^{t} \beta_1^{t - \tau}\left|Dg_1^{(\tau)}[i,j]\right|\le\mathcal{O}\left(D_1^{(t)}\right),\\
	\left|r_{1d}^{(t)}[i,j]\right|&\le(1 - \beta_2) \sum_{\tau=0}^{t} \beta_2^{t - \tau}\left|2g_1^{(\tau)}[i,j]Dg_1^{(\tau)}[i,j]\right|+\left|Dg_1^{(\tau)}[i,j]\right|^2\le\mathcal{O}\left(D_1^{(t)}G_1^{(t)}+\left(D_1^{(t)}\right)^2\right),\\
	\left|r_{2n,i}^{(t)}\right|&\le(1 - \beta_1) \sum_{\tau=0}^{t} \beta_1^{t - \tau}\left|Dg_{2i}^{(\tau)}\right|\le\mathcal{O}\left(D_2^{(t)}\right),\\
	\left|r_{2d,i}^{(t)}\right|&\le(1 - \beta_2) \sum_{\tau=0}^{t} \beta_2^{t - \tau}\left|2g_{2i}^{(\tau)}Dg_{2i}^{(\tau)}\right|+\left|Dg_{2i}^{(\tau)}\right|^2\le\mathcal{O}\left(D_2^{(t)}G_2^{(t)}+\left(D_2^{(t)}\right)^2\right).
\end{align*}

\subsection{Proof of Corollary~\ref{cor:Ada_trunc_update}}
Since $G_1^{(t)}\le\tilde{\mathcal{O}}\left(\frac{1}{\sqrt{d}}\right),G_2^{(t)}\le\tilde{\mathcal{O}}\left(\sqrt{d}\right)$, then $H:=\frac{1}{1-\beta_1}\log\frac{d}{\eta\xi^2}$ is bigger than\\ $\frac{1}{1-\beta_1}\log\frac{\max\left\{G_1^{(t)},G_2^{(t)},\left(G_1^{(t)}\right)^2,\left(G_2^{(t)}\right)^2\right\}}{\eta\xi^2}$.

By $M_1^{(t)},M_2^{(t)}\le\tilde{\mathcal{O}}\left(\frac{1}{\sqrt{d}}\right)$, $G_1^{(t)}\le\tilde{\mathcal{O}}\left(\frac{1}{\sqrt{d}}\right),G_2^{(t)}\le\tilde{\mathcal{O}}\left(\sqrt{d}\right)$ and the assumption $\sigma\le\frac{\eta^{3/2}\xi^2}{d^{13/4}}$, we get that $D_1^{(t)}$ and $D_2^{(t)}$ are upper bounded by $D_1^{(t)}\le \tilde{\mathcal{O}}\left(d^{7/4}\sigma\eta^{-1/2}\right)$ and $D_2^{(t)}\le \tilde{\mathcal{O}}\left(d^{11/4}\sigma\eta^{-1/2}\right)$, which yields $\forall i,j\in[d]$, $\left|\epsilon_{1n}^{(t)}[i,j]\right|,\left|\epsilon_{1d}^{(t)}[i,j]\right|,\left|\epsilon_{2n,i}^{(t)}\right|,\left|\epsilon_{2d,i}^{(t)}\right|\le\tilde{\mathcal{O}}(\eta\xi^2)$.

\subsection{Proof of Lemma~\ref{lemma:same_sign}}\label{subsec:pf_same_sign}
The proof is based on the following two lemmas.
\begin{lemma}\label{lemma:bound_init}
	Under Assumption~\ref{assump:setup} and \ref{assump:gaussian_init}, we have that w.p. at least $1-\frac{1}{d^{\frac{\alpha}{2}-1}}$, for every $1\le i\le d$, $\frac{\sqrt{\pi}}{d^{\frac{3}{2}\alpha}}\le\left|w_{2i}^{(0)}\right|\le\sqrt{\frac{2}{d^{2\alpha}}\log\frac{2d}{\delta}}$, and that w.p. at least $1-\delta$ for any given $\delta>0$, $\left|W_1^{(0)}[i,j]\right|\le \sqrt{\frac{2}{d^{4\alpha}}\log\frac{2d^2}{\delta}}$.
\end{lemma}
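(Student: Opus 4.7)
The lemma is a standard concentration statement about the Gaussian initialization, so the plan is to combine a Gaussian tail bound (for the upper estimates) with a Gaussian anti-concentration / density bound (for the lower estimate on $|w_{2i}^{(0)}|$), followed by a union bound over coordinates.

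First I would dispose of the upper bounds. By Assumption~\ref{assump:gaussian_init}, $w_{2i}^{(0)} \sim \mathcal{N}(0, 1/d^{2\alpha})$, so the standard tail bound $\Pr(|X|>t)\le 2\exp(-t^2/(2\sigma^2))$ gives
\[
\Pr\!\left(\left|w_{2i}^{(0)}\right| > \sqrt{\tfrac{2}{d^{2\alpha}}\log\tfrac{2d}{\delta}}\right)\le \frac{\delta}{d},
\]
and a union bound over $i\in[d]$ yields the claimed uniform bound with probability at least $1-\delta$. An identical computation with variance $1/d^{4\alpha}$ and a union bound over the $d^2$ entries of $W_1^{(0)}$ produces the corresponding bound $\left|W_1^{(0)}[i,j]\right|\le \sqrt{(2/d^{4\alpha})\log(2d^2/\delta)}$ with probability at least $1-\delta$.

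Next I would handle the lower bound on $\left|w_{2i}^{(0)}\right|$ via anti-concentration. Because the density of $\mathcal{N}(0,1/d^{2\alpha})$ is maximized at $0$ with value $\sqrt{d^{2\alpha}/(2\pi)}$, integrating over a symmetric window gives
\[
\Pr\!\left(\left|w_{2i}^{(0)}\right|\le t\right)\le 2t\cdot\sqrt{\tfrac{d^{2\alpha}}{2\pi}} = t\,d^{\alpha}\sqrt{\tfrac{2}{\pi}}.
\]
Substituting $t=\sqrt{\pi}/d^{3\alpha/2}$ gives a per-coordinate failure probability of at most $\sqrt{2}/d^{\alpha/2}$, and a union bound over $i\in[d]$ produces a total failure probability of $O(1/d^{\alpha/2-1})$, matching the statement.

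There is no real obstacle here; the only bookkeeping point is to intersect the three events (upper tail for $w_{2}$, upper tail for $W_1$, anti-concentration for $w_2$) and confirm that the dominant failure term is the anti-concentration contribution $1/d^{\alpha/2-1}$, with the Gaussian tail contributions absorbed into the $\delta$ parameter as stated. The independence assumed in Assumption~\ref{assump:gaussian_init} is exactly what lets each union bound go through coordinatewise.
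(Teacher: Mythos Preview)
Your proposal is correct and essentially identical to the paper's proof: both use the Gaussian tail bound plus a union bound for the upper estimates, and the density-based anti-concentration bound $\Pr(|X|\le t)\le 2t/(\sqrt{2\pi}\,\sigma)$ plus a union bound for the lower estimate on $|w_{2i}^{(0)}|$. The only cosmetic difference is that the paper reports the per-coordinate anti-concentration probability as exactly $1/d^{\alpha/2}$ (off by a constant from your $\sqrt{2}/d^{\alpha/2}$, likely a minor slip in the paper), and that the lemma actually states the lower and upper bounds as separate events with separate probabilities, so there is no need to intersect the three events as you suggest at the end.
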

\begin{lemma}\label{lemma:increment_order}
	Under Assumption~\ref{assump:setup}, \ref{assump:gaussian_init} and \ref{assump:large_batch}, suppose $\sigma\le\frac{\eta^{3/2}\xi^2}{d^{13/4}}$. Pick $\beta_2=\beta_1^2$, $\xi\in(0,1),\eta<\frac{1}{4}$. Consider any time point $t\le\tilde{\mathcal{O}}\left(\frac{1}{\sqrt{d}\eta}\right)$. If $\forall \tau\le t,\forall i,j\in[d]:\left|W_1^{(\tau)}[i,j]\right|\le \tilde{\mathcal{O}}\left(\frac{1}{\sqrt{d}}\right),\left|w_{2i}^{(\tau)}\right|\le \tilde{\mathcal{O}}\left(\frac{1}{\sqrt{d}}\right)$ and $\left|g_1^{(\tau)}[i,j]\right|\le \tilde{\mathcal{O}}\left(\frac{1}{\sqrt{d}}\right),\left|g_{2i}^{(\tau)}\right|\le \tilde{\mathcal{O}}\left(\sqrt{d}\right)$, we will have
	\[
	\left|\Delta W_1^{(t)}[i,j]\right|\le\tilde{\mathcal{O}}(\eta)\quad\left|\Delta w_{2i}^{(t)}\right|\le\tilde{\mathcal{O}}(\eta),
	\]
	where the $\tilde{\mathcal{O}}$ notation depends on $H=\frac{1}{1-\beta_1}\log\frac{d}{\eta\xi^2}$.
	
	Furthermore, if for certain $i,j\in[d]$, Condition~\ref{cond:g1} (resp. Condition~\ref{cond:g2}) is satisfied, we will have
	\begin{align*}
		&\sgn\left(\Delta W_1^{(t)}[i,j]\right)=-s_1^{(t)}[i,j],\left|\Delta W_1^{(t)}[i,j]\right|=\tilde{\Theta}(\eta)\\ \Big(\text{resp. }&\sgn\left(\Delta w_{2i}^{(t)}\right)=-s_{2i}^{(t)},\left|\Delta w_{2i}^{(t)}\right|=\tilde{\Theta}(\eta)\Big).
	\end{align*}
\end{lemma}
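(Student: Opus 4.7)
The plan is to show that Adam's per-step update has magnitude $\tilde{\mathcal{O}}(\eta)$ in general, and $\tilde{\Theta}(\eta)$ with the predicted sign whenever the gradient has been stable and sufficiently large across the last $H$ steps. The argument proceeds by first reducing the Adam update to a sign-style update, and then carrying out a careful comparison of signal, noise, and the $\xi$ regularizer in numerator and denominator.

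First, I will invoke Lemma~\ref{lemma:Ada_trunc_update} and Corollary~\ref{cor:Ada_trunc_update}. The hypotheses on $M_1^{(t)}, M_2^{(t)}, G_1^{(t)}, G_2^{(t)}$ together with the noise budget $\sigma \le \eta^{3/2}\xi^2/d^{13/4}$ let me pick $H = \frac{1}{1-\beta_1}\log\frac{d}{\eta\xi^2}$ and bound all the composite (truncation $+$ stochastic) error terms $\epsilon_{1n}^{(t)}[i,j],\epsilon_{1d}^{(t)}[i,j],\epsilon_{2n,i}^{(t)},\epsilon_{2d,i}^{(t)}$ by $\tilde{\mathcal{O}}(\eta\xi^2)$. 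This rewrites the update as
\[
\Delta W_1^{(t)}[i,j] = -\eta_t \cdot \frac{(1-\beta_1)\sum_{\tau=0}^H \beta_1^\tau g_1^{(t-\tau)}[i,j] + \epsilon_{1n}^{(t)}[i,j]}{\sqrt{(1-\beta_2)\sum_{\tau=0}^H \beta_2^\tau (g_1^{(t-\tau)}[i,j])^2 + \epsilon_{1d}^{(t)}[i,j]} + \xi},
\]
and analogously for $\Delta w_{2i}^{(t)}$. Since $t > H$, we have $\eta_t = \eta(1\pm\mathcal{O}(\eta))$.

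Second, for the generic upper bound, I will use $\beta_2 = \beta_1^2$ together with Cauchy--Schwarz to compare the two sums:
\[
\left|(1-\beta_1)\sum_{\tau=0}^H \beta_1^\tau g_1^{(t-\tau)}[i,j]\right| \le \sqrt{1-\beta_1^2}\,\sqrt{\sum_{\tau=0}^H \beta_1^{2\tau} (g_1^{(t-\tau)}[i,j])^2},
\]
so the signal part of the ratio is at most a constant. The perturbations contribute $|\epsilon_{1n}|/\xi, \sqrt{|\epsilon_{1d}|}/\xi = \tilde{\mathcal{O}}(\sqrt{\eta})=o(1)$, which cannot inflate the ratio above $\mathcal{O}(1)$. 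Therefore $|\Delta W_1^{(t)}[i,j]|, |\Delta w_{2i}^{(t)}| \le \tilde{\mathcal{O}}(\eta)$, with the constants tracking the polylog factors in $H$.

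Third, for the conditional $\tilde{\Theta}(\eta)$ claim with the stated sign, suppose Condition~\ref{cond:g1} holds at $(t,i,j)$. All recent gradients share the sign $s_1^{(t)}[i,j]$ and their weighted sum has magnitude $\Omega(\xi)$; since $|\epsilon_{1n}^{(t)}[i,j]| = \tilde{\mathcal{O}}(\eta\xi^2) = o(\xi)$, the numerator retains the sign $s_1^{(t)}[i,j]$ and its magnitude is $\Theta$ of the unperturbed value. Because the summands in the denominator are same-sign, the weighted $\ell_1$ and $\ell_2$ norms of the gradient history are of the same order, so the square-root term is $\Theta\big(|\,(\text{numerator})|/\sqrt{1-\beta_1^2}\,\big)$, and the full denominator is $\Theta(\max\{\text{signal},\xi\})$. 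The ratio is then $\tilde{\Theta}(1)$ with sign $s_1^{(t)}[i,j]$, giving $\sgn(\Delta W_1^{(t)}[i,j]) = -s_1^{(t)}[i,j]$ and $|\Delta W_1^{(t)}[i,j]| = \tilde{\Theta}(\eta)$. The analogue for $\Delta w_{2i}^{(t)}$ under Condition~\ref{cond:g2} is identical.

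The main obstacle is the quantitative three-way comparison between the signal $(1-\beta_1)\sum\beta_1^\tau g$, the compound noise $\epsilon_{1n}, \epsilon_{1d}$, and the regularizer $\xi$: one needs $|\epsilon_{1n}|/\xi$ and $\sqrt{|\epsilon_{1d}|}/\xi$ to be $o(1)$ to preserve both sign and order in the ratio, while simultaneously keeping the denominator large enough that the ratio does not blow up. This is precisely why the noise budget $\sigma \le \eta^{3/2}\xi^2/d^{13/4}$ is imposed and why Conditions~\ref{cond:g1}--\ref{cond:g2} demand an $\Omega(\xi)$ lower bound on the averaged gradient.
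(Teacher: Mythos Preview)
Your approach mirrors the paper's proof: invoke Lemma~\ref{lemma:Ada_trunc_update} and Corollary~\ref{cor:Ada_trunc_update} to reduce to the $H$-truncated form with $\tilde{\mathcal{O}}(\eta\xi^2)$ errors, use Cauchy--Schwarz together with $\beta_2=\beta_1^2$ for the generic upper bound, and use the same-sign/$\Omega(\xi)$ hypothesis for the matching lower bound and sign. The overall structure and the handling of the noise-versus-$\xi$ comparison are correct.

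One quantitative slip: the displayed Cauchy--Schwarz inequality
\[
\Big|(1-\beta_1)\sum_{\tau=0}^H \beta_1^\tau g_1^{(t-\tau)}[i,j]\Big| \le \sqrt{1-\beta_1^2}\,\sqrt{\sum_{\tau=0}^H \beta_1^{2\tau} (g_1^{(t-\tau)}[i,j])^2}
\]
is not true in general (test it with $g_\tau\propto\beta_1^{-\tau}$). What Cauchy--Schwarz actually gives is $\big|\sum_{\tau=0}^H \beta_1^\tau g\big|\le\sqrt{H+1}\,\sqrt{\sum_{\tau=0}^H \beta_1^{2\tau}g^2}$, so the signal ratio is bounded by $\frac{(1-\beta_1)\sqrt{H+1}}{\sqrt{1-\beta_2}}=\mathcal{O}(\sqrt{H})$, not an absolute constant. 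This is exactly how the paper argues, arriving at $|\Delta W_1^{(t)}[i,j]|\le\mathcal{O}(\sqrt{H}\,\eta)=\tilde{\mathcal{O}}(\eta)$. Your conclusion survives because $\sqrt{H}$ is polylogarithmic, but the ``at most a constant'' claim and the displayed bound should be corrected. Similarly, in the lower-bound step you only need $\|a\|_1\ge\|a\|_2$ when the summands share a sign (not that the two norms are of the same order); that one-sided inequality is what yields the $\Omega(1)$ ratio against $\max\{\sqrt{(1-\beta_2)\sum},\,\xi\}$, exactly as in the paper.
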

Now we prove Lemma~\ref{lemma:same_sign}. Define $t_d:=\inf\left\{t:\exists i,j:\left|W_1^{(t)}[i,j]\right|> \frac{1}{d}\text{ or }\left|w_{2i}^{(t)}\right|> \frac{1}{d}\right\}$. Now we want to find a time point $t_{\text{inc}}$ before $t_d$ for the lemma to hold. During the period $t<t_d$, we have $\forall j\in[d], E_j=-\Theta(1)$ (which means $t_d<T_1$) and therefore for all $i,j\in[d]$, $\left|g_1^{(t)}[i,j]\right|\le \frac{1}{d}$ and $\left|g_{2i}^{(t)}\right|\le 1$. Then we can use Lemma~\ref{lemma:increment_order} to get that for $t\le\min\left\{t_d, \frac{1}{\sqrt{d}\eta}\right\}$, we have
$\left|\Delta W_{1}^{(t)}[i,j]\right|\le\tilde{\mathcal{O}}(\eta),\left|\Delta w_{2i}^{(t)}\right|\le\tilde{\mathcal{O}}(\eta)$. Hence $t_d\ge\tilde{\Omega}\left(\frac{1}{\eta d}\right)$.

Define $t_{\sgn}=\inf\left\{t<\min\left\{t_d, \frac{1}{\sqrt{d}\eta}\right\}:\exists i\in[d]:\left|w_{2i}^{(t)}\right|\le\frac{1}{d^{\frac{3}{2}\alpha}}\right\}$. By Lemma~\ref{lemma:bound_init}, w.h.p. $\forall i\in[d]: \left|w_{2i}^{(0)}\right|\ge\frac{\sqrt{\pi}}{d^{\frac{3}{2}\alpha}}$, combining with $\left|\Delta w_{2i}^{(t)}\right|\le\tilde{\mathcal{O}}(\eta)$ gives us that w.h.p., $t_{\sgn}\ge\frac{\sqrt{\pi}-1}{d^{\frac{3}{2}\alpha}}/\tilde{\mathcal{O}}(\eta)=\tilde{\Omega}\left(\frac{1}{\eta d^{\frac{3}{2}\alpha}}\right)$.

Now let's analyze the behavior of $W_1$ during the period $t< t_{\sgn}$. Consider any $i,j\in[d]$. By definition, $\sgn\left(w_{2i}^{(t)}\right)=\sgn\left(w_{2i}^{(0)}\right)$. Note that $E_j^{(t)}=-\Theta(1)$, then we have $\sgn\left(g_{1}^{(t)}[i,j]\right)=-\sgn\left(w_{2i}^{(0)}\right)$ and that $\left|g_{1}^{(t)}[i,j]\right|=\Omega\left(\frac{1}{d^{\frac{3}{2}\alpha}}\right)=\Omega(\xi)$ by our choice of $\xi$. Then we know that Condition~\ref{cond:g1} is satisfied with $s_1^{(t)}[i,j]=-\sgn\left(w_{2i}^{(0)}\right)$ (for all $H<t\le t_{\sgn}$), which by Lemma~\ref{lemma:increment_order} yields $\sgn\left(\Delta W_1^{(t)}[i,j]\right)=\text{sign}\left(w_{2i}^{(0)}\right)$ and $\left|\Delta W_{1}^{(t)}[i,j]\right|=\tilde{\Theta}(\eta)$. 

Lemma~\ref{lemma:bound_init} tells us that w.h.p., $\forall i,j\in[d]: \left|W_1^{(0)}[i,j]\right|=\tilde{\mathcal{O}}\left(\frac{1}{d^{2\alpha}}\right)$. For any $i,j$, if initially $\sgn\left(W_1^{(0)}[i,j]\right)=\sgn\left(w_{2i}^{(0)}\right)$, then for the following steps before $t_{\sgn}$, we will have $\sgn\left(W_1^{(t)}[i,j]\right)=\sgn\left(w_{2i}^{(0)}\right)$. If initially $\sgn\left(W_1^{(0)}[i,j]\right)\ne\sgn\left(w_{2i}^{(0)}\right)$, then after at most $t_0=\tilde{\mathcal{O}}\left(\frac{1}{\eta d^{2\alpha}}\right)$ steps, $W_1[i,j]$ will flip the sign. Note that $t_0=\tilde{\mathcal{O}}\left(\frac{1}{\eta d^{2\alpha}}\right)$ is smaller than $t_{\sgn}$.

Hence we have shown that at some time point $t_0$, we have $\forall i,j\in[d]: \sgn\left(W_1^{(t)}[i,j]\right)=\sgn\left(w_{2i}^{(t)}\right)=\sgn\left(w_{2i}^{(0)}\right)$. Now we analyze the period $t\ge t_0$.

When $t_0<t\le t_{\sgn}$, we still have $\sgn\left(\Delta W_1^{(t)}[i,j]\right)=\sgn\left(w_{2i}^{(0)}\right)$ and $\left|\Delta W_1^{(t)}[i,j]\right|=\tilde{\Theta}(\eta)$. Combining these two with the fact $\sgn\left(W_1^{(t_0)}[i,j]\right)=\sgn\left(w_{2i}^{(0)}\right)$, we know that for all $t\in\left[t_0, t_{\sgn}\right]$, $\sgn\left(W_1^{(t)}[i,j]\right)=\sgn\left(w_{2i}^{(0)}\right)$ and that $\forall i,j\in[d]:\left|W_1^{(t+1)}[i,j]\right|=\left|W_1^{(t)}[i,j]\right|+\tilde{\Theta}(\eta)$. Then at certain step $t_{\text{inc}}$ which satisfies $t_{\text{inc}}=t_0+\tilde{\Theta}\left(\frac{1}{\eta d^{\frac{3}{2}\alpha+1}}\right)\in (H,t_{\sgn}) $, we will have $\forall t_{\text{inc}}-H\le\tau\le t_{\text{inc}},\forall i,j\in[d]:\left|W_1^{(\tau)}[i,j]\right|=\Theta\left(\frac{1}{d^{\frac{3}{2}\alpha+1}}\right)$ and therefore $\left|g_{2i}^{(\tau)}\right|=\left|\sum_{j=1}^dW_1^{(\tau)}[i,j]E_j^{(\tau)}\right|=\sum_{j=1}^d\left|W_1^{(\tau)}[i,j]E_j^{(\tau)}\right|=\Theta\left(\frac{1}{d^{\frac{3}{2}\alpha}}\right)=\Omega(\xi)$. For $t\le t_{\text{inc}}$, we have $\forall i,j\in[d]:\left|W_1^{(t)}[i,j]\right|=\mathcal{O}\left(\frac{1}{d^{\frac{3}{2}\alpha+1}}\right)$.

Since $t_{\text{inc}}<t_{\sgn}$, we have $\left|w_{2i}^{t_{\text{inc}}}\right|=\Omega\left(\frac{1}{d^{\frac{3}{2}\alpha}}\right)$. For $t\le t_{\text{inc}}$, note that $\left|\Delta w_{2i}^{(t)}\right|\le\tilde{\mathcal{O}}(\eta)$, $t_{\text{inc}}=t_0+\tilde{\Theta}\left(\frac{1}{\eta d^{\frac{3}{2}\alpha+1}}\right)=\tilde\Theta\left(\frac{1}{\eta d^{\frac{3}{2}\alpha+1}}\right)$, combining with the upper bound in Lemma~\ref{lemma:bound_init} yields
\[
\left|w_{2i}^{(t)}\right|\le\left|w_{2i}^{(0)}\right|+t_{\text{inc}}\tilde{\mathcal{O}}(\eta)\le\tilde{\mathcal{O}}\left(\frac{1}{d^{\frac{3}{2}\alpha+1}}\right)\le\mathcal{O}\left(\frac{1}{d^{\alpha}}\right).
\]
Moreover, $\forall t_{\text{inc}}-H\le\tau\le t_{\text{inc}},\forall i\in[d]:\text{sign}\left(g_{2i}^{(\tau)}\right)=-\sgn\left(w_{2i}^{(0)}\right)$. Then Condition~\ref{cond:g2} is satisfied with $s_{2i}^{(t)}=-\sgn\left(w_{2i}^{(0)}\right)$ for $t=t_{\text{inc}}$. In the analysis of $g_1^{(t)}[i,j]$, we have already shown that for all $t\le t_{\sgn}$ (and thus for $t=t_{\text{inc}}$), Condition~\ref{cond:g1} is satisfied, which completes the proof.

\subsection{Proof of Lemma~\ref{lemma:bound_init}}
Since for $X\sim\mathcal{N}\left(0,\sigma^2\right)$, we have that $P(|X|\le t)\le\frac{2t}{\sqrt{2\pi}\sigma}$, then for a fixed $i$,
\[
P\left(\left|w_{2i}^{(0)}\right|\le\frac{\sqrt{\pi}}{d^{\frac{3}{2}\alpha}}\right)\le \frac{2\sqrt{\pi}/d^{\frac{3}{2}\alpha}}{\sqrt{2\pi}\cdot\sqrt{2/d^{2\alpha}}}=\frac{1}{d^{\frac{\alpha}{2}}}.
\]
Then by union bound, we have that w.p. at least $1-\frac{1}{d^{\frac{\alpha}{2}-1}}$, for every $1\le i\le d$,
$\left|w_{2i}^{(0)}\right|\ge \frac{\sqrt{\pi}}{d^{\frac{3}{2}\alpha}}$.

As for the upper bounds, using the Gaussian tail bound and union bound, we have w.p. at least $1-\delta$,
\[
\forall i,j\in[d]:\quad\left|w_{2i}^{(0)}\right|\le \sqrt{\frac{2}{d^{2\alpha}}\log\frac{2d}{\delta}},\quad\left|W_1^{(0)}[i,j]\right|\le \sqrt{\frac{2}{d^{4\alpha}}\log\frac{2d^2}{\delta}}.
\]

\subsection{Proof of Lemma~\ref{lemma:increment_order}}
Now we analyze the magnitude order of $\Delta W_1^{(t)}[i,j]$. The analysis of $\Delta w_{2i}^{(t)}$ is similar.

For $t\le\tilde{\mathcal{O}}\left(\frac{1}{\sqrt{d}\eta}\right)$. By assumption, $M_1^{(t)},M_2^{(t)}\le\tilde{\mathcal{O}}\left(\frac{1}{\sqrt{d}}\right)$, $G_1^{(t)}\le\tilde{\mathcal{O}}\left(\frac{1}{\sqrt{d}}\right),G_2^{(t)}\le\tilde{\mathcal{O}}\left(\sqrt{d}\right)$, and $\sigma\le\frac{\eta^{3/2}\xi^2}{d^{13/4}}$. Hence we can pick $H:=\frac{1}{1-\beta_1}\log\frac{d}{\eta\xi^2}$ and apply Lemma~\ref{lemma:Ada_trunc_update} and Corollary~\ref{cor:Ada_trunc_update} to get that, w.h.p., for all $t\le\tilde{\mathcal{O}}\left(\frac{1}{\sqrt{d}\eta}\right)$ and $\forall i,j\in[d]$, eq.~\eqref{eqn:Ada_update} can be written as
\begin{equation}\label{eqn:Ada_trunc_update2}
	\begin{aligned}
		\Delta W_1^{(t)}[i,j]&=-\eta_t\frac{(1-\beta_1)\sum_{\tau=0}^{H} \beta_1^{\tau}g_1^{(t-\tau)}[i,j]+\epsilon_{1n}^{(t)}[i,j]}{\sqrt{(1-\beta_2)\sum_{\tau=0}^{H}\beta_2^{\tau}\left(g_1^{(t-\tau)}[i,j]\right)^2+\epsilon_{1d}^{(t)}[i,j]}+\xi},\\
		\Delta w_{2i}^{(t)}&=-\eta_t\frac{(1-\beta_1)\sum_{\tau=0}^{H} \beta_1^{\tau}g_{2i}^{(t-\tau)}+\epsilon_{2n,i}^{(t)}}{\sqrt{(1-\beta_2)\sum_{\tau=0}^{H} \beta_2^{\tau}\left(g_{2i}^{(t-\tau)}\right)^2+\epsilon_{2d,i}^{(t)}}+\xi},
	\end{aligned}
\end{equation}
where $\forall i,j\in[d]$, $\left|\epsilon_{1n}^{(t)}[i,j]\right|,\left|\epsilon_{1d}^{(t)}[i,j]\right|,\left|\epsilon_{2n,i}^{(t)}\right|,\left|\epsilon_{2d,i}^{(t)}\right|\le\tilde{\mathcal{O}}(\eta\xi^2)$.

On one hand, using $\left|\epsilon_{1n}^{(t)}[i,j]\right|,\left|\epsilon_{1d}^{(t)}[i,j]\right|\le\tilde{\mathcal{O}}(\eta\xi^2)$ and $\beta_2=\beta_1^2$, and $\sqrt{x+y}\ge\sqrt{x}-\sqrt{|y|}$ when $x\ge0,x+y\ge0$, we get from eq.~\eqref{eqn:Ada_trunc_update2} that
\begin{align*}
	\left|\Delta W_1^{(t)}[i,j]\right|&\le \eta_t\frac{(1-\beta_1)\left|\sum_{\tau=0}^{H} \beta_1^{\tau}g_1^{(t-\tau)}[i,j]\right|+\tilde{\mathcal{O}}(\eta\xi^2)}{\sqrt{(1-\beta_2)\sum_{\tau=0}^{H} \left(\beta_1^{\tau}g_1^{(t-\tau)}[i,j]\right)^2}-\tilde{\mathcal{O}}(\sqrt{\eta}\xi)+\xi}\\
	&\overset{(i)}{\le}\eta_t\frac{(1-\beta_1)\sqrt{H+1}\sqrt{\sum_{\tau=0}^{H} \left(\beta_1^{\tau}g_1^{(t-\tau)}[i,j]\right)^2}+\tilde{\mathcal{O}}(\eta\xi^2)}{\sqrt{(1-\beta_2)\sum_{\tau=0}^{H} \left(\beta_1^{\tau}g_1^{(t-\tau)}[i,j]\right)^2}+\xi/2}\le\mathcal{O}\left(\sqrt{H}\eta\right)=\tilde{\mathcal{O}}(\eta),
\end{align*}	
where $(i)$ uses Cauchy-Schwarz inequality for the numerator.

On the other hand, when $\text{sign}\left(g_1^{(t-H)}[i,j]\right)=\text{sign}\left(g_1^{(t-H+1)}[i,j]\right)=...=\text{sign}\left(g_1^{(t)}[i,j]\right)=s_1^{(t)}[i,j]$, we have
\[
\sgn\left(\sum_{\tau=0}^{H} \beta_1^{\tau}g_1^{(t-\tau)}[i,j]\right)=s_1^{(t)}[i,j],\quad\left|\sum_{\tau=0}^{H} \beta_1^{\tau}g_1^{(t-\tau)}[i,j]\right|\ge\sqrt{\sum_{\tau=0}^{H} \left(\beta_1^{\tau}g_1^{(t-\tau)}[i,j]\right)^2}.
\]
If we further have $(1-\beta_1)\left|\sum_{\tau=0}^H\beta_1^{(\tau)}g_1^{(t-\tau)}[i,j]\right|\ge\Omega(\xi)$, then combining with $\left|\epsilon_{1n}^{(t)}[i,j]\right|\le\tilde{\mathcal{O}}(\eta\xi^2)<\xi$ we will get
\[
\sgn\left(\Delta W_1^{(t)}[i,j]\right)=-\sgn\left(\sum_{\tau=0}^{H} \beta_1^{\tau}g_1^{(t-\tau)}[i,j]+\epsilon_{1n}^{(t)}[i,j]\right)=-\sgn\left(\sum_{\tau=0}^{H} \beta_1^{\tau}g_1^{(t-\tau)}[i,j]\right)=-s_1^{(t)}[i,j].
\]
Using $\sqrt{x+y}\le\sqrt{|x|}+\sqrt{|y|}$, we obtain that
\begin{align*}
	\left|\Delta W_1^{(t)}[i,j]\right|&\ge \eta_t\frac{(1-\beta_1)\left|\sum_{\tau=0}^H\beta_1^{(\tau)}g_1^{(t-\tau)}[i,j]\right|-\tilde{\mathcal{O}}(\eta\xi^2)}{\sqrt{(1-\beta_2)\sum_{\tau=0}^{H} \left(\beta_1^{\tau}g_1^{(t-\tau)}[i,j]\right)^2}+\tilde{\mathcal{O}}(\sqrt{\eta}\xi)+\xi}\\
	&\ge\eta_t\frac{\frac{1-\beta_1}{2}\left|\sum_{\tau=0}^H\beta_1^{(\tau)}g_1^{(t-\tau)}[i,j]\right|}{2\max\left\{\sqrt{(1-\beta_2)\sum_{\tau=0}^{H} \left(\beta_1^{\tau}g_1^{(t-\tau)}[i,j]\right)^2},\frac{3}{2}\xi\right\}}=\Omega(\eta).
\end{align*}
Together with the upper bound completes the proof.

\subsection{Proof of Lemma~\ref{lemma:lb_g}}
The proof is based on the following lemma, which gives a coarse analysis on the magnitude of weights and their increments per step during the first phase.
\begin{lemma}\label{lemma:phase1_coarse_analysis}
	Under Assumption~\ref{assump:setup}, \ref{assump:gaussian_init} and \ref{assump:large_batch}, suppose $\sigma\le\frac{\eta^{3/2}\xi^2}{d^{13/4}}$. Pick $\xi\le\min\left\{\sqrt{\frac{\eta}{d^{3\alpha-1}}},\frac{1}{d^{\frac{3}{2}\alpha}}\right\}$, for $t_{\text{inc}}$ in Lemma~\ref{lemma:same_sign}, we have that w.h.p. for all $t_{\text{inc}}\le t\le T_1$, $\forall i,j\in[d]$.
	\begin{align*}
		&\sgn\left(\Delta W_1^{(t)}[i,j]\right)=\sgn\left(\Delta w_{2i}^{(t)}\right)=\sgn\left(w_{2i}^{(0)}\right),\quad\left|\Delta W_1^{(t)}[i,j]\right|=\tilde{\Theta}(\eta),\left|\Delta w_{2i}^{(t)}\right|=\tilde{\Theta}(\eta),\\
		&\sgn\left(W_1^{(t)}[i,j]\right)=\sgn\left(w_{2i}^{(t)}\right)=\sgn\left(w_{2i}^{(0)}\right),\quad\left|W_1^{(t)}[i,j]\right|=\tilde{\mathcal{O}}\left(\frac{1}{\sqrt{d}}\right),\left|w_{2i}^{(t)}\right|=\tilde{\mathcal{O}}\left(\frac{1}{\sqrt{d}}\right).
	\end{align*}
	Specially, at the end of the first phase ($t=T_1$), we have $\forall i,j\in[d]$, $\left|w_{2i}^{(T_1)}\right|=\tilde\Theta\left(\frac{1}{\sqrt{d}}\right),\left|W_1^{(T_1)}[i,j]\right|=\tilde\Theta\left(\frac{1}{\sqrt{d}}\right)$.
\end{lemma}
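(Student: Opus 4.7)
}
I will prove the statement by induction on $t$ from $t_{\text{inc}}$ up to $T_1$, maintaining a joint invariant that (i) for every $i,j$, $\sgn(W_1^{(\tau)}[i,j])=\sgn(w_{2i}^{(\tau)})=\sgn(w_{2i}^{(0)})$ with magnitudes at most $\tilde{\mathcal O}(1/\sqrt d)$, and (ii) Conditions~\ref{cond:g1} and \ref{cond:g2} hold at time $\tau$ with $s_1^{(\tau)}[i,j]=s_{2i}^{(\tau)}=-\sgn(w_{2i}^{(0)})$. The base case $\tau=t_{\text{inc}}$ is exactly what Lemma~\ref{lemma:same_sign} delivers: it gives the sign alignment, the initial magnitudes $|w_{2i}^{(t_{\text{inc}})}|=\Theta(d^{-3\alpha/2})$ and $|W_1^{(t_{\text{inc}})}[i,j]|=\Theta(d^{-3\alpha/2-1})$, and the validity of Conditions~\ref{cond:g1}, \ref{cond:g2} at $t_{\text{inc}}$.

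For the inductive step, assume the invariant for all $\tau\in[t_{\text{inc}},t]$ with $t<T_1$. I first check the hypotheses of Lemma~\ref{lemma:increment_order}. Since $t-t_{\text{inc}}\le T_1-t_{\text{inc}}=\tilde{\mathcal O}(1/(\sqrt d\,\eta))$ (verified at the end), since all $|w_{2i}^{(\tau)}|,|W_1^{(\tau)}[i,j]|\le\tilde{\mathcal O}(1/\sqrt d)$ by induction, and since $|E_j^{(\tau)}|=\mathcal O(1)$ (true before $T_1$ because $W_2W_1$ has yet to overshoot $A$), I obtain $|g_1^{(\tau)}[i,j]|=|w_{2i}^{(\tau)}E_j^{(\tau)}|\le\tilde{\mathcal O}(1/\sqrt d)$ and $|g_{2i}^{(\tau)}|\le d\cdot\tilde{\mathcal O}(1/\sqrt d)=\tilde{\mathcal O}(\sqrt d)$, matching the boundedness hypothesis. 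Applying Lemma~\ref{lemma:increment_order} under Conditions~\ref{cond:g1}, \ref{cond:g2} then yields $\sgn(\Delta W_1^{(t)}[i,j])=\sgn(\Delta w_{2i}^{(t)})=-s_1^{(t)}[i,j]=\sgn(w_{2i}^{(0)})$ with magnitudes $\tilde{\Theta}(\eta)$, so the sign-aligned updates add to the weights monotonically and keep $\sgn(W_1^{(t+1)}[i,j])=\sgn(w_{2i}^{(t+1)})=\sgn(w_{2i}^{(0)})$.

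Next I must re-establish Conditions~\ref{cond:g1} and \ref{cond:g2} at time $t+1$, which is the main obstacle. The sign requirement is immediate: for every $\tau\in[t+1-H,t+1]\subset[t_{\text{inc}},t+1)\subset[t_{\text{inc}},T_1)$ the inductive sign invariant gives $\sgn(w_{2i}^{(\tau)})=\sgn(w_{2i}^{(0)})$ and $\sgn(E_j^{(\tau)})=-1$ (since $T_1$ is defined as the first time some $E_j\ge-\sqrt{\eta d}$, in particular $E_j<0$ strictly before $T_1$), so $\sgn(g_1^{(\tau)}[i,j])=-\sgn(w_{2i}^{(0)})$ and, using sign alignment of $W_1^{(\tau)}[i,j]$ with $w_{2i}^{(0)}$ and $\sgn(E_j^{(\tau)})=-1$, also $\sgn(g_{2i}^{(\tau)})=-\sgn(w_{2i}^{(0)})$. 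The magnitude requirement $(1-\beta_1)|\sum_{\tau=0}^H\beta_1^\tau g_1^{(t+1-\tau)}[i,j]|\ge\Omega(\xi)$ is where the parameter choice $\xi\le\sqrt{\eta/d^{3\alpha-1}}$ becomes critical: by sign consistency the sum is at least $|g_1^{(t+1)}[i,j]|=|w_{2i}^{(t+1)}E_j^{(t+1)}|\ge\Omega(d^{-3\alpha/2})\cdot\sqrt{\eta d}=\Omega(\sqrt{\eta/d^{3\alpha-1}})\ge\Omega(\xi)$ using that $|w_{2i}|$ only grows from its initial $\Omega(d^{-3\alpha/2})$ lower bound and that $|E_j^{(t+1)}|\ge\sqrt{\eta d}$ strictly before $T_1$. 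The analogous bound for $g_{2i}^{(t+1)}$ is even easier because it sums $d$ aligned terms.

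Finally I quantify the phase length and close the magnitude bounds. Because all increments have identical signs and magnitudes $\tilde{\Theta}(\eta)$, for any $t\in[t_{\text{inc}},T_1]$ one has $|w_{2i}^{(t)}|,|W_1^{(t)}[i,j]|=\tilde{\Theta}(\eta(t-t_{\text{inc}}))+\text{initial scale}$. Consequently $(W_2^{(t)}W_1^{(t)})_j=\sum_i w_{2i}^{(t)}W_1^{(t)}[i,j]$ is a sum of $d$ positive terms of size $\tilde{\Theta}(\eta^2(t-t_{\text{inc}})^2)$, giving $(W_2^{(t)}W_1^{(t)})_j=\tilde{\Theta}(d\,\eta^2(t-t_{\text{inc}})^2)$. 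The definition of $T_1$ forces $(W_2^{(T_1)}W_1^{(T_1)})_j$ to reach $A_j-\sqrt{\eta d}=\Theta(1)$ for some $j$, hence $\eta(T_1-t_{\text{inc}})=\tilde{\Theta}(1/\sqrt d)$ and therefore $|w_{2i}^{(T_1)}|,|W_1^{(T_1)}[i,j]|=\tilde{\Theta}(1/\sqrt d)$. Combined with the monotone growth this also yields the global upper bound $\tilde{\mathcal O}(1/\sqrt d)$ throughout $[t_{\text{inc}},T_1]$, completing the induction and the lemma. The main difficulty, as noted, is propagating the Condition~\ref{cond:g1}/\ref{cond:g2} magnitude lower bound near $T_1$ where $|E_j|$ has shrunk to $\sqrt{\eta d}$; the parameter regime is engineered so that the product $|w_{2i}|\cdot|E_j|$ nevertheless dominates $\xi$.
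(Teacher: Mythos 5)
Your proposal is correct and follows essentially the same route as the paper: induction from $t_{\text{inc}}$ to $T_1$ maintaining the sign alignment, the $\tilde{\mathcal{O}}(1/\sqrt{d})$ magnitude bounds, and the validity of Conditions~\ref{cond:g1}--\ref{cond:g2}, with Lemma~\ref{lemma:same_sign} as base case, Lemma~\ref{lemma:increment_order} for the $\tilde\Theta(\eta)$ sign-aligned increments, the lower bounds $|w_{2i}|\ge\Omega(d^{-3\alpha/2})$, $|E_j|\ge\sqrt{\eta d}$ to re-verify the $\Omega(\xi)$ gradient condition, and the constraint $(W_2W_1)_j=\Theta(1)$ at $T_1$ to pin down the final $\tilde\Theta(1/\sqrt{d})$ scale.
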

Now we go back to the proof of Lemma~\ref{lemma:lb_g}. For $t_{\text{inc}}\le t< T_1$, since $E_j^{(t)}=\left(W_2^{(t)}W_1^{(t)}\right)_j-A_j=\sum_{i=1}^dw_{2i}^{(t)}W_1^{(t)}[i,j]-A_j$, we have,
\begin{equation}\label{eqn:delta_E}
	\begin{aligned}
		\Delta E_j^{(t)}&:=E_j^{(t+1)}-E_j^{(t)}\\
		&=\sum_{i=1}^d\left(w_{2i}^{(t+1)}W_1^{(t+1)}[i,j]-w_{2i}^{(t+1)}W_1^{(t)}[i,j]+w_{2i}^{(t+1)}W_1^{(t)}[i,j]-w_{2i}^{(t)}W_1^{(t)}[i,j]\right)\\
		&=\sum_{i=1}^d\left(w_{2i}^{(t+1)}\Delta W_1^{(t)}[i,j]+\Delta w_{2i}^{(t)}W_1^{(t)}[i,j]\right).
	\end{aligned}
\end{equation}
Combining Lemma~\ref{lemma:phase1_coarse_analysis} and eq.~\eqref{eqn:delta_E} gives us $\forall j\in[d]$,
\begin{equation}\label{eqn:delta_E_bound}
	\begin{aligned}
		\Delta E_j^{(t)}>0,\quad \left|\Delta E_j^{(t)}\right|=\sum_{i=1}^d\left|w_{2i}^{(t+1)}\Delta W_1^{(t)}[i,j]\right|+\left|\Delta w_{2i}^{(t)}W_1^{(t)}[i,j]\right|\le \sum_{i=1}^d\tilde{\mathcal{O}}\left(\eta\frac{1}{\sqrt{d}}\right)=\tilde{\mathcal{O}}\left(\eta\sqrt{d}\right).
	\end{aligned}
\end{equation}
Let's first analyze $g_1^{(t)}[i,j]$. Note that
\begin{equation}\label{eqn:delta_g}
	\begin{aligned}
		\Delta g_1^{(t)}[i,j]&=w_{2i}^{(t+1)}E_j^{(t+1)}-w_{2i}^{(t+1)}E_j^{(t)}+w_{2i}^{(t+1)}E_j^{(t)}-w_{2i}^{(t)}E_j^{(t)}\\
		&=w_{2i}^{(t+1)}\Delta E_j^{(t)}+\Delta w_{2i}^{(t)}E_j^{(t)},
	\end{aligned}
\end{equation}
where $\sgn\left(w_{2i}^{(t+1)}\Delta E_j^{(t)}\right)=\sgn\left(w_{2i}^{(0)}\right)$ while $\sgn\left(\Delta w_{2i}^{(t)}E_j^{(t)}\right)=-\sgn\left(w_{2i}^{(0)}\right)$.

Now we analyze the sign of $g_1^{(t)}[i,j]$ when $t_{\text{inc}}\le t<T_1$. Using $\left|w_{2i}^{(t_{\text{inc}}+1)}\right|=\tilde{\mathcal{O}}\left(\frac{1}{d^{\alpha}}\right)$ and eq.~\eqref{eqn:delta_E_bound}, we get that $\left|w_{2i}^{(t_{\text{inc}}+1)}\Delta E_j^{(t_{\text{inc}})}\right|\le \tilde{\mathcal{O}}\left(\frac{1}{d^{\alpha}}\cdot\sqrt{d}\eta\right)$. While on the other hand, $\left|\Delta w_{2i}^{(t_{\text{inc}})}E_j^{(t_{\text{inc}})}\right|=\tilde\Theta(\eta)$. That means $\sgn\left(\Delta g_1^{(t_{\text{inc}})}[i,j]\right)=-\sgn\left(w_{2i}^{(0)}\right)$. Note that $\sgn\left(g_1^{(t_{\text{inc}})}[i,j]\right)=-\sgn\left(w_{2i}^{(t_{\text{inc}})}\right)=-\sgn\left(w_{2i}^{(0)}\right)$, we know that $\left|g_1^{(t)}[i,j]\right|$ will increase when $t=t_{\text{inc}}$.

In the following steps, $\left|g_1^{(t)}[i,j]\right|$ will keep increasing as long as $\left|\Delta w_{2i}^{(t)}E_j^{(t)}\right|>\left|w_{2i}^{(t+1)}\Delta E_j^{(t)}\right|$. Since $\left|W_1^{(t)}[i,j]\right|,\left|w_{2i}^{(t)}\right|$ keep increasing while $\left|\Delta W_1^{(t)}[i,j]\right|,\left|\Delta w_{2i}^{(t)}\right|$ remain $\tilde\Theta(\eta)$, by eq.~\eqref{eqn:delta_E_bound}, we know that the trend of $\left|\Delta E_j^{(t)}\right|$ is to increase. On the other hand, $\left|E_j^{(t)}\right|$ keeps decreasing since $E_j^{(t)}<0$ while $\Delta E_j^{(t)}>0$. Then after some time point we will have $\left|\Delta w_{2i}^{(t)}E_j^{(t)}\right|<\left|w_{2i}^{(t+1)}\Delta E_j^{(t)}\right|$ and in the following steps $\left|g_1^{(t)}[i,j]\right|$ will have the trend to decrease. Specially, when $t=T_1-1$, we have $\left|E_j^{(t)}\right|=\Theta\left(\sqrt{\eta d}\right)$ and $\left|W_1^{(t)}\right|=\tilde\Theta\left(\frac{1}{\sqrt{d}}\right),\left|w_{2i}^{(t+1)}\right|=\tilde\Theta\left(\frac{1}{\sqrt{d}}\right)$ by Lemma~\ref{lemma:phase1_coarse_analysis}, which gives us
\[
\left|\Delta E_j^{(t)}\right|=\sum_{i=1}^d\left|w_{2i}^{(t+1)}\Delta W_1^{(t)}[i,j]\right|+\left|\Delta w_{2i}^{(t)}W_1^{(t)}[i,j]\right|\le \sum_{i=1}^d\tilde\Theta\left(\eta\frac{1}{\sqrt{d}}\right)=\tilde\Theta\left(\eta\sqrt{d}\right).
\]
Hence $\left|w_{2i}^{(t+1)}\Delta E_j^{(t)}\right|=\tilde\Theta(\eta)>\left|\Delta w_{2i}^{(t)}E_j^{(t)}\right|=\tilde\Theta\left(\eta\sqrt{\eta d}\right)$.

Therefore we have proved that when $t_{\text{inc}}\le t<T_1$, the trend of  $\left|g_1^{(t)}[i,j]\right|$ is to first increase and then decrease. In order to prove $\left|g_1^{(t)}[i,j]\right|=\tilde\Omega\left(\sqrt{\eta}\right)$, it suffices to show that $\left|g_1^{(t_{\text{inc}})}[i,j]\right|=\tilde\Omega\left(\sqrt{\eta}\right)$ and $\left|g_1^{(T_1)}[i,j]\right|=\tilde\Omega\left(\sqrt{\eta}\right)$.

When $t=t_{\text{inc}}$,
\[
\left|g_1^{(t_{\text{inc}})}[i,j]\right|=\left|w_{2i}^{(t_{\text{inc}})}\right|\cdot\left|E_j^{(t_{\text{inc}})}\right|=\Omega\left(\frac{1}{d^{\frac{3}{2}\alpha}}\right)\cdot\Theta(1)=\Omega\left(\sqrt{\eta}\right).
\]

When $t=T_1$, we have
\[
\left|g_1^{(T_1)}[i,j]\right|=\left|w_{2i}^{(T_1)}\right|\cdot\left|E_j^{(t_1)}\right|=\tilde\Theta\left(\frac{1}{\sqrt{d}}\cdot\sqrt{\eta d}\right)=\tilde\Theta\left(\sqrt{\eta}\right).
\]
As for $g_{2i}^{(t)}$, since for $\forall i\in[d]$, $W_1^{(t)}[i,j]$ for different $j$ have the same sign. Combining with $\forall j\in[d]: E_j^{(t)}<0$ gives us
\[
\left|g_{2i}^{(t)}\right|=\left|\sum_{j=1}^dE_j^{(t)}W_1^{(t)}[i,j]\right|=\sum_{j=1}^d\left|E_j^{(t)}W_1^{(t)}[i,j]\right|.
\]
Then it suffices to show that for $t_{\text{inc}}\le t<T_1$, $\left|E_j^{(t)}W_1^{(t)}[i,j]\right|=\tilde\Omega\left(\sqrt{\eta}\right)$, which can be proven using the same technique as above.

Finally, for $\forall \tau\le t,\forall i,j\in[d]$, note that the upper bounds of $\left|W_1^{(\tau)}[i,j]\right|$ and $\left|w_{2i}^{(\tau)}\right|$ are already given in Lemma~\ref{lemma:phase1_coarse_analysis}. As for $\left|g_1^{(\tau)}[i,j]\right|$ and $\left|g_{2i}^{(\tau)}\right|$, we have $\left|g_1^{(\tau)}[i,j]\right|=\left|w_{2i}^{(\tau)}E_j^{(\tau)}\right|=\tilde{\mathcal{O}}\left(\frac{1}{\sqrt{d}}\right),\left|g_{2i}^{(\tau)}\right|\le\sum_{j=1}^d\left|E_j^{(\tau)}W_1^{(\tau)}[i,j]\right|=\tilde{\mathcal{O}}\left(\sqrt{d}\right)$.

\subsection{Proof of Lemma~\ref{lemma:phase1_coarse_analysis}}
For any $i,j\in[d]$, and any $t$ in the interval $[t_{\text{inc}},T_1]$, we prove by induction that
\begin{enumerate}[label=(\Alph*)]
	\item $\left|W_1^{(t)}[i,j]\right|=\tilde{\mathcal{O}}\left(\frac{1}{\sqrt{d}}\right),\left|w_{2i}^{(t)}\right|=\tilde{\mathcal{O}}\left(\frac{1}{\sqrt{d}}\right)$.
	\item $\forall \tau\in[t-H,t]:\sgn\left(W_1^{(\tau)}[i,j]\right)=\sgn\left(w_{2i}^{(\tau)}\right)=\sgn\left(w_{2i}^{(0)}\right)$.
	\item $\left|g_1^{(t)}[i,j]\right|\ge\Omega(\xi),\left|g_{2i}^{(t)}\right|\ge\Omega(\xi)$.
\end{enumerate}	
The base case $t=t_{\text{inc}}$ was already proven by Lemma~\ref{lemma:same_sign}.

For $t\in[t_{\text{inc}},T_1)$, suppose (B) and (C) hold for time $t$ and (A) holds for all $\tau\in[t_{\text{inc}},t]$. From (A), we get that $\forall \tau\in[t_{\text{inc}},t]:\left|g_1^{(\tau)}[i,j]\right|=\left|w_{2i}^{(\tau)}E_j^{(\tau)}\right|=\tilde{\mathcal{O}}\left(\frac{1}{\sqrt{d}}\right),\left|g_{2i}^{(\tau)}\right|\le\sum_{j=1}^d\left|E_j^{(\tau)}W_1^{(\tau)}[i,j]\right|=\tilde{\mathcal{O}}\left(\sqrt{d}\right)$. Since when $t<T_1$, $\forall j\in[d]:E_j^{(t)}<0$, from (B) we know that $\forall \tau\in[t-H,t]:\sgn\left(g_1^{(\tau)}[i,j]\right)=\sgn\left(g_{2i}^{(\tau)}\right)=-\sgn\left(w_{2i}^{(0)}\right)$. Combining with (C) tells us that Condition~\ref{cond:g1} and \ref{cond:g2} are satisfied.

In Section~\ref{sec:pf_Ada_rank1_p1} we have shown that $T_1=\Theta\left(\frac{1}{\sqrt{d}\eta}\right)$. Then for $t\in[t_{\text{inc}},T_1)$, we can use Lemma~\ref{lemma:increment_order} to get that $\forall t_{\text{inc}}\le\tau\le t$, $\forall i,j\in[d]$,
\begin{align*}
	\sgn\left(\Delta W_1^{(\tau)}[i,j]\right)=\sgn\left(\Delta w_{2i}^{(\tau)}\right)=\sgn\left(w_{2i}^{(0)}\right),\quad\left|\Delta W_1^{(\tau)}[i,j]\right|=\tilde\Theta(\eta),\left|\Delta w_{2i}^{(\tau)}\right|=\tilde\Theta(\eta).
\end{align*}
Since when $t=t_{\text{inc}}$, $\sgn\left(W_1^{(t_{\text{inc}})}[i,j]\right)=\sgn\left(w_{2i}^{(t_{\text{inc}})}\right)=\sgn\left(w_{2i}^{(0)}\right)$. We get that for $t_{\text{inc}}\le\tau\le t$,
\[
\forall i,j\in[d]:\quad\left|W_1^{(\tau+1)}[i,j]\right|=\left|W_1^{(\tau)}[i,j]\right|+\tilde\Theta(\eta),\quad\left|w_{2i}^{(\tau+1)}\right|=\left|w_{2i}^{(\tau)}\right|+\tilde\Theta(\eta).
\]
Now for $t+1$, we have
\begin{align*}
	\forall i,j\in[d]:\quad &\sgn\left(W_1^{(t+1)}[i,j]\right)=\sgn\left(w_{2i}^{(0)}\right),\quad\left|W_1^{(t+1)}[i,j]\right|=\left|W_1^{(t_{\text{inc}})}[i,j]\right|+\left(t+1-t_{\text{inc}}\right)\tilde\Theta(\eta),\\
	&\sgn\left(w_{2i}^{(t+1)}\right)=\sgn\left(w_{2i}^{(0)}\right),\quad\left|w_{2i}^{(t+1)}\right|=\left|w_{2i}^{(t_{\text{inc}})}\right|+\left(t+1-t_{\text{inc}}\right)\tilde\Theta(\eta).
\end{align*}
That means $\forall \tau\in[t+1-H,t+1]:\sgn\left(W_1^{(\tau)}[i,j]\right)=\sgn\left(w_{2i}^{(\tau)}\right)=\sgn\left(w_{2i}^{(0)}\right)$. This proves (B) for time $t+1$.

On the other hand, we get that $\left|W_1^{(t+1)}[i,j]\right|\ge\left|W_1^{(t_{\text{inc}})}[i,j]\right|=\Theta\left(\frac{1}{d^{\frac{3}{2}\alpha+1}}\right)$ and $\left|w_{2i}^{(t+1)}\right|\ge\left|w_{2i}^{(t_{\text{inc}})}\right|=\Omega\left(\frac{1}{d^{\frac{3}{2}\alpha}}\right)$. Since $t+1\le T_1$ which means $\forall j\in[d]:\left|E_j^{(t+1)}\right|\ge\sqrt{\eta d}$. Then
\begin{align*}
	\left|g_1^{(t+1)}[i,j]\right|&=\left|w_{2i}^{(t+1)}E_j^{(t+1)}\right|\ge\Omega\left(\frac{1}{d^{\frac{3}{2}\alpha}}\right)\sqrt{\eta d}=\Omega(\xi),\\
	\left|g_{2i}^{(t+1)}\right|&=\left|\sum_{j=1}^dE_j^{(t+1)}W_1^{(t+1)}[i,j]\right|=\sum_{j=1}^d\left|E_j^{(t+1)}W_1^{(t+1)}[i,j]\right|\ge d\Theta\left(\frac{1}{d^{\frac{3}{2}\alpha+1}}\right)\sqrt{\eta d}=\Omega(\xi).
\end{align*}
This proves (C) at time $t+1$.

Since $t+1\le T_1$ which means $\forall j\in[d]:\left(W_2^{(t+1)}W_1^{(t+1)}\right)_{j}\le \mathcal{O}(1)$, we obtain that
\begin{align*}
	&\sum_{i=1}^dw_{2i}^{(t+1)}W_1^{(t+1)}[i,j]=\sum_{i=1}^d\left|w_{2i}^{(t+1)}\right|\left|W_1^{(t+1)}[i,j]\right|\\
	=&\sum_{i=1}^d\left(\left|w_{2i}^{(t_{\text{inc}})}\right|+(t+1-t_{\text{inc}})\tilde\Theta(\eta)\right)\left(\left|W_1^{(t_{\text{inc}})}[i,j]\right|+(t+1-t_{\text{inc}})\tilde\Theta(\eta)\right)\le\mathcal{O}(1).
\end{align*}
Note that $\left|W_1^{(t_{\text{inc}})}[i,j]\right|,\left|w_{2i}^{(t_{\text{inc}})}\right|<\frac{1}{d}$ (since $t_{\text{inc}}<t_d$), we get that $(t+1-t_{\text{inc}})\tilde\Theta(\eta)=\mathcal{O}\left(\frac{1}{\sqrt{d}}\right)$, which gives us $\left|w_{2i}^{(t+1)}\right|=\tilde{\mathcal{O}}\left(\frac{1}{\sqrt{d}}\right)$ and $\left|W_1^{(t+1)}[i,j]\right|=\tilde{\mathcal{O}}\left(\frac{1}{\sqrt{d}}\right)$ and hence (A) holds at time $t+1$. 

Therefore by induction, we can prove that (A), (B), (C) hold for all $t_{\text{inc}}\le t\le T_1$. Then applying Lemma~\ref{lemma:increment_order}, we get that for all $t_{\text{inc}}\le t\le T_1$, $\forall i,j\in[d]:\quad \left|\Delta W_1^{(t)}[i,j]\right|=\tilde\Theta(\eta),\quad \left|\Delta w_{2i}^{(t)}\right|=\tilde\Theta(\eta)$.

Specially, at the end of the first phase, we have $\forall j\in[d]:\left(W_2^{(t+1)}W_1^{(t+1)}\right)_{j}= \Theta(1)$. Repeating the above proof techniques gives us $\left|w_{2i}^{(T_1)}\right|=\tilde\Theta\left(\frac{1}{\sqrt{d}}\right)$ and $\left|W_1^{(T_1)}[i,j]\right|=\tilde\Theta\left(\frac{1}{\sqrt{d}}\right)$ for $\forall i,j\in[d]$.

\subsection{Proof of Lemma~\ref{lemma:delta_g_over_g}}
Let's first prove eq.~\eqref{eqn:delta_g_over_g}.

By Lemma~\ref{lemma:lb_g}, for $t_{\text{inc}}\le t<T_1$, we have $\forall i,j\in[d]$, $\left|g_1^{(t)}[i,j]\right|=\tilde\Omega\left(\sqrt{\eta}\right),\left|g_{2i}^{(t)}\right|=\tilde\Omega\left(\sqrt{\eta}d\right)$. Then it suffices to show that for $t_{\text{inc}}\le t<T_1$, $\left|g_1^{(t)}[i,j]-g_1^{(t-\tau)}[i,j]\right|=\tau\tilde{\mathcal{O}}(\eta)$ and $\left|g_{2i}^{(t)}-g_{2i}^{(t-\tau)}\right|=\tau\tilde{\mathcal{O}}(\eta d)$. It suffices to show that when $t<T_1$, $\left|g_1^{(t+1)}[i,j]-g_1^{(t)}[i,j]\right|=\tilde{\mathcal{O}}(\eta)$ and $\left|g_{2i}^{(t+1)}-g_{2i}^{(t)}\right|=\tilde{\mathcal{O}}(\eta d)$.

By Lemma~\ref{lemma:same_sign} and \ref{lemma:phase1_coarse_analysis}, we know that when $t<T_1$, $\forall i,j\in[d]$, $\left|\Delta W_{1}^{(t)}[i,j]\right|\le\tilde{\mathcal{O}}(\eta),\left|\Delta w_{2i}^{(t)}\right|\le\tilde{\mathcal{O}}(\eta)$ and that $\left|W_{1}^{(t)}[i,j]\right|\le\tilde{\mathcal{O}}\left(\frac{1}{\sqrt{d}}\right),\left|w_{2i}^{(t)}\right|\le\tilde{\mathcal{O}}\left(\frac{1}{\sqrt{d}}\right)$. Then the bound $\left|\Delta E_j^{(t)}\right|\le\tilde{\mathcal{O}}\left(\eta\sqrt{d}\right)$ in eq.~\eqref{eqn:delta_E_bound} hold for all $t<T_1$ (not only $t_{\text{inc}}\le t<T_1$). Substituting these bounds into eq.~\eqref{eqn:delta_g} gives us $\forall t<T_1$,
\begin{align*}
	\left|g_1^{(t+1)}[i,j]-g_1^{(t)}[i,j]\right|&\le\left|w_{2i}^{(t+1)}\right|\left|\Delta E_j^{(t)}\right|+\left|\Delta w_{2i}^{(t)}\right|\left|E_j^{(t)}\right|\\
	&=\tilde{\mathcal{O}}\left(\frac{1}{\sqrt{d}}\right)\tilde{\mathcal{O}}\left(\eta\sqrt{d}\right)+\tilde\Theta(\eta)\mathcal{O}(1)=\tilde{\mathcal{O}}(\eta).
\end{align*}
Similarly, we have that $\left|g_{2i}^{(t+1)}-g_{2i}^{(t)}\right|=\tilde{\mathcal{O}}(\eta d)$, which proves eq.~\eqref{eqn:delta_g_over_g}.

Note that for $a,b\in \mathbb{R}$:
\[
\frac{\left|a^2-b^2\right|}{a^2}=\frac{\left|a^2-(a-b-a)^2\right|}{a^2}=\frac{\left|2a(a-b)-(a-b)^2\right|}{a^2}\le2\frac{|a-b|}{|a|}+\left(\frac{|a-b|}{|a|}\right)^2.
\]
Then eq.~\eqref{eqn:delta_g2_over_g2} immediately follows from eq.~\eqref{eqn:delta_g_over_g}.

\subsection{Proof of Lemma~\ref{lemma:Ada_rank1_p2}}\label{sec:pf_Ada_rank1_p2}
We divide Lemma~\ref{lemma:Ada_rank1_p2} into the following three lemmas. Combining them together immediately gives us the whole proof.

The first lemma below gives us the structure of $W_2$ in the second phase and that of $W_1$ under some conditions.
\begin{lemma}\label{lemma:ada_rank1_W2}
	Under Assumption~\ref{assump:setup}, \ref{assump:gaussian_init} and \ref{assump:large_batch}, suppose $\sigma\le\frac{\eta^{3/2}\xi^2}{d^{13/4}}$. By picking $\eta\le\mathcal{O}\left(\frac{1}{d^{3\alpha}}\right),\xi\le\sqrt{\frac{\eta}{d^{3\alpha-1}}}$, and $\beta_2=\beta_1^2$, we have w.h.p. for $T_1\le t<\tilde T$,
	\[
	\forall i\in[d]:\quad w_{2i}^{(t+1)}=w_{2i}^{(t)}-\eta\left(\sgn\left(g_{2i}^{(t)}\right)+e_{2i}^{(t)}\right),\quad \text{where }\left|e_{2i}^{(t)}\right|=\tilde{\mathcal{O}}\left(\sqrt{\eta}\right),
	\]
	and moreover
	\[
	\forall i\in[d]:\quad w_{2i}^{(t)}=\sgn\left(w_{2i}^{(0)}\right)c^{(t)}+R_{2i}^{(t)},\quad \text{where}\quad\frac{\left|R_{2i}^{(t)}\right|}{c^{(t)}}=\tilde{\mathcal{O}}\left(\sqrt{\eta}+\frac{1}{d^{\alpha-1/2}}\right).
	\]
	As for $W_1$, if for certain $i.j\in[d]$ and certain $t\in[T_1,\tilde{T})$ we have $\left|g_1^{(t)}[i,j]\right|=\tilde\Omega\left(\sqrt{\eta}\right)$ , then
	\[
	W_{1}^{(t+1)}[i,j]=W_{1}^{(t)}[i,j]-\eta\left(\sgn\left(g_{1}^{(t)}[i,j]\right)+e_{1}^{(t)}[i,j]\right),\quad \text{where }\left|e_{1}^{(t)}[i,j]\right|=\tilde{\mathcal{O}}\left(\sqrt{\eta}\right).
	\]
\end{lemma}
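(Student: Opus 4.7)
The plan is to transport the first-phase argument of Lemma~\ref{lemma:Ada_rank1_p1} to the second-phase interval $[T_1,\tilde T)$. The three ingredients stay the same: (i) truncate each Adam step to a moving average over the last $H=\frac{1}{1-\beta_1}\log(d/(\eta\xi^2))$ gradients via Lemma~\ref{lemma:Ada_trunc_update}/Corollary~\ref{cor:Ada_trunc_update}; (ii) establish sign- and magnitude-stability of the gradient over that window; (iii) apply Lemma~\ref{lemma:fraction} to collapse the Adam ratio to a signed unit increment with relative error $\tilde{\mathcal{O}}(\sqrt\eta)$; (iv) telescope. The only substantive change from the first-phase proof is that the gradient magnitude and sign hypotheses on $g_{2i}$ are now supplied directly by the definitions of $T_g$ and $T_f$.

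First I would maintain, by induction along $[T_1,\tilde T)$, the preconditions $M_1^{(t)},M_2^{(t)}\le\tilde{\mathcal{O}}(1/\sqrt d)$ and $G_1^{(t)}\le\tilde{\mathcal{O}}(1/\sqrt d),G_2^{(t)}\le\tilde{\mathcal{O}}(\sqrt d)$ required by Corollary~\ref{cor:Ada_trunc_update}, seeded by Lemma~\ref{lemma:phase1_coarse_analysis} and propagated by per-step increments of size $\tilde{\mathcal{O}}(\eta)$ over a window of length $\tilde T-T_1=\tilde{\mathcal{O}}(1/(\sqrt d\,\eta))$. Corollary~\ref{cor:Ada_trunc_update} then yields the truncated representation with stochastic error $\tilde{\mathcal{O}}(\eta\xi^2)$ in numerator and denominator. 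The definition of $T_g$ gives $|g_{2i}^{(t)}|>d\sqrt\eta$ uniformly on $[T_1,\tilde T)$, which both prevents $g_{2i}$ from crossing zero — pinning $\sgn(g_{2i}^{(t)})=-\sgn(w_{2i}^{(0)})$ throughout the phase — and dominates the truncation noise since $\xi\le\sqrt{\eta/d^{3\alpha-1}}$.

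Next I would derive the second-phase analogue of Lemma~\ref{lemma:delta_g_over_g} by pushing $|\Delta W_1^{(t)}|,|\Delta w_{2i}^{(t)}|\le\tilde{\mathcal{O}}(\eta)$ through \eqref{eqn:delta_g} to get $|g_{2i}^{(t)}-g_{2i}^{(t-\tau)}|\le\tilde{\mathcal{O}}(\eta d\tau)$; combined with $|g_{2i}^{(t)}|\ge d\sqrt\eta$ this produces the relative slow-variation bound $\tilde{\mathcal{O}}(\sqrt\eta\,\tau)$ on the $H$-step window. Both exponentially weighted sums in Adam's update then collapse to $g_{2i}^{(t)}(1-\beta_k^{H+1})$ with $\tilde{\mathcal{O}}(\sqrt\eta)$ relative error, and Lemma~\ref{lemma:fraction} delivers $\Delta w_{2i}^{(t)}=-\eta(\sgn(g_{2i}^{(t)})+e_{2i}^{(t)})$ with $|e_{2i}^{(t)}|=\tilde{\mathcal{O}}(\sqrt\eta)$. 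Since the sign is the common value $-\sgn(w_{2i}^{(0)})$ across all $i$ and all $t\in[T_1,\tilde T)$, telescoping gives $w_{2i}^{(t)}=\sgn(w_{2i}^{(0)})\,c^{(t)}+R_{2i}^{(t)}$ with $c^{(t)}:=c^{(T_1)}+\eta(t-T_1)$ and $|R_{2i}^{(t)}|\le|R_{2i}^{(T_1)}|+\eta(t-T_1)\tilde{\mathcal{O}}(\sqrt\eta)$; inserting $c^{(T_1)}=\Theta(1/\sqrt d)$, the initial bound $|R_{2i}^{(T_1)}|/c^{(T_1)}=\tilde{\mathcal{O}}(\sqrt\eta+1/d^{\alpha-1/2})$ from Lemma~\ref{lemma:Ada_rank1_p1}, and $\eta(\tilde T-T_1)=\mathcal{O}(1/\sqrt d)$ yields the stated relative error. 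The conditional $W_1$ statement is identical, with $|g_1^{(t)}[i,j]|=\tilde\Omega(\sqrt\eta)$ playing the role of $|g_{2i}|>d\sqrt\eta$.

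The hardest step is verifying sign- and slow-variation of $g_{2i}$ \emph{uniformly over the whole lookback window} rather than only at time $t$. This requires $H\cdot\tilde{\mathcal{O}}(\eta d)\ll d\sqrt\eta$, i.e.\ $\log(d/(\eta\xi^2))=\tilde{\mathcal{O}}(1/\sqrt\eta)$, which the hypotheses $\eta\le\mathcal{O}(1/d^{3\alpha})$ and $\xi\le\sqrt{\eta/d^{3\alpha-1}}$ supply up to logarithmic factors; but it must be interleaved with the inductive propagation of the magnitude preconditions in Step 1, and carried through every application of Corollary~\ref{cor:Ada_trunc_update} to keep the truncation noise $\tilde{\mathcal{O}}(\eta\xi^2)$ strictly below the $\sqrt\eta\,|g_{2i}^{(t)}|$ scale at which Lemma~\ref{lemma:fraction} becomes applicable.
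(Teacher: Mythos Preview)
Your proposal is correct and follows essentially the same approach as the paper: the paper packages the inductive maintenance of the magnitude bounds $|w_{2i}^{(t)}|,|W_1^{(t)}[i,j]|=\tilde{\mathcal{O}}(1/\sqrt d)$ and increment bounds $\tilde\Theta(\eta)$ into a separate auxiliary lemma (Lemma~\ref{lemma:phase2_coarse_analysis}), then invokes the definition of $T_g$ for the lower bound $|g_{2i}^{(t)}|\ge d\sqrt\eta$, derives the slow-variation analogue of Lemma~\ref{lemma:delta_g_over_g}, and reruns the first-phase argument of Section~\ref{sec:pf_Ada_rank1_p1} to collapse the Adam step via Lemma~\ref{lemma:fraction} and telescope. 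Your identification of the interleaved sign/magnitude induction over the lookback window as the delicate point is accurate, and your treatment of the conditional $W_1$ claim matches the paper's one-line ``same techniques'' remark.
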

The second lemma below also analyzes the structure of $W_1$ but removes the conditions in Lemma~\ref{lemma:ada_rank1_W2}.
\begin{lemma}\label{lemma:ada_rank1_W1}
	Under Assumption~\ref{assump:setup}, \ref{assump:gaussian_init} and \ref{assump:large_batch}, suppose $\sigma\le\frac{\eta^{3/2}\xi^2}{d^{13/4}}$. By picking $\eta\le\mathcal{O}\left(\frac{1}{d^{3\alpha}}\right),\xi\le\sqrt{\frac{\eta}{d^{3\alpha-1}}}$, and $\beta_2=\beta_1^2$, we have w.h.p. for $T_1\le t<\tilde T$, $\forall i,j\in[d]$, $\left|W_1^{(t)}[i,j]\right|=\tilde\Omega\left(\frac{1}{\sqrt{d}}\right)$ and for any $j\in[d]$,
	\[
	W_1^{(t)}[i,j]=\sgn\left(w_{2i}^{(0)}\right)V_j^{(t)}+R_1^{(t)}[i,j],\quad\text{where }\frac{\left|R_1^{(t)}[i,j]\right|}{\left|V_j^{(t)}\right|}\le\tilde{\mathcal{O}}\left(\eta^{\frac{1}{4}}+\frac{1}{d^{\frac{\alpha}{2}-\frac{1}{4}}}\right).
	\]
\end{lemma}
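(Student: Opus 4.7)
The proof proceeds by induction on $t \in [T_1, \tilde T)$, starting from the rank-1 decomposition at the end of phase 1 and propagating it forward using the per-step update structure from Lemma~\ref{lemma:ada_rank1_W2}. For the base case, Lemma~\ref{lemma:Ada_rank1_p1} gives $W_1^{(T_1)}[i,j]=\sgn(w_{2i}^{(0)})\eta(T_1-t_{\text{inc}})+R_1^{(T_1)}[i,j]$ with $\eta(T_1-t_{\text{inc}})=\Theta(1/\sqrt{d})$ and the required residual ratio $\tilde{\mathcal{O}}(\sqrt{\eta}+d^{-(\alpha-1/2)})$. We initialize $V_j^{(T_1)}:=\eta(T_1-t_{\text{inc}})$ for every $j$, which trivially yields both the lower bound $|W_1^{(T_1)}[i,j]|=\tilde\Omega(1/\sqrt{d})$ and the decomposition at time $T_1$.

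The inductive step is driven by the sign-update form from Lemma~\ref{lemma:ada_rank1_W2}. Since that lemma shows $\sgn(w_{2i}^{(t)})=\sgn(w_{2i}^{(0)})$, the gradient $g_1^{(t)}[i,j]=w_{2i}^{(t)}E_j^{(t)}$ has sign $\sgn(w_{2i}^{(0)})\sgn(E_j^{(t)})$. Thus whenever $|g_1^{(t)}[i,j]|=\tilde\Omega(\sqrt{\eta})$, the update reads
\[
\sgn(w_{2i}^{(0)})W_1^{(t+1)}[i,j]=\sgn(w_{2i}^{(0)})W_1^{(t)}[i,j]-\eta\,\sgn(E_j^{(t)})-\eta\,\sgn(w_{2i}^{(0)})e_1^{(t)}[i,j],
\]
in which the first two terms on the right are $i$-independent. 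This motivates defining $V_j^{(t+1)}:=V_j^{(t)}-\eta\,\sgn(E_j^{(t)})$ and absorbing the $\eta\,\sgn(w_{2i}^{(0)})e_1^{(t)}[i,j]$ contribution and the initial $R_1^{(T_1)}[i,j]$ into $R_1^{(t+1)}[i,j]$. Each such step adds at most $\eta\cdot\tilde{\mathcal{O}}(\sqrt{\eta})$ to the residual.

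The delicate case is when $|g_1^{(t)}[i,j]|=o(\sqrt{\eta})$, which, because $|w_{2i}^{(t)}|=\tilde\Theta(c^{(t)})\ge\tilde\Omega(1/\sqrt{d})$ by Lemma~\ref{lemma:ada_rank1_W2}, forces $|E_j^{(t)}|$ to be small for that particular $j$. In this regime I would use the crude bound $|W_1^{(t+1)}[i,j]-W_1^{(t)}[i,j]|\le\tilde{\mathcal{O}}(\eta)$ that holds for any Adam step, and define $V_j^{(t+1)}$ by the same formula but interpret $\sgn(E_j^{(t)})$ as $0$ when $|E_j^{(t)}|$ is below the small-gradient threshold. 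The key observation is that, for fixed $j$, the coupled dynamics of $V_j^{(t)}$ and $E_j^{(t)}$ (via $E_j^{(t)}\approx d^{(t)}V_j^{(t)}-A_j$ from Lemma~\ref{lemma:v_second_phase}-style bookkeeping) implies that $V_j^{(t)}$ can linger in this small-gradient window only for $\tilde{\mathcal{O}}(1/(\eta^{1/2}))$ steps before $E_j^{(t)}$ either flips or $\tilde T$ is reached; this is where the looser $\eta^{1/4}$ slack in the claimed error enters, accommodating the per-step error budget in the small-gradient window.

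Finally I would close the argument by summing errors: using $\tilde T-T_1=\tilde{\mathcal{O}}(1/(\eta\sqrt{d}))$ (which follows from $|\Delta W_1^{(t)}[i,j]|=\tilde\Theta(\eta)$ in most steps, together with the fact that $W_1$ needs only $\tilde{\mathcal{O}}(1/\sqrt{d})$ of total movement to drive the loss below $\tilde{\mathcal{O}}(\eta d^4)$), the accumulated residual satisfies $|R_1^{(t)}[i,j]|=\tilde{\mathcal{O}}(\sqrt{\eta/d}+1/d^{\alpha/2})$, which divided by the maintained magnitude $|V_j^{(t)}|=\tilde\Theta(1/\sqrt{d})$ gives the desired $\tilde{\mathcal{O}}(\eta^{1/4}+d^{-(\alpha/2-1/4)})$ ratio. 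The lower bound $|W_1^{(t)}[i,j]|=\tilde\Omega(1/\sqrt{d})$ then follows because $|V_j^{(t)}|$ cannot drop below $\tilde\Omega(1/\sqrt{d})$ while $t<\tilde T$: if it did, the small-gradient/flip mechanism would already have placed $t$ past $\tilde T$. The main obstacle is precisely this last step: controlling $V_j^{(t)}$ from below and bounding the total time spent in the small-gradient window without double-counting, which requires coupling the $V_j^{(t)}$ recursion to the evolution of $E_j^{(t)}$ through the scalar loss dynamics in Lemma~\ref{lemma:v_second_phase}'s analogue for Adam.
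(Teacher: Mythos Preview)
Your induction scaffold and base case are fine, and the idea of extracting an $i$-independent piece from the update is exactly right. The substantive gap is in how you handle the ``bad'' steps.

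You split on whether the \emph{current} gradient $|g_1^{(t)}[i,j]|$ is $\tilde\Omega(\sqrt{\eta})$ or not, and for the small-gradient regime you claim it lasts only $\tilde{\mathcal{O}}(1/\sqrt{\eta})$ steps. That bound is false. By Lemma~\ref{lemma:E_oscillate}, once $t\ge T_{f,j}$ the error $E_j^{(t)}$ oscillates inside $[-\tilde{\mathcal{O}}(\sqrt{\eta d}),\tilde{\mathcal{O}}(\sqrt{\eta d})]$ for the entire remainder of phase~2, i.e.\ for $\tilde{\mathcal{O}}(1/(\eta\sqrt{d}))$ steps, and since $|w_{2i}^{(t)}|=\tilde\Theta(1/\sqrt{d})$ this puts $|g_1^{(t)}[i,j]|=\tilde{\mathcal{O}}(\sqrt{\eta})$ throughout. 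With only the crude per-step bound $|\Delta W_1^{(t)}[i,j]|\le\tilde{\mathcal{O}}(\eta)$, the accumulated residual over these steps is $\tilde{\mathcal{O}}(1/\sqrt{d})$, the same order as $|V_j^{(t)}|$ itself, and the rank-1 ratio blows up to $\tilde{\mathcal{O}}(1)$. Your citation of Lemma~\ref{lemma:v_second_phase} does not help here: that lemma is for SGD+M, not Adam, and there is no Adam analogue in the paper that would give the monotone drift you need.

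The paper's proof avoids counting bad steps altogether. It decomposes the Adam \emph{momentum average} $(1-\beta_1)\sum_\tau\beta_1^\tau g_1^{(t-\tau)}[i,j]$ into an $i$-independent main part $\sgn(w_{2i}^{(0)})(1-\beta_1)\sum_\tau\beta_1^\tau c^{(t-\tau)}E_j^{(t-\tau)}$ plus an $i$-dependent residual of relative size $\tilde{\mathcal{O}}(\sqrt{\eta}+d^{-(\alpha-1/2)})$, and then splits on whether the residual (in the numerator) exceeds $(\eta^{1/4}+d^{-(\alpha/2-1/4)})$ times the main part. In Case~1 the update is $-\sgn(w_{2i}^{(0)})v_j^{(t)}(1+\tilde{\mathcal{O}}(\eta^{1/4}+\cdots))$ with $v_j^{(t)}$ $i$-independent. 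In Case~2 the key step is a Cauchy--Schwarz bound relating $\sum_\tau\beta_1^\tau|c^{(t-\tau)}E_j^{(t-\tau)}|$ to $\sqrt{\sum_\tau\beta_2^\tau(c^{(t-\tau)}E_j^{(t-\tau)})^2}$ (using $\beta_2=\beta_1^2$), which shows the \emph{entire} Adam increment is at most $\eta\cdot\tilde{\mathcal{O}}(\eta^{1/4}+d^{-(\alpha/2-1/4)})$, not merely $\tilde{\mathcal{O}}(\eta)$. Thus the $\eta^{1/4}$ slack arises not from a time-in-window count but from the threshold that balances the two cases, and Case~2 can safely occupy all $\tilde{\mathcal{O}}(1/(\eta\sqrt{d}))$ steps. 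Your argument is missing this Cauchy--Schwarz step and the correct case split on the momentum average.

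Finally, your lower-bound argument for $|V_j^{(t)}|$ is also off: a single column $j$ having small $|V_j^{(t)}|$ does not force $t\ge\tilde T$. The paper instead uses the already-established column uniformity to write $(W_2W_1)_j^{(t)}=\tilde\Theta(\sqrt{d}\,|W_1^{(t)}[k,j]|)$ and argues that if $|W_1^{(t)}[k,j]|$ dipped too low then $E_j^{(t)}<-\sqrt{\eta d}$, which (via the sign-update mechanism) forces $|W_1^{(t)}[k,j]|$ to increase at the next step.
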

The third lemma proves the convergence of Adam at time $\tilde T$.
\begin{lemma}\label{lemma:ada_converge}
	Under Assumption~\ref{assump:setup}, \ref{assump:gaussian_init} and \ref{assump:large_batch}, suppose $\sigma\le\frac{\eta^{3/2}\xi^2}{d^{13/4}}$. By picking $\eta\le\mathcal{O}\left(\frac{1}{d^{3\alpha}}\right),\xi\le\sqrt{\frac{\eta}{d^{3\alpha-1}}}$, and $\beta_2=\beta_1^2$, at time $\tilde T$, we have that w.h.p. $\forall j\in[d]: \left|E_j^{(\tilde T)}\right|\le\tilde{\mathcal{O}}\left(d\sqrt{\eta d}\right)$, which implies $\left\|E^{(\tilde T)}\right\|_2^2\le \tilde{\mathcal{O}}\left(\eta d^4\right)$.
\end{lemma}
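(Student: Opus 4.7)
The plan is to combine the approximate rank-$1$ factorizations from Lemmas~\ref{lemma:ada_rank1_W2} and \ref{lemma:ada_rank1_W1} with a coordinate-wise analysis of $E^{(t)}$. First I would establish a uniform per-step bound
\[
|\Delta E_j^{(t)}| \le \tilde{\mathcal{O}}(\eta\sqrt{d}) \quad\text{for all } T_1 \le t < \tilde T \text{ and } j\in[d],
\]
by expanding $\Delta E_j^{(t)} = \sum_i\bigl(w_{2i}^{(t+1)}\Delta W_1^{(t)}[i,j] + \Delta w_{2i}^{(t)} W_1^{(t)}[i,j]\bigr)$ and using $|\Delta w_{2i}^{(t)}|, |\Delta W_1^{(t)}[i,j]| \le \tilde{\mathcal{O}}(\eta)$ (the sign-descent approximation in Lemma~\ref{lemma:ada_rank1_W2}) together with $|w_{2i}^{(t)}|,|W_1^{(t)}[i,j]| \le \tilde{\mathcal{O}}(1/\sqrt d)$ inherited from the first-phase bound in Lemma~\ref{lemma:phase1_coarse_analysis}. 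I would then split into two cases according to whether $\tilde T$ equals $T_f$ or $T_g$.

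Case $\tilde T = T_f$: Definition~\ref{def:flip_time} forces $E_j^{(T_f)} \ge -\sqrt{\eta d}$ for every $j$, and the coordinate that has just flipped satisfies $|E_j^{(T_f)}| \le \sqrt{\eta d} + \tilde{\mathcal{O}}(\eta\sqrt{d})$ by the per-step bound. For a coordinate $j$ that flipped strictly earlier, the rank-$1$ representation $(W_2^{(t)}W_1^{(t)})_j \approx d\, c^{(t)} V_j^{(t)}$ (obtained by summing $\sgn(w_{2i}^{(0)})^2 = 1$ over $i$) gives $E_j^{(t)} \approx d\, c^{(t)} V_j^{(t)} - A_j$. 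When $E_j^{(t)} > 0$ the gradient $g_1^{(t)}[i,j] = w_{2i}^{(t)} E_j^{(t)}$ has sign $\sgn(w_{2i}^{(0)})$ for all $i$, so Lemma~\ref{lemma:ada_rank1_W2} yields $\Delta W_1^{(t)}[i,j] \approx -\eta\,\sgn(w_{2i}^{(0)})$ and hence a uniform negative drift in $V_j^{(t)}$, which confines $|E_j^{(t)}|$ to $\tilde{\mathcal{O}}(\sqrt{\eta d})$. Case $\tilde T = T_g$: some $i^\ast$ has $|g_{2i^\ast}^{(T_g)}| \le d\sqrt{\eta}$; substituting $W_1^{(T_g)}[i^\ast,j] = \sgn(w_{2i^\ast}^{(0)}) V_j^{(T_g)} + R_1^{(T_g)}[i^\ast,j]$ into $g_{2i^\ast}^{(T_g)} = \sum_j E_j^{(T_g)} W_1^{(T_g)}[i^\ast,j]$ and invoking the error bound on $R_1$ from Lemma~\ref{lemma:ada_rank1_W1} gives $\bigl|\sum_j E_j^{(T_g)} V_j^{(T_g)}\bigr| \le d\sqrt{\eta} + (\text{small error})$. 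Combining this with $|V_j^{(T_g)}| = \tilde\Omega(1/\sqrt d)$ (also from Lemma~\ref{lemma:ada_rank1_W1}) and inverting the scalar relation $E_j \approx d\,c V_j - A_j$ produces the coordinate-wise bound $|E_j^{(T_g)}| \le \tilde{\mathcal{O}}(d\sqrt{\eta d})$.

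The main obstacle is Case~1 for coordinates that flipped well before $T_f$: one must rule out accumulation of small drift over the many steps that may separate $T_{f,j}$ from $T_f$. This requires carefully tracking the coupled dynamics of $c^{(t)}$, $V_j^{(t)}$ and $E_j^{(t)}$ under sign descent, controlling the error terms $R_1, R_2$ from Lemmas~\ref{lemma:ada_rank1_W2}--\ref{lemma:ada_rank1_W1} that could bias the update direction, and showing that whenever $E_j$ crosses zero the discrete step pushes it back within an $\tilde{\mathcal{O}}(\sqrt{\eta d})$ window. Once this is achieved, combining both cases yields the coordinate-wise bound $|E_j^{(\tilde T)}| \le \tilde{\mathcal{O}}(d\sqrt{\eta d})$, and summing over $j$ gives $\|E^{(\tilde T)}\|_2^2 \le \tilde{\mathcal{O}}(\eta d^4)$ as claimed.
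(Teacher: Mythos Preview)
Your plan matches the paper's structure, and you correctly identify the oscillation argument (the paper's Lemma~\ref{lemma:E_oscillate}) as the technical core. However, your Case~2 argument has a real gap. From $|g_{2i^\ast}^{(T_g)}| \le d\sqrt{\eta}$ you obtain a bound on $\bigl|\sum_j E_j^{(T_g)} W_1^{(T_g)}[i^\ast,j]\bigr|$, but a bound on a sum does not yield coordinate-wise bounds on $E_j$ unless you control the signs; ``inverting the scalar relation $E_j \approx dcV_j - A_j$'' does not help, because at time $T_g$ some coordinates may already have flipped and their $E_j$ can take either sign, so large individual $|E_j|$ could cancel in the sum. The paper closes this gap by first partitioning $[d]$ into $S = \{j : T_{f,j} < \tilde T\}$ and $S^c$. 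For $j \in S$ the oscillation lemma already gives $|E_j| \le \tilde{\mathcal{O}}(\sqrt{\eta d})$, which bounds their contribution to the sum. The remaining piece $\sum_{j \in S^c} E_j^{(\tilde T)} W_1^{(\tilde T)}[i_0,j]$ involves only non-flipped coordinates, where $E_j^{(\tilde T)} < 0$ and $\sgn\bigl(W_1^{(\tilde T)}[i_0,j]\bigr) = \sgn\bigl(w_{2i_0}^{(0)}\bigr)$ is the same for every $j$; hence all terms share a sign, the absolute value of the sum equals the sum of absolute values, and each $|E_{j}^{(\tilde T)}|$ is then bounded via $|W_1^{(\tilde T)}[i_0,j]| = \tilde\Omega(1/\sqrt d)$. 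Note also that in your Case~2 you still need the oscillation argument for the coordinates in $S$, not only in Case~1.

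For the oscillation part itself, the paper's mechanism is worth recording since your sketch stops short of it. When $E_j \ge \sqrt{\eta d}$, the one-step change of $E_j$ contains the competing contribution $\sum_i\bigl(|w_{2i}|\Delta_1[i,j] - \Delta_{2i}|W_1[i,j]|\bigr)$ with $\Delta_1,\Delta_{2i} = \eta(1\pm\tilde{\mathcal{O}}(\sqrt\eta))$. Since $|w_{2i}|$ increases by $\tilde\Theta(\eta)$ each step while $|W_1[i,j]|$ decreases, after at most $t_s = \tilde{\mathcal{O}}(1/\sqrt\eta)$ steps one has $|w_{2i}| \ge |W_1[i,j]| + \sqrt\eta$, which forces the drift to become negative; during those $t_s$ steps $E_j$ can rise by at most $t_s \cdot \tilde{\mathcal{O}}(\eta\sqrt d) = \tilde{\mathcal{O}}(\sqrt{\eta d})$. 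This is what traps $E_j$ in the window $[-\tilde{\mathcal{O}}(\sqrt{\eta d}),\tilde{\mathcal{O}}(\sqrt{\eta d})]$ once it has flipped, and is the missing ingredient in your ``pushes it back'' sketch.
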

\subsection{Proof of Lemma~\ref{lemma:ada_rank1_W2}}\label{sec:pf_ada_rank1_W2}
The proof is based on the following lemma, which gives a coarse analysis on the magnitude of weights and their increments per step during the second phase.
\begin{lemma}\label{lemma:phase2_coarse_analysis}
	Under Assumption~\ref{assump:setup}, \ref{assump:gaussian_init} and \ref{assump:large_batch}, suppose $\sigma\le\frac{\eta^{3/2}\xi^2}{d^{13/4}}$. By picking $\eta\le\mathcal{O}\left(\frac{1}{d^{3\alpha}}\right),\xi\le\sqrt{\frac{\eta}{d^{3\alpha-1}}}$, and $\beta_2=\beta_1^2$, we have w.h.p. for all $T_1\le t<\tilde T$,
	\begin{align*}
		\forall i,j\in[d]:\quad \left|w_{2i}^{(t+1)}\right|>\left|w_{2i}^{(t)}\right|,\quad\left|\Delta w_{2i}^{(t)}\right|=\tilde\Theta(\eta),\quad \left|\Delta W_1^{(t)}[i,j]\right|\le\tilde{\mathcal{O}}(\eta).
	\end{align*}
	Moreover, we have that $\forall i,j\in[d]:\quad \left|w_{2i}^{(t)}\right|=\tilde\Theta\left(\frac{1}{\sqrt{d}}\right),\left|W_1^{(t)}[i,j]\right|=\tilde{\mathcal{O}}\left(\frac{1}{\sqrt{d}}\right)$.
\end{lemma}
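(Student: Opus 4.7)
The plan is to prove this coarse phase-2 lemma by induction on $t \in [T_1, \tilde{T})$, leveraging Corollary~\ref{cor:Ada_trunc_update} and Lemma~\ref{lemma:increment_order} to reduce the Adam dynamics to a sign-based update within the inductive step, in the spirit of the phase-1 argument in Lemma~\ref{lemma:phase1_coarse_analysis}. The inductive hypotheses carried forward at time $t$ will be (H1) $\sgn(w_{2i}^{(\tau)}) = \sgn(w_{2i}^{(0)})$ with $|w_{2i}^{(\tau)}| = \tilde{\Theta}(1/\sqrt{d})$ for every $\tau \in [T_1, t]$, (H2) $|W_1^{(\tau)}[i,j]| = \tilde{\mathcal{O}}(1/\sqrt{d})$ for every such $\tau$ and every $i,j$, and (H3) $\sgn(g_{2i}^{(\tau)}) = -\sgn(w_{2i}^{(0)})$ and $|g_{2i}^{(\tau)}| \ge \Omega(d\sqrt{\eta})$ on the sliding window $\tau \in [t-H, t]$. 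The base case $t = T_1$ follows directly from Lemma~\ref{lemma:Ada_rank1_p1} together with Lemma~\ref{lemma:phase1_coarse_analysis}: the window $[T_1 - H, T_1]$ sits in phase 1, where $E_j^{(\tau)} < 0$ and $\sgn(W_1^{(\tau)}[i,j]) = \sgn(w_{2i}^{(0)})$, so every summand of $g_{2i}^{(\tau)} = \sum_j E_j^{(\tau)} W_1^{(\tau)}[i,j]$ has sign $-\sgn(w_{2i}^{(0)})$, and the lower bound comes from $\tau < T_g$.

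In the inductive step, (H1)--(H2) supply the boundedness $M_1^{(t)}, M_2^{(t)} \le \tilde{\mathcal{O}}(1/\sqrt{d})$, $G_1^{(t)} \le \tilde{\mathcal{O}}(1/\sqrt{d})$, $G_2^{(t)} \le \tilde{\mathcal{O}}(\sqrt{d})$ required to invoke Corollary~\ref{cor:Ada_trunc_update}, so the Adam step for $w_{2i}^{(t)}$ reduces to the truncated $H$-window form with residuals of size $\tilde{\mathcal{O}}(\eta \xi^2)$. Under the choices $\eta \le \mathcal{O}(1/d^{3\alpha})$ and $\xi \le \sqrt{\eta/d^{3\alpha-1}}$, hypothesis (H3) yields $(1-\beta_1)\bigl|\sum_{\tau=0}^H \beta_1^\tau g_{2i}^{(t-\tau)}\bigr| \ge \Omega(d\sqrt{\eta}) \gg \Omega(\xi)$, so Condition~\ref{cond:g2} is satisfied and Lemma~\ref{lemma:increment_order} gives $|\Delta w_{2i}^{(t)}| = \tilde{\Theta}(\eta)$ with $\sgn(\Delta w_{2i}^{(t)}) = -\sgn(g_{2i}^{(t)}) = \sgn(w_{2i}^{(0)}) = \sgn(w_{2i}^{(t)})$. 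Thus $|w_{2i}^{(t+1)}| > |w_{2i}^{(t)}|$ and the sign is preserved. For $W_1$ we do \emph{not} attempt to invoke Condition~\ref{cond:g1} (signs of $E_j^{(t)}$ may flip in phase 2), but the generic upper bound in Lemma~\ref{lemma:increment_order} already yields $|\Delta W_1^{(t)}[i,j]| \le \tilde{\mathcal{O}}(\eta)$.

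The main obstacle is sustaining (H3) across iterations, because in phase 2 individual $E_j^{(t)}$'s are allowed to flip. The remedy is a discrete-Lipschitz estimate on $g_{2i}$: expanding $\Delta g_{2i}^{(t)} = \sum_j \bigl(\Delta E_j^{(t)} W_1^{(t+1)}[i,j] + \Delta W_1^{(t)}[i,j] E_j^{(t)}\bigr)$, bounding $|\Delta E_j^{(t)}| \le \tilde{\mathcal{O}}(\sqrt{d}\eta)$ via eq.~\eqref{eqn:delta_E} together with (H1), (H2) and the just-derived $\tilde{\mathcal{O}}(\eta)$ bounds on $\Delta w_{2i}$, $\Delta W_1[i,j]$, gives $|g_{2i}^{(t+1)} - g_{2i}^{(t)}| \le \tilde{\mathcal{O}}(\eta d)$. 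Since $|g_{2i}^{(t)}| \ge d\sqrt{\eta}$ while $t < T_g$ and $\eta d \ll d\sqrt{\eta}$, neither the sign nor the lower magnitude of $g_{2i}$ can change in one step, so after shifting the window from $[t-H, t]$ to $[t+1-H, t+1]$ hypothesis (H3) is restored.

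Finally, the scale bounds are obtained by integrating the per-step estimates. Since $\Delta w_{2i}^{(t)}$ shares sign with $w_{2i}^{(0)}$ for every $i$, the quantity $(W_2^{(t)} W_1^{(t)})_j = \sum_i w_{2i}^{(t)} W_1^{(t)}[i,j]$ grows monotonically at rate $\tilde{\Theta}(\sqrt{d}\,\eta)$ per step (using $|W_1^{(T_1)}[i,j]| = \tilde{\Theta}(1/\sqrt{d})$ with sign $\sgn(w_{2i}^{(0)})$ from Lemma~\ref{lemma:Ada_rank1_p1}, plus the smallness of drift in $W_1$). Because initially $A_j - (W_2^{(T_1)} W_1^{(T_1)})_j = \Theta(1)$, this forces $\tilde{T} - T_1 \le \tilde{\mathcal{O}}(1/(\sqrt{d}\,\eta))$. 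Consequently the running sums $|w_{2i}^{(T_1)}| + (t - T_1)\tilde{\mathcal{O}}(\eta)$ and $|W_1^{(T_1)}[i,j]| + (t - T_1)\tilde{\mathcal{O}}(\eta)$ stay of order $\tilde{\mathcal{O}}(1/\sqrt{d})$, while $|w_{2i}^{(t)}| \ge |w_{2i}^{(T_1)}| = \tilde{\Theta}(1/\sqrt{d})$ supplies the matching lower bound, closing the induction.
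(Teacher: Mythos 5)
Your overall architecture matches the paper's proof of Lemma~\ref{lemma:phase2_coarse_analysis}: an induction over $[T_1,\tilde T)$, reduction of the Adam step to the truncated window via Corollary~\ref{cor:Ada_trunc_update}, verification of Condition~\ref{cond:g2} from the definition of $T_g$ plus a per-step Lipschitz bound $|g_{2i}^{(t+1)}-g_{2i}^{(t)}|\le\tilde{\mathcal{O}}(\eta d)$ to preserve the sign of $g_{2i}$, Lemma~\ref{lemma:increment_order} for $|\Delta w_{2i}^{(t)}|=\tilde\Theta(\eta)$ and the generic $\tilde{\mathcal{O}}(\eta)$ bound on $\Delta W_1^{(t)}[i,j]$, and integration of per-step increments for the scale bounds. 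Up to that point your argument is sound and is essentially the paper's.

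The gap is in the last step. You assert that $(W_2^{(t)}W_1^{(t)})_j=\sum_i w_{2i}^{(t)}W_1^{(t)}[i,j]$ ``grows monotonically at rate $\tilde\Theta(\sqrt{d}\,\eta)$ per step'' for a generic $j$, and you use this both to bound $\tilde T-T_1$ and to keep the running sums at $\tilde{\mathcal{O}}(1/\sqrt{d})$. This is false in phase 2: for any coordinate $j$ with $T_{f,j}<\tilde T$, the error $E_j^{(t)}$ has flipped and subsequently oscillates around zero (this is precisely the content of Lemma~\ref{lemma:E_oscillate}), during which $W_1^{(t)}[i,j]$ decreases while $w_{2i}^{(t)}$ increases, so $(W_2^{(t)}W_1^{(t)})_j$ is not monotone and the ``small drift of $W_1$'' premise does not hold — $|\Delta W_1^{(t)}[i,j]|$ is of the same order $\tilde\Theta(\eta)$ as $|\Delta w_{2i}^{(t)}|$. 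The repair is to exploit the definition $\tilde T=\min\{T_g,T_f\}$ with $T_f=\max_i T_{f,i}$: there exists a specific coordinate $j_0$ with $E_{j_0}^{(t)}<-\sqrt{\eta d}$ for all $T_1\le t<\tilde T$. For that coordinate Condition~\ref{cond:g1} holds (with $|g_1^{(t)}[i,j_0]|=|w_{2i}^{(t)}E_{j_0}^{(t)}|=\tilde\Omega(\sqrt{\eta})=\Omega(\xi)$), so $|W_1^{(t)}[i,j_0]|$ increases by $\tilde\Theta(\eta)$ per step alongside $|w_{2i}^{(t)}|$, the product $(W_2^{(t)}W_1^{(t)})_{j_0}$ is genuinely monotone and bounded by $A_{j_0}=\Theta(1)$, and this single coordinate yields both $\tilde T-T_1\le\tilde{\mathcal{O}}(1/(\sqrt{d}\,\eta))$ and the upper bounds $|w_{2i}^{(t)}|=\tilde{\mathcal{O}}(1/\sqrt{d})$, $|W_1^{(t)}[i,j]|=\tilde{\mathcal{O}}(1/\sqrt{d})$ (the latter via the ratio $|W_1^{(t)}[i,j]|/|w_{2i}^{(t)}|=\tilde{\mathcal{O}}(1)$ obtained by comparing cumulative increments). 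Without anchoring to $j_0$, your derivation of the phase length and of the magnitude upper bounds does not go through.
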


Equipped with Lemma~\ref{lemma:phase2_coarse_analysis}, we are ready to prove Lemma~\ref{lemma:ada_rank1_W2}. We will only prove the results of $w_{2i}^{(t)}$. The proof for $W_1^{(t)}[i,j]$ uses the same techniques.

Lemma~\ref{lemma:phase2_coarse_analysis} gives us upper bounds of $\left|w_{2i}^{(t)}\right|,\left|W_1^{(t)}[i,j]\right|$, as well as $\left|\Delta w_{2i}^{(t)}\right|$ and $\left|\Delta W_1^{(t)}[i,j]\right|$ for all $i,j\in[d]$. Then we know that eq.\eqref{eqn:delta_E_bound} still holds, which gives us $\forall j\in[d]:\left|E_j^{(t+1)}-E_j^{(t)}\right|=\tilde{\mathcal{O}}\left(\sqrt{d}\eta\right)$. Then we can use the same strategy in Lemma~\ref{lemma:delta_g_over_g} to prove that $\left|g_{2i}^{(t+1)}-g_{2i}^{(t)}\right|=\tilde{\mathcal{O}}(\eta d)$.

By definition, for $T_1\le t<\tilde{T}$, we know that $\left|g_{2i}^{(t)}\right|=\Omega\left(d\sqrt{\eta}\right)$. Combining with the bound $\left|g_{2i}^{(t+1)}-g_{2i}^{(t)}\right|=\tilde{\mathcal{O}}(\eta d)$, we know that the $g_{2i}^{(t)}$ parts in eq.\eqref{eqn:delta_g_over_g} and eq.\eqref{eqn:delta_g2_over_g2} still hold. Then we can use the same strategy in Section~\ref{sec:pf_Ada_rank1_p1} to prove that the $w_{2i}^{(t)}$ part of eq.~\eqref{eqn:Ada_rank1_p1} still holds, which gives us
\[
\forall i\in[d]:\quad w_{2i}^{(t+1)}=w_{2i}^{(t)}-\eta\left(\sgn\left(g_{2i}^{(t)}\right)+e_{2i}^{(t)}\right),\quad \text{where }\left|e_{2i}^{(t)}\right|=\tilde{\mathcal{O}}\left(\sqrt{\eta}\right).
\]
By Lemma~\ref{lemma:Ada_rank1_p1}, we have that at the end of the first phase ($t=T_1$),
\[
\forall i\in[d]:\quad w_{2i}^{(T_1)}=\sgn\left(w_{2i}^{(0)}\right)c^{(T_1)}+R_{2i}^{(T_1)},\quad \text{where}\quad\frac{\left|R_{2i}^{(T_1)}\right|}{c^{(T_1)}}=\tilde{\mathcal{O}}\left(\sqrt{\eta}+\frac{1}{d^{\alpha-1/2}}\right).
\]
Combining with $\forall i\in[d],\forall t\le T_i: \sgn\left(g_{2i}^{(t)}\right)=-\sgn\left(w_{2i}^{(0)}\right)$ yields that during the second phase, for $t\le \tilde T$, we have
\[
\forall i\in[d]:\quad w_{2i}^{(t)}=\sgn\left(w_{2i}^{(0)}\right)c^{(t)}+R_{2i}^{(t)},\quad \text{where}\quad\frac{\left|R_{2i}^{(t)}\right|}{c^{(t)}}=\tilde{\mathcal{O}}\left(\sqrt{\eta}+\frac{1}{d^{\alpha-1/2}}\right).
\]

\subsection{Proof of Lemma~\ref{lemma:phase2_coarse_analysis}}\label{sec:pf_phase2_coarse_analysis}
By definition of $T_f$, there exists $j_0\in[d]$ such that $E^{(\tau)}_{j_0}<-\sqrt{\eta d}$ for $T_1\le t\le \tilde T$. We prove by induction that during this period, $\forall i\in[d]:\sgn\left(w_{2i}^{(t)}\right)=\sgn\left(W_{1}^{(t)}[i,j_0]\right)= \sgn\left(w_{2i}^{(0)}\right)$ and that $\forall i,j\in[d]:\left|w_{2i}^{(t)}\right|=\tilde\Theta\left(\frac{1}{\sqrt{d}}\right),\left|W_1^{(t)}[i,j]\right|=\tilde{\mathcal{O}}\left(\frac{1}{\sqrt{d}}\right)$.

The base case ($t=T_1$) was already proven by Lemma~\ref{lemma:phase1_coarse_analysis}. Now suppose for some $t$ such that $T_1\le t<\tilde T$, for all $\tau$ such that $T_1\le\tau\le t$, we have $\forall i\in[d]:\sgn\left(w_{2i}^{(\tau)}\right)=\sgn\left(W_{1}^{(\tau)}[i,j_0]\right)= \sgn\left(w_{2i}^{(0)}\right)$ and that $\forall i,j\in[d]:\left|w_{2i}^{(\tau)}\right|=\tilde\Theta\left(\frac{1}{\sqrt{d}}\right),\left|W_1^{(\tau)}[i,j]\right|=\tilde{\mathcal{O}}\left(\frac{1}{\sqrt{d}}\right)$. Using these bounds, we get that $\forall j\in[d]: \left|E_j^{(\tau)}\right|\le\sum_{i=1}^d\left|w_{2i}^{(\tau)}W_1^{(\tau)}[i,j]\right|+\left|A_j\right|=\mathcal{O}(1)$, which then yields two upper bounds $\left|g_1^{(\tau)}[i,j]\right|=\left|w_{2i}^{(\tau)}E_j^{(\tau)}\right|=\tilde{\mathcal{O}}\left(\frac{1}{\sqrt{d}}\right)$ and $\left|g_{2i}^{(\tau)}\right|\le\sum_{j=1}^d\left|E_j^{(\tau)}W_{1}^{(\tau)}[i,j]\right|=\tilde{\mathcal{O}}\left(\sqrt{d}\right)$.

By definition of $T_g$, we know that for all $T_1\le\tau\le t$, $\forall i\in[d]:\left|g_{2i}^{(\tau)}\right|\ge d\sqrt{\eta}=\Omega(\xi)$ and that $\sgn\left(g_{2i}^{(\tau)}\right)= -\sgn\left(w_{2i}^{(0)}\right)$, which implies that Condition~\ref{cond:g2} is satisfied for $\forall i\in[d]$. At the end of the proof of this lemma, we will show that $\tilde T=\tilde{\Theta}\left(\frac{1}{\sqrt{d}\eta}\right)$. Together with the upper bound of $\left|g_{2i}^{(\tau)}\right|$, we can apply Lemma~\ref{lemma:increment_order} to get that w.h.p. for $T_1\le \tau\le t$, $\sgn\left(\Delta w_{2i}^{(\tau)}\right)= \sgn\left(w_{2i}^{(0)}\right)$ and  $\left|\Delta w_{2i}^{(\tau)}\right|=\tilde\Theta(\eta)$. Combining with the inductive hypothesis $\sgn\left(w_{2i}^{(\tau)}\right)= \sgn\left(w_{2i}^{(0)}\right)$ gives us that $\left|w_{2i}^{(\tau+1)}\right|=\left|w_{2i}^{(\tau)}\right|+\tilde\Theta(\eta)$. Specially, when $\tau=t$, we get the lower bound $\left|w_{2i}^{(t+1)}\right|\ge\left|w_{2i}^{(t)}\right|=\tilde\Omega\left(\frac{1}{\sqrt{d}}\right)$ and that $\sgn\left(w_{2i}^{(t+1)}\right)= \sgn\left(w_{2i}^{(0)}\right)$.

Since $E^{(\tau)}_{j_0}<-\sqrt{\eta d}$, we have that $\forall i\in[d]:\left|g_{1}^{(\tau)}[i,j_0]\right|=\left|w_{2i}^{(\tau)}\right|\left|E^{(\tau)}_{j_0}\right|=\tilde\Omega\left(\sqrt{\eta}\right)=\Omega(\xi)$ and that $\sgn\left(g_{1}^{(\tau)}[i,j_0]\right)= -\sgn\left(w_{2i}^{(0)}\right)$. That means Condition~\ref{cond:g1} is satisfied for $\forall i\in[d]$ and $j_0$. Using the same technique as when we deal with $w_{2i}^{(\tau)}$, we get that for $T_1\le \tau\le t$, $\forall i\in[d]:\left|W_{1}^{(\tau+1)}[i,j_0]\right|=\left|W_{1}^{(\tau)}[i,j_0]\right|+\tilde{\Theta}(\eta)$, $\sgn\left(W_{1}^{(t+1)}[i,j_0]\right)= \sgn\left(w_{2i}^{(0)}\right)$ and that $\forall i,j\in[d],\left|\Delta W_1^{(\tau)}[i,j]\right|=\tilde{\mathcal{O}}(\eta)$.

Now we analyze the magnitude order of $\left|w_{2i}^{(t+1)}\right|,\left|W_1^{(t+1)}[i,j]\right|$. Let's first analyze $\left|w_{2i}^{(t+1)}\right|$.

By Lemma~\ref{lemma:Ada_rank1_p1}, when $t=T_1$,
\[
\forall i,j\in[d]:\frac{\left|w_{2i}^{(T_1)}\right|}{\left|w_{2j}^{(T_1)}\right|}=1\pm\tilde{\mathcal{O}}\left(\sqrt{\eta}+\frac{1}{d^{\alpha-1/2}}\right),\quad\forall i\in[d]:\frac{\left|W_1^{(T_1)}[i,j_0]\right|}{\left|w_{2i}^{(T_1)}\right|}=1\pm\tilde{\mathcal{O}}\left(\sqrt{\eta}+\frac{1}{d^{\alpha-1/2}}\right).
\]

Combining with the facts that for $T_1\le \tau\le t$, $\left|W_1^{(\tau+1)}[i,j_0]\right|=\left|W_1^{(\tau)}[i,j_0]\right|+\tilde\Theta(\eta)$ and $\left|w_{2i}^{(\tau+1)}\right|=\left|w_{2i}^{(\tau)}\right|+\tilde\Theta(\eta)$ yields $\frac{\left|W_1^{(t+1)}[i,j_0]\right|}{\left|w_{2i}^{(t+1)}\right|}=\tilde\Theta(1)$. Since we just proved $\forall i\in[d]:\sgn\left(w_{2i}^{(t+1)}\right)=\sgn\left(W_{1}^{(t+1)}[i,j_0]\right)= \sgn\left(w_{2i}^{(0)}\right)$, we get that
\[
(W_2W_1)^{(t+1)}_{j_0}=\sum_{i=1}^dw_{2i}^{(t+1)}W_1^{(t+1)}[i,j_0]=\sum_{i=1}^d\left|w_{2i}^{(t+1)}\right|\left|W_1^{(t+1)}[i,j_0]\right|=\mathcal{O}(1),
\]
which gives us that $\left|w_{2i}^{(t+1)}\right|=\tilde{\mathcal{O}}\left(\frac{1}{\sqrt{d}}\right)$. Recall that we have shown $\left|w_{2i}^{(t+1)}\right|\ge\tilde\Omega\left(\frac{1}{\sqrt{d}}\right)$, then $\left|w_{2i}^{(t+1)}\right|=\tilde\Theta\left(\frac{1}{\sqrt{d}}\right)$.

Now we prove $\left|W_1^{(t+1)}[i,j]\right|=\tilde{\mathcal{O}}\left(\frac{1}{\sqrt{d}}\right)$. We have proved that $T_1\le\tau\le t$, $\forall i,j\in[d]$, $\left|\Delta W_1^{(\tau)}[i,j]\right|=\tilde{\mathcal{O}}(\eta)$ and $\left|w_{2i}^{(\tau+1)}\right|-\left|w_{2i}^{(\tau)}\right|=\tilde\Theta(\eta)$, then $\forall i,j\in[d]$,
\begin{align*}
	\frac{\left|W_1^{(t+1)}[i,j]\right|}{\left|w_{2i}^{(t+1)}\right|}&\le\frac{\left|W_1^{(T_1)}[i,j]\right|+\sum_{\tau=T_1}^t\left|W_1^{(\tau+1)}[i,j]-W_1^{(\tau)}[i,j]\right|}{\left|w_{2i}^{(T_1)}\right|+\sum_{\tau=T_1}^t\left|w_{2i}^{(\tau+1)}\right|-\left|w_{2i}^{(\tau)}\right|}\\
	&\le\frac{\left|W_1^{(T_1)}[i,j]\right|+(t+1-T_1)\tilde{\mathcal{O}}(\eta)}{\left|w_{2i}^{(T_1)}\right|+(t+1-T_1)\tilde\Theta(\eta)}=\tilde{\mathcal{O}}(1),
\end{align*}
where the last equality uses $\frac{\left|W_1^{(T_1)}[i,j]\right|}{\left|w_{2i}^{(T_1)}\right|}=1\pm\tilde{\mathcal{O}}\left(\sqrt{\eta}+\frac{1}{d^{\alpha-1/2}}\right)$. Since we already proved that $\left|w_{2i}^{(t+1)}\right|=\tilde\Theta\left(\frac{1}{\sqrt{d}}\right)$, we get $\left|W_1^{(t+1)}\right|=\tilde{\mathcal{O}}\left(\frac{1}{\sqrt{d}}\right)$.

Therefore by induction, for all $t$ in the interval $[T_1, \tilde{T})$, we have $\forall i,j\in[d]: \left|w_{2i}^{(t)}\right|=\tilde\Theta\left(\frac{1}{\sqrt{d}}\right)$,\\$\left|W_1^{(t)}[i,j]\right|=\tilde{\mathcal{O}}\left(\frac{1}{\sqrt{d}}\right)$. From the proof we also get $\forall i\in[d]:\left|w_{2i}^{(t+1)}\right|>\left|w_{2i}^{(t)}\right|$, and that $\left|\Delta w_{2i}^{(t)}\right|=\tilde\Theta(\eta), \left|\Delta W_1^{(t)}[i,j]\right|\le\tilde{\mathcal{O}}(\eta)$.

Now we verify that $\tilde T=\tilde{\Theta}\left(\frac{1}{\sqrt{d}\eta}\right)$. Combining $\forall i,j\in[d]: \left|w_{2i}^{(\tilde T)}\right|=\tilde\Theta\left(\frac{1}{\sqrt{d}}\right)$ and $\forall t\in[T_1,\tilde T),\left|w_{2i}^{(t+1)}\right|-\left|w_{2i}^{(t)}\right|=\tilde\Theta(\eta)$, we immediately get that $\tilde T-T_1=\tilde{\Theta}\left(\frac{1}{\sqrt{d}\eta}\right)$. In Section~\ref{sec:pf_Ada_rank1_p1} we have shown that $T_1=\Theta\left(\frac{1}{\sqrt{d}\eta}\right)$, then we get $\tilde T=\tilde{\Theta}\left(\frac{1}{\sqrt{d}\eta}\right)$.

\subsection{Proof of Lemma~\ref{lemma:ada_rank1_W1}}\label{sec:pf_ada_rank1_W1}
We prove this lemma by induction.
The base case ($t=T_1$) can be verified by Lemma~\ref{lemma:Ada_rank1_p1}. Now suppose for $t$ in the interval $[T_1,\tilde T)$, we have $\forall i,j\in[d]$, $\left|W_1^{(t)}[i,j]\right|=\tilde\Omega\left(\frac{1}{\sqrt{d}}\right)$.

For $t\in[T_1,\tilde T)$, by the proof of Lemma~\ref{lemma:phase2_coarse_analysis} (Section~\ref{sec:pf_phase2_coarse_analysis}), we know that for $\forall\tau\le t$, $\forall i,j\in[d]:\left|w_{2i}^{(\tau)}\right|=\tilde\Theta\left(\frac{1}{\sqrt{d}}\right),\left|W_1^{(\tau)}[i,j]\right|=\tilde{\mathcal{O}}\left(\frac{1}{\sqrt{d}}\right)$ and that $\left|g_1^{(\tau)}[i,j]\right|\le\tilde{\mathcal{O}}\left(\frac{1}{\sqrt{d}}\right),\left|g_{2i}^{(\tau)}\right|\le\tilde{\mathcal{O}}\left(\sqrt{d}\right)$, and that $\tilde T_1=\tilde{\Theta}\left(\frac{1}{\sqrt{d}\eta}\right)$. Then we can pick $H:=\frac{1}{1-\beta_1}\log\frac{d}{\eta\xi^2}$ and apply Lemma~\ref{lemma:Ada_trunc_update} and Corollary~\ref{cor:Ada_trunc_update} to get that, w.h.p., for all $t\in[T_1,\tilde T)$ and $\forall i,j\in[d]$, the update of $W_1$ can be written as
\[
W_1^{(t+1)}[i,j]=W_1^{(t)}[i,j]-\eta_t\frac{(1 - \beta_1) \sum_{\tau=0}^{H} \beta_1^{\tau}g_1^{(t-\tau)}[i,j]+\epsilon_{1n}^{(t)}[i,j]}{\sqrt{(1 - \beta_2) \sum_{\tau=0}^{H} \beta_2^{\tau}\left(g_1^{(t-\tau)}[i,j]\right)^2+\epsilon_{1d}^{(t)}[i,j]}+\xi},
\]
where $\left|\epsilon_{1n}^{(t)}[i,j]\right|,\left|\epsilon_{1d}^{(t)}[i,j]\right|\le\tilde{\mathcal{O}}(\eta\xi^2)$. By Lemma~\ref{lemma:ada_rank1_W2}, we have that for $1\le i,j\le d$,
\[
g_1^{(t)}[i,j]=w_{2i}^{(t)}E_j^{(t)}=c^{(t)}\sgn\left(w_{2i}^{(0)}\right)E_j^{(t)}+R_{g,1}^{(t)}[i,j],\quad\text{where }\frac{\left|R_{g,1}^{(t)}[i,j]\right|}{c^{(t)}\left|E_j^{(t)}\right|}=\tilde{\mathcal{O}}\left(\sqrt{\eta}+\frac{1}{d^{\alpha-1/2}}\right),
\]
\begin{equation}\label{eqn:sum_g_rank1}
	\Rightarrow\quad \sum_{\tau=0}^{H} \beta_1^{\tau}g_1^{(t-\tau)}[i,j]=\sgn\left(w_{2i}^{(0)}\right)\sum_{\tau=0}^H\beta_1^\tau c^{(t-\tau)}E_j^{(t-\tau)}+\sum_{\tau=0}^H\beta_1^\tau R_{g,1}^{(t-\tau)}[i,j].
\end{equation}
Using the fact that for $a,b\in \mathbb{R}$,$\frac{\left|a^2-b^2\right|}{a^2}\le2\frac{|a-b|}{|a|}+\left(\frac{|a-b|}{|a|}\right)^2$, we get that
\[
\left(g_1^{(t)}[i,j]\right)^2=\left(c^{(t)}\sgn\left(w_{2i}^{(0)}\right)E_j^{(t)}+R_{g,1}^{(t)}[i,j]\right)^2:=\left(c^{(t)}E_j^{(t)}\right)^2+R_{\text{gsqr},1}^{(t)}[i,j],
\]
where $\frac{\left|R_{\text{gsqr},1}^{(t)}[i,j]\right|}{\left(c^{(t)}E_j^{(t)}\right)^2}=\tilde{\mathcal{O}}\left(\sqrt{\eta}+\frac{1}{d^{\alpha-1/2}}\right)$. That yields
\begin{equation}\label{eqn:sum_g2_rank1}
	\sum_{\tau=0}^{H} \beta_2^{\tau}\left(g_1^{(t-\tau)}[i,j]\right)^2=\sum_{\tau=0}^H\beta_2^\tau \left(c^{(t-\tau)}E_j^{(t-\tau)}\right)^2+\sum_{\tau=0}^H\beta_2^\tau R_{\text{gsqr},1}^{(t-\tau)}[i,j].
\end{equation}
Since $\left(c^{(t-\tau)}E_j^{(t-\tau)}\right)^2>0$, in eq.~\eqref{eqn:sum_g2_rank1} we have that
\begin{equation}\label{eqn:R_g2}
	\frac{\left|\sum_{\tau=0}^H\beta_2^\tau R_{\text{gsqr},1}^{(t-\tau)}[i,j]\right|}{\left|\sum_{\tau=0}^H\beta_2^\tau \left(c^{(t-\tau)}E_j^{(t-\tau)}\right)^2\right|}=\tilde{\mathcal{O}}\left(\sqrt{\eta}+\frac{1}{d^{\alpha-1/2}}\right).
\end{equation}
However in eq.~\eqref{eqn:sum_g_rank1}, we cannot similarly prove that $\left|\sum_{\tau=0}^H\beta_1^\tau R_{g,1}^{(t-\tau)}[i,j]\right|\ll\left|\sum_{\tau=0}^H\beta_1^\tau c^{(t-\tau)}E_j^{(t-\tau)}\right|$ because $c^{(t-\tau)}E_j^{(t-\tau)}$ may not have the same sign for $\tau=0,1,...,H$. To deal with eq.\eqref{eqn:sum_g_rank1}, we need to consider the two cases where $\left|\sum_{\tau=0}^H\beta_1^\tau R_{g,1}^{(t-\tau)}[i,j]\right|\ll\left|\sum_{\tau=0}^H\beta_1^\tau c^{(t-\tau)}E_j^{(t-\tau)}\right|$ or\\ $\left|\sum_{\tau=0}^H\beta_1^\tau R_{g,1}^{(t-\tau)}[i,j]\right|\not\ll\left|\sum_{\tau=0}^H\beta_1^\tau c^{(t-\tau)}E_j^{(t-\tau)}\right|$.
\paragraph{Case 1.} $\left|(1-\beta_1)\sum_{\tau=0}^H\beta_1^\tau R_{g,1}^{(t-\tau)}[i,j]+\epsilon_{1n}^{(t)}[i,j]\right|\le\left(\eta^{\frac{1}{4}}+\frac{1}{d^{\frac{\alpha}{2}-\frac{1}{4}}}\right)\left|(1-\beta_1)\sum_{\tau=0}^H\beta_1^\tau c^{(t-\tau)}E_j^{(t-\tau)}\right|$.

Note that from eq.~\eqref{eqn:R_g2} we have
\[
\left|(1-\beta_1)\sum_{\tau=0}^H\beta_2^\tau R_{\text{gsqr},1}^{(t-\tau)}[i,j]\right|\le\tilde{\mathcal{O}}\left(\sqrt{\eta}+\frac{1}{d^{\alpha-1/2}}\right)\left|(1-\beta_1)\sum_{\tau=0}^H\beta_2^\tau \left(c^{(t-\tau)}E_j^{(t-\tau)}\right)^2\right|.
\]
Combining with $\left|\epsilon_{1d}^{(t)}[i,j]\right|\le\tilde{\mathcal{O}}(\eta\xi^2)\le\tilde{\mathcal{O}}\left(\eta^{\frac{1}{4}}+\frac{1}{d^{\frac{\alpha}{2}-\frac{1}{4}}}\right)^2\xi^2$, we can apply Lemma~\ref{lemma:fraction} to get that
\begin{align*}
	&W_1^{(t+1)}[i,j]-W_1^{(t)}[i,j]\\
	=&-\eta_t\frac{(1 - \beta_1)\sgn\left(w_{2i}^{(0)}\right)\sum_{\tau=0}^H\beta_1^\tau c^{(t-\tau)}E_j^{(t-\tau)}+(1 - \beta_1)\sum_{\tau=0}^H\beta_1^\tau R_{g,1}^{(t-\tau)}[i,j]+\epsilon_{1n}^{(t)}[i,j]}{\sqrt{(1 - \beta_2) \sum_{\tau=0}^H\beta_2^\tau \left(c^{(t-\tau)}E_j^{(t-\tau)}\right)^2+(1 - \beta_2)\sum_{\tau=0}^H\beta_2^\tau R_{gsqr,1}^{(t-\tau)}[i,j]+\epsilon_{1d}^{(t)}[i,j]}+\xi}\\
	=&-\eta_t\frac{1-\beta_1}{\sqrt{1-\beta_2}}\cdot\frac{\sgn\left(w_{2i}^{(0)}\right)\sum_{\tau=0}^H\beta_1^\tau c^{(t-\tau)}E_j^{(t-\tau)}}{\sqrt{\sum_{\tau=0}^H\beta_2^\tau \left(c^{(t-\tau)}E_j^{(t-\tau)}\right)^2}+\xi}\left(1+e_1^{(t)}[i,j]\right)\\
	:=&-\sgn\left(w_{2i}^{(0)}\right)v_j^{(t)}\left(1+e_1^{(t)}[i,j]\right),
\end{align*}
where $\left|e_1^{(t)}[i,j]\right|=\tilde{\mathcal{O}}\left(\eta^{\frac{1}{4}}+\frac{1}{d^{\frac{\alpha}{2}-\frac{1}{4}}}\right)$.
Since $\left|W_1^{(t+1)}[i,j]-W_1^{(t)}[i,j]\right|=\tilde{\mathcal{O}}(\eta)$, we get that $\left|v_j^{(t)}\right|=\tilde{\mathcal{O}}(\eta)$.
\paragraph{Case 2.} $\left|(1-\beta_1)\sum_{\tau=0}^H\beta_1^\tau R_{g,1}^{(t-\tau)}[i,j]+\epsilon_{1n}^{(t)}[i,j]\right|>\left(\eta^{\frac{1}{4}}+\frac{1}{d^{\frac{\alpha}{2}-\frac{1}{4}}}\right)\left|(1-\beta_1)\sum_{\tau=0}^H\beta_1^\tau c^{(t-\tau)}E_j^{(t-\tau)}\right|$.

Since $\frac{\left|R_{g,1}^{(t)}[i,j]\right|}{c^{(t)}\left|E_j^{(t)}\right|}=\tilde{\mathcal{O}}\left(\sqrt{\eta}+\frac{1}{d^{\alpha-1/2}}\right)$, we have that
\begin{align*}
	\left|(1-\beta_1)\sum_{\tau=0}^H\beta_1^\tau R_{g,1}^{(t-\tau)}[i,j]\right|&\le\tilde{\mathcal{O}}\left(\sqrt{\eta}+\frac{1}{d^{\alpha-1/2}}\right)(1-\beta_1)\sum_{\tau=0}^H\beta_1^\tau \left|c^{(t-\tau)}E_j^{(t-\tau)}\right|\\
	&\overset{(i)}{\le}\tilde{\mathcal{O}}\left(\sqrt{\eta}+\frac{1}{d^{\alpha-1/2}}\right)\sqrt{(H+1)(1-\beta_1)\sum_{\tau=0}^H\beta_2^\tau \left(c^{(t-\tau)}E_j^{(t-\tau)}\right)^2}\\
	&\overset{(ii)}{=}\tilde{\mathcal{O}}\left(\sqrt{\eta}+\frac{1}{d^{\alpha-1/2}}\right)\sqrt{(1-\beta_1)\sum_{\tau=0}^{H} \beta_2^{\tau}\left(g_1^{(t-\tau)}[i,j]\right)^2},
\end{align*}
where $(i)$ uses Cauchy-Schwarz inequality and $\beta_2=\beta_1^2$, $(ii)$ uses eq.~\eqref{eqn:sum_g2_rank1} and \eqref{eqn:R_g2}.

Combining with $\left|\epsilon_{1n}^{(t)}[i,j]\right|\le\tilde{\mathcal{O}}(\eta\xi^2)\le\tilde{\mathcal{O}}\left(\sqrt{\eta}+\frac{1}{d^{\alpha-1/2}}\right)\left(\xi-\sqrt{\left|\epsilon_{1d}^{(t)}[i,j]\right|}\right)$ gives us
\begin{align*}
	\left|(1-\beta_1)\sum_{\tau=0}^H\beta_1^\tau c^{(t-\tau)}E_j^{(t-\tau)}\right|&<\frac{\left|(1-\beta_1)\sum_{\tau=0}^H\beta_1^\tau R_{g,1}^{(t-\tau)}[i,j]+\epsilon_{1n}^{(t)}[i,j]\right|}{\eta^{\frac{1}{4}}+\frac{1}{d^{\frac{\alpha}{2}-\frac{1}{4}}}}\\
	&\le\frac{\tilde{\mathcal{O}}\left(\sqrt{\eta}+\frac{1}{d^{\alpha-1/2}}\right)}{\eta^{\frac{1}{4}}+\frac{1}{d^{\frac{\alpha}{2}-\frac{1}{4}}}}\left(\sqrt{\sum_{\tau=0}^{H} \beta_2^{\tau}\left(g_1^{(t-\tau)}[i,j]\right)^2}-\sqrt{\left|\epsilon_{1d}^{(t)}[i,j]\right|}+\xi\right)\\
	&\le\tilde{\mathcal{O}}\left(\eta^{\frac{1}{4}}+\frac{1}{d^{\frac{\alpha}{2}-\frac{1}{4}}}\right)\left(\sqrt{\sum_{\tau=0}^{H} \beta_2^{\tau}\left(g_1^{(t-\tau)}[i,j]\right)^2+\epsilon_{1d}^{(t)}[i,j]}+\xi\right),
\end{align*}
which implies
\[
\left|W_1^{(t+1)}[i,j]-W_1^{(t)}[i,j]\right|\le\eta\tilde{\mathcal{O}}\left(\eta^{\frac{1}{4}}+\frac{1}{d^{\frac{\alpha}{2}-\frac{1}{4}}}\right).
\]
Consider certain $i.j\in[d]$ and the period from $T_1$ to $t$. Denote $\mathcal{T}$ as the set of time points when Case 1 is satisfied. By Lemma~\ref{lemma:phase2_coarse_analysis}, we know that $\eta(t-T_1)=\tilde{\mathcal{O}}\left(\frac{1}{\sqrt{d}}\right)$, which gives us
\[
\sum_{\tau\not\in \mathcal{T}}\Delta W_1^{(\tau)}[i,j]\le(t-T_1)\eta\tilde{\mathcal{O}}\left(\eta^{\frac{1}{4}}+\frac{1}{d^{\frac{\alpha}{2}-\frac{1}{4}}}\right)=\tilde{\mathcal{O}}\left(\frac{\eta^{\frac{1}{4}}}{d^{\frac{1}{2}}}+\frac{1}{d^{\frac{\alpha}{2}+\frac{1}{4}}}\right).
\]
By the first phase analysis, we have that
\[
W_1^{(T_1)}[i,j]=\sgn\left(w_{2i}^{(0)}\right)V_j^{(T_1)}+R_1^{(T_1)}[i,j],
\]
where $V_j^{(T_1)}=\mathcal{O}\left(\frac{1}{\sqrt{d}}\right),\left|R_1^{(T_1)}[i,j]\right|=\tilde{\mathcal{O}}\left(\sqrt{\frac{\eta}{d}}+\frac{1}{d^{\alpha}}\right)$. Combining with the analysis of Case 1, we have that
\[
W_1^{(T_1)}[i,j]+\sum_{\tau\in\mathcal{T}}\Delta W_1^{(\tau)}[i,j]=\sgn\left(w_{2i}^{(0)}\right)\left(V_j^{(T_1)}-\sum_{\tau\in\mathcal{T}}v_j^{(\tau)}\right)+R_{\mathcal{T}}[i,j],
\]
where $\left|R_{\mathcal{T}}[i,j]\right|\le\mathcal{O}\left(\eta^{\frac{1}{4}}+\frac{1}{d^{\frac{\alpha}{2}-\frac{1}{4}}}\right)\left(\left|V_j^{(T_1)}\right|+\sum_{\tau\in\mathcal{T}}\left|v_j^{(\tau)}\right|\right)$.

Since for $\tau\in\mathcal{T},\left|v_j^{(\tau)}\right|=\tilde{\mathcal{O}}(\eta)$, $V_j^{(T_1)}=\mathcal{O}\left(\frac{1}{\sqrt{d}}\right)$ and $\eta(t-T_1)=\tilde{\mathcal{O}}\left(\frac{1}{\sqrt{d}}\right)$, we can bound $\left|R_{\mathcal{T}}[i,j]\right|$ by
\[
\left|R_{\mathcal{T}}[i,j]\right|\le\tilde{\mathcal{O}}\left(\eta^{\frac{1}{4}}+\frac{1}{d^{\frac{\alpha}{2}-\frac{1}{4}}}\right)\left(\mathcal{O}\left(\frac{1}{\sqrt{d}}\right)+(t-T_1)\tilde{\mathcal{O}}(\eta)\right)\le\tilde{\mathcal{O}}\left(\frac{\eta^{\frac{1}{4}}}{d^{\frac{1}{2}}}+\frac{1}{d^{\frac{\alpha}{2}+\frac{1}{4}}}\right).
\]
Combining the above results together yields
\begin{align*}
	W_1^{(t)}[i,j]&=W_1^{(T_1)}+\sum_{\tau=T_1}^{t-1}\Delta W_1^{(\tau)}[i,j]=W_1^{(T_1)}+\sum_{\tau\in \mathcal{T}}\Delta W_1^{(\tau)}[i,j]+\sum_{\tau\not\in \mathcal{T}}\Delta W_1^{(\tau)}[i,j]\\
	&=\sgn\left(w_{2i}^{(0)}\right)\left(V_j^{(T_1)}-\sum_{\tau\in\mathcal{T}}v_j^{(\tau)}\right)+\tilde{\mathcal{O}}\left(\frac{\eta^{\frac{1}{4}}}{d^{\frac{1}{2}}}+\frac{1}{d^{\frac{\alpha}{2}+\frac{1}{4}}}\right)\\
	&:=\sgn\left(w_{2i}^{(0)}\right)V_j^{(t)}+R_1^{(t)}[i,j],\text{ where }\left|R_1^{(t)}[i,j]\right|\le\tilde{\mathcal{O}}\left(\frac{\eta^{\frac{1}{4}}}{d^{\frac{1}{2}}}+\frac{1}{d^{\frac{\alpha}{2}+\frac{1}{4}}}\right).
\end{align*}
By the inductive hypothesis $\left|W_1^{(t)}[i,j]\right|=\tilde\Omega\left(\frac{1}{\sqrt{d}}\right)$, we get that $\left|V_j^{(t)}\right|=\tilde\Omega\left(\frac{1}{\sqrt{d}}\right)$, which gives us $\frac{\left|R_1^{(t)}[i,j]\right|}{\left|V_j^{(t)}\right|}\le\tilde{\mathcal{O}}\left(\eta^{\frac{1}{4}}+\frac{1}{d^{\frac{\alpha}{2}-\frac{1}{4}}}\right)$.

Therefore, we have that for any $j\in[d]$,
\[
\forall i_1,i_2\in[d]:\frac{\left|W_1^{(t)}[i_1,j]\right|}{\left|W_1^{(t)}[i_2,j]\right|}=\frac{\left|\sgn\left(w_{2i_1}^{(0)}\right)V_j^{(t)}\left(1\pm\tilde{\mathcal{O}}\left(\eta^{\frac{1}{4}}+\frac{1}{d^{\frac{\alpha}{2}-\frac{1}{4}}}\right)\right)\right|}{\left|\sgn\left(w_{2i_2}^{(0)}\right)V_j^{(t)}\left(1\pm\tilde{\mathcal{O}}\left(\eta^{\frac{1}{4}}+\frac{1}{d^{\frac{\alpha}{2}-\frac{1}{4}}}\right)\right)\right|}=1\pm\tilde{\mathcal{O}}\left(\eta^{\frac{1}{4}}+\frac{1}{d^{\frac{\alpha}{2}-\frac{1}{4}}}\right).
\]
By Lemma~\ref{lemma:ada_rank1_W2}, we know that $\left|w_{2i}^{(t)}\right|$ with different $i$ are also roughly equal, i.e. $\frac{\left|w_{2i_1}^{(t)}\right|}{\left|w_{2i_2}^{(t)}\right|}=1\pm\tilde{\mathcal{O}}\left(\sqrt{\eta}+\frac{1}{d^{\alpha-1/2}}\right)$. Then we have for any $j\in[d]$,
\begin{align*}
	(W_2W_1)_j^{(t)}=\sum_{i=1}^dw_{2i}^{(t)}W_1^{(t)}[i,j]=\sum_{i=1}^d\left|w_{2i}^{(t)}\right|\left|W_1^{(t)}[i,j]\right|&=\Theta\left(d\left|w_{2k}^{(t)}\right|\left|W_1^{(t)}[k,j]\right|\right)\\&=\tilde\Theta\left(\sqrt{d}\left|W_1^{(t)}[k,j]\right|\right).
\end{align*}
where $k$ can be any index in $\{1,2,...,d\}$ and the last equality uses $\forall i\in[d]:\left|w_{2i}^{(t)}\right|=\tilde\Theta\left(\frac{1}{\sqrt{d}}\right)$.

Now we analyze the lower bound of $\left|W_1^{(t+1)}[k,j]\right|$. Although it may decrease during some period, we observe that once $\left|W_1^{(t)}[k,j]\right|$ decreases to some value of order $\tilde\Theta\left(\frac{1}{\sqrt{d}}\right)$ such that $(W_2W_1)_j^{(t)}<A_j-\sqrt{\eta d}$, i.e. $E_j^{(t)}<-\sqrt{\eta d}$, we can apply the technique in Section~\ref{sec:pf_phase2_coarse_analysis} when analyzing $W_1^{(t)}[i,j_0]$ to get that $\left|W_1^{(t)}[k,j]\right|$ will increase in the next step. This mechanism ensures a $\tilde\Omega\left(\frac{1}{\sqrt{d}}\right)$ lower bound of $\left|W_1^{(t+1)}[k,j]\right|$. Since $k,j$ are arbitrary, we have proved that at time $t+1$, $\forall i,j\in[d]:\left|W_1^{(t+1)}[i,j]\right|\ge\tilde\Omega\left(\frac{1}{\sqrt{d}}\right)$.

Therefore by induction, we conclude that when $T_1\le t<\tilde T$, for $\forall i,j\in[d]$, $\left|W_1^{(t)}[i,j]\right|=\tilde\Omega\left(\frac{1}{\sqrt{d}}\right)$. The remaining part of this lemma has also been proved by the analysis above.

\subsection{Proof of Lemma~\ref{lemma:ada_converge}}\label{sec:pf_ada_converge}
Lemma~\ref{lemma:phase2_coarse_analysis} tells us that for any $i\in[d]$, $\left|w_{2i}^{(t)}\right|$ keeps increasing when $t<\tilde T$. However, the behavior of $W_1^{(t)}[i,j]$ is more complicated. The following lemma tells us that $\left|W_1^{(t)}[i,j]\right|$ will increase until $T_{f,j}$. After that $\left|W_1^{(t)}[i,j]\right|$ and $E_j^{(t)}$ may zigzag, but $E_j^{(t)}$ will not fluctuate dramatically and will be trapped in a small interval around zero.
\begin{lemma}\label{lemma:E_oscillate}
	Under Assumption~\ref{assump:setup}, \ref{assump:gaussian_init} and \ref{assump:large_batch}, suppose $\sigma\le\frac{\eta^{3/2}\xi^2}{d^{13/4}}$. Pick $\eta\le\mathcal{O}\left(\frac{1}{d^{3\alpha}}\right),\xi\le\sqrt{\frac{\eta}{d^{3\alpha-1}}}$, and $\beta_2=\beta_1^2$. Consider certain coordinate $j$. For $T_1\le t< \min\left\{\tilde T,T_{f,j}\right\}$, we have $\forall i\in[d]:\left|W_1^{(t)}[i,j]\right|$ keeps increasing. If $T_{f,j}<\tilde T$, then for $T_{f,j}\le t<\tilde T$, we will have $-\tilde{\mathcal{O}}\left(\sqrt{\eta d}\right)\le E_j^{(t)} \le\tilde{\mathcal{O}}\left(\sqrt{\eta d}\right)$.
\end{lemma}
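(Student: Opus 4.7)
The lemma has two parts, which I would handle separately.

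\emph{Part 1 (pre-flip, $T_1\le t<\min\{\tilde T,T_{f,j}\}$).} By definition of $T_{f,j}$ we have $E_j^{(t)}<-\sqrt{\eta d}$, and Lemma~\ref{lemma:ada_rank1_W2} gives $|w_{2i}^{(t)}|=\tilde\Theta(1/\sqrt d)$ with $\sgn(w_{2i}^{(t)})=\sgn(w_{2i}^{(0)})$. Hence $|g_1^{(t)}[i,j]|=|w_{2i}^{(t)}||E_j^{(t)}|\ge\tilde\Omega(\sqrt\eta)$ and $\sgn(g_1^{(t)}[i,j])=-\sgn(w_{2i}^{(0)})$. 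This $\tilde\Omega(\sqrt\eta)$ lower bound is exactly the hypothesis of the conditional clause of Lemma~\ref{lemma:ada_rank1_W2}, which yields $\Delta W_1^{(t)}[i,j]=\eta\sgn(w_{2i}^{(0)})(1\pm\tilde{\mathcal{O}}(\sqrt\eta))$. Lemma~\ref{lemma:ada_rank1_W1} (with positivity of $V_j^{(T_1)}$ from Lemma~\ref{lemma:Ada_rank1_p1}, propagated forward by a short induction as long as we remain pre-flip) makes $W_1^{(t)}[i,j]$ carry the sign $\sgn(w_{2i}^{(0)})$, so update and weight are aligned and $|W_1^{(t)}[i,j]|$ strictly increases.

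\emph{Part 2 (post-flip, $T_{f,j}\le t<\tilde T$).} I would establish $|E_j^{(t)}|\le\tilde{\mathcal{O}}(\sqrt{\eta d})$ by induction. The universal per-step bound
\[
|\Delta E_j^{(t)}|=\Bigl|\sum_i\bigl(w_{2i}^{(t+1)}\Delta W_1^{(t)}[i,j]+\Delta w_{2i}^{(t)}W_1^{(t)}[i,j]\bigr)\Bigr|\le\tilde{\mathcal{O}}(\sqrt d\,\eta)=\tilde{\mathcal{O}}(\sqrt\eta)\cdot\sqrt{\eta d}
\]
follows from Lemma~\ref{lemma:phase2_coarse_analysis} and is $\ll\sqrt{\eta d}$. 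Combined with $E_j^{(T_{f,j}-1)}<-\sqrt{\eta d}$, this yields the base case $|E_j^{(T_{f,j})}|\le\sqrt{\eta d}$, and whenever $|E_j^{(t)}|$ is comfortably below the target envelope the per-step bound preserves the invariant trivially. The interesting case is $|E_j^{(t)}|$ near the boundary: there $|g_1^{(t)}[i,j]|=\tilde\Omega(\sqrt\eta)$ triggers Lemma~\ref{lemma:ada_rank1_W2}'s conditional clause, and substituting $w_{2i}^{(t)}\approx\sgn(w_{2i}^{(0)})c^{(t)}$, $W_1^{(t)}[i,j]\approx\sgn(w_{2i}^{(0)})V_j^{(t)}$, and $\Delta w_{2i}^{(t)}\approx\eta\sgn(w_{2i}^{(0)})$ (from monotone growth of $|w_{2i}|$ in Lemma~\ref{lemma:phase2_coarse_analysis}) into the identity gives
\[
\Delta E_j^{(t)}\approx d\eta\bigl(V_j^{(t)}-c^{(t+1)}\sgn(E_j^{(t)})\bigr).
\]
When $E_j^{(t)}<0$ both summands are positive, so the required restoring $\Delta E_j^{(t)}>0$ is immediate.

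\emph{Main obstacle.} When $E_j^{(t)}>0$ the two summands in the display are of equal order $\tilde\Theta(\eta\sqrt d)$ with opposite signs, so the net sign hinges on the comparison $V_j^{(t)}\lessgtr c^{(t+1)}$. Using the near-balance $dcV_j\approx A_j$ on the boundary, this reduces to comparing $c^{(t)}$ with $\sqrt{A_j/d}$. The cleanest route I see is to split the post-flip window at the first time $c^{(t)}$ crosses $\sqrt{A_j/d}$: in the later sub-window the $W_1$ contribution dominates and the standard restoring argument closes the induction; in the earlier sub-window $E_j^{(t)}$ may drift upward, but the quantity $V_j^{(t)}-c^{(t)}$ shrinks by $2\eta$ per step (since $|w_{2i}|$ grows and $|W_1[i,j]|$ shrinks in the $E_j>0$ regime), so the amplifying sub-window has length $\tilde{\mathcal{O}}(1/(\sqrt d\eta))$ and I would carefully telescope the cumulative drift through this shrinking gap to keep $|E_j^{(t)}|$ within $\tilde{\mathcal{O}}(\sqrt{\eta d})$. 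This quantitative bookkeeping of the $V_j^{(t)}$ versus $c^{(t)}$ race is where I expect the bulk of the technical effort in Part 2 to live.
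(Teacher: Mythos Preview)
Your Part 1 and the overall structure of Part 2 (case split on $\sgn(E_j)$, immediate restoring when $E_j^{(t)}\le-\sqrt{\eta d}$, gap--tracking when $E_j^{(t)}\ge\sqrt{\eta d}$) match the paper's proof. The gap is quantitative and lives exactly where you flagged the ``main obstacle''.

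You estimate the amplifying sub-window length as $\tilde{\mathcal{O}}(1/(\sqrt d\,\eta))$, which implicitly allows the initial gap $V_j-c$ to be as large as $\Theta(1/\sqrt d)$. With that starting value your telescoping gives
\[
\sum_{s} d\eta\bigl(V_j^{(s)}-c^{(s)}\bigr)_+ \;\approx\; \frac{d\,G_0^2}{4}\;=\;\Theta(1),
\]
not $\tilde{\mathcal{O}}(\sqrt{\eta d})$. The missing observation is that the gap is \emph{not} of order $1/\sqrt d$ at the flip time: by Lemma~\ref{lemma:Ada_rank1_p1} one has $V_j^{(T_1)}=c^{(T_1)}=\eta(T_1-t_{\text{inc}})$ exactly, and throughout the pre-flip phase both quantities grow by $\eta\bigl(1\pm\tilde{\mathcal{O}}(\sqrt\eta)\bigr)$ per step, so $|V_j-c|$ stays $\tilde{\mathcal{O}}(\sqrt\eta/\sqrt d)$ up to $T_{f,j}$. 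Once you insert this small initial gap, either your telescoping or the crude per-step bound closes the argument.

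The paper packages this slightly differently: it waits for the gap $|w_{2i}|-|W_1[i,j]|$ (equivalently $c-V_j$) to reach the threshold $+\sqrt\eta$, which is the point at which $\Delta E_j<0$ is guaranteed despite the $\tilde{\mathcal{O}}(\sqrt\eta)$ noise in the increments. Since the gap starts near $0$ and grows by $\tilde\Omega(\eta)$ per step while $E_j\ge\sqrt{\eta d}$, this takes $t_s=\tilde{\mathcal{O}}(1/\sqrt\eta)$ steps; combining with the uniform per-step bound $|\Delta E_j|\le\tilde{\mathcal{O}}(\eta\sqrt d)$ gives the overshoot bound $\tilde{\mathcal{O}}(\sqrt{\eta d})$ directly, with no telescoping needed. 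Your global split at ``the first time $c^{(t)}$ crosses $\sqrt{A_j/d}$'' is a valid alternative once you use the correct small initial gap, but the paper's local $\sqrt\eta$--threshold argument is cleaner because it automatically handles repeated excursions (after the first excursion the gap already exceeds $\sqrt\eta$, so subsequent excursions are turned back immediately).
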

Now we start proving Lemma~\ref{lemma:ada_converge}.	At time $\tilde T$, denote $S:=\left\{j:T_{f,j}<\tilde T\right\}$, i.e. the set of coordinates whose $E_j$ have passed its ``flip time''. By Lemma~\ref{lemma:E_oscillate}, we know that $\forall j\in S, \left|E_j^{(\tilde T)}\right|\le\tilde{\mathcal{O}}\left(\sqrt{\eta d}\right)$. If $S^c=\phi$, which means $\forall j\in[d]: \left|E_j^{(\tilde T)}\right|\le \tilde{\mathcal{O}}\left(\sqrt{\eta d}\right)$, then our lemma will immediately follow. If $S^c\neq\phi$, we have $\tilde T=\min\left\{T_g,T_f\right\}=T_g$ and that $\forall j\in S^c: E_j^{(\tilde T)}<0$. By the definition of $T_g$, we know that $\exists i_0\in[d]: \left|g_{2i_0}^{(\tilde T)}\right|\le\mathcal{O}\left(d\sqrt{\eta}\right)$. Then
\begin{align*}
	\left|\sum_{j\in S^c}^d E_j^{(\tilde T)}W_1^{(\tilde T)}[i_0,j]\right|&= \left|\sum_{j=1}^dE_{j}^{(\tilde T)}W_{1}^{(\tilde T)}[i_0,j]-\sum_{j\in S}E_j^{(\tilde T)}W_1^{(\tilde T)}[i_0,j]\right|\le \left|g_{2i_0}^{(\tilde T)}\right|+\left|\sum_{j\in S}E_j^{(\tilde T)}W_1^{(\tilde T)}[i_0,j]\right|\\
	&\le \mathcal{O}\left(d\sqrt{\eta}\right)+d\tilde{\mathcal{O}}\left(\sqrt{\eta d}\right)\tilde{\mathcal{O}}\left(\frac{1}{\sqrt{d}}\right)=\tilde{\mathcal{O}}\left(d\sqrt{\eta}\right).
\end{align*}
By Lemma~\ref{lemma:ada_rank1_W1}, we know that when $T_1\le t<\tilde{T}$, for $\forall i,j\in[d]$, $\left|W_1^{(t)}[i,j]\right|=\tilde\Omega\left(\frac{1}{\sqrt{d}}\right)$. Since the update per step $\left|\Delta W_1^{(t)}[i,j]\right|\le\tilde{\mathcal{O}}(\eta)$, we know that $\sgn\left(W_1^{(t)}[i,j]\right)$ remains unchanged during this period and $\sgn\left(W_1^{(t)}[i,j]\right)=\sgn\left(W_1^{(T_1)}[i,j]\right)=\sgn\left(w_{2i}^{(0)}\right)$ independent of $j$. Combining with $\forall j\in S^c: E_j^{(\tilde T)}<0$ gives us that $E_j^{(\tilde T)}W_1^{(\tilde T)}[i_0,j]$ for different $j$ have the same sign. Therefore for any $j_0\in S^c$,
\begin{align*}
	\tilde{\mathcal{O}}\left(d\sqrt{\eta}\right)\ge\left|\sum_{j\in S^c}^d E_j^{(\tilde T)}W_1^{(\tilde T)}[i_0,j]\right|&=\sum_{j\in S^c}^d \left|E_j^{(\tilde T)}W_1^{(\tilde T)}[i_0,j]\right|\ge\left|E_{j_0}^{(\tilde T)}W_1^{(\tilde T)}[i_0,j_0]\right|\ge \left|E_{j_0}^{(\tilde T)}\right|\tilde\Omega\left(\frac{1}{\sqrt{d}}\right)\\ \Rightarrow\left|E_{j_0}^{(\tilde T)}\right|&\le \tilde{\mathcal{O}}\left(d\sqrt{\eta d}\right).
\end{align*}
Note that the above inequality holds for any $j_0\in S^c$, which means $\forall j\in S^c: \left|E_{j}^{(\tilde T)}\right|\le \tilde{\mathcal{O}}\left(d\sqrt{\eta d}\right)$. Combining with the fact that $\forall j\in S: \left|E_{j}^{(\tilde T)}\right|\le \tilde{\mathcal{O}}\left(d\sqrt{\eta d}\right)$ completes the proof.

\subsection{Proof of Lemma~\ref{lemma:E_oscillate}}\label{sec:pf_E_oscillate}
Consider certain $j\in[d]$, when $t<\min\left\{\tilde T,T_{f,j}\right\}$, we have that $E_{j}^{(t)}<-\sqrt{\eta d}$. Therefore we can use the same argument as in Section~\ref{sec:pf_phase2_coarse_analysis} to prove that $\left|W_1^{(t)}[i,j]\right|$ keeps increasing, and $\sgn\left(W_1^{(t)}[i,j]\right)=\sgn\left(w_{2i}^{(0)}\right)$ for all $i\in[d]$. 

At time the ``flip time'' $t=T_{f,j}$, by definition, $E_j^{(t)}\ge-\sqrt{\eta d}$. After that $E_j^{(t)}$ may oscillate. Now we prove that once $E_j^{(t)}\ge\sqrt{\eta d}$ (or $E_j^{(t)}\le-\sqrt{\eta d}$), after a short period $E_j^{(t)}$ will decrease (or increase) until $E_j^{(t)}\le\tilde{\mathcal{O}}\left(\sqrt{\eta d}\right)$ (or $E_j^{(t)}\ge-\tilde{\mathcal{O}}\left(\sqrt{\eta d}\right)$). Moreover, during this period, $E_j^{(t)}$ won't change too much.

We first recall that when $T_1\le t<\tilde T$, Lemma~\ref{lemma:phase2_coarse_analysis} gives us for all $i\in[d]$, $\left|w_{2i}^{(t)}\right|=\tilde\Theta\left(\frac{1}{\sqrt{d}}\right)$ and $\left|W_1^{(t)}[i,j]\right|=\tilde{\mathcal{O}}\left(\frac{1}{\sqrt{d}}\right)$. Then eq.\eqref{eqn:delta_E_bound} we obtained in the first phase analysis still holds, which tells us that the change of $E_j^{(t)}$ per step satisfies $\left|E_j^{(t+1)}-E_j^{(t)}\right|=\tilde{\mathcal{O}}\left(\eta\sqrt{d}\right)$ for all $T_1\le t<\tilde{T}$.

We divide the analysis into two cases, based on whether these $E_j^{(t)}\ge\sqrt{\eta d}$ or $E_j^{(t)}\le-\sqrt{\eta d}$. By Lemma~\ref{lemma:ada_rank1_W1}, we know that when $T_1\le t<\tilde{T}$, $\forall i\in[d]$, $\left|W_1^{(t)}[i,j]\right|=\tilde\Omega\left(\frac{1}{\sqrt{d}}\right)$. Since the update per step $\left|\Delta W_1^{(t)}[i,j]\right|\le\tilde{\mathcal{O}}(\eta)$, we know that $\sgn\left(W_1^{(t)}[i,j]\right)$ remains unchanged during this period and $\sgn\left(W_1^{(t)}[i,j]\right)=\sgn\left(W_1^{(T_1)}[i,j]\right)=\sgn\left(w_{2i}^{(0)}\right)$.

By the analysis of $w_{2i}^{(t)}$ in Lemma~\ref{lemma:ada_rank1_W2}, we have for all $i\in[d]$, $w_{2i}^{(t+1)}=w_{2i}^{(t)}+\sgn\left(w_{2i}^{(0)}\right)\Delta_{2i}^{(t)}$, where $\Delta_{2i}^{(t)}=\eta\left(1\pm\tilde{\mathcal{O}}\left(\sqrt{\eta}\right)\right)$.

\paragraph{Case 1.} Consider some time point $t$ such that $E_j^{(t)}\le-\sqrt{\eta d}$. Note that for all $i\in[d]$, $\left|g_1^{(t)}[i,j]\right|=\left|w_{2i}^{(t)}E_j^{(t)}\right|=\tilde\Omega\left(\sqrt{\eta}\right)$ and that $\sgn\left(g^{(t)}_1[i,j]\right)=-\sgn\left(w_{2i}^{(t)}\right)=-\sgn\left(w_{2i}^{(0)}\right)$. By Lemma~\ref{lemma:ada_rank1_W2}, for all $i\in[d]$ we have $W_{1}^{(t+1)}[i,j]=W_{1}^{(t)}[i,j]+\sgn\left(w_{2i}^{(0)}\right)\Delta_{1}^{(t)}[i,j]$ with $\Delta_1^{(t)}[i,j]=\eta\left(1\pm\tilde{\mathcal{O}}\left(\sqrt{\eta}\right)\right)$. That gives us
\begin{align*}
	E_j^{(t+1)}&=\sum_{i=1}^dw_{2i}^{(t+1)}W_1^{(t+1)}[i,j]-A_j\\
	&=\sum_{i=1}^d\left(w_{2i}^{(t)}+\sgn\left(w_{2i}^{(0)}\right)\Delta_{2i}^{(t)}\right)\left(W_1^{(t)}[i,j]+\sgn\left(w_{2i}^{(0)}\right)\Delta_{1}^{(t)}[i,j]\right)-A_j\\
	&=\sum_{i=1}^d\left(w_{2i}^{(t)}W_1^{(t)}[i,j]+\sgn\left(w_{2i}^{(0)}\right)\left(w_{2i}^{(t)}\Delta_{1}^{(t)}[i,j]+\Delta_{2i}^{(t)}W_1^{(t)}[i,j]\right)+\Delta_{2i}^{(t)}\Delta_{1}^{(t)}[i,j]\right)-A_j\\
	&\overset{(i)}{=}E_j^{(t)}+\sum_{i=1}^d\left(\left|w_{2i}^{(t)}\right|\Delta_{1}^{(t)}[i,j]+\Delta_{2i}^{(t)}\left|W_1^{(t)}[i,j]\right|+\Delta_{2i}^{(t)}\Delta_{1}^{(t)}[i,j]\right),\\
	\Rightarrow\quad &E_j^{(t+1)}>E_j^{(t)},
\end{align*}
where $(i)$ is because $\sgn\left(w_{2i}^{(t)}\right)=\sgn\left(W_1^{(t)}[i,j]\right)=\sgn\left(w_{2i}^{(0)}\right)$.
Therefore we have proved that $E_j^{(t)}$ will increase in the next step. After that for $\tau\ge t+1$, as long as $E_j^{(\tau)}\le-\sqrt{\eta d}$, the above analysis will hold and $E_j^{(\tau)}$ will keep increasing until $E_j^{(\tau)}>-\sqrt{\eta d}$ or we reach $\tilde T$.

\paragraph{Case 2.} Consider some time point $t$ such that $E_j^{(t)}\ge\sqrt{\eta d}$. We will prove that $E_j^{(t)}$ will decrease after a short period, and during this period, the change of it is at most $\tilde{\mathcal{O}}\left(\sqrt{\eta d}\right)$.

By similar arguments as in Case 1, we can get that $W_1^{(t+1)}[i,j]=W_1^{(t)}[i,j]-\sgn\left(w_{2i}^{(0)}\right)\Delta_{1}^{(t)}[i,j]$, where $\Delta_1^{(t)}[i,j]=\eta\left(1\pm\tilde{\mathcal{O}}\left(\sqrt{\eta}\right)\right)$, Then
\begin{align*}
	E_j^{(t+1)}&=\sum_{i=1}^dw_{2i}^{(t+1)}W_1^{(t+1)}[i,j]-A_j\\
	&=\sum_{i=1}^d\left(w_{2i}^{(t)}+\sgn\left(w_{2i}^{(0)}\right)\Delta_{2i}^{(t)}\right)\left(W_1^{(t)}[i,j]-\sgn\left(w_{2i}^{(0)}\right)\Delta_{1}^{(t)}[i,j]\right)-A_j\\
	&=\sum_{i=1}^d\left(w_{2i}^{(t)}W_1^{(t)}[i,j]-\sgn\left(w_{2i}^{(0)}\right)\left(w_{2i}^{(t)}\Delta_{1}^{(t)}[i,j]-\Delta_{2i}^{(t)}W_1^{(t)}[i,j]\right)-\Delta_{2i}^{(t)}\Delta_{1}^{(t)}[i,j]\right)-A_j\\
	&\overset{(i)}{=}E_j^{(t)}-\sum_{i=1}^d\left(\left|w_{2i}^{(t)}\right|\Delta_{1}^{(t)}[i,j]-\Delta_{2i}^{(t)}\left|W_1^{(t)}[i,j]\right|+\Delta_{2i}^{(t)}\Delta_{1}^{(t)}[i,j]\right),
\end{align*}
where $(i)$ is because $\sgn\left(w_{2i}^{(t)}\right)=\sgn\left(W_1^{(t)}[i,j]\right)=\sgn\left(w_{2i}^{(0)}\right)$.
$E_j^{(t+1)}$ may not be smaller than $E_j^{(t)}$, but we will show that after at most $t_s$ steps for some $t_s$, we will have $E_j^{(t+t_s+1)}<E_j^{(t+t_s)}$.

To see this, first note that by the bounds of $\Delta_1^{(t)}[i,j]$ and $\Delta_{2i}^{(t)}$, we get $\Delta_1^{(t)}[i,j]\ge\Delta_{2i}^{(t)}-\eta\tilde{\mathcal{O}}\left(\sqrt{\eta}\right)$. Since $\left|w_{2i}^{(t)}\right|$ increases by $\tilde{\Theta}(\eta)$ per step, and $\left|W_1^{(t)}[i,j]\right|$ keeps decreasing, then we have either i) after $t_s$ steps for some $t_s$,  $\forall i\in[d]:\left|w_{2i}^{(t+t_s)}\right|\ge\left|W_1^{(t+t_s)}[i,j]\right|+\sqrt{\eta}$ or ii) we reach $\tilde T$.

For i), if $E_j^{(t+t_s)}<\sqrt{\eta d}$, then it's already what we want. Otherwise we will have $\Delta_1^{(t+t_s)}[i,j]=\eta\left(1\pm\tilde{\mathcal{O}}\left(\sqrt{\eta}\right)\right)$. Hence
\begin{align*}
	&E_j^{(t+t_s)}-E_j^{(t+t_s+1)}\\
	=&\sum_{i=1}^d\left(\left|w_{2i}^{(t+t_s)}\right|\Delta_{1}^{(t+t_s)}[i,j]-\Delta_{2i}^{(t+t_s)}\left|W_1^{(t+t_s)}[i,j]\right|+\Delta_{2i}^{(t+t_s)}\Delta_{1}^{(t+t_s)}[i,j]\right)\\
	\ge&\sum_{i=1}^d\left(\left|W_{1}^{(t+t_s)}[i,j]\right|\left(\Delta_{1}^{(t+t_s)}[i,j]-\Delta_{2i}^{(t+t_s)}\right)+\sqrt{\eta}\Delta_{1}^{(t+t_s)}[i,j]+\Delta_{2i}^{(t+t_s)}\Delta_{1}^{(t+t_s)}[i,j]\right)\\
	\ge&\sum_{i=1}^d\left(-\eta\tilde{\mathcal{O}}\left(\sqrt{\eta}\right)\left|W_{1}^{(t+t_s)}[i,j]\right|+\eta\sqrt{\eta}+\Delta_{2i}^{(t+t_s)}\Delta_{1}^{(t+t_s)}[i,j]\right)>0,
\end{align*}
where the last inequality uses $\forall i,j\in[d]:\left|W_1^{(t+t_s)}[i,j]\right|=\tilde{\mathcal{O}}\left(\frac{1}{\sqrt{d}}\right)$. Therefore $E_j^{(t+t_s+1)}<E_j^{(t+t_s)}$. After that for $\tau\ge t+t_s+1$, as long as $E_j^{(\tau)}\ge\sqrt{\eta d}$, the above analysis will hold and $E_j^{(\tau)}$ will keep decreasing until $E_j^{(\tau)}<\sqrt{\eta d}$ or we reach $\tilde T$.

Now we prove that during these $t_s$ steps, the change of $E_j$ is $\tilde{\mathcal{O}}\left(\sqrt{\eta d}\right)$. Since at each step the difference $\left|w_{2i}\right|-\left|W_1[i,j]\right|$ will be enlarged by $\tilde\Omega(\eta)$, then we know that $t_s=\sqrt{\eta}/\tilde\Omega(\eta)=\tilde{\mathcal{O}}\left(\frac{1}{\sqrt{\eta}}\right)$. Combining with the fact that for all $T_1\le\tau\le \tilde{T}$, $\left|E_j^{(\tau+1)}-E_j^{(\tau)}\right|=\tilde{\mathcal{O}}\left(\eta\sqrt{d}\right)$ gives us
\[
E_j^{(t+t_s)}-E_j^{(t)}\le\tilde{\mathcal{O}}\left(\eta t_s\sqrt{d}\right)=\tilde{\mathcal{O}}\left(\sqrt{\eta d}\right).
\]
For ii), we reach $\tilde T$ before $\forall i\in[d]:\left|w_{2i}^{(t+t_s)}\right|\ge\left|W_1^{(t+t_s)}[i,j]\right|+\sqrt{\eta}$. Then we have $\tilde T-t\le \sqrt{\eta}/\tilde\Omega(\eta)=\tilde{\mathcal{O}}\left(\frac{1}{\sqrt{\eta}}\right)$, which yields $E_j^{(\tilde T)}-E_j^{(t)}\le\tilde{\mathcal{O}}\left(\eta (\tilde T-t)\sqrt{d}\right)\le\tilde{\mathcal{O}}\left(\sqrt{\eta d}\right)$.

Combining the above two cases, we find that if for some $t$, $E_j^{(t)}\ge\sqrt{\eta d}$, then after at most $t_s$ steps $E_j$ will decrease and keeps decreasing until $E_j<\sqrt{\eta d}$ or we reach $\tilde T$. During these steps, $E_j$ can increase at most $\tilde{\mathcal{O}}\left(\sqrt{\eta d}\right)$. If for some $t$, $E_j^{(t)}\le-\sqrt{\eta d}$, then after one step it will increase and keeps increasing until $E_j>\sqrt{\eta d}$ or we reach $\tilde T$. That means once for some coordinate $j$, $E_j$ overshoots, it will zigzag in a small region around zero, which is $\left[-\tilde{\mathcal{O}}\left(\sqrt{\eta d}\right),\tilde{\mathcal{O}}\left(\sqrt{\eta d}\right)\right]$.
	
\section{Hessian tends to become more and more diagonal during training}\label{sec:Hessian_diag}
In this section, we empirically demonstrate that the trend of loss Hessian in practice is to become more and more diagonal during training. We also give a rigorous theoretical analysis on a two-layer network under Assumption~\ref{assump:setup} and \ref{assump:gaussian_init}.
\subsection{Empirical Results}
Let's first define the diagonal domination of the $i$-th coordinate at time $t$.
\[
\rdiagOPT:=\frac{\sqrt{\sum_{j\ne i}\left(H^{(t)}[i,j]\right)^2}}{\left|H^{(t)}[i,i]\right|}.
\]
To measure the diagonal domination of the whole Hessian, we need to consider the distribution of $\rdiagOPT$ for different $i$. Figure~\ref{fig:off-diag_diag} shows the mean and median of $\rdiagSGDM$ and $\rdiagAda$ on the sentence classification task (See Section~\ref{subsec:real_data}). Here we chose 4 layers (Layer \#6, 12, 17 and 22) and computed the Hessians across these 4 layers. Since the number of parameters is very large, we did the computation by random sampling. As we can see, for both $\rdiagSGDM$ and $\rdiagAda$, the trend of their mean or median is to decrease over time, although there might be some oscillation.
\begin{figure}[H]
	\centering
	\begin{subfigure}{.45\textwidth}
		\centering
		\includegraphics[width=\linewidth]{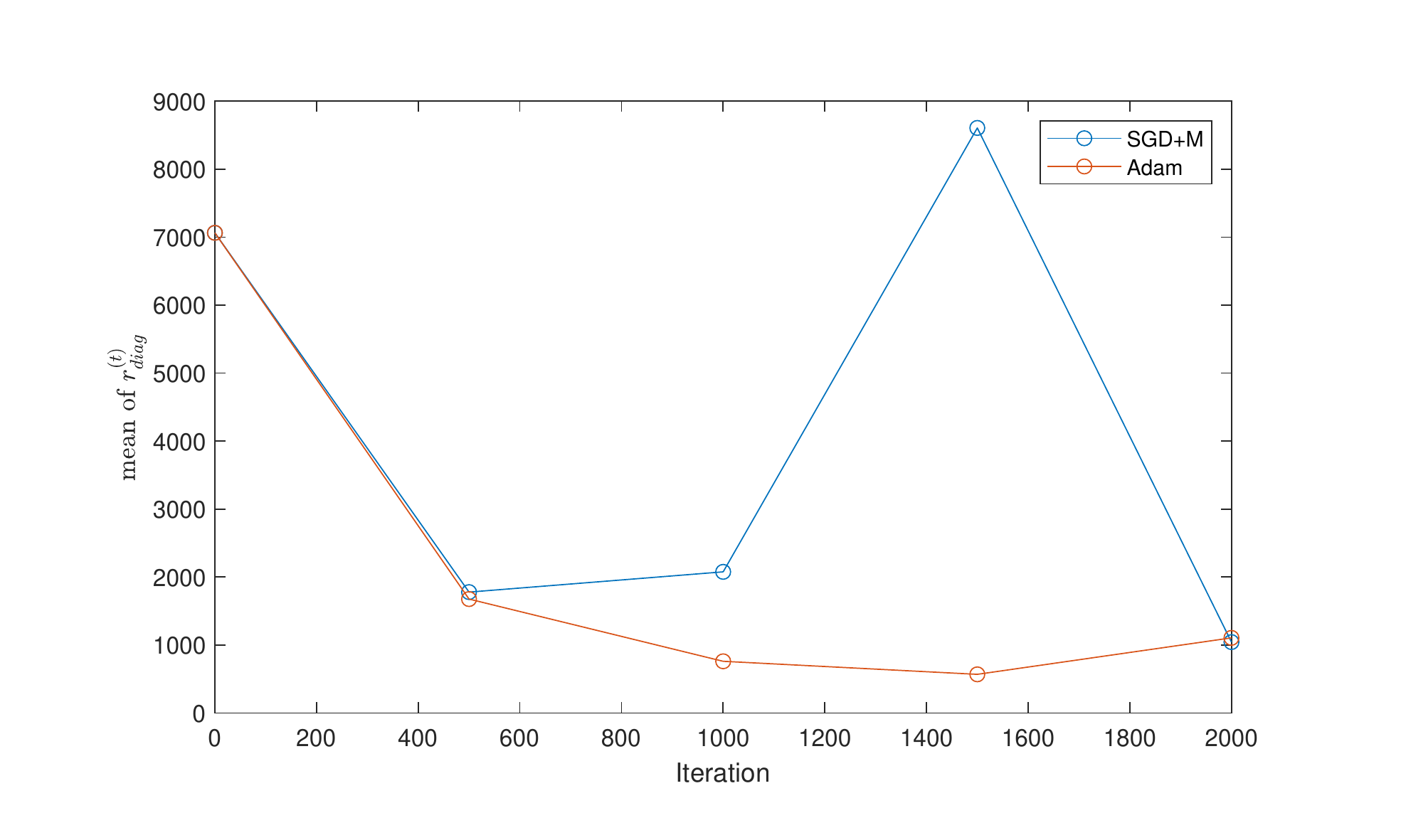}
		\caption{Mean}
	\end{subfigure}
	\begin{subfigure}{.45\textwidth}
		\centering
		\includegraphics[width=\linewidth]{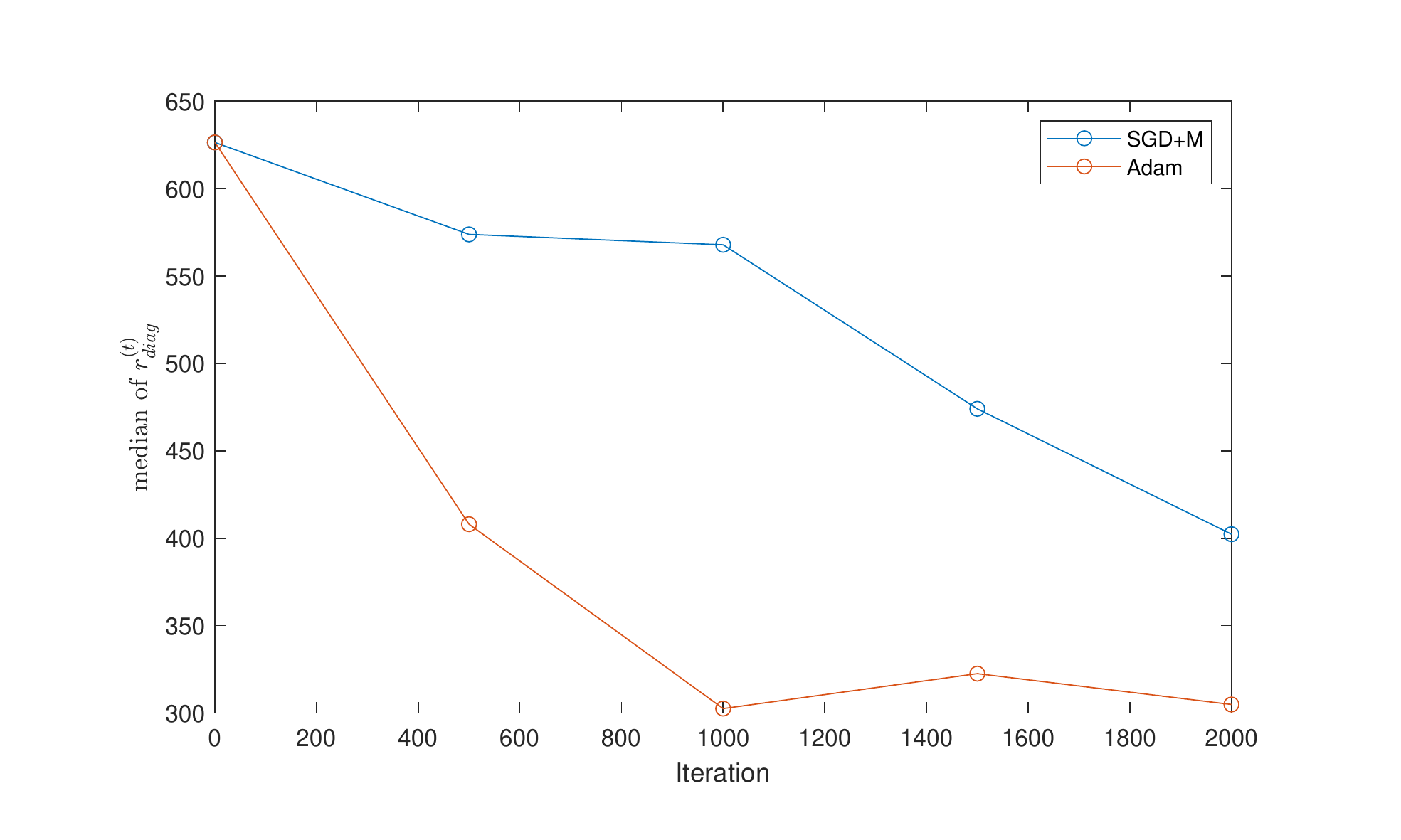}
		\caption{Median}
	\end{subfigure}
	\caption{Mean and median of $\rdiagSGDM$ and $\rdiagAda$ for the full hessian across the four layers (\#6,12,17,22)}
	\label{fig:off-diag_diag}
\end{figure}
\subsection{Theoretical Analysis}
To simplify the theoretical analysis, we consider the mean of $\rdiagOPT$ over all coordinate and define
\begin{equation}\label{eqn:ratio_off_diag}
	\RdiagOPT:=\text{mean}\left(\rdiagOPT\right).
\end{equation}
We consider a 2-layer network under Assumption~\ref{assump:setup} and \ref{assump:gaussian_init}, and have two goals in our proof:
\begin{enumerate}
	\item To show that $\RdiagOPT$ after training is smaller than that before training ($t=0$).
	\item Note that in our setting (see in Assumption~\ref{assump:setup}), the Hessian is a $(d^2+d)\times(d^2+d)$ matrix. For a completely ``uniform'' matrix with the same size, we have that $\RdiagOPT=\Theta\left(\sqrt{d^2+d}\right)=\Theta(d)$. Hence our second goal is to show that the $\RdiagOPT$ after training is on lower order than $\Theta(d)$.
\end{enumerate}
\begin{theorem}\label{thm:diag_dormination}
	Consider the ratio $\RdiagOPT$ defined in eq.~\eqref{eqn:ratio_off_diag}. Under Assumption~\ref{assump:setup} and \ref{assump:gaussian_init}, we have that before training ($t=0$), with high probability,
	\begin{equation}\label{eqn:R_diag_0}
		R_{\text{diag}}^{\text{OPT}}(0)\ge\tilde{\Omega}\left(d^{4\alpha-\frac{3}{2}}\right).
	\end{equation}
	For SGD+M defined in eq.~\eqref{eqn:update_general}. For any $p>0$, by picking the same hyperparameters as in Theorem~\ref{thm:uniformity}, for $T_{\text{SGD},1},T_{\text{SGD},2}$ mentioned in Theorem~\ref{thm:uniformity}, we have with constant probability, for any $t\in[T_{\text{SGD},1},T_{\text{SGD},2}]$, 
	\begin{equation}\label{eqn:R_diag_GD}
		\RdiagSGDM\le\tilde{\mathcal{O}}\left(\sqrt{d}\right)+q^{(t)},
	\end{equation}
	where the trend of $q^{(t)}$ is to decrease over time and $q^{(T_{\text{SGD},2})}\le\tilde{\mathcal{O}}\left(\frac{1}{d^{p/2-1}}\right)=o(d)$.
	
	For Adam defined in eq.~\eqref{eqn:update_general}. For any $p>0$, by picking the same hyperparameters as in Theorem~\ref{thm:uniformity}, for $T_{\text{Adam},1},T_{\text{Adam},2}$ mentioned in Theorem~\ref{thm:uniformity}, we have with high probability, for any $t\in[T_{\text{Adam},1},T_{\text{Adam},2}]$, 
	\begin{equation}\label{eqn:R_diag_Ada}
		\RdiagAda\le\tilde{\mathcal{O}}\left(\sqrt{d}\right)+r^{(t)},
	\end{equation}
	where the trend of $r^{(t)}$ is to decrease over time and $r^{(T_{\text{Adam},2})}\le\tilde{\mathcal{O}}\left(\frac{1}{d^{\frac{p-1}{2}}}\right)=o\left(\sqrt{d}\right)$.
\end{theorem}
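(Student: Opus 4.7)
My plan is to first write down the Hessian of $\bar L(W)=\frac12\|W_2W_1-A\|_F^2$ entry by entry. Direct differentiation gives the diagonal entries
\[
H^{(t)}[w_{2i},w_{2i}]=\|W_1^{(t)}[i,:]\|_2^2,\qquad H^{(t)}[W_1[i,j],W_1[i,j]]=\bigl(w_{2i}^{(t)}\bigr)^2,
\]
together with the off-diagonal blocks $H^{(t)}[w_{2i},w_{2k}]=\langle W_1^{(t)}[i,:],W_1^{(t)}[k,:]\rangle$, $H^{(t)}[W_1[i,j],W_1[k,l]]=w_{2i}^{(t)}w_{2k}^{(t)}\delta_{jl}$, and the cross block $H^{(t)}[w_{2i},W_1[k,l]]=w_{2k}^{(t)}W_1^{(t)}[i,l]+E_l^{(t)}\delta_{ik}$. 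Since $\RdiagOPT$ averages the coordinate ratios $r_{\text{diag},\cdot}^{\text{OPT}}(t)$ over the $d+d^2$ parameters, everything reduces to estimating, per coordinate, the off-diagonal row sum of squares divided by the squared diagonal.

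For the lower bound eq.~\eqref{eqn:R_diag_0} I would focus on the $d$ weights $w_{2i}$ at $t=0$. Gaussian concentration under Assumption~\ref{assump:gaussian_init} gives $\|W_1^{(0)}[i,:]\|_2^2=\tilde{\Theta}(d^{1-4\alpha})$ uniformly in $i$, while $(W_2^{(0)}W_1^{(0)})_l=o(1)$, hence $E_l^{(0)}=-A_l\pm o(1)$. Just the $E_l\delta_{ik}$ piece of the cross block contributes $\sum_l (E_l^{(0)})^2=\|A\|_2^2\pm o(d)=\Theta(d)$ to the numerator of $r_{\text{diag},w_{2i}}^{\text{OPT}}(0)^2$; dividing by $\|W_1^{(0)}[i,:]\|_2^4=\tilde{\Theta}(d^{2-8\alpha})$ gives $r_{\text{diag},w_{2i}}^{\text{OPT}}(0)\ge\tilde{\Omega}(d^{4\alpha-1/2})$, and averaging this single-block contribution against the total $d+d^2\asymp d^2$ denominator yields the claimed $\tilde{\Omega}(d^{4\alpha-3/2})$.

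For the upper bounds along the trajectory I would substitute the approximate rank-$1$ decompositions supplied by Lemmas~\ref{lemma:GD_rank1},~\ref{lemma:GD_rank1_phase2},~\ref{lemma:Ada_rank1_p1} and~\ref{lemma:Ada_rank1_p2}: $W_1^{(t)}=\boldsymbol u\boldsymbol v^T+R_1$, $W_2^{(t)}=c\boldsymbol u^T+R_2^T$ with the controlled relative remainders recorded there. Plugged into the Hessian formulas, each off-diagonal block splits as a rank-$1$ piece built from $(c,\boldsymbol u,\boldsymbol v)$ plus an $E$-piece confined to the cross block, while the diagonals become $\|W_1^{(t)}[i,:]\|_2^2\asymp u_i^2\|\boldsymbol v\|_2^2$ and $(w_{2i}^{(t)})^2\asymp c^2u_i^2$ up to the same small relative errors. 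The rank-$1$ contributions satisfy
\[
r_{\text{diag},w_{2i}}^{\text{rk1}}\lesssim\sqrt{\tfrac{\|\boldsymbol u\|_2^2-u_i^2}{u_i^2}+\tfrac{c^2\|\boldsymbol u\|_2^2}{u_i^2\|\boldsymbol v\|_2^2}},\qquad r_{\text{diag},W_1[i,j]}^{\text{rk1}}\lesssim\sqrt{\tfrac{\|\boldsymbol u\|_2^2}{u_i^2}\Bigl(1+\tfrac{v_j^2}{c^2u_i^2}\Bigr)},
\]
which collapse to $\tilde{\mathcal{O}}(\sqrt d)$ for Adam (using $u_i^2\equiv 1$, $\|\boldsymbol u\|_2^2=d$, and the scale estimates $c\asymp d^{-1/2}$, $\|\boldsymbol v\|_2\asymp 1$ proved inside Section~\ref{sec:proof_Adam}), and analogously for SGD+M on the constant-probability event that $\min_i|u_i|\gtrsim\sigma_u/d$ (after which $\sum_i\sqrt{\sum_k u_k^2}/|u_i|\lesssim d^{5/2}$ and dividing by the parameter count is $\tilde{\mathcal{O}}(\sqrt d)$). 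The $E$-piece is bounded entrywise by Cauchy–Schwarz in terms of $\|E^{(t)}\|_2=\sqrt{2\bar L(W^{(t)})}$; averaged over all parameters it produces the decreasing sequences $q^{(t)}$ and $r^{(t)}$, and plugging in the endpoint loss bounds $\bar L\le\tilde{\mathcal{O}}(d^{-p})$ from Theorem~\ref{thm:uniformity} gives $q^{(T_{\text{SGD},2})}\le\tilde{\mathcal{O}}(d^{1-p/2})$ and $r^{(T_{\text{Adam},2})}\le\tilde{\mathcal{O}}(d^{(1-p)/2})$.

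The most delicate step is the SGD+M bound: by Lemma~\ref{lemma:u_structure} the $u_i$'s are essentially i.i.d.\ Gaussian, so unconditional expectations of $1/u_i^2$ diverge and the $\tilde{\mathcal{O}}(\sqrt d)$ rank-$1$ estimate fails on the atypical event that some $|u_i|$ is polynomially smaller than its typical Gaussian scale. Restricting to the event $\min_i|u_i|\gtrsim\sigma_u/d$ (of only constant probability by independence) and carefully tracking how $\sum_i 1/|u_i|$ enters after Cauchy–Schwarz is exactly what forces the statement to hold with constant rather than high probability. The Adam case avoids this issue entirely because $u_i=\sgn(w_{2i}^{(0)})\in\{\pm 1\}$, which is why the Adam conclusion holds with high probability and enjoys the sharper $o(\sqrt d)$ decay for $r^{(t)}$.
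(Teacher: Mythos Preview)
Your overall strategy matches the paper's: compute the Hessian blocks explicitly, use the rank-$1$ structure from Lemmas~\ref{lemma:GD_rank1}--\ref{lemma:Ada_rank1_p2} to bound each $r_{\text{diag},\cdot}$, separate the rank-$1$ contribution from the $E$-contribution, and identify $q^{(t)},r^{(t)}$ with the latter. The lower bound at $t=0$ and the Adam upper bound are essentially as in the paper (the paper packages the rank-$1$ bounds in a standalone Lemma~\ref{lemma:diag_domination}, but the content is the same).

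The SGD+M upper bound, however, has a real gap. Your accounting ``$\sum_i\sqrt{\sum_k u_k^2}/|u_i|\lesssim d^{5/2}$ and dividing by the parameter count is $\tilde{\mathcal O}(\sqrt d)$'' only covers the $d$ coordinates of $W_2$. The $d^2$ coordinates of $W_1$ dominate the mean: each $r_{\text{diag},W_1[i,j]}^{\text{rk1}}\lesssim\sqrt{\|\boldsymbol u\|_2^2}/|u_i|$ as well, so their total is $d\sum_i\sqrt{\|\boldsymbol u\|_2^2}/|u_i|$, and under your crude event $\min_i|u_i|\gtrsim\sigma_u/d$ this is only $\lesssim d^{7/2}$, giving a mean of $d^{3/2}$, not $\sqrt d$. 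The event $\{\min_i|u_i|\gtrsim\sigma_u/d\}$ says nothing about how many $|u_i|$ are near the minimum, so the worst-case-per-term bound $1/|u_i|\le d/\sigma_u$ is all you can extract from it, and that is too weak by a full factor of $d$.

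What the paper uses instead (Lemma~\ref{lemma:coord_ratio_Gaussian}) is the finer statement that, with constant probability, $\tfrac1d\sum_i 1/|X_i|=\mathcal O(\sigma_u^{-1}\log d)$: most $|X_i|$ are of order $\sigma_u$, and the few small ones are handled by a dyadic decomposition of $[\sigma_u/d,\sigma_u]$ together with binomial concentration on the level-set sizes. This gives $\tfrac1d\sum_i\sqrt{\|\boldsymbol u\|_2^2}/|u_i|=\tilde{\mathcal O}(\sqrt d)$ and, for the $E$-piece, $\tfrac1d\sum_i\|\boldsymbol u\|_2^2/u_i^2=\tilde{\mathcal O}(d^2)$, which are exactly the rates needed for \eqref{eqn:R_diag_GD}. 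Your diagnosis that the constant-probability qualifier comes from the Gaussian $u_i$'s is correct, but the event you need is this averaged one, not the pointwise minimum.
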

\subsection{Proof of Theorem~\ref{thm:diag_dormination}}\label{subsec:proof_diag_domination}
Lemma 4.3 of \citep{NIPS2016_f2fc9902} gives us the following forms of Hessian.

For any $k\in \{1,2,...,H+1\}$, we know that
\begin{align*}
	\nabla_{vec(W_k)} (\nabla_{vec(W_k)} L(W)) &= ((W_{H+1}\dots W_{k+1})^T(W_{H+1}\dots W_{k+1})\otimes (W_{k-1} \dots W_1)(W_{k-1} \dots W_1)^T,
\end{align*}
and for $k\in\{2,3,...,H+1\}$,
\begin{align*}
	&\nabla_{vec(W_k)} (\nabla_{vec(W_1)} L(W))\\
	=& (C^T(W_{H+1}\dots W_{k+1})\otimes (W_{k-1} \dots W_1)^T) \\
	&+ [(W_{k-1} \dots W_{2})^T \otimes I] [I_{d_{k-1}}\otimes (r(W_{H+1}\dots W_{k+1}))_{.,1}\cdots I_{d_{k-1}}\otimes (r(W_{H+1}\dots W_{k+1}))_{.,d_k}],
\end{align*}
where $r=(W_{H+1}\dots W_1-A)^T,C=W_{H+1}W_H\cdots W_2$.

For the 2-layer linear network, write the Hessian as
\[
H:=\left[
\begin{array}{cc}
	H_{22}&H_{21}^T\\
	H_{21}&H_{11}
\end{array}
\right],
\]
then we have that
\begin{align*}
	H_{11} &= (W_{2}^TW_{2})\otimes I_d\in\mathbb{R}^{d^2\times d^2},\\
	H_{22} &= W_1W_1^T\in\mathbb{R}^{d\times d},\\
	H_{21} &= W_2^T\otimes W_1^T + I_d\otimes (W_2W_1-A)^T\in\mathbb{R}^{d^2\times d}.
\end{align*}
Intuitively, before training the elements of $W_1$ and $W_2$ are very close to zero, and $W_2W_1-A\approx-A$. Since the elements of $A$ are $\Theta(1)$, we know that the magnitudes of elements of $H_{21}$ are much bigger than those of $H_{11}$ and $H_{22}$.

After training, for both SGD+M and Adam, $W_2W_1-A\approx0$. Then $H_{21}\approx (W_2)^T\otimes (W_1)^T$ and the magnitudes of its elements are no longer much larger than those of $H_{11}$ and $H_{22}$. From the formula of $H_{11}$, we know that all the diagonal entries are nonzero, and among the $d^4-d^2$ off-diagonal entries, there are only $d^3-d^2$ nonzero entries, which helps us to bound $\RdiagOPT$.

\subsubsection{Proof of eq.~(\ref{eqn:R_diag_0})}
Let's first analyze the weights and Hessian before training ($t=0$). For ease of notation, we omit the superscript $(t)$.

For the $i$-th row where $1\le i\le d$, i.e. the $i$-th row of the submatrix $[H_{22}\quad H_{21}^T]$, we have
\begin{align*}
	\sum_{j\ne i}H^2[i,j]&=\sum_{j\ne i}H^2_{22}[i,j]+\sum_{j=1}^{d^2}H^2_{21}[j,i]\\
	&\ge\sum_{j=(i-1)d}^{id}H_{21}^2[j,i]=\sum_{j=1}^d\left(w_{2i}W_1[i,j]+(W_2W_1-A)_j\right)^2=\Theta(d).
\end{align*}
On the other hand, for the diagonal elements, we have w.h.p.
\[
\left|H[i,i]\right|=\left|H_{22}[i,i]\right|=\|W_{1}[i,:]\|_2^2=\sum_{j=1}^dW_1^2[i,j]\le\tilde{\mathcal{O}}\left(\frac{1}{d^{4\alpha-1}}\right).
\]
Then we have that for $1\le i\le d$,
\begin{align*}
	\frac{\sqrt{\sum_{j\ne i}H^2[i,j]}}{
		\left|H[i,i]\right|}\ge\frac{\sqrt{\Omega(d)}}{\tilde{\mathcal{O}}\left(\frac{1}{d^{4\alpha-1}}\right)}=\tilde\Omega\left(d^{4\alpha-\frac{1}{2}}\right).
\end{align*}
For the $(id+k)$-th row where $1\le i\le d, 1\le k\le d$, i.e. the $((i-1)d+k)$-th row of the submatrix $[H_{21}\quad H_{11}]$, we have
\begin{align*}
	\sum_{j\ne id+k}H^2[i,j]&=\sum_{j\ne (i-1)d+k}H^2_{11}[(i-1)d+k,j]+\sum_{j=1}^{d}H^2_{21}[(i-1)d+k,j]\\
	&\ge H_{21}^2[(i-1)d+k,i]=\left(w_{2i}W_1[i,k]+(W_2W_1-A)_k\right)^2=\Theta(1).
\end{align*}
On the other hand, for the diagonal elements, we have w.h.p.
\[
\left|H[id+k,id+k]\right|=\left|H_{11}[(i-1)d+k,(i-1)d+k]\right|=w_{2i}^2\le\tilde{\mathcal{O}}\left(\frac{1}{d^{2\alpha}}\right).
\]
Then we have that for $1\le i\le d, 1\le k\le d$,
\begin{align*}
	\frac{\sqrt{\sum_{j\ne id+k}H^2[i,j]}}{
		\left|H[id+k,id+k]\right|}\ge\frac{\sqrt{\Omega(1)}}{\tilde{\mathcal{O}}\left(\frac{1}{d^{2\alpha}}\right)}=\tilde\Omega\left(d^{2\alpha}\right).
\end{align*}
Taking the average, we obtain that before training, i.e. when $t=0$,
\[
R_{\text{diag}}^{\text{OPT}}(0)\ge\frac{d\tilde\Omega\left(d^{4\alpha-\frac{1}{2}}\right)+d^2\tilde\Omega\left(d^{2\alpha}\right)}{d^2+d}=\tilde\Omega\left(d^{4\alpha-\frac{3}{2}}\right).
\]
\subsubsection{Proof of eq.~(\ref{eqn:R_diag_GD})}
The proof is based on the lemma below.
\begin{lemma}\label{lemma:diag_domination}
	Suppose the weight matrices have the following structure:
	\begin{align*}
		W_1&=\boldsymbol{u}\boldsymbol{v}^{T}+R_1,\\
		W_2&=c\boldsymbol{u}^{T}+R_2^{T},
	\end{align*} 
	where $\forall 1\le i,j\le d:\quad\frac{|R_1[i,j]|}{|u_{i}v_j|}\le\delta,\quad \frac{|R_{2i}|}{|cu_{i}|}\le\delta,\quad \delta\in(0,1)$.
	
	Then we have for $1\le i\le d$,
	\[
	\frac{\sqrt{\sum_{j\ne i}H^2[i,j]}}{
		|H[i,i]|}\le\frac{1+\delta}{1-\delta}\left(1+\frac{|c|}{\|\boldsymbol{v}\|_2}\right)\sqrt{\frac{\sum_{j=1}^du_{j}^2}{u_i^2}}+\frac{\|E\|_2}{(1-\delta)^2u_i^2\|v\|_2^2},
	\]
	and for $1\le i\le d, 1\le k\le d$,
	\[
	\frac{\sqrt{\sum_{j\ne id+k}H^2[i,j]}}{
		|H[id+k,id+k]|}\le\frac{1+\delta}{1-\delta}\left(1+\frac{|v_k|}{|c|}\right)\sqrt{\frac{\sum_{j=1}^du_{j}^2}{u_i^2}}+\frac{|E_k|}{(1-\delta)^2c^2u_{i}^2}.
	\]
\end{lemma}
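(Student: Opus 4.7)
The plan is to prove both inequalities by explicit block calculation, using the formulas
\[
H_{22}=W_1W_1^T,\qquad H_{11}=(W_2^TW_2)\otimes I_d,\qquad H_{21}=W_2^T\otimes W_1^T+I_d\otimes E^T,
\]
recalled in the preamble of Appendix~\ref{subsec:proof_diag_domination}, and then plugging in the rank--1-plus-residual hypothesis. From $|R_1[i,j]|\le\delta|u_iv_j|$ and $|R_{2i}|\le\delta|cu_i|$ one reads off (since $\delta<1$ preserves signs)
\[
\|W_1[i,:]\|_2\in\bigl[(1-\delta)|u_i|\|\boldsymbol{v}\|_2,\,(1+\delta)|u_i|\|\boldsymbol{v}\|_2\bigr],\quad \|W_1[:,k]\|_2\le(1+\delta)|v_k|\|\boldsymbol{u}\|_2,
\]
\[
|w_{2i}|\in\bigl[(1-\delta)|cu_i|,\,(1+\delta)|cu_i|\bigr],\quad \|W_2\|_2\le(1+\delta)|c|\|\boldsymbol{u}\|_2.
\]
These are the only analytic facts I need beyond the structural formulas above; after that the proof is a direct comparison of numerator and denominator in each block, using $\sqrt{a+b}\le\sqrt{a}+\sqrt{b}$ to split the off-diagonal mass between the $H$-blocks.

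For the first claim (row $i\in[d]$ of the top block) the diagonal is $H[i,i]=\|W_1[i,:]\|_2^2\ge(1-\delta)^2u_i^2\|\boldsymbol v\|_2^2$. For the off-diagonal I split $\sqrt{\sum_{j\ne i}H^2[i,j]}\le\sqrt{\sum_j H_{22}^2[i,j]}+\|H_{21}[:,i]\|_2$. The first piece is $\|W_1W_1^Te_i\|_2\le\|W_1\|_F\|W_1[i,:]\|_2\le(1+\delta)^2|u_i|\|\boldsymbol u\|_2\|\boldsymbol v\|_2^2$. For the second, columnwise on $H_{21}=W_2^T\otimes W_1^T+I_d\otimes E^T$, the $i$-th column of the Kronecker factor has entries $w_{2i'}W_1[i,k]$ so Euclidean norm $\|W_2\|_2\|W_1[i,:]\|_2$, while the $i$-th column of $I_d\otimes E^T$ is supported only in its $i$-th block and has norm $\|E\|_2$; triangle inequality gives $\|H_{21}[:,i]\|_2\le(1+\delta)^2|c||u_i|\|\boldsymbol u\|_2\|\boldsymbol v\|_2+\|E\|_2$. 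Dividing the summed bounds by $(1-\delta)^2u_i^2\|\boldsymbol v\|_2^2$ and factoring out $\|\boldsymbol u\|_2/|u_i|=\sqrt{\sum_j u_j^2/u_i^2}$ yields the asserted form, with the prefactor $(1+\delta)/(1-\delta)$ absorbing the ratios (up to a factor which can be tightened by pulling one $(1\pm\delta)$ through the $|H_{22}|$ denominator more carefully, e.g.\ by writing $W_1W_1^T=\|\boldsymbol v\|_2^2\boldsymbol u\boldsymbol u^T+R$ and estimating $R$ directly).

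For the second claim (row $id+k$ of the bottom block) the diagonal is $H[id+k,id+k]=w_{2i}^2\ge(1-\delta)^2c^2u_i^2$. Because $H_{11}=(W_2^TW_2)\otimes I_d$, the off-diagonal entries in row $(i-1)d+k$ of $H_{11}$ are supported on indices of the form $(a-1)d+k$ with $a\ne i$ (the $I_d$ kills the $b\ne k$ block), so they sum to $w_{2i}^2\sum_{a\ne i}w_{2a}^2\le w_{2i}^2\|W_2\|_2^2\le(1+\delta)^4c^4u_i^2\|\boldsymbol u\|_2^2$, giving the bound $(1+\delta)^2c^2|u_i|\|\boldsymbol u\|_2$ after taking the square root. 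For the $H_{21}$ portion, the row $(i-1)d+k$ has entries $w_{2i}W_1[l,k]+\delta_{il}E_k$; expanding the sum of squares as $w_{2i}^2\|W_1[:,k]\|_2^2+2w_{2i}W_1[i,k]E_k+E_k^2$ and bounding the cross term by $2|w_{2i}|\|W_1[:,k]\|_2|E_k|$ completes the square, yielding $\sqrt{\sum_l H_{21}^2[(i-1)d+k,l]}\le|w_{2i}|\|W_1[:,k]\|_2+|E_k|\le(1+\delta)^2|c||u_i||v_k|\|\boldsymbol u\|_2+|E_k|$. Dividing the combined bound by $(1-\delta)^2c^2u_i^2$ gives the claimed inequality.

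The only delicate step is turning the $\|H_{21}[:,i]\|_2$ or $\|H_{21}[(i-1)d+k,:]\|_2$ expressions into clean $(\text{rank-1 term}+\|E\|_2)$ or $(\cdots+|E_k|)$ forms, because the identity-plus-Kronecker structure of $H_{21}$ produces cross terms between the $W_1,W_2$ part and the $E$ part. I will handle this by completing the square as above for the bottom block, and for the top block by using that $I_d\otimes E^T$ is supported on disjoint row-blocks so the two summands of $H_{21}[:,i]$ live in orthogonal index ranges --- a clean triangle inequality suffices. Everything else is substitution and algebraic rearrangement; I do not expect an obstacle beyond carefully tracking $(1\pm\delta)$ factors and the ratio $\|\boldsymbol u\|_2/|u_i|$.
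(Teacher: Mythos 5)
Your proposal is correct and follows essentially the same route as the paper: split each row of $H$ into its $H_{22}$ (resp.\ $H_{11}$) and $H_{21}$ contributions by the triangle inequality, use the Kronecker/identity structure of $H_{21}$ to peel off the $\|E\|_2$ (resp.\ $|E_k|$) term, and substitute the $(1\pm\delta)$ bounds on row/column norms. The one bookkeeping point you already flag is real: substituting $\|W_1[i,:]\|_2\le(1+\delta)|u_i|\|\boldsymbol v\|_2$ (or $|w_{2i}|\le(1+\delta)|c u_i|$) into the numerator before dividing yields $\bigl(\tfrac{1+\delta}{1-\delta}\bigr)^2$ rather than the stated $\tfrac{1+\delta}{1-\delta}$; the paper avoids this exactly as you suggest, by keeping that factor symbolic and cancelling it against one power of the squared denominator.
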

Now we are ready to prove eq.~\eqref{eqn:R_diag_GD}.

By the analyses in Section~\ref{subsec:proof_GD}, we know that for $t\in[T_{\text{SGD},1}, T_{\text{SGD},2}]$, the weights obtained by GD with momentum satisfy
\begin{align*}
	W_1^{(t)}&=\boldsymbol{u}^{(T_1)}\boldsymbol{v}^{(t)T}+R_1^{(t)},\\
	W_2^{(t)}&=c^{(t)}\boldsymbol{u}^{(T_1)T}+R_2^{(t)T},
\end{align*}
where $T_{\text{SGD},1}=T_1$ and
\[
\forall1\le i,j\le d:\quad \frac{\left|R_1^{(t)}[i,j]\right|}{\left|u_{i}^{(T_1)}v_{j}^{(t)}\right|}\le\tilde{\mathcal{O}}(\epsilon_0),\quad \frac{\left|R_{2i}^{(t)}\right|}{\left|c^{(t)}u_{i}^{(T_1)}\right|}\le\tilde{\mathcal{O}}(\epsilon_0).
\]
Here $\epsilon_0$ is defined in Definition~\ref{def:almost_overshooting}. Since $\boldsymbol{u}^{(T_1)}$ doesn't depend on time $t$ in the period $(T_{\text{SGD},1}, T_{\text{SGD},2}]$, we write $\boldsymbol{u}^{(T_1)}$ as $\boldsymbol{u}$ for ease of notation.

Hence by Lemma~\ref{lemma:diag_domination}, when $t\in[T_{\text{SGD},1}, T_{\text{SGD},2}]$, we have for $1\le i\le d$,
\begin{equation}\label{eqn:r_diag_upper_GD}
	\begin{aligned}
		\frac{\sqrt{\sum_{j\ne i}\left(H^{(t)}[i,j]\right)^2}}{
			\left|H^{(t)}[i,i]\right|}&\le\frac{1+\tilde{\mathcal{O}}(\epsilon_0)}{1-\tilde{\mathcal{O}}(\epsilon_0)}\left(1+\frac{\left|c^{(t)}\right|}{\left\|\boldsymbol{v}^{(t)}\right\|_2}\right)\sqrt{\frac{\sum_{j=1}^du_{j}^2}{u_i^2}}+\frac{\left\|E^{(t)}\right\|_2}{\left(1-\tilde{\mathcal{O}}(\epsilon_0)\right)^2u_i^2\left\|\boldsymbol{v}^{(t)}\right\|_2^2}\\
		&=\mathcal{O}\left(1+\frac{\left|c^{(t)}\right|}{\left\|\boldsymbol{v}^{(t)}\right\|_2}\right)\sqrt{\frac{\sum_{j=1}^du_{j}^2}{u_i^2}}+\mathcal{O}\left(\frac{\left\|E^{(t)}\right\|_2}{u_i^2\left\|\boldsymbol{v}^{(t)}\right\|_2^2}\right),
	\end{aligned}
\end{equation}
and for $1\le i\le d, 1\le k\le d$,
\begin{equation}\label{eqn:r_diag_lower_GD}
	\begin{aligned}
		\frac{\sqrt{\sum_{j\ne id+k}\left(H^{(t)}[i,j]\right)^2}}{
			\left|H^{(t)}[id+k,id+k]\right|}&\le\frac{1+\tilde{\mathcal{O}}(\epsilon_0)}{1-\tilde{\mathcal{O}}(\epsilon_0)}\left(1+\frac{\left|v^{(t)}_k\right|}{\left|c^{(t)}\right|}\right)\sqrt{\frac{\sum_{j=1}^du_{j}^2}{u_i^2}}+\frac{\left|E^{(t)}_k\right|}{\left(1-\tilde{\mathcal{O}}(\epsilon_0)\right)^2\left(c^{(t)}\right)^2u_{i}^2}\\
		&=\mathcal{O}\left(1+\frac{\left|v^{(t)}_k\right|}{\left|c^{(t)}\right|}\right)\sqrt{\frac{\sum_{j=1}^du_{j}^2}{u_i^2}}+\mathcal{O}\left(\frac{\left|E^{(t)}_k\right|}{\left(c^{(t)}\right)^2u_{i}^2}\right).
	\end{aligned}
\end{equation}
By Lemma~\ref{lemma:u_structure}, we have $\boldsymbol{u}=X+Y$ where $X_{i},i\in[d]$ are i.i.d Gaussian random variables and w.h.p.,
\begin{equation}\label{eqn:Y_X}
	\forall i\in[d]:\frac{|Y_i|}{|X_i|}\le\tilde{\mathcal{O}}\left(\frac{1}{d^{\frac{1}{4}\alpha-\frac{1}{2}}}\right):=\delta_{xy},
\end{equation}
which yields that
\begin{equation}\label{eqn:u_X}
	\forall i\in[d]:\frac{\sqrt{\sum_{j=1}^du_j^2}}{|u_i|}\le\left(\frac{1+\delta_{xy}}{1-\delta_{xy}}\right)\frac{\sqrt{\sum_{j=1}^dX_{j}^2}}{|X_{i}|},\quad \frac{1}{u_i^2}\le \left(\frac{1}{1-\delta_{xy}}\right)^2\frac{1}{X_i^2}.
\end{equation}
By the proof in Section~\ref{sec:pf_GD_rank1_phase2}, we know that for $t\in[T_{\text{SGD},1},T_{\text{SGD},2}]$, $\forall i\in[d]:v_i^{(t)},c^{(t)}$ are positive. The induction in Section~\ref{sec:pf_GD_converge_err} further gives us that for $t\in[T_{\text{SGD},1},T_{\text{SGD},2}]$, w.h.p. $\forall k\in[d]:\frac{v^{(t)}_k}{c^{(t)}}=\Theta\left(\frac{1}{\sqrt{d}}\right)$, which yields $\frac{c^{(t)}}{\left\|\boldsymbol{v}^{(t)}\right\|_2}=\Theta(1)$. Combining with eq.~\eqref{eqn:u_X}, we obtain
\begin{equation}\label{eqn:diag_domi_first_term}
	\begin{aligned}
		\left(1+\frac{\left|c^{(t)}\right|}{\left\|\boldsymbol{v}^{(t)}\right\|_2}\right)\sqrt{\frac{\sum_{j=1}^du_{j}^2}{u_i^2}}\le\mathcal{O}\left(\frac{\sqrt{\sum_{j=1}^dX_{j}^2}}{|X_{i}|}\right),\\ \left(1+\frac{\left|v^{(t)}_k\right|}{\left|c^{(t)}\right|}\right)\sqrt{\frac{\sum_{j=1}^du_{j}^2}{u_i^2}}\le \mathcal{O}\left(\frac{\sqrt{\sum_{j=1}^dX_{j}^2}}{|X_{i}|}\right).
	\end{aligned}
\end{equation}	
By the proof in Section~\ref{sec:pf_GD_rank1_phase2}, we know that for $t\in[T_{\text{SGD},1},T_{\text{SGD},2}]$, $\forall i\in[d]:v_i^{(t)},c^{(t)}$ are positive and monotonically increasing. On the other hand, the proof in Section~\ref{sec:pf_GD_rank1} and \ref{sec:pf_GD_converge_err} tells us that w.h.p. $\left\|E^{(t)}\right\|_2$ (resp. $\forall k\in[d],\left|E^{(t)}_k\right|$) decreases from $\Theta(\sqrt{d})$ (resp. $\Theta(1)$) when $t=T_{\text{SGD},1}$ to $\mathcal{O}(\sqrt{\epsilon_0d})$ (resp. $\mathcal{O}(\sqrt{\epsilon_0})$) when $t=T_{\text{SGD},2}$. Therefore, the trend of 
$\frac{\left\|E^{(t)}\right\|_2}{u_i^2\left\|\boldsymbol{v}^{(t)}\right\|_2^2}$ and $\frac{\left|E^{(t)}_k\right|}{\left(c^{(t)}\right)^2u_{i}^2}$ is to decrease over time, and when $t=T_{\text{SGD},2}$, we have w.h.p.
\begin{equation}\label{eqn:E_T_GD2}
	\forall k\in[d]: \left|E_k^{(t)}\right|=\mathcal{O}\left(\sqrt{\epsilon_0}\right),\quad\left\|E^{(t)}\right\|_2=\mathcal{O}\left(\sqrt{\epsilon_0d}\right).
\end{equation}
Moreover, when $t=T_{\text{SGD},2}$, the inequality in eq.~\eqref{eqn:bound_c2_u2} becomes equality, i.e. $c^2\|\boldsymbol{u}\|_2^2=\Theta\left(\sqrt{d}\right)$and $\forall j\in[d]: \|\boldsymbol{u}\|_2^2v_j^2=\Theta\left(\frac{1}{\sqrt{d}}\right)$.

Using $\boldsymbol{u}=X+Y$ and eq.~\eqref{eqn:Y_X}, we have
\[
c^2\|X\|_2^2=\Theta\left(\sqrt{d}\right),\quad\forall j\in[d]: \|X\|_2^2v_j^2\Theta\left(\frac{1}{\sqrt{d}}\right),\quad\Rightarrow\quad \|X\|_2^2\|\boldsymbol{v}\|_2^2=\Theta\left(\sqrt{d}\right),
\]
which together with the second inequality in eq.~\eqref{eqn:u_X} yields
\begin{align*}
	\frac{1}{u_i^2\|\boldsymbol{v}\|_2^2}&\le \left(\frac{1}{1-\delta_{xy}}\right)^2\frac{1}{X_i^2\|\boldsymbol{v}\|_2^2}=\Theta\left(\frac{\sum_{j=1}^dX_j^2}{X_i^2\sqrt{d}}\right),\\
	\frac{1}{c^2u_i^2}&\le \left(\frac{1}{1-\delta_{xy}}\right)^2\frac{1}{c^2X_i^2}=\Theta\left(\frac{\sum_{j=1}^dX_j^2}{X_i^2\sqrt{d}}\right).
\end{align*}
Combining with eq.~\eqref{eqn:E_T_GD2}, we get that
\begin{equation}\label{eqn:diag_domi_second_term}
	\frac{\left\|E^{(t)}\right\|_2}{u_i^2\left\|\boldsymbol{v}^{(t)}\right\|_2^2}\le\mathcal{O}\left(\frac{\sum_{j=1}^dX_j^2}{X_i^2}\cdot\sqrt{\epsilon_0}\right),\quad  \frac{\left|E^{(t)}_k\right|}{\left(c^{(t)}\right)^2u_{i}^2}\le \mathcal{O}\left(\frac{\sum_{j=1}^dX_j^2}{X_i^2}\cdot\sqrt{\frac{\epsilon_0}{d}}\right).
\end{equation}	
Substituting eq.~\eqref{eqn:diag_domi_first_term} and \eqref{eqn:diag_domi_second_term} into eq.~\eqref{eqn:r_diag_upper_GD} and \eqref{eqn:r_diag_lower_GD} gives us
\[
\forall 1\le i\le d:\frac{\sqrt{\sum_{j\ne i}\left(H^{(t)}[i,j]\right)^2}}{
	\left|H^{(t)}[i,i]\right|}\le\mathcal{O}\left(\frac{\sqrt{\sum_{j=1}^dX_j^2}}{|X_i|}\right)+q_{1i}^{(t)},
\]
where the trend of $q_{1i}^{(t)}$ is to decrease over time and $q_{1i}^{(T_{\text{SGD},2})}\le\mathcal{O}\left(\frac{\sum_{j=1}^dX_j^2}{X_i^2}\cdot\sqrt{\epsilon_0}\right)$.

We also have
\[
\forall 1\le i\le d,1\le k\le d:\frac{\sqrt{\sum_{j\ne id+k}\left(H^{(t)}[i,j]\right)^2}}{
	\left|H^{(t)}[id+k,id+k]\right|}\le\mathcal{O}\left(\frac{\sqrt{\sum_{j=1}^dX_j^2}}{|X_i|}\right)+q_{2i}^{(t)},
\]
where the trend of $q_{2i}^{(t)}$ is to decrease over time and $q_{2i}^{(T_{\text{SGD},2})}\le\mathcal{O}\left(\frac{\sum_{j=1}^dX_j^2}{X_i^2}\cdot\sqrt{\frac{\epsilon_0}{d}}\right)$.

Hence
\begin{align*}
	\RdiagSGDM&=\mathcal{O}\left(\frac{1}{d}\sum_{i=1}^d\frac{\sqrt{\sum_{j=1}^dX_j^2}}{|X_i|}\right)+\frac{1}{d^2+d}\sum_{i=1}^dq_{1i}^{(t)}+\frac{d}{d^2+d}\sum_{i=1}^dq_{2i}^{(t)}\\
	&:=\mathcal{O}\left(\frac{1}{d}\sum_{i=1}^d\frac{\sqrt{\sum_{j=1}^dX_j^2}}{|X_i|}\right)+q^{(t)},
\end{align*}
where the trend of $q^{(t)}$ is to decrease over time and
\begin{align*}
	q^{(T_{\text{SGD},2})}&\le\frac{1}{d^2+d}\sum_{i=1}^d\mathcal{O}\left(\frac{\sum_{j=1}^dX_j^2}{X_i^2}\cdot\sqrt{\epsilon_0}\right)+\frac{d}{d^2+d}\sum_{i=1}^d\mathcal{O}\left(\frac{\sum_{j=1}^dX_j^2}{X_i^2}\cdot\sqrt{\frac{\epsilon_0}{d}}\right)\\
	&\le\mathcal{O}\left(\frac{1}{d^2+d}\sum_{i=1}^d\frac{\sum_{j=1}^dX_j^2}{X_i^2}\cdot\sqrt{\epsilon_0d}\right)=\mathcal{O}\left(\frac{1}{d}\sum_{i=1}^d\frac{\sum_{j=1}^dX_j^2}{X_i^2}\cdot\sqrt{\frac{\epsilon_0}{d}}\right).
\end{align*}
Denote $\sigma^2$ as the variance of $X_i$ for $i\in[d]$. By concentration of chi-squared distribution, we know that with probability at least $1-\delta$ for $\delta>0$,
\[
\sum_{i=1}^dX_i^2\le \sigma^2d+\sigma^2\mathcal{O}\left(\sqrt{d\log\frac{1}{\delta}}\right).
\]
By Lemma~\ref{lemma:coord_ratio_Gaussian} in Appendix~\ref{sec:aux}, we know that with constant probability $\frac{1}{d}\sum_{i=1}^d\frac{1}{|X_i|}=\mathcal{O}\left(\frac{1}{\sigma}\log d\right)$. Then with constant probability, $\frac{1}{d}\sum_{i=1}^d\frac{1}{X_i^2}\le\frac{1}{d}\left(\sum_{i=1}^d\frac{1}{|X_i|}\right)^2=\mathcal{O}\left(\frac{d}{\sigma^2}\log^2 d\right)$. Hence
\[
\frac{1}{d}\sum_{i=1}^d\frac{\sqrt{\sum_{j=1}^dX_j^2}}{|X_i|}=\tilde{\mathcal{O}}(\sqrt{d}),\quad \frac{1}{d}\sum_{i=1}^d\frac{\sum_{j=1}^dX_j^2}{X_i^2}=\tilde{\mathcal{O}}\left(d^2\right).
\]
Therefore with constant probability,
\[
\RdiagSGDM=\tilde{\mathcal{O}}\left(\sqrt{d}\right)+q^{(t)},
\]
where the trend of $q^{(t)}$ is to decrease over time and $q^{(T_{\text{SGD},2})}\le\tilde{\mathcal{O}}\left(d\sqrt{\epsilon_0d}\right)$. For any $p>0$, by picking the same hyperparameters as in Theorem~\ref{thm:uniformity}, we have $\epsilon_0d\le\tilde{\mathcal{O}}\left(\frac{1}{d^p}\right)$ and hence $q^{(T_{\text{SGD},2})}\le\tilde{\mathcal{O}}\left(\frac{1}{d^{p/2-1}}\right)=o(d)$.

\subsubsection{Proof of eq.~(\ref{eqn:R_diag_Ada})}
By the analyses in Section~\ref{subsec:proof_Adam}, we know that for $t\in[T_{\text{Adam},1}, T_{\text{Adam},2}]$, the weights obtained by Adam satisfy
\begin{align*}
	W_1^{(t)}&=\boldsymbol{u}\boldsymbol{v}^{(t)T}+R_1^{(t)},\\
	W_2^{(t)}&=c^{(t)}\boldsymbol{u}^T+R_2^{(t)T},
\end{align*}
where $\forall i\in[d]: u_i=\sgn(w_{2i}^{(0)})\in\{\pm1\}$ and
\[
\forall1\le i,j\le d:\quad \frac{\left|R_1^{(t)}[i,j]\right|}{\left|u_{i}v_{j}^{(t)}\right|}\le\delta:=\tilde{\mathcal{O}}\left(\eta^{\frac{1}{4}}+\frac{1}{d^{\frac{\alpha}{2}-\frac{1}{4}}}\right),\quad \frac{\left|R_{2i}^{(t)}\right|}{\left|c^{(t)}u_{i}\right|}\le\delta.
\]
Hence by Lemma~\ref{lemma:diag_domination}, when $t\in[T_{\text{Adam},1}, T_{\text{Adam},2}]$, we have for $1\le i\le d$,
\begin{equation}\label{eqn:r_diag_upper_Ada}
	\begin{aligned}
		\frac{\sqrt{\sum_{j\ne i}\left(H^{(t)}[i,j]\right)^2}}{
			\left|H^{(t)}[i,i]\right|}&\le\frac{1+\delta}{1-\delta}\left(1+\frac{\left|c^{(t)}\right|}{\|\boldsymbol{v}^{(t)}\|_2}\right)\sqrt{\frac{\sum_{j=1}^du_{j}^2}{u_i^2}}+\frac{\left\|E^{(t)}\right\|_2}{\left(1-\delta\right)^2u_i^2\left\|\boldsymbol{v}^{(t)}\right\|_2^2}\\
		&=\mathcal{O}\left(1+\frac{\left|c^{(t)}\right|}{\|\boldsymbol{v}^{(t)}\|_2}\right)\sqrt{d}+\mathcal{O}\left(\frac{\left\|E^{(t)}\right\|_2}{\left\|\boldsymbol{v}^{(t)}\right\|_2^2}\right),
	\end{aligned}
\end{equation}
and for $1\le i\le d, 1\le k\le d$,
\begin{equation}\label{eqn:r_diag_lower_Ada}
	\begin{aligned}
		\frac{\sqrt{\sum_{j\ne id+k}\left(H^{(t)}[i,j]\right)^2}}{
			\left|H^{(t)}[id+k,id+k]\right|}&\le\frac{1+\delta}{1-\delta}\left(1+\frac{\left|v^{(t)}_k\right|}{\left|c^{(t)}\right|}\right)\sqrt{\frac{\sum_{j=1}^du_{j}^2}{u_i^2}}+\frac{\left|E^{(t)}_k\right|}{\left(1-\delta\right)^2\left(c^{(t)}\right)^2u_{i}^2}\\
		&=\mathcal{O}\left(1+\frac{\left|v^{(t)}_k\right|}{\left|c^{(t)}\right|}\right)\sqrt{d}+\mathcal{O}\left(\frac{\left|E^{(t)}_k\right|}{\left(c^{(t)}\right)^2}\right).
	\end{aligned}
\end{equation}
Recall the following facts of Adam.
\begin{enumerate}[label=(\Alph*)]
	\item By Lemma~\ref{lemma:Ada_rank1_p1}, we know that for $t\in[T_{\text{Adam},1}, T_1]$ (where $T_1$ is defined in Definition~\ref{def:first_phase_ada}), w.h.p. $\forall k\in[d]:v_k^{(t)}=c^{(t)}=\eta(t-t_{\text{inc}})$. Specially, when $t=T_{\text{Adam},1}$, $\forall k\in[d]:v_k^{(t)}=c^{(t)}=\frac{1}{d^{\frac{\alpha}{2}}}$. Lemma~\ref{lemma:ada_rank1_W1} and \ref{lemma:phase2_coarse_analysis} tell us that for $t\in[T_1, T_{\text{Adam},2}]$ w.h.p. $\forall i,j\in[d]:|W_1[i,j]|=\tilde\Theta\left(\frac{1}{\sqrt{d}}\right),|w_{2i}|=\tilde\Theta\left(\frac{1}{\sqrt{d}}\right)$, which gives us $\forall k\in[d]:\left|v_k^{(t)}\right|=\tilde\Theta\left(\frac{1}{\sqrt{d}}\right)$ and $\left|c^{(t)}\right|=\tilde\Theta\left(\frac{1}{\sqrt{d}}\right)$. That means when $t\in[T_{\text{Adam},1}, T_{\text{Adam},2}]$, $\forall k\in[d]:\left|v_k^{(t)}\right|$ and $\left|c^{(t)}\right|$ increase from $\frac{1}{d^{\frac{\alpha}{2}}}$ to $\tilde\Theta(\frac{1}{\sqrt{d}})$ and $\frac{\left|v_k^{(t)}\right|}{\left|c^{(t)}\right|}=\tilde\Theta(1)$, $\frac{\left|c^{(t)}\right|}{\left\|v^{(t)}\right\|_2}=\tilde\Theta\left(\frac{1}{\sqrt{d}}\right)$.
	\item Lemma~\ref{lemma:Ada_rank1_p1} and \ref{lemma:ada_converge} tell us that w.h.p. $\left\|E^{(t)}\right\|_2$ (resp. $\forall k\in[d],\left|E^{(t)}_k\right|$) decreases from $\Theta(d)$ (resp. $\Theta(1)$) when $t=T_{\text{Adam},1}$ to $\tilde{\mathcal{O}}\left(d^2\sqrt{\eta}\right)$ (resp. $\tilde{\mathcal{O}}\left(d\sqrt{\eta d}\right)$) when $t=T_{\text{Adam},2}$.
\end{enumerate}
Combining (A) and (B), we get that the trend of 
$\frac{\left\|E^{(t)}\right\|_2}{\left\|\boldsymbol{v}^{(t)}\right\|_2^2}$ and $\frac{\left|E^{(t)}_k\right|}{\left(c^{(t)}\right)^2}$ is to decrease over time, and when $t=T_{\text{Adam},2}$, we have w.h.p.
\begin{equation}\label{eqn:diag_domi_second_term_Ada}
	\frac{\left\|E^{(t)}\right\|_2}{\left\|\boldsymbol{v}^{(t)}\right\|_2^2}\le\tilde{\mathcal{O}}\left(d^2\sqrt{\eta}\right),\quad  \frac{\left|E^{(t)}_k\right|}{\left(c^{(t)}\right)^2}\le \tilde{\mathcal{O}}\left(d^2\sqrt{\eta d}\right).
\end{equation}	
Substituting (A) and eq.~\eqref{eqn:diag_domi_second_term_Ada} into eq.~\eqref{eqn:r_diag_upper_Ada} and \eqref{eqn:r_diag_lower_Ada} gives us w.h.p.,
\[
\forall 1\le i\le d:\frac{\sqrt{\sum_{j\ne i}\left(H^{(t)}[i,j]\right)^2}}{
	\left|H^{(t)}[i,i]\right|}\le\mathcal{O}\left(\sqrt{d}\right)+r_{1i}^{(t)},
\]
where the trend of $r_{1i}^{(t)}$ is to decrease over time and $r_{1i}^{(T_{\text{Adam},2})}\le\tilde{\mathcal{O}}\left(d^2\sqrt{\eta}\right)$.

We also have
\[
\forall 1\le i\le d,1\le k\le d:\frac{\sqrt{\sum_{j\ne id+k}\left(H^{(t)}[i,j]\right)^2}}{
	\left|H^{(t)}[id+k,id+k]\right|}\le\tilde{\mathcal{O}}\left(\sqrt{d}\right)+r_{2i}^{(t)},
\]
where the trend of $r_{2i}^{(t)}$ is to decrease over time and $r_{2i}^{(T_{\text{Adam},2})}\le\tilde{\mathcal{O}}\left(d^2\sqrt{\eta d}\right)$.

Hence $\RdiagAda=\tilde{\mathcal{O}}\left(\sqrt{d}\right)+\frac{1}{d^2+d}\sum_{i=1}^dr_{1i}^{(t)}+\frac{d}{d^2+d}\sum_{i=1}^dr_{2i}^{(t)}:=\tilde{\mathcal{O}}\left(\sqrt{d}\right)+r^{(t)}$
where the trend of $r^{(t)}$ is to decrease over time and
\[
r^{(T_{\text{Adam},2})}\le\frac{1}{d^2+d}\sum_{i=1}^d\tilde{\mathcal{O}}\left(d^2\sqrt{\eta}\right)+\frac{d}{d^2+d}\sum_{i=1}^d\tilde{\mathcal{O}}\left(d^2\sqrt{\eta d}\right)\le\tilde{\mathcal{O}}\left(d^2\sqrt{\eta d}\right).
\]
For any $p>0$, by picking the same hyperparameters as in Theorem~\ref{thm:uniformity}, we have $\eta d^4\le\tilde{\mathcal{O}}\left(\frac{1}{d^p}\right)$ and hence $r^{(T_{\text{Adam},2})}\le\tilde{\mathcal{O}}\left(\frac{1}{d^{\frac{p-1}{2}}}\right)=o\left(\sqrt{d}\right)$.

\subsection{Proof of Lemma~\ref{lemma:diag_domination}}
By the assumed weight structure, we get that
\begin{align*}
	\forall i\in[d]: &(1-\delta)^2(cu_{i})^2\le (w_{2i})^2\le (1+\delta)^2(cu_{i})^2,\\
	&(1-\delta)^2(u_{i})^2\|\boldsymbol{v}\|_2^2\le \|W_{1}[i,:]\|_2^2\le (1+\delta)^2(u_{i})^2\|\boldsymbol{v}\|_2^2.
\end{align*}
For the $i$-th row where $1\le i\le d$, i.e. the $i$-th row of the submatrix $[H_{22}\quad H_{21}^T]$, by triangle inequality, we have
\begin{align*}
	\sqrt{\sum_{j\ne i}H^2[i,j]}&\le\sqrt{\sum_{j\ne i}H^2_{22}[i,j]}+\sqrt{\sum_{j=1}^{d^2}H^2_{21}[j,i]}\\
	&\le\sqrt{\sum_{j\ne i}\langle W_1[i,:], W_1[j,:]\rangle^2}+\sqrt{\sum_{j=1}^dw_{2j}^2\sum_{k=1}^dW_1^2[i,k]}+\|E\|_2\\
	&\le \|W_1[i,:]\|_2\left(\sqrt{\sum_{j\ne i}\|W_1[j,:]\|_2^2}+\sqrt{\sum_{j=1}^dw_{2j}^2}\right)+\|E\|_2.
\end{align*}
Then we have that for $1\le i\le d$,
\begin{align*}
	\frac{\sqrt{\sum_{j\ne i}H^2[i,j]}}{
		|H[i,i]|}&\le\frac{\|W_1[i,:]\|_2\sqrt{\sum_{j\ne i}\|W_1[j,:]\|_2^2}+\sqrt{\sum_{j=1}^dw_{2j}^2}}{\|W_1[i,:]\|_2^2}+\frac{\|E\|_2}{\|W_1[i,:]\|_2^2}\\
	&=\sqrt{\frac{\sum_{j\ne i}\|W_1[j,:]\|_2^2}{\|W_1[i,:]\|_2^2}}+\sqrt{\frac{\sum_{j=1}^dw_{2j}^2}{\|W_1[i,:]\|_2^2}}+\frac{\|E\|_2}{\|W_1[i,:]\|_2^2}\\
	&\le\sqrt{\frac{(1+\delta)^2}{(1-\delta)^2}\cdot\frac{\sum_{j\ne i}u_j^2\|\boldsymbol{v}\|_2^2}{u_i^2\|\boldsymbol{v}\|_2^2}}+\sqrt{\frac{(1+\delta)^2}{(1-\delta)^2}\cdot\frac{c^2\sum_{j=1}^du_{j}^2}{u_i^2\|\boldsymbol{v}\|_2^2}}+\frac{\|E\|_2}{(1-\delta)^2u_i^2\|v\|_2^2}\\
	&\le\frac{1+\delta}{1-\delta}\left(1+\frac{|c|}{\|\boldsymbol{v}\|_2}\right)\sqrt{\frac{\sum_{j=1}^du_{j}^2}{u_i^2}}+\frac{\|E\|_2}{(1-\delta)^2u_i^2\|v\|_2^2}.
\end{align*}	 
For the $(id+k)$-th row where $1\le i\le d, 1\le k\le d$, i.e. the $((i-1)d+k)$-th row of the submatrix $[H_{21}\quad H_{11}]$, by triangle inequality again, we have
\begin{align*}
	\sqrt{\sum_{j\ne id+k}H^2[i,j]}&\le\sqrt{\sum_{j\ne (i-1)d+k}H^2_{11}[(i-1)d+k,j]}+\sqrt{\sum_{j=1}^{d}H^2_{21}[(i-1)d+k,j]}\\
	&\le\sqrt{\sum_{j\ne i}w_{2i}^2w_{2j}^2}+\sqrt{\sum_{j=1}^dw_{2i}^2W_1^2[j,k]}+|E_k|\\
	&= |w_{2i}|\left(\sqrt{\sum_{j\ne i}w_{2j}^2}+\sqrt{\sum_{j=1}^dW_1^2[j,k]}\right)+|E_k|.
\end{align*}
Then we have that for $1\le i\le d, 1\le k\le d$,
\begin{align*}
	\frac{\sqrt{\sum_{j\ne id+k}H^2[i,j]}}{
		|H[id+k,id+k]|}&\le\frac{|w_{2i}|\sqrt{\sum_{j\ne i}w_{2j}^2}+\sqrt{\sum_{j=1}^dW_1^2[j,k]}}{w_{2i}^2}+\frac{|E_k|}{w_{2i}^2}\\
	&=\sqrt{\frac{\sum_{j\ne i}w_{2j}^2}{w_{2i}^2}}+\sqrt{\frac{\sum_{j=1}^dW_{1}^2[j,k]}{w_{2i}^2}}+\frac{|E_k|}{w_{2i}^2}\\
	&\le\sqrt{\frac{(1+\delta)^2}{(1-\delta)^2}\cdot\frac{\sum_{j\ne i}c^2u_j^2}{c^2u_i^2}}+\sqrt{\frac{(1+\delta)^2}{(1-\delta)^2}\cdot\frac{v_k^2\sum_{j=1}^du_{j}^2}{c^2u_i^2}}+\frac{|E_k|}{(1-\delta)^2c^2u_{i}^2}\\
	&\le\frac{1+\delta}{1-\delta}\left(1+\frac{|v_k|}{|c|}\right)\sqrt{\frac{\sum_{j=1}^du_{j}^2}{u_i^2}}+\frac{|E_k|}{(1-\delta)^2c^2u_{i}^2}.
\end{align*}

\section{Connection between diagonal of loss Hessian and weights}\label{sec:connection}
The partial derivative at $W_i$ of the cost function for each $i$ is given by:
\begin{align}\label{eqn:gradient}
	\nabla_{W_i} L(W) = W_{i+1}^T \dots W_{H+1}^T(W_{H+1} W_H \dots W_1 - A) W_1^T \dots W_{i-1}^T.
\end{align}

In our experiments, we were interested in the diagonal elements of the hessian. These are given by:
\[
\nabla_{(W_i)_{a,b}} (\nabla_{W_i} L(W))_{a,b} = \nabla_{(W_i)_{a,b}} (W_{i+1}^T \dots W_{H+1}^T(W_{H+1} W_H \dots W_1 - A) W_1^T \dots W_{i-1}^T)_{a,b}
\]
for each possible $i,a,b$. For ease in notation, define for each $i$, the quantities $M_i := W_{i+1}^T \dots W_{H+1}^T$ and $N_i := W_1^T \dots W_{i-1}^T$. Then we have the following lemma.

\begin{lemma}\label{lemma:diag_hessian}
	The diagonal elements of the hessian of the cost function are given by:
	\[
	\nabla_{(W_i)_{a,b}} (\nabla_{W_i} L(W))_{a,b} = (M_i M_i^T)_{a,a} (N_i^T N_i)_{b,b}
	\]
	for each possible $i,a,b$.
\end{lemma}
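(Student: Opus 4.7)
The plan is to compute the second derivative directly from the gradient formula~\eqref{eqn:gradient} by treating $M_i$ and $N_i$ as constants with respect to $W_i$, since neither depends on $W_i$.

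First I would rewrite the full network product as $W_{H+1} W_H \cdots W_1 = M_i^T W_i N_i^T$, using the definitions $M_i := W_{i+1}^T \cdots W_{H+1}^T$ and $N_i := W_1^T \cdots W_{i-1}^T$. Expanding entrywise gives
\[
(W_{H+1} \cdots W_1)_{p,q} = \sum_{r,s} (M_i)_{a,p}\cdot\text{(not quite, let me redo)}
\]
More cleanly, $(W_{H+1} \cdots W_1)_{p,q} = \sum_{r,s} (M_i^T)_{p,r} (W_i)_{r,s} (N_i^T)_{s,q}$, from which
\[
\frac{\partial (W_{H+1} \cdots W_1)_{p,q}}{\partial (W_i)_{a,b}} = (M_i^T)_{p,a}(N_i^T)_{b,q} = (M_i)_{a,p}(N_i)_{q,b}.
\]

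Next I would expand the $(a,b)$-entry of the gradient using \eqref{eqn:gradient}:
\[
(\nabla_{W_i} L(W))_{a,b} = \sum_{p,q} (M_i)_{a,p}\,(W_{H+1}\cdots W_1 - A)_{p,q}\,(N_i)_{q,b}.
\]
Since $M_i,N_i,A$ do not depend on $(W_i)_{a,b}$, differentiating once more with respect to $(W_i)_{a,b}$ only hits the middle factor, and by the computation above this produces
\[
\frac{\partial}{\partial (W_i)_{a,b}}(\nabla_{W_i} L(W))_{a,b} = \sum_{p,q} (M_i)_{a,p}\,(M_i)_{a,p}(N_i)_{q,b}\,(N_i)_{q,b} = \Bigl(\sum_p (M_i)_{a,p}^2\Bigr)\Bigl(\sum_q (N_i)_{q,b}^2\Bigr),
\]
which is exactly $(M_i M_i^T)_{a,a}(N_i^T N_i)_{b,b}$.

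There is essentially no obstacle here: the result is a routine consequence of the multilinearity of the network product in each weight matrix together with the fact that $M_i$ and $N_i$ factor cleanly out of the derivative. The only care needed is bookkeeping with transposes when identifying $W_{H+1}\cdots W_1 = M_i^T W_i N_i^T$ and translating $(M_i^T)_{p,a}=(M_i)_{a,p}$, $(N_i^T)_{b,q}=(N_i)_{q,b}$ so that the squared-sum collapses to the correct diagonal entries of $M_i M_i^T$ and $N_i^T N_i$.
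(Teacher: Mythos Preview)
Your proposal is correct and follows essentially the same approach as the paper: both use that $M_i$ and $N_i$ (and $A$) are constant with respect to $W_i$, expand the $(a,b)$-entry of the gradient, and differentiate once more to collapse the sum into $(M_iM_i^T)_{a,a}(N_i^TN_i)_{b,b}$. The only cosmetic difference is that the paper first rewrites the expression as $(C_iW_iD_i)_{a,b}$ with $C_i=M_iM_i^T$, $D_i=N_i^TN_i$ and invokes a Matrix Cookbook identity, whereas you carry out the chain rule directly in index notation; the substance is identical.
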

\begin{proof}
	We have:
	\begin{align*}
		\nabla_{W_i} L(W) &= W_{i+1}^T \dots W_{H+1}^T(W_{H+1} W_H \dots W_1 - A) W_1^T \dots W_{i-1}^T \\
		&= M_i(W_{H+1} W_H \dots W_1 - A) N_i \\
		&= M_i W_{H+1} W_H \dots W_1 N_i - M_i Y N_i.
	\end{align*}
	This implies that:
	\begin{align*}
		\nabla_{(W_i)_{a,b}} (\nabla_{W_i} L(W))_{a,b} &= \nabla_{(W_i)_{a,b}}\left(M_i W_{H+1} W_H \dots W_1N_i - M_i Y N_i\right)_{a,b} \\
		&= \nabla_{(W_i)_{a,b}} \left(M_i W_{H+1} W_H \dots W_1N_i\right)_{a,b},
	\end{align*}
	where the last step follows since $M_i$ and $N_i$ are not functions of $W_i$.
	
	Note that $M_i := W_{i+1}^T \dots W_{H+1}^T$ and $N_i := W_1^T \dots W_{i-1}^T$. Now define $C_i := M_i W_{H+1} \dots W_{i+1} = M_iM_i^T$ and $D_i := W_{i-1} \dots W_2 W_1 N_i = N_i^TN_i$ so that:
	\[
	\nabla_{(W_i)_{a,b}} (\nabla_{W_i} L(W))_{a,b} = \nabla_{(W_i)_{a,b}} (C_i W_i D_i)_{a,b},
	\]
	where $C_i$ and $D_i$ are not functions of $W_i$. Now, Equation $74$ in the Matrix Cookbook\footnote{\url{https://www.math.uwaterloo.ca/~hwolkowi/matrixcookbook.pdf}} shows us that for any matrices $A$ and $X$ we have:
	\[
	\nabla_{X_{mn}} (X A)_{ij} = \delta_{im} A_{nj}.
	\]
	Note that $W_i \in \mathbb{R}^{d_i \times d_{i-1}}$, then we can apply this to obtain that:
	\begin{align*}
		\nabla_{(W_i)_{a,b}} (\nabla_{W_i} L(W))_{a,b} &= \nabla_{(W_i)_{a,b}} (C_i W_i D_i)_{a,b} \\
		&= \nabla_{(W_i)_{a,b}} \left[ \sum_{k = 1}^{d_i} (C_i)_{a, k} (W_i D_i)_{k, b} \right] \\
		&= \sum_{k = 1}^{d_i} (C_i)_{a, k} \nabla_{(W_i)_{a,b}} (W_i D_i)_{k, b} \\
		&= \sum_{k = 1}^{d_i} (C_i)_{a, k} \delta_{ak} (D_i)_{b,b} \\
		&= (C_i)_{a, a} (D_i)_{b,b} \\
		&= (M_i M_i^T)_{a,a} (N_i^T N_i)_{b,b}.
	\end{align*}
	This completes the proof.
\end{proof}
For ease of notation, let's now drop the superscript OPT and $(t)$ and write $\RmedOPTf$ as $R_{\text{med}, 1}$ and $\RmedOPTs$ as $R_{\text{med}, 2}$. For a 2-layer linear network, $H=1$. Consider the Hessian w.r.t $W_1$, we have $M_1M_1^T=W_2^TW_2$ and $N_1^TN_1$ is an identity matrix. Under Assumption~\ref{assump:setup}, we know that $W_2$ is a row vector, which can be denoted as $W_2=[w_{21},w_{22},...,w_{2d_1}]$.
Then we have
\[
(M_1M_1^T)_{a,a} = w^2_{2a}, (N_1^TN_1)_{b,b} = 1,\quad    \Rightarrow\quad R_{\text{med}, 1}=\frac{\max_i (w_{2i})^2}{\text{median}(w_{2i})^2}.
\]
Similarly, consider the Hessian w.r.t. $W_2$, we have that $M_1M_1^T$ is an identity matrix and $N_1^TN_1=W_1W_1^T$ . Therefore,
\[
(M_1M_1^T)_{a,a} = 1, (N_1^TN_1)_{b,b} = \|W_1[b,:]\|_2^2,\quad    \Rightarrow\quad R_{\text{med}, 2}=\frac{\max_i \|W_1[i,:]\|_2^2}{\text{median}\|W_1[i,:]\|_2^2}.
\]
Hence we have related the uniformity of diagonal Hessian to that of weight matrices. In the detailed analysis, for both GD and Adam, we can prove that $W_1$ converges to an approximately rank 1 matrix. The following lemma allows us to use this rank 1 structure to compute $R_{\text{med}, 1}$ and $R_{\text{med}, 2}$.
\begin{lemma}\label{lemma:rank1_ratio}
	Suppose $W_1\in\mathbb{R}^{d\times d}$ and $W_2\in\mathbb{R}^{1\times d}$ have the following structure:
	\begin{align*}
		W_1&=\boldsymbol{u}\boldsymbol{v}^{T}+R_1,\\
		W_2&=c\boldsymbol{u}^T+R_2,
	\end{align*}
	where $\boldsymbol{u}\in\mathbb{R}^{d},\boldsymbol{v}\in\mathbb{R}^{d},R_1\in\mathbb{R}^{d\times d},R_2\in\mathbb{R}^{1\times d}$ and that
	\[
	\forall 1\le i,j\le d:\quad\frac{|R_1[i,j]|}{|u_{i}v_j|}\le\delta,\quad \frac{|R_{2i}|}{|cu_{i}|}\le\delta,\quad \delta\in(0,1).
	\]
	Then we have
	\[
	R_{\text{med}, 1},R_{\text{med}, 2}\in\left[\frac{(1-\delta)^2}{(1+\delta)^2}\cdot\frac{\max_i u_{i}^2}{\text{median }u_{i}^2},\frac{(1+\delta)^2}{(1-\delta)^2}\cdot\frac{\max_i u_{i}^2}{\text{median }u_{i}^2}\right].
	\]
\end{lemma}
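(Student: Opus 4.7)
The plan is to reduce both ratios $R_{\text{med},1}$ and $R_{\text{med},2}$ to coordinate-wise ratios involving only $u_i^2$, by exploiting the rank-$1$ plus small-error structure of $W_1$ and $W_2$, and then apply a monotonicity principle for order statistics. Concretely, I will first derive two-sided pointwise sandwich bounds of the form $(1-\delta)^2 s_i \le t_i \le (1+\delta)^2 s_i$ for all $i$, where $s_i$ is a scaled version of $u_i^2$ and $t_i$ is either $w_{2i}^2$ or $\|W_1[i,:]\|_2^2$. Then the common scalar factor (either $c^2$ or $\|\boldsymbol{v}\|_2^2$) will cancel when we form the ratio of max to median.

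For $R_{\text{med},1}$: from $w_{2i} = c u_i + R_{2i}$ and $|R_{2i}| \le \delta |c u_i|$ with $\delta<1$, the perturbation cannot flip the sign of $c u_i$, so $(1-\delta)^2 c^2 u_i^2 \le w_{2i}^2 \le (1+\delta)^2 c^2 u_i^2$ for every $i$. For $R_{\text{med},2}$: write $W_1[i,:] = u_i \boldsymbol{v}^T + R_1[i,:]$; the coordinatewise bound $|R_1[i,j]| \le \delta |u_i v_j|$ gives $(1-\delta)^2 (u_i v_j)^2 \le (u_i v_j + R_1[i,j])^2 \le (1+\delta)^2 (u_i v_j)^2$, and summing over $j$ yields $(1-\delta)^2 \|\boldsymbol{v}\|_2^2 \, u_i^2 \le \|W_1[i,:]\|_2^2 \le (1+\delta)^2 \|\boldsymbol{v}\|_2^2 \, u_i^2$.

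The second step is the order-statistic monotonicity lemma: if $a_i \le b_i$ for every $i \in [d]$, then sorting both sequences preserves the pointwise inequality, so $\max_i a_i \le \max_i b_i$ and $\mathrm{median}_i\, a_i \le \mathrm{median}_i\, b_i$. Applying this with $a_i = (1-\delta)^2 c^2 u_i^2$, $b_i = w_{2i}^2$ and then with $a_i = w_{2i}^2$, $b_i = (1+\delta)^2 c^2 u_i^2$ gives
\[
(1-\delta)^2 c^2 \max_i u_i^2 \le \max_i w_{2i}^2 \le (1+\delta)^2 c^2 \max_i u_i^2,
\]
\[
(1-\delta)^2 c^2 \mathrm{median}_i\, u_i^2 \le \mathrm{median}_i\, w_{2i}^2 \le (1+\delta)^2 c^2 \mathrm{median}_i\, u_i^2,
\]
and the factor $c^2$ cancels in the ratio, giving the claimed bounds on $R_{\text{med},1}$. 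The identical argument with $\|\boldsymbol{v}\|_2^2$ in place of $c^2$ handles $R_{\text{med},2}$.

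I do not expect a real obstacle: the proof is essentially bookkeeping. The only subtlety worth flagging is the median step, which is not quite as transparent as the max. The clean way to see it is the monotonicity statement above, which follows immediately from the definition of order statistics (the $k$-th smallest of a pointwise-dominated sequence is dominated by the $k$-th smallest of the dominating sequence). With that lemma in hand, the rest is just combining the pointwise bounds and observing that the scalar factor $c^2$ (resp.\ $\|\boldsymbol{v}\|_2^2$) appears identically in the numerator and denominator of the ratio and cancels.
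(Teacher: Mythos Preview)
Your proposal is correct and follows essentially the same approach as the paper: derive the pointwise sandwich $(1-\delta)^2 c^2 u_i^2 \le w_{2i}^2 \le (1+\delta)^2 c^2 u_i^2$ (and the analogous bound with $\|\boldsymbol v\|_2^2$ for $\|W_1[i,:]\|_2^2$), pass to max and median via monotonicity of order statistics, and cancel the common scalar. The only difference is that you make the median step explicit via the order-statistic monotonicity lemma, whereas the paper simply asserts the max and median inequalities directly; substantively the arguments are identical.
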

\begin{proof}
	Let's first consider $R_{\text{med}, 1}$. we have
	\begin{align*}
		\forall i\in[d]: (1-\delta)^2(cu_{i})^2\le w_{2i}^2\le (1+\delta)^2(cu_{i})^2\\
		\Rightarrow\quad (1-\delta)^2\max_i(cu_{i})^2\le\max_i w_{2i}^2\le (1+\delta)^2\max_i(cu_{i})^2\\ (1-\delta)^2\text{median }(cu_{i})^2\le\text{median }w_{2i}^2\le(1+\delta)^2\text{median }(cu_{i})^2,
	\end{align*}
	which yields
	\[
	\frac{(1-\delta)^2}{(1+\delta)^2}\cdot\frac{\max_i u_{i}^2}{\text{median }u_{i}^2}\le R_{\text{med}, 1}=\frac{\max_i w_{2i}^2}{\text{median }w_{2i}^2}\le\frac{(1+\delta)^2}{(1-\delta)^2}\cdot\frac{\max_i u_{i}^2}{\text{median }u_{i}^2}.
	\]
	Similarly, for $R_{\text{med}, 2}$. We have that
	\begin{align*}
		\forall i,j\in[d]: (1-\delta)^2(u_{i}v_j)^2\le &W_{1}^2[i,j]\le (1+\delta)^2(u_{i}v_j)^2\\
		\Rightarrow\quad (1-\delta)^2u_{i}^2\|\boldsymbol{v}\|_2^2\le \|&W_{1}[i,:]\|_2^2\le (1+\delta)^2u_{i}^2\|\boldsymbol{v}\|_2^2\\
		\Rightarrow\quad (1-\delta)^2\max_iu_{i}^2\|\boldsymbol{v}\|_2^2\le\max_i \|&W_{1}[i,:]\|_2^2\le (1+\delta)^2\max_iu_{i}^2\|\boldsymbol{v}\|_2^2\\
		(1-\delta)^2\text{median }u_{i}^2\|\boldsymbol{v}\|_2^2\le\text{median }\|&W_{1}[i,:]\|_2^2\le(1+\delta)^2\text{median }u_{i}^2\|\boldsymbol{v}\|_2^2,
	\end{align*}
	which yields
	\[
	\frac{(1-\delta)^2}{(1+\delta)^2}\cdot\frac{\max_i u_{i}^2\|\boldsymbol{v}\|_2^2}{\text{median }u_{i}^2\|\boldsymbol{v}\|_2^2}\le R_{\text{med}, 2}=\frac{\max_i \|W_1[i,:]\|_2^2}{\text{median}\|W_1[i,:]\|_2^2}\le\frac{(1+\delta)^2}{(1-\delta)^2}\cdot\frac{\max_i u_{i}^2\|\boldsymbol{v}\|_2^2}{\text{median }u_{i}^2\|\boldsymbol{v}\|_2^2}.
	\]
	That means
	\[
	\frac{(1-\delta)^2}{(1+\delta)^2}\cdot\frac{\max_i u_{i}^2}{\text{median }u_{i}^2}\le R_{\text{med}, 2}\le\frac{(1+\delta)^2}{(1-\delta)^2}\cdot\frac{\max_i u_{i}^2}{\text{median }u_{i}^2}.
	\]
\end{proof}

\section{Auxiliary lemmas}\label{sec:aux}
\begin{lemma}\label{lemma:bound_randomness}
	Let $A=\frac{1}{m}YX^T$, $\Lambda_{xx}:=\frac{1}{m}XX^T$, $g_k^{(t)}=\nabla_{W_k}L(W^{(t)}), k=1,2$. Denote $\tilde A^{(t)}$, $\tilde \Lambda_{xx}^{(t)}$ and $\tilde g_k^{(t)},k=1,2$ as the corresponding batch versions at time $t$. Let $M_1^{(t)}=\max_{i,j}\left|W_1^{(t)}[i,j]\right|$ and $M_2^{(t)}=\max_i\left|w_{2i}^{(t)}\right|$. Under Assumption~\ref{assump:large_batch}, we have with probability at least $1-\frac{1}{d}$, for $\forall t\le T$ and $\forall i,j\in[d]$,
	\begin{align*}
		\left|\tilde g_1^{(t)}[i,j]-g_1^{(t)}[i,j]\right|&\le d^3M_1^{(t)}\left(M_2^{(t)}\right)^2\sigma\sqrt{dT}+M_2^{(t)}\sigma\sqrt{d^{2}T},\\
		\left|g_{2i}^{(t)}-g_{2i}^{(t)}\right|&\le d^4\left(M_1^{(t)}\right)^2M_2^{(t)}\sigma\sqrt{dT}+dM_1^{(t)}\sigma\sqrt{d^{2}T}.
	\end{align*}
\end{lemma}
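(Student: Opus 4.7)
}
The plan is to combine the explicit expressions for $Dg_1^{(t)} := \tilde g_1^{(t)} - g_1^{(t)}$ and $Dg_2^{(t)} := \tilde g_2^{(t)} - g_2^{(t)}$ that were already derived at the start of Appendix~\ref{sec:proof_GD},
\[
Dg_1^{(t)} = W_2^{(t)T}\bigl[W_2^{(t)}W_1^{(t)}(\tilde\Lambda_{xx}^{(t)}-\Lambda_{xx}) - (\tilde A^{(t)}-A)\bigr],\qquad Dg_2^{(t)} = \bigl[W_2^{(t)}W_1^{(t)}(\tilde\Lambda_{xx}^{(t)}-\Lambda_{xx}) - (\tilde A^{(t)}-A)\bigr]W_1^{(t)T},
\]
with a routine Chebyshev-plus-union-bound argument applied to the per-step noise variables $\tilde\Lambda_{xx}^{(t)}-\Lambda_{xx}$ and $\tilde A^{(t)}-A$. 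The weight matrices are frozen at each time step (from the point of view of the fresh batch sampled at time $t$), so after conditioning we only need to control the randomness in the two deviation matrices.

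The first step is to control those deviations uniformly. By Assumption~\ref{assump:large_batch} each scalar entry of $\tilde\Lambda_{xx}^{(t)}-\Lambda_{xx}$ and $\tilde A^{(t)}-A$ is mean zero with variance at most $\sigma^2$, so Chebyshev's inequality gives $\mathbb{P}(|Z|>\tau)\le\sigma^2/\tau^2$ for each such entry $Z$. Splitting the failure budget $1/d$ equally, I take a union bound over the $Td^2$ entries of $\tilde\Lambda_{xx}^{(t)}-\Lambda_{xx}$ (for $t\le T$, $k,j\in[d]$) and over the $Td$ entries of $\tilde A^{(t)}-A$. Setting $\tau_1 = \Theta(\sigma\sqrt{Td^3\log(d)/\text{or}})$ more simply $\tau_1^2 \ge 2Td^3\sigma^2$ and $\tau_2^2 \ge 2Td^2\sigma^2$, with probability at least $1-\tfrac{1}{d}$ we obtain the \emph{uniform} bounds
\[
\max_{t\le T,\,k,j\in[d]} \bigl|(\tilde\Lambda_{xx}^{(t)}-\Lambda_{xx})[k,j]\bigr| \le \mathcal{O}\Bigl(\sigma\sqrt{Td^3}\Bigr),\qquad \max_{t\le T,\,j\in[d]}\bigl|(\tilde A^{(t)}-A)_j\bigr|\le \mathcal{O}\Bigl(\sigma\sqrt{Td^2}\Bigr).
\]
The two different scalings $\sqrt{Td^3}$ and $\sqrt{Td^2}$ are precisely what will produce the two summands in the final bounds.

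The second step is deterministic: on this good event I bound each entry of $Dg_1^{(t)}$ and $Dg_2^{(t)}$ by expanding the matrix products and using the crude but sufficient estimates $|w_{2i}^{(t)}|\le M_2^{(t)}$, $|W_1^{(t)}[i,j]|\le M_1^{(t)}$, and therefore $|(W_2^{(t)}W_1^{(t)})_k|\le d\,M_1^{(t)}M_2^{(t)}$. For $Dg_1^{(t)}[i,j] = w_{2i}^{(t)}\bigl[\sum_k (W_2^{(t)}W_1^{(t)})_k(\tilde\Lambda_{xx}^{(t)}-\Lambda_{xx})[k,j] - (\tilde A^{(t)}-A)_j\bigr]$ this gives
\[
|Dg_1^{(t)}[i,j]| \le M_2^{(t)}\cdot d\cdot dM_1^{(t)}M_2^{(t)}\cdot \mathcal{O}\bigl(\sigma\sqrt{Td^3}\bigr) + M_2^{(t)}\cdot\mathcal{O}\bigl(\sigma\sqrt{Td^2}\bigr) = \mathcal{O}\bigl(d^3M_1^{(t)}(M_2^{(t)})^2\sigma\sqrt{dT}\bigr) + \mathcal{O}\bigl(M_2^{(t)}\sigma\sqrt{d^2T}\bigr),
\]
since $d^2\sqrt{Td^3} = d^{7/2}\sqrt{T} = d^3\sqrt{dT}$. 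The analogous expansion for $Dg_2^{(t)}_i = \sum_j[W_2^{(t)}W_1^{(t)}(\tilde\Lambda_{xx}^{(t)}-\Lambda_{xx})-(\tilde A^{(t)}-A)]_j W_1^{(t)}[i,j]$ carries one extra sum over $j\in[d]$ (a factor of $d$) and one extra factor of $M_1^{(t)}$ from the $W_1^{(t)T}$ factor, yielding the claimed $d^4(M_1^{(t)})^2M_2^{(t)}\sigma\sqrt{dT} + dM_1^{(t)}\sigma\sqrt{d^2T}$ bound.

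There is no real obstacle here: the lemma is essentially a bookkeeping exercise combining a variance bound with two separate union bounds of different sizes. The only subtle point worth being explicit about is the separate treatment of the noise in $\tilde\Lambda_{xx}$ ($Td^2$ entries, producing the first summand with an extra $\sqrt{d}$) versus $\tilde A$ ($Td$ entries, producing the second summand), which is what gives the bound its precise two-term structure rather than a single unified $\sqrt{Td^3}$ scaling on both.
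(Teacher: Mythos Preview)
Your proposal is correct and follows essentially the same route as the paper: Chebyshev's inequality on each scalar entry of $\tilde A^{(t)}-A$ and $\tilde\Lambda_{xx}^{(t)}-\Lambda_{xx}$, a union bound over the $Td$ and $Td^2$ entries respectively to get uniform control at scales $\sigma\sqrt{d^2T}$ and $\sigma\sqrt{d^3T}$, and then the deterministic entry-wise expansion of $Dg_1^{(t)}$ and $Dg_2^{(t)}$ using $|(W_2^{(t)}W_1^{(t)})_k|\le dM_1^{(t)}M_2^{(t)}$. The only cosmetic difference is that the paper writes the $\tilde\Lambda_{xx}$ bound as $\sigma d\sqrt{dT}$ rather than $\sigma\sqrt{Td^3}$, and presents the arithmetic in a slightly different order; the two arguments are otherwise identical.
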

\begin{proof}
	By Assumption~\ref{assump:large_batch} and Chebyshev's inequality, we have for fixed $i,j\in[d]$ and $t\le T$,
	\[
	\mathbb{P}\left(\left|\tilde A_i^{(t)}-A_i\right|> \lambda\right)\le\frac{\sigma^2}{\lambda^2},\quad \mathbb{P}\left(\left|\tilde \Lambda_{xx}^{(t)}[i,j]-\Lambda_{xx}[i,j]\right|> \lambda\right)\le\frac{\sigma^2}{\lambda^2}.
	\]
	Applying the union bound gives us
	\begin{align*}
		\mathbb{P}\left(\exists i\in[d],\exists t\le T:\quad\left|\tilde A_i^{(t)}-A_i\right|> \lambda\right)\le\frac{Td\sigma^2}{\lambda^2},\\
		\mathbb{P}\left(\exists i,j\in[d],\exists t\le T:\quad\left|\tilde \Lambda_{xx}^{(t)}[i,j]-\Lambda_{xx}[i,j]\right|> \lambda\right)\le\frac{Td^2\sigma^2}{\lambda^2},
	\end{align*}
	which gives us with probability at least $1-\frac{1}{d}$, for $\forall t\le T,\forall i,j\in[d]$,
	\[
	\left|\tilde A_i^{(t)}-A_i\right|\le\sigma\sqrt{d^{2}T},\quad \left|\tilde \Lambda_{xx}^{(t)}[i,j]-\Lambda_{xx}[i,j]\right|\le \sigma d\sqrt{dT}.
	\]
	Now we are ready to bound $\tilde g_k^{(t)}-g_k^{(t)}$ for $k=1,2$ and $t\le T$.
	
	Note that for all $t\le T$ and $\forall i\in[d]$,
	\[
	\left|\left(W_2^{(t)}W_1^{(t)}\right)_i\right|=\left|\sum_{j=1}^dw_{2j}^{(t)}W_1^{(t)}[j,i]\right|\le\sum_{j=1}^d\left|w_{2j}^{(t)}\right|\left|W_1^{(t)}[j,i]\right|\le dM_1^{(t)}M_2^{(t)}.
	\]
	Then we have with probability at least $1-\frac{1}{d}$, for all $t\le T$ and $\forall i\in[d]$,
	\[
	\left|\left(W_2^{(t)}W_1^{(t)}\left(\tilde\Lambda_{xx}^{(t)}-\Lambda_{xx}\right)\right)_i\right|\le\sum_{j=1}^d\left|\left(W_2^{(t)}W_1^{(t)}\right)_j\right|\left|\tilde \Lambda_{xx}^{(t)}[j,i]-\Lambda_{xx}[j,i]\right|\le d^3M_1^{(t)}M_2^{(t)}\sigma\sqrt{dT}.
	\]
	Combining with $\tilde g_1^{(t)}-g_1^{(t)}=W_2^{(t)T}\left(W_2^{(t)}W_1^{(t)}\left(\tilde\Lambda_{xx}^{(t)}-\Lambda_{xx}\right)-\left(\tilde A^{(t)}-A\right)\right)$, we get that with probability at least $1-\frac{1}{d}$, for all $t\le T$ and $\forall i,j\in[d]$,
	\begin{align*}
		\left|\tilde g_1^{(t)}[i,j]-g_1^{(t)}[i,j]\right|&\le\left|w_{2i}^{(t)}\right|\left|\left(W_2^{(t)}W_1^{(t)}\left(\tilde\Lambda_{xx}^{(t)}-\Lambda_{xx}\right)\right)_j\right|+\left|w_{2i}^{(t)}\right|\left|\tilde A_j^{(t)}-A_j\right|\\
		&\le d^3M_1^{(t)}\left(M_2^{(t)}\right)^2\sigma\sqrt{dT}+M_2^{(t)}\sigma\sqrt{d^{2}T}.
	\end{align*}
	Similarly, note that $\tilde g_{2i}^{(t)}-g_{2i}^{(t)}=\left(W_2^{(t)}W_1^{(t)}\left(\tilde\Lambda_{xx}^{(t)}-\Lambda_{xx}\right)-\left(\tilde A^{(t)}-A\right)\right)W_1^{(t)T}$, we then have that with probability at least $1-\frac{1}{d}$, for all $t\le T$ and $\forall i,j\in[d]$,
	\begin{align*}
		\left|\tilde g_{2i}^{(t)}-g_{2i}^{(t)}\right|&\le\sum_{j=1}^d\left|\left(W_2^{(t)}W_1^{(t)}\left(\tilde\Lambda_{xx}^{(t)}-\Lambda_{xx}\right)\right)_j\right|\left|W_1^{(t)}[i,j]\right|+\sum_{j=1}^d\left|\tilde A_j^{(t)}-A_j\right|\left|W_1^{(t)}[i,j]\right|\\
		&\le d^4\left(M_1^{(t)}\right)^2M_2^{(t)}\sigma\sqrt{dT}+dM_1^{(t)}\sigma\sqrt{d^{2}T}.
	\end{align*}
\end{proof}
\begin{lemma}\label{lemma:truncation}
	Consider two sequences $\{a^{(t)}\}_{t\ge0},\{b^{(t)}\}_{t\ge0}$, which satisfy
	\[
	a^{(t)}=(1-\beta)\sum_{\tau=0}^t\beta^\tau b^{(t-\tau)},\beta\in(0,1).
	\]
	Suppose $\forall \tau\le t:\left|b^{(t)}\right|\le B$, then for any $\epsilon>0$, the following truncated version
	\[
	\tilde a^{(t)}=(1-\beta)\sum_{\tau=0}^{H}\beta^\tau b^{(t-\tau)}
	\]
	with $H\ge\frac{1}{1-\beta}\log\frac{B}{\epsilon}=\tilde{\Omega}\left(\frac{1}{1-\beta}\right)$ satisfies
	\[
	\left|a^{(t)}-\tilde a^{(t)}\right|\le\epsilon.
	\]
\end{lemma}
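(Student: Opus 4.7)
The plan is to bound the discarded tail of the geometric sum directly. First I would write out the difference explicitly as
\[
a^{(t)}-\tilde a^{(t)} = (1-\beta)\sum_{\tau=H+1}^{t}\beta^{\tau}b^{(t-\tau)}.
\]
Applying the triangle inequality together with the uniform bound $|b^{(t-\tau)}|\le B$ gives
\[
\left|a^{(t)}-\tilde a^{(t)}\right| \le (1-\beta)B\sum_{\tau=H+1}^{t}\beta^{\tau} \le (1-\beta)B\cdot\frac{\beta^{H+1}}{1-\beta} = B\beta^{H+1}.
\]
So the whole problem reduces to choosing $H$ large enough that $B\beta^{H+1}\le\epsilon$.

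Next I would convert the stated hypothesis $H\ge\frac{1}{1-\beta}\log(B/\epsilon)$ into a bound on $\beta^{H+1}$. The key elementary inequality is $\log(1/\beta)\ge 1-\beta$ for $\beta\in(0,1)$, which follows from the concavity bound $\log(1-x)\le -x$ applied at $x=1-\beta$. Using this,
\[
\beta^{H+1} = \exp\!\left(-(H+1)\log(1/\beta)\right) \le \exp\!\left(-(H+1)(1-\beta)\right) \le \exp\!\left(-\log(B/\epsilon)\right) = \frac{\epsilon}{B},
\]
so $B\beta^{H+1}\le\epsilon$ and the lemma follows.

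There is no real obstacle here; the argument is a one-line tail-sum estimate combined with the standard $\log(1/\beta)\ge 1-\beta$ bound. The only mild subtlety is matching the claimed threshold $H\ge\frac{1}{1-\beta}\log(B/\epsilon)$ exactly, which is why I use the logarithmic inequality rather than a weaker bound like $\beta\le e^{-(1-\beta)}$ with constants — the inequalities coincide up to the harmless $+1$ in the exponent $H+1$ that can be absorbed into the $\tilde\Omega$ notation.
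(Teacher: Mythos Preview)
Your proposal is correct and essentially identical to the paper's proof: both bound the tail by $B\beta^{H+1}$ via the geometric sum, then use the inequality $\log(1/\beta)\ge 1-\beta$ (the paper writes it as $\log\beta\le\beta-1$) to convert the threshold $H\ge\frac{1}{1-\beta}\log(B/\epsilon)$ into $B\beta^{H+1}\le\epsilon$. The only cosmetic difference is that the paper solves for $H$ from $B\beta^{H+1}\le\epsilon$ and then weakens the resulting bound, whereas you substitute the hypothesis directly and verify the inequality.
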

\begin{proof}
	We have that
	\[
	\left|a^{(t)}-\tilde a^{(t)}\right|\le\left|(1-\beta)\sum_{\tau=H+1}^{t}\beta^{\tau}b^{(t-\tau)}\right|\le(1-\beta)\sum_{\tau=H+1}^t\beta^{\tau}B\le B\beta^{H+1}.
	\]
	To make it less than $\epsilon$, it suffices to choose $H\ge \log(\frac{\epsilon}{B})/\log\beta$.
	
	Since $\beta\in(0,1)$, we know that $\log\beta\le\beta-1<0$. We also have $\log\frac{\epsilon}{B}<0$. Then it suffices to choose
	\[
	H\ge\frac{\log(\epsilon/B)}{\beta-1}\ge\frac{\log(\epsilon/B)}{\log\beta}\quad\Rightarrow\quad H\ge\frac{1}{1-\beta}\log\frac{B}{\epsilon}=\tilde{\Omega}\left(\frac{1}{1-\beta}\right).
	\]
\end{proof}
\begin{lemma}\label{lemma:fraction}
	Suppose $a,b,c,e_a,e_b,e_c\in\mathbb{R}, b>0,c>0$ satisfy $b+e_b+e_c>0,|e_a|\le\delta|a|,|e_b|\le\delta b, |e_c|\le \delta^2 c^2$ with $0<\delta\ll1$, then we have
	\[
	\frac{a+e_a}{\sqrt{b+e_b+e_c}+c}=\frac{a}{\sqrt{b}+c}(1+R),\quad \text{where }|R|=\mathcal{O}(\delta).
	\]
\end{lemma}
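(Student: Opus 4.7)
The plan is to factor the target ratio as a product of a perturbed numerator and a perturbed denominator and show each perturbation is $\mathcal{O}(\delta)$. Concretely, assuming $a\neq 0$ (the case $a=0$ forces $e_a=0$ and is trivial with $R=0$), write
\[
\frac{a+e_a}{\sqrt{b+e_b+e_c}+c} \;=\; \frac{a}{\sqrt{b}+c} \cdot \frac{1 + e_a/a}{1+\Delta}, \qquad \Delta := \frac{\sqrt{b+e_b+e_c}-\sqrt{b}}{\sqrt{b}+c}.
\]
The numerator factor satisfies $|e_a/a|\le \delta$ by hypothesis, so once I establish $|\Delta|=\mathcal{O}(\delta)$, the condition $\delta \ll 1$ keeps $1+\Delta$ bounded away from zero, and expanding $(1+\mathcal{O}(\delta))/(1+\mathcal{O}(\delta)) = 1+\mathcal{O}(\delta)$ yields $|R|=\mathcal{O}(\delta)$.

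To bound $|\Delta|$, I would rationalize via $\sqrt{x}-\sqrt{y}=(x-y)/(\sqrt{x}+\sqrt{y})$ and split $|e_b+e_c|\le |e_b|+|e_c|$:
\[
|\Delta| \le \frac{|e_b|}{(\sqrt{b+e_b+e_c}+\sqrt{b})(\sqrt{b}+c)} + \frac{|e_c|}{(\sqrt{b+e_b+e_c}+\sqrt{b})(\sqrt{b}+c)}.
\]
The $|e_b|$ contribution is immediate: the crude lower bound $\sqrt{b+e_b+e_c}+\sqrt{b}\ge \sqrt{b}$ together with $|e_b|\le \delta b$ gives $\delta b/[\sqrt{b}(\sqrt{b}+c)] = \delta\sqrt{b}/(\sqrt{b}+c) \le \delta$.

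The $|e_c|$ contribution is the main obstacle, because that same lower bound $\sqrt{b+e_b+e_c}+\sqrt{b}\ge \sqrt{b}$ degrades the estimate to $\delta^2 c^2/(\sqrt{b}\cdot c)$, which blows up when $b\ll c^2$. My plan is a case split on whether $\sqrt{b}\ge \delta c$. In the regime $\sqrt{b}\ge \delta c$, keep the rationalized form: the denominator is at least $\sqrt{b}\cdot c \ge \delta c^2$, and $|e_c|\le \delta^2 c^2$ bounds the term by $\delta$. In the regime $\sqrt{b}<\delta c$, abandon the rationalization and bound $|\sqrt{b+e_b+e_c}-\sqrt{b}|$ directly by $\sqrt{b+e_b+e_c}+\sqrt{b}$; the hypotheses $|e_b|\le \delta b<\delta^3 c^2$ and $|e_c|\le \delta^2 c^2$ combined with positivity of $b+e_b+e_c$ force $0<b+e_b+e_c=\mathcal{O}(\delta^2 c^2)$, hence $\sqrt{b+e_b+e_c}=\mathcal{O}(\delta c)$, so that $|\sqrt{b+e_b+e_c}-\sqrt{b}|=\mathcal{O}(\delta c)$, and dividing by $\sqrt{b}+c\ge c$ gives $\mathcal{O}(\delta)$. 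Combining the two regimes yields $|\Delta|=\mathcal{O}(\delta)$ and completes the proof.
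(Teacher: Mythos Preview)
Your argument is correct and follows essentially the same route as the paper's: factor out $a/(\sqrt{b}+c)$, rationalize the square-root difference, and handle the $e_b$ and $e_c$ perturbations separately. The one minor variation is in the $e_c$ term: you threshold on whether $\sqrt{b}\ge\delta c$, whereas the paper instead writes $|e_c|=\sqrt{|e_c|}\cdot\sqrt{|e_c|}$, bounds $\sqrt{|e_c|}/c\le\delta$, and then shows $\sqrt{|e_c|}/(\sqrt{b}+\sqrt{b+e_b+e_c})=\mathcal{O}(1)$ via a case split on the sign of $e_c$---an equally short and equally elementary alternative.
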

\begin{proof}
	We have
	\begin{align*}
		\frac{a+e_{a}}{\sqrt{b+e_{b}+e_c}+c}&=\frac{a}{\sqrt{b}+c}+\frac{a}{\sqrt{b+e_{b}+e_c}+c}-\frac{a}{\sqrt{b}+c}+\frac{e_a}{\sqrt{b+e_b+e_c}+c}\\
		&=\frac{a}{\sqrt{b}+c}\left(1+\underbrace{\frac{\sqrt{b}+c}{\sqrt{b+e_b+e_c}+c}-1}_{q_1}+\underbrace{\frac{e_a}{a}\cdot\frac{\sqrt{b}+c}{\sqrt{b+e_b+e_c}+c}}_{q_2}\right).
	\end{align*}
	Define $R:=q_1+q_2$. The term $|q_1|$ can be bounded by
	\begin{align*}
		|q_1|&=\frac{\left|\sqrt{b}-\sqrt{b+e_b+e_c}\right|}{\sqrt{b+e_b+e_c}+c}\\
		&=\frac{|e_b+e_c|}{(\sqrt{b+e_b+e_c}+c)\left(\sqrt{b}+\sqrt{b+e_b+e_c}\right)}\\
		&\le\frac{|e_b|}{(\sqrt{b+e_b+e_c}+c)\left(\sqrt{b}+\sqrt{b+e_b+e_c}\right)}+\frac{|e_c|}{(\sqrt{b+e_b+e_c}+c)\left(\sqrt{b}+\sqrt{b+e_b+e_c}\right)}\\
		&\le\frac{|e_b|}{(\sqrt{b+e_b+e_c}+c)\sqrt{b}}+\frac{\sqrt{|e_c|}}{c}\cdot\frac{\sqrt{|e_c|}}{\sqrt{b}+\sqrt{b+e_b+e_c}}\\
		&\le \underbrace{\frac{|e_b|}{(\sqrt{b+e_b+e_c}+c)\sqrt{b}}}_{q_3}+\delta\underbrace{\frac{\sqrt{|e_c|}}{\sqrt{b}+\sqrt{b+e_b+e_c}}}_{q_4},
	\end{align*}
	where $|q_3|$ can be bounded by
	\[
	|q_3|\overset{(i)}{\le}\frac{\delta b}{(\sqrt{b+e_b}-\sqrt{|e_c|}+c)\sqrt{b}}\le \frac{\delta \sqrt{b}}{\sqrt{b(1-\delta)}+c(1-\delta)}\le\frac{\delta \sqrt{b}}{\sqrt{b(1-\delta)}}=\mathcal{O}(\delta).
	\]
	Here the denominator of $(i)$ uses $b+e_b\ge b(1-\delta)>0$ and $\sqrt{x+y}\ge\sqrt{x}-\sqrt{|y|}$ when $x\ge0,x+y\ge0$.
	
	Now let's bound $|q_4|$. If $e_c>0$, we have $e_c=|e_c|$ and $|q_4|\le\frac{\sqrt{e_c}}{\sqrt{e_c}}=1$ since $b+e_b\ge b(1-\delta)>0$.
	
	If $e_c\le0$, note that $b+e_b+e_c>0$, we have $|e_c|<b+b_e\le b(1+\delta)$, which yields $|q_4|\le\frac{\sqrt{|e_c|}}{\sqrt{b}}=\mathcal{O}(1)$. Combining the above bounds give us $|q_1|\le |q_3|+\delta|q_4|=\mathcal{O}(\delta)$.
	
	On the other hand, $|q_2|$ can be bounded by
	\[
	|q_2|\le\delta\frac{\sqrt{b}+c}{\sqrt{b+e_b}-\sqrt{|e_c|}+c}\le\delta\frac{\sqrt{b}+c}{\sqrt{b(1-\delta)}+c(1-\delta)}=\mathcal{O}(\delta).
	\]
	Then $|R|\le|q_1|+|q_2|=\mathcal{O}(\delta)$
\end{proof}
\begin{lemma}\label{lemma:lb_max_Gaussian}
	Suppose $X_1,X_2,...,X_d$ are i.i.d Gaussian with mean 0 and variance $\sigma^2$, then for $0<\delta<\frac{1}{e}$, we have with probability at least $1-\delta$,
	\[
	\max_{1\le i\le d}X_i^2\ge \sigma^2\left(C_1\log d-C_2\log\log\frac{1}{\delta}\right)
	\]
	for some $C_1,C_2>0$.
\end{lemma}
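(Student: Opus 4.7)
}
By homogeneity I will first reduce to the case $\sigma = 1$: set $Z_i := X_i/\sigma$, so the $Z_i$ are i.i.d. standard normal and it suffices to show that for some $C_1, C_2 > 0$ and all $0 < \delta < 1/e$,
\[
\mathbb{P}\Bigl(\max_{1\le i\le d} Z_i^2 < C_1 \log d - C_2 \log\log\tfrac{1}{\delta}\Bigr) \le \delta.
\]
The plan is the standard independence-plus-tail-lower-bound argument: write
\[
\mathbb{P}\Bigl(\max_{1\le i\le d} Z_i^2 < t^2\Bigr) = \bigl(\mathbb{P}(|Z_1| < t)\bigr)^d = \bigl(1 - \mathbb{P}(|Z_1| \ge t)\bigr)^d \le \exp\bigl(-d\,\mathbb{P}(Z_1 \ge t)\bigr),
\]
so it will be enough to choose $t$ with $d\,\mathbb{P}(Z_1 \ge t) \ge \log(1/\delta)$.

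Next I will invoke the Mills-ratio lower bound for the standard Gaussian: for every $t \ge 1$,
\[
\mathbb{P}(Z_1 \ge t) \ge \frac{1}{\sqrt{2\pi}} \cdot \frac{t}{1+t^2}\, e^{-t^2/2} \ge \frac{c_0}{t}\, e^{-t^2/2}
\]
for an absolute constant $c_0 > 0$. Plugging this in, the condition $d\,\mathbb{P}(Z_1 \ge t) \ge \log(1/\delta)$ reduces to
\[
\frac{t^2}{2} + \log t \le \log d - \log\log\tfrac{1}{\delta} + \log c_0.
\]
I will then try the ansatz $t^2 = C_1 \log d - C_2 \log\log(1/\delta)$ with $C_1 < 2$ (e.g., $C_1 = 1$). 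Substituting gives $\log t = \tfrac{1}{2}\log(C_1 \log d - C_2 \log\log(1/\delta)) \le \tfrac{1}{2}\log\log d + O(1)$, so the left-hand side becomes $\tfrac{C_1}{2}\log d - \tfrac{C_2}{2}\log\log(1/\delta) + \tfrac{1}{2}\log\log d + O(1)$, and a direct comparison with the right-hand side shows the inequality holds for appropriately chosen $C_1, C_2$ (any $C_1 < 2$ works provided $C_2$ is taken large enough relative to $C_1$).

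The only place one has to be careful is the degenerate regime where $C_1\log d - C_2\log\log(1/\delta) \le 0$, i.e.\ when $\delta$ is doubly-exponentially small in $d$; there the claimed bound on $\max Z_i^2$ is vacuous and holds trivially, so one may restrict the above calculation to the complementary regime in which $t \ge 1$ and Mills' inequality applies. I do not expect any substantial obstacle — this is a routine extreme-value lower bound for i.i.d.\ Gaussians, and the main work is just tracking the $\log\log(1/\delta)$ correction carefully enough to verify that constants $C_1, C_2$ exist simultaneously for every $\delta \in (0, 1/e)$.
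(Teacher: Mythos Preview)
Your proposal is correct and follows essentially the same route as the paper: reduce to $\sigma=1$, use independence to write $\mathbb{P}(\max_i Z_i^2 < t^2) = (1 - \mathbb{P}(|Z_1|\ge t))^d \le \exp(-d\,\mathbb{P}(|Z_1|\ge t))$, invoke a lower bound on the Gaussian tail, and solve for $t^2$. The only difference is cosmetic: the paper uses the cruder bound $\mathbb{P}(|X_i|>x)\ge \alpha e^{-\beta x^2}$ (valid for all $x\ge 0$ with some $\beta>1/2$), which absorbs the $1/t$ prefactor into the exponent and lets one solve $\exp(-d\alpha e^{-\beta x^2})=\delta$ in one clean step, whereas your Mills-ratio bound carries the extra $\log t$ term and requires the ansatz-and-verify argument you sketched; both arrive at the same conclusion with the same structure.
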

\begin{proof}
	It suffices to assume that $\sigma^2=1$ and prove that w.p. at least $1-\delta$, $\max_{1\le i\le d}X_i^2\ge C_1\log d-C_2\log\log\frac{1}{\delta}$.
	
	First, by the lower bound of Gaussian tail, there exists $\alpha, \beta>0$ such that $\mathbb{P}(|X_i|>x)=2\mathbb{P}(X_i> x)\ge\alpha e^{-\beta x^2}$ for $x\ge 0$. Then by i.i.d., we have
	\begin{align*}
		\mathbb{P}(\max_i |X_i|\le x)&=\mathbb{P}\left(\bigcap_{i=1}^d\{|X_i|\le x\}\right)\\
		&=\prod_{i=1}^d\mathbb{P}(|X_i|\le x)=(1-\mathbb{P}(|X_i|> x))^d\\
		&\le(1-\alpha e^{-\beta x^2})^d\\
		&\le \exp(-d\alpha e^{-\beta x^2}),
	\end{align*}
	where the last inequality uses $1-x\le e^{-x}$ for $x\in[0,1]$. Let $\exp(-d\alpha e^{-\beta x^2})=\delta$, we get that w.p. at least $1-\delta$,
	\[
	\max_{1\le i\le d}|X_i|\ge \sqrt{\frac{1}{\beta}\left(\log(\alpha d)-\log\log\frac{1}{\delta}\right)}.
	\]
	Then we have w.p. at least $1-\delta$,
	\[
	\max_{1\le i\le d}X_i^2=\left(\max_{1\le i\le d}|X_i|\right)^2\ge \frac{1}{\beta}\left(\log(\alpha d)-\log\log\frac{1}{\delta}\right).
	\]
\end{proof}
\begin{lemma}\label{lemma:coord_ratio_Gaussian}
	Suppose $X_1,X_2,...,X_d$ are i.i.d Gaussian with mean 0 and variance $\sigma^2$, then we have with constant probability,
	\[
	\frac{1}{d}\sum_{i=1}^d\frac{1}{|X_i|}\le\mathcal{O}\left(\frac{1}{\sigma}\log d\right).
	\]
\end{lemma}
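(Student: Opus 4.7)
\medskip

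\noindent\textbf{Proof proposal.} The plan is to use a truncation argument together with the Gaussian anti-concentration bound $\mathbb{P}(|X_i| \le \epsilon) \le 2\epsilon/(\sqrt{2\pi}\sigma)$. By a scaling reduction, it suffices to prove the claim for $\sigma = 1$, since $1/|X_i| = (1/\sigma) \cdot 1/|Z_i|$ with $Z_i \sim \mathcal{N}(0,1)$.

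Set $Y_i := 1/|X_i|$ and pick the truncation level $T := d^2$. First I would control the contribution of the ``tail'' event $\{Y_i > T\}$ using the anti-concentration estimate above: for each $i$, $\mathbb{P}(Y_i > T) = \mathbb{P}(|X_i| < 1/T) \le \sqrt{2/\pi}/T$, so by a union bound
\[
\mathbb{P}\!\left(\exists\, i:\ Y_i > T\right) \;\le\; \frac{d\sqrt{2/\pi}}{T} \;=\; \frac{\sqrt{2/\pi}}{d},
\]
which is $o(1)$. So with probability $1 - o(1)$ we may replace each $Y_i$ by its truncation $Y_i \wedge T$.

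Next I would bound the expectation of the truncated variable by integrating its survival function:
\[
\mathbb{E}[Y_i \wedge T] \;=\; \int_0^T \mathbb{P}(Y_i > t)\,dt \;\le\; \int_0^1 1\,dt \;+\; \int_1^T \frac{\sqrt{2/\pi}}{t}\,dt \;=\; 1 + \sqrt{2/\pi}\,\log T \;=\; O(\log d).
\]
Thus $\mathbb{E}\!\left[\sum_{i=1}^d (Y_i \wedge T)\right] = O(d\log d)$, and Markov's inequality gives
\[
\mathbb{P}\!\left(\sum_{i=1}^d (Y_i \wedge T) \;>\; 2\,\mathbb{E}\!\left[\sum_{i=1}^d (Y_i \wedge T)\right]\right) \;\le\; \tfrac{1}{2}.
\]
Intersecting this event with the high-probability event that no $Y_i$ exceeds $T$ yields, with constant probability, $\sum_{i=1}^d Y_i = O(d \log d)$, i.e.\ $\frac{1}{d}\sum_{i=1}^d 1/|X_i| = O(\log d)$. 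Undoing the scaling restores the factor $1/\sigma$.

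The only subtlety worth flagging is the choice of the truncation level $T$: it must be large enough that the tail event is negligible (forcing $T \gtrsim d$) but small enough that $\log T = O(\log d)$. Taking $T = d^2$ (or any polynomial in $d$) comfortably balances both requirements, so no delicate calculation is needed; the rest is straightforward.
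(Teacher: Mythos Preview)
Your proof is correct and takes a genuinely different route from the paper's argument. The paper first shows (via the same anti-concentration bound and a union bound) that with constant probability every $|X_i|\ge C/d$, then performs a dyadic decomposition of $[C/d,1]$ into $O(\log d)$ shells $\mathcal{I}_k=\{i:|X_i|\in[2^{-k-1},2^{-k}]\}$, uses binomial concentration to bound each $|\mathcal{I}_k|$, and sums the contributions shell by shell. Your approach sidesteps the dyadic bookkeeping entirely: by truncating $1/|X_i|$ at a polynomial level and integrating the survival function, you get the $O(\log d)$ expectation directly, and a single application of Markov's inequality finishes. This is shorter and more elementary. The paper's decomposition could in principle be pushed to yield sharper tail bounds (binomial concentration is sub-Gaussian), but since the lemma only asks for constant probability, your Markov step is exactly what is needed and nothing is lost.
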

\begin{proof}
	It suffices to assume that $\sigma^2=1$ and prove that with constant probability, $\frac{1}{d}\sum_{i=1}^d\frac{1}{|X_i|}\le\mathcal{O}\left(\log d\right)$.
	
	Consider $X_i$ for some fixed $i$. Since $X_i\sim\mathcal{N}(0,1)$, we have $\mathbb{P}(|X_i|\le t)\le\frac{2t}{\sqrt{2\pi}}$. Then we know that with probability at least $1-\Theta\left(d^{-1}\right)$, $|X_i|\ge \frac{C}{d}$ for some $C>0$. Then by union bound, with constant probability, $\forall i\in[d]: |X_i|\ge \frac{C}{d}$.
	
	Now we split the interval $[\frac{C}{d},1]$ into several subintervals $\mathcal{I}_k=\{i:|X_i|\in[2^{-k-1},2^{-k}]\}$ for $k=0,1,...,\lceil\log_2\frac{d}{C}\rceil-1$. Let $p_k=\mathbb{P}(|X_i|\in[2^{-k-1},2^{-k}])$, we know that $|\mathcal{I}_k|\sim \text{Binomial}(d,p_k)$ and $p_k\le C_1\cdot 2^{-k-1}$. Then by the concentration of binomial variables, we have w.p. at least $1-d^{-p}$ for $p>0$, $|\mathcal{I}_k|=\mathcal{O}\left(dp_k+\sqrt{dp_k\log d}+\log d\right)=\mathcal{O}\left(d\cdot 2^{-k-1}+\sqrt{d\cdot 2^{-k-1}\log d}+\log d\right)$. Then we have
	\[
	\sum_{i\in\mathcal{I}_k}\frac{1}{|X_i|}\le |\mathcal{I}_k|2^{k+1}=\mathcal{O}\left(d+\sqrt{d\cdot2^{k+1}\log d}+2^{k+1}\log d\right),\quad k=0,1,...,\lceil\log_2\frac{d}{C}\rceil-1.
	\]
	Therefore, with constant probability,
	\begin{align*}
		\sum_{i=1}^d\frac{1}{|X_i|}&=\sum_{k=0}^{\lceil\log_2\frac{d}{C}\rceil-1}\sum_{i\in\mathcal{I}_k}\frac{1}{|X_i|}+\sum_{|X_i|>1}\frac{1}{|X_i|}\\
		&\le\sum_{k=0}^{\lceil\log_2\frac{d}{C}\rceil-1}\mathcal{O}\left(d+\sqrt{d\cdot2^{k+1}\log d}+2^{k+1}\log d\right)+d\\
		&=\mathcal{O}\left(d\log_2\frac{d}{C}\right)+\mathcal{O}\left(\sqrt{d\log d}\cdot(\sqrt{2})^{\lceil\log_2\frac{d}{C}\rceil+C_2}\right)+2^{\lceil\log_2\frac{d}{C}\rceil+1}\log d+d\\
		&=\mathcal{O}\left(d\log d\right),
	\end{align*}
	which means with constant probability, $\frac{1}{d}\sum_{i=1}^d\frac{1}{|X_i|}=\mathcal{O}\left(\log d\right)$.
\end{proof}

\end{document}